\documentclass{report}
\usepackage[online]{suthesis-2e}
\dept{Computer Science}
\PassOptionsToPackage{dvipsnames,table}{xcolor}
\usepackage{graphicx}          %
\usepackage{textcomp}
\usepackage[table,dvipsnames]{xcolor}  %
\usepackage{url}               %
\usepackage{booktabs}          %
\usepackage{microtype}         %

\usepackage{amsmath, amssymb, amsfonts, amsthm}  %
\usepackage{mathrsfs}%
\usepackage{nicefrac}          %
\usepackage{bm}                %
\usepackage{bbm}               %
\usepackage{xparse}
\usepackage{optidef}

\usepackage{tabularx}
\usepackage{multirow}
\usepackage{colortbl}
\usepackage{floatrow}
\usepackage{float}   

\usepackage{algorithm}
\usepackage{algpseudocode}

\usepackage{caption}
\usepackage{subcaption}        %
\usepackage{wrapfig}           %
\usepackage{adjustbox}         %

\usepackage{titletoc}
\usepackage{etoc}              %
\usepackage{multicol}          %
\usepackage{chngpage}          %
\usepackage{soul}              %
\usepackage{csquotes}          %
\usepackage{yfonts}            %
\usepackage{etoolbox}          %
\usepackage{dirtytalk}
\usepackage{mdframed} %
\usepackage{listings}%
\definecolor{light-gray}{gray}{0.95}
\usepackage{thmtools, thm-restate}  %
\usepackage{tikz}              %
\usepackage{enumitem}
\usepackage{manyfoot}%
\usepackage{lipsum}%
\usepackage{inconsolata}
\usepackage[T1]{fontenc}

\definecolor{cardinalred}{rgb}{0.549,0.082,0.082}
\definecolor{digitalred}{rgb}{0.694,0.016,0.055}
\definecolor{digitalblue}{rgb}{0.0000, 0.4235, 0.7216}
\definecolor{mydarkblue}{rgb}{0,0.08,0.45}
\usepackage{hyperref}          %
\hypersetup{ %
    pdfborder={0 0 0},
    colorlinks=true,
    linkcolor=digitalred,
    filecolor=digitalred,
    urlcolor=mydarkblue,
    citecolor=digitalblue,
    linktoc=page,
}
\usepackage[numbers,sort]{natbib}   %

\usepackage[capitalise,nameinlink]{cleveref}    
\crefformat{equation}{#2Eq.~#1#3}
\crefformat{figure}{#2Fig.~#1#3}
\crefformat{section}{#2\S#1#3}
\crefformat{subsection}{#2\S#1#3}
\crefformat{subsubsection}{#2\S#1#3}
\crefformat{assumption}{#2Assumption~#1#3}
\crefformat{assumption}{#2Assumption~#1#3}

\definecolor{flodarkpurple}{rgb}{0.288,0.1196,0.7}
\definecolor{RedOrange}{rgb}{0.9, 0.3, 0.0}

\newcommand{\rcl}[1]{\cellcolor{red!13}#1}
\newcommand{\vcl}[1]{\cellcolor{gray!15}#1}
\newcommand{\gcl}[1]{\cellcolor{green!13}#1}
\newcommand{\bcl}[1]{\cellcolor{blue!13}#1}
\newcommand{\kcl}[1]{\cellcolor{violet!13}#1}

\newcommand{\rtext}[1]{\textcolor{red!60}{#1}}
\newcommand{\vtext}[1]{\textcolor{violet!60}{#1}}
\newcommand{\gtext}[1]{\textcolor{ForestGreen!80!black}{#1}}
\newcommand{\btext}[1]{\textcolor{blue!60}{#1}}
\newcommand{\ktext}[1]{\textcolor{gray!90!black}{#1}}
\newcommand{\otext}[1]{\textcolor{RedOrange!100}{#1}}

\newcommand{\calU}{\mathcal{U}}

\newcommand{\calN}{\mathcal{N}}

\newcommand{\calL}{\mathcal{L}}

\newcommand{\calH}{\mathcal{H}}

\newcommand{\calD}{\mathcal{D}}
\newcommand{\calC}{\mathcal{C}}

\newcommand{\calA}{\mathcal{A}}

\newcommand{\E}{\mathbb{E}}

\newcommand{\R}{\mathbb{R}}

\newcommand{\prob}{\mathbb{P}}
\newcommand{\iid}{\overset{\textup{iid}}{\sim}}

\newtheorem{theorem}{Theorem}

\newtheorem{definition}{Definition}

\newtheorem{proposition}{Proposition}
\newtheorem{task}{Task}

\newcommand{\edit}[1]{{\color{black}{#1}}}
\newcommand{\STAB}[1]{\begin{tabular}{@{}c@{}}#1\end{tabular}}

\newcommand{\basemethod}{\textsc{Cupid}}
\newcommand{\qualitymethod}{\textsc{Cupid-Quality}}
\newcommand{\polinf}{\Psi_{\pi\text{-}\mathrm{inf}}}
\newcommand{\actinf}{\Psi_{a\text{-}\mathrm{inf}}}
\newcommand{\polinfest}{\widehat{\Psi}_{\pi\text{-}\mathrm{inf}}}
\DeclareMathOperator*{\argtop}{arg\text{ }top}
\makeatletter
\newcommand\footnoteref[1]{%
  \protected@xdef\@thefnmark{\ref{#1}}%
  \@footnotemark%
}
\makeatother

\newcommand{\obj}[1]{#1}

\NewDocumentCommand \action {>{\SplitList{ }} m} {\actioncall #1}
\NewDocumentCommand \actioncall {g g g g} {\text{#1}(
  \IfValueTF{#2}{#2}{}
  \IfValueTF{#3}{,#3}{}
  \IfValueTF{#4}{,#4}{}
  )
}

\DeclareMathOperator*{\argmax}{arg\,max}

\newcommand{\func}[2]{\mathop{}#1\left(#2\right)}
\newcommand{\probstap}[1]{\func{p}{#1}}
\newcommand{\given}{\;\middle|\;}
\newcommand{\Estap}[2]{\mathop{}\operatorname{E}_{#1}\left[#2\right]}
\newcommand{\Vstap}[2]{\mathop{}\operatorname{Var}_{#1}\left[#2\right]}

\lstdefinelanguage{pddl}{
    morekeywords={forall},
    otherkeywords={:goal},
    sensitive=true, %
    morecomment=[l]{;},
    morestring=[b]" %
}

\newcommand{\rone}[1]{\textcolor{black}{#1}}
\newcommand{\rtwo}[1]{\textcolor{black}{#1}}

\DeclareMathSymbol{\shortminus}{\mathbin}{AMSa}{"39}
\definecolor{flodarkpurple}{rgb}{0.288,0.1196,0.7}
\newcommand{\graytext}[1]{\textcolor{black}{#1}}

\definecolor{light-gray}{rgb}{0.8, 0.8, 0.8}
\definecolor{comment-green}{rgb}{0.435, 0.576, 0.106}
\definecolor{prompt-blue}{HTML}{2596be}
\definecolor{code-function}{HTML}{379fbe}
\definecolor{code-function}{HTML}{693da8}  %
\definecolor{code-syntax}{HTML}{0060b1}
\definecolor{code-constant}{HTML}{d86001}
\definecolor{prompt-gray}{HTML}{a7a7a7}
\definecolor{highlight}{HTML}{f8f9cb}
\definecolor{highlight}{HTML}{e3eeff}  %
\definecolor{code-perception}{HTML}{2ecc71}
\definecolor{code-control}{HTML}{ff9900}
\definecolor{code-undefined}{HTML}{ff0000}
\renewcommand\fbox{\fcolorbox{light-gray}{white}}

\NewDocumentCommand{\code}{v}{%
\texttt{\small{\textcolor{code-syntax}{#1}}}%
}

\newcommand{\gs}{\textbf{\textsc{generator-scorer}}}
\newcommand{\scgs}{\textbf{\textsc{saycan-gs}}}
\newcommand{\imgs}{\textbf{\textsc{innermono-gs}}}
\newcommand{\sh}{\textbf{\textsc{shooting}}}
\newcommand{\se}{\textbf{\textsc{greedy-search}}}
\newcommand{\ttm}{\textbf{Text2Motion}}

\newcommand{\scgsnb}{\textsc{saycan-gs}}
\newcommand{\imgsnb}{\textsc{innermono-gs}}
\newcommand{\shnb}{\textsc{shooting}}
\newcommand{\senb}{\textsc{greedy-search}}
\newcommand{\ttmnb}{Text2Motion}
\newcommand{\LH}{\textbf{LH}}
\newcommand{\LG}{\textbf{LG}}
\newcommand{\PAP}{\textbf{PAP}}

\begin{document}
\title{Deployment-Time Reliability of Learned Robot Policies}
\author{Christopher Agia}
\principaladviser{Jeannette Bohg}
\firstreader{Marco Pavone}
\secondreader{Clark Barrett}

\copyrightfalse
\beforepreface
\setcounter{page}{1}
\prefacesection{Abstract}
Significant advances in learning-based robot manipulation have produced policies capable of executing complex, dexterous behaviors, raising the prospect of deploying robots in everyday human spaces.
However, realizing this vision requires looking beyond demonstrated \textit{capability} alone and toward \textit{reliability}: during deployment, learned policies must contend with open-ended variability, distribution shift, and compounding errors that collectively undermine system performance.

This dissertation investigates how the reliability of learned robot policies can be improved at deployment time through mechanisms that operate around them. We develop three complementary classes of deployment-time mechanisms. First, we introduce runtime monitoring methods that detect impending failures by identifying inconsistencies in closed-loop policy behavior and deviations in task progress, without requiring failure data or task-specific supervision. Second, we propose a data-centric framework for policy interpretability that traces deployment-time successes and failures to influential training demonstrations using influence functions, enabling principled diagnosis and dataset curation. Third, we address reliable long-horizon task execution by formulating policy coordination as the problem of estimating and maximizing the success probability of behavior sequences, and we extend this formulation to open-ended, language-specified tasks through feasibility-aware task planning.

By centering on core challenges of deployment, these contributions advance practical foundations for the reliable, real-world use of learned robot policies.
Continued progress on these foundations will be essential for enabling trustworthy and scalable robot autonomy in the future.

\newpage
\textit{For Emily, Sandra, and Remon.}

\prefacesection{Acknowledgments}

This PhD was a collective journey as much as a personal one, and I owe heartfelt thanks to many.

First and foremost, I would like to express my deepest gratitude to my advisors, Prof. Jeannette Bohg and Prof. Marco Pavone, whose dedication to my success and unwavering commitment to excellence shaped my entire PhD experience.
From the very beginning, Jeannette generously devoted her time to strengthening my research foundations.
She has taught me how to translate ambitious aspirations into steady, tangible research progress. 
She instilled in me the passion to pursue problems I believed were meaningful and worked tirelessly to create the conditions that made that pursuit possible---facilitating collaborations, securing resources and funding, and offering thoughtful guidance at every stage. 
When I first joined Stanford, many trusted colleagues encouraged me in no uncertain terms to consider working under Jeannette's mentorship. 
Looking back, I am glad I did.

Marco's guidance has likewise had a profound and lasting impact on my development---not only as a researcher, but as a technologist more broadly. Throughout my PhD, he continually challenged me to think with greater clarity, precision, and formal rigor, while also encouraging me to consider the broader impact of my work in a rapidly evolving field. He thoughtfully created leadership opportunities that fostered growth beyond my immediate projects. 
His insistence on searching for deeper insight and nuance in all things has fundamentally reshaped how I think and how I assign value to problems---and for that, I am especially grateful.
I consider myself immensely fortunate to have had not one but two advisors who believed in my potential, supported the directions I chose to pursue, and helped me navigate the inevitable obstacles that ultimately make research so rewarding.

I would like to thank my thesis committee: Prof. Clark Barrett, Prof. Mykel Kochenderfer, and Prof. Iro Armeni. 
I appreciate Clark's service on my reading committee; his work has broadened my understanding of the many dimensions in which safety must be addressed within robotics and AI. 
My interactions with Mykel throughout my PhD were always a genuine pleasure---his questions often revealed perspectives on my research that I had not previously considered. 
I am especially glad that Iro served as chair of my defense; it was through my undergraduate research building on her PhD work that I first found my way to Stanford.
This has made for a unique and meaningful full circle.

I want to thank the wonderful staff in the Computer Science and Aeronautics and Astronautics departments---especially Jayanthi, Yaritza, Helen, Jimmy, Adele, Renee, Brian, Kassandra, Susan, and Nelly. Their guidance and support greatly simplified the logistical complexities of my PhD, particularly as an international student navigating an unfamiliar system. I am especially grateful to Jayanthi for the countless questions she answered with patience and thoroughness. 
My thanks also go to the team at the Bechtel International Center for their continued support over the years.

I am deeply grateful to the colleagues, mentors, and friends I have found in Prof. Florian Shkurti, Prof. Hani Naguib, Prof. Jiajun Wu, Edward, Toki, Krishna, Rika, Manfred, and Ran. 
Their advice provided clarity and perspective at pivotal moments in my PhD.
I feel especially fortunate to have had Toki and Edward as my first-year rotation mentors; the lessons I learned from them are reflected in nearly every project I have pursued since. 
I also want to acknowledge the steadfast encouragement that Florian, Krishna, and Hani have offered across many phases of my research journey.

I had the privilege of collaborating with an extraordinarily talented group of students at Stanford---Rohan, Jingyun, Amine, Joey, Jakob, Milan, Jacky, Kevin, Yixuan, Francesco, Guillem, Daniel, Alex, Jimmy, Matt, and Zi-ang. This work would not have been possible without them. In particular, I am grateful to Rohan and Jingyun for the many long nights they poured into our shared research; I am proud of what we accomplished together. I also want to thank Joey, who always kept an open door for long technical discussions. These shaped my research more than he would likely admit.

I am grateful to the industry scientists with whom I had the distinct pleasure of collaborating: Masha Itkina and Haruki Nishimura from the Toyota Research Institute, and Issa Nesnas and Saptarshi Bandyopadhyay from the NASA Jet Propulsion Laboratory. They welcomed me generously on a number of collaborations and guided me toward problems of greater practical significance. 
Working with them has broadened my perspective on what research can and should strive to achieve.

Teaching robotics was one of the most memorable, rewarding, and---at times---challenging parts of my PhD. 
I was incredibly fortunate to have shared this experience with such a dedicated team of course assistants: Alvin, Rohan, Aditya, Luis, Roger, and Suneel for Principles of Robot Autonomy; Jakob, Daniel, Luis, and Xilun for Test of AI \& Emerging Technologies.

Of course, I was truly lucky to have had such smart, kind, and spirited labmates in Toki, Priya, Tyler, Jingyun, Marion, Carlota, Krishnan, Claire, Kevin, and Brandon from the IPRL, and Rohan, Amine, Jakob, Daniele, John, Hugo, Daniel, Luis, Matt, Milan, Jacky, Pranit, Somrita, Spencer, Thomas, Rachel, and Alvin from the ASL---along with many others who made these years so special. 
Because of them, Stanford felt like home in more ways than one. 

Everything that made this PhD possible rests upon the unconditional love and support of my family. 
I am forever grateful to my parents, Sandra and Remon, for giving so selflessly of themselves so that I might have opportunities in life as meaningful as this one; and to my sister, Emily, for her vibrant spirit and for the wisdom beyond her years that carried me through every high and low.

Lastly, I thank my partner, Stephanie, for walking devotedly alongside me, for reminding me to see the color each day offers, and for showing me that the future is often closer than it seems. She has supported me in every imaginable way; none of this would have been possible without her.

\afterpreface

\chapter{Introduction}\label{chapter:1}

\section{Background}\label{sec:thesis-background}

Robotics is undergoing a period of consequential change. Systems once confined to repetitive automation in highly structured settings---such as factory assembly lines---are gradually shifting toward intelligent, autonomous operation in the unstructured and dynamic environments that characterize everyday human spaces, including homes, offices, and retail stores~\cite{ahn2022-saycan, wu2023tidybot, pmlr-v305-yang25b}. To operate effectively in these contexts, robots must possess more than just accurate perception of the structure and semantics of their environments, or the ability to plan sequences of actions toward a goal. They must also know how to physically interact with the world: how to execute planned actions safely, robustly, and reliably through closed-loop control. It is this capacity for reliable physical interaction that ultimately determines whether a robot can function autonomously outside carefully engineered environments.

While the robotics community has made rapid progress in general-purpose perception~\cite{conceptfusion2023, elhafsi2023semantic} and planning~\cite{driess2023palm, lin2023text2motion}---driven in large part by advances in computer vision, natural language processing, and foundation models~\cite{bommasani2021opportunities}---comparable progress in generalist robot manipulation has lagged behind. 
To date, the most compelling results have emerged through large-scale learning from demonstration~\cite{team2025gemini, intelligence2025pi05, barreiros2025careful},
where robot policies parameterized by deep neural networks are trained to map high-dimensional sensory observations to control targets via imitation of behaviors demonstrated by human operators across a diversity of manipulation tasks (e.g., folding clothes, making coffee, assembling objects). Yet current results remain limited in scope, as today’s most capable manipulation policies---those adapted from foundation models, termed Vision-Language-Action (VLA) models~\cite{rt22023arxiv}---cover only a small fraction of the behaviors required for general-purpose deployment and fall well short of human-level reliability. For instance, VLA policies remain highly sensitive to variations in object geometry, pose, and appearance, to changes in lighting and scene context, to instruction phrasing, and to spurious environmental features---resulting in failure modes which may be difficult or impossible for system designers to anticipate beforehand~\cite{hancock2025run, majumdar2025predictive, glossop2025cast, gu2025safe}. As such, the development of generalist robot manipulation policies remains a central focus of contemporary robotics research, drawing substantial investment and attention across academia, industry, and the broader technology ecosystem.

At the research frontier, much of the field’s efforts have focused on expanding the \textit{operational envelope} of generalist robot policies: The range of tasks and environments within which they can be expected to perform competently. This effort has taken multiple, complementary forms, including the development of hardware platforms that enable large-scale collection of dexterous manipulation data~\cite{Zhao-RSS-23, chi2024universal}, the transfer of internet-scale priors through foundation models such as Large Language Models (LLMs) and Vision-Language Models (VLMs)~\cite{rt22023arxiv, pmlr-v270-kim25c}, and the introduction of more sophisticated robot learning algorithms capable of training on increasingly diverse and heterogeneous data sources (e.g., teleoperated robot demonstrations~\cite{chi2023diffusion}, cross-embodiment datasets collected from different robotic platforms~\cite{o2024open}, data gathered during online deployment~\cite{intelligence2025pi06}, and off-domain data such as human videos~\cite{lepert2025masquerade}). 

While these efforts have broadened the scope of behaviors learned robot policies can exhibit in carefully controlled evaluation settings~\cite{gao2026taxonomy}, real-world deployment presents a persistent and markedly more unforgiving set of challenges for which the benefits of e.g., increased data coverage and richer priors alone are insufficient. 
That is, the open-ended variability, complexity, and non-stationarity of everyday, real-world environments ensure that, regardless of the scale or diversity of a policy's training data or the richness of its learned priors, deployed systems will eventually encounter situations that lead to erroneous or undesirable behavior~\cite{sinha2022system}. 
This observation highlights a complementary line of inquiry: alongside continued efforts to advance learned policies toward deployment readiness, we must also consider how to maintain reliability at deployment time---where failures are costly, difficult to diagnose, and often irreversible.
Herein, we consider three representative challenges that arise in the real-world deployment of robot policies:
\begin{enumerate}
    \item \textbf{Failure Mitigation:} The black-box, task-agnostic nature of modern policies---particularly those initialized from large foundation models---obscures not only whether a deployed policy is nominally within the support of its training distribution, but whether it is operating in regions of that distribution where experience is sufficiently dense to support reliable behavior. As a result, even minor and seemingly indistinguishable shifts in the environment may push a policy outside the reliable support of its training data. This lack of visibility complicates the timely detection of failures and the ability to intervene before unsafe or irreversible outcomes occur.
    \item \textbf{Model Interpretability:} The same opacity of modern policies makes it extremely difficult to understand why a policy succeeds or fails during closed-loop execution, hindering post-hoc diagnosis, systematic debugging, and principled dataset curation---processes that are essential for improving policy performance over time. 
    \item \textbf{Long-Horizon Tasks:} Learning and executing an entire multi-step task with a single policy behavior is often unreliable due to compounding prediction errors, motivating approaches that instead chain together shorter, independent behaviors with higher per-step reliability. However, even when shorter behaviors are learned reliably, composing them to accomplish new multi-step tasks often fails due to unmodeled dependencies, which may compromise the feasibility of downstream behaviors. Achieving reliable outcomes therefore requires explicitly reasoning about the probability of success of behavior sequences and coordinating policy outputs accordingly.
\end{enumerate}
Notably, addressing these challenges motivates a distinct class of methods that operate \textit{around} learned policies, explicitly accounting for the closed-loop, real-time, and safety-critical constraints inherent to robotic deployment. We argue that developing such deployment-time mechanisms---alongside continued advances in policy learning---is essential for bringing generalist robot policies out of the lab and into reliable real-world use, and it is this perspective that motivates and unifies the contributions of this dissertation.

\section{Thesis Outline}\label{sec:thesis-outline}

What it means for a learned robot policy to be reliable at deployment differs fundamentally from achieving strong performance during training or in controlled evaluation settings. Experience from adjacent domains, most notably autonomous driving, has demonstrated that reliability in the long tail of real-world deployment is not a property of any single model class (e.g., a VLA policy) nor a simple consequence of increased data scale, but rather emerges from the careful orchestration of reliability-promoting mechanisms spanning the autonomy stack and development lifecycle~\cite{webb2020waymo, goldberg2025good}. As learned manipulation policies grow in capability and are increasingly deployed in open-ended, real-world environments, the need for such deployment-time considerations becomes both unavoidable and urgent. Accordingly, this dissertation investigates how the reliability of learned robot policies---and, by extension, systems and tasks composed of them---can be enhanced at deployment through three complementary lenses: (1) detecting when policies are likely to fail, (2) interpreting policy behavior through the data that shaped it, and (3) coordinating learned policy behaviors for reliable long-horizon task execution. 
\cref{part:1} of the thesis addresses (1-2), while \cref{part:2} is dedicated to (3). 

\subsection{Policy Monitoring and Interpretability (\cref{part:1})}

\cref{chapter:sentinel} addresses the challenge of detecting when a learned robot policy is likely to fail during deployment, even when operating under seemingly nominal conditions. We present a runtime monitoring framework for closed-loop, generative manipulation policies that detects failures online without requiring failure data or task-specific supervision. Our framework judiciously integrates runtime monitors to capture both low-level, real-time inconsistencies in policy behavior and higher-level deviations in task progress. Through simulation and real-world experiments, this chapter demonstrates that structuring runtime monitoring around a hierarchy of specialized detectors---aligned with different granularities of policy failure---enables earlier and more reliable intervention than any single monitoring approach alone.
\cref{chapter:cupid} addresses a complementary challenge: while monitoring can prevent unsafe outcomes, it does not explain why a policy behaves as it does. This chapter introduces a data-centric perspective on policy interpretability, focusing on how individual training demonstrations influence closed-loop behavior at deployment. We propose a methodology based on influence functions that connects deployment-time performance metrics---such as task success or failure rates---to examples in the training dataset. By enabling counterfactual reasoning about data, this chapter shows how greater interpretability mechanisms can support principled dataset curation, diagnose spurious correlations, and systematically improve policy reliability under distribution shift.

\subsection{Policy Coordination and Planning (\cref{part:2})}

\cref{chapter:stap} examines a central limitation of behavior-centric policy learning: reliable execution of individual behaviors does not guarantee reliable outcomes when those behaviors are composed over time to accomplish real-world, long-horizon tasks. 
The key insight is that success in such settings depends on how individual behaviors interact when executed in sequence, which motivates the need for explicit coordination.
We formalize behavior coordination as the problem of estimating and maximizing the joint success probability of a given sequence of learned policy behaviors using models produced during policy training.
Building on this foundation, \cref{chapter:text2motion} introduces a language-based task and motion planning framework. 
This framework extends our behavior coordination formulation to open-ended goals expressed in natural language, using LLMs to search for candidate behavior sequences and feasibility-aware estimates to guide their selection. 
Together, these approaches show that explicitly reasoning about the success probability of coordinated policy behaviors is critical for reliable operation in complex, real-world manipulation scenarios.

Collectively, these contributions advance the reliability of learned robot policies by addressing a complementary set of challenges inherent to deployment. While not exhaustive, these challenges lie on the critical path toward trustworthy and scalable robot autonomy. Looking ahead, several promising directions for future research remain, which we outline in \cref{chapter:conclusion}.

\section{Publications}\label{sec:thesis-publications}

Parts of this dissertation are adapted from the following previously published works\footnote{For papers with multiple authors, Christopher Agia led or co-led all aspects of the research, including conception, methodology, experiments, analysis, and writing. Co-first authorship is indicated with an asterisk (*).}:

\begin{itemize}
    \item \cite{taps-2022} \textbf{Christopher Agia}*, Toki Migimatsu*, Jiajun Wu, and Jeannette Bohg. 
    \textit{STAP: Sequencing Task-Agnostic Policies}. 
    In IEEE International Conference on Robotics and Automation, 2023. 
    \copyright{} 2023 IEEE. Reprinted, with permission, from the above publication.

    \item \cite{lin2023text2motion} \textbf{Christopher Agia}*, Kevin Lin*, Toki Migimatsu, Marco Pavone, and Jeannette Bohg. 
    \textit{Text2motion: From Natural Language Instructions to Feasible Plans}. 
    Autonomous Robots, 2023. 
    \copyright{} 2023 Springer Nature. Reproduced with permission from Springer Nature.

    \item \cite{agia2024unpacking} \textbf{Christopher Agia}, Rohan Sinha, Jingyun Yang, Ziang Cao, Rika Antonova, Marco Pavone, and Jeannette Bohg. 
    \textit{Unpacking Failure Modes of Generative Policies: Runtime Monitoring of Consistency and Progress}. 
    Proceedings of the 8th Conference on Robot Learning, 2024. 
    \copyright{} 2024 PMLR.

    \item \cite{agia2025cupid} \textbf{Christopher Agia}, Rohan Sinha, Jingyun Yang, Rika Antonova, Marco Pavone, Haruki Nishimura, Masha Itkina, and Jeannette Bohg. 
    \textit{CUPID: Curating Data your Robot Loves with Influence Functions}. 
    Proceedings of the 9th Conference on Robot Learning, 2025. 
    \copyright{} 2025 PMLR.
\end{itemize}

\part{Policy Monitoring and Interpretability}\label{part:1}

\chapter{Monitoring Policies for Runtime Failure Detection}\label{chapter:sentinel}

\section{Introduction}\label{sec:sentinel-intro}

The central objective of this thesis is to investigate how the reliability of learned robot policies can be improved during real-world deployment. As discussed in \cref{chapter:1}, recent advances in robot learning---particularly the adaptation of generative, pretrained foundation models to large and heterogeneous robotics datasets---have enabled policies to exhibit increasingly rich and flexible behavior across a wider variety of tasks, environments~\cite{rt22023arxiv}, and embodiments~\cite{o2024open}.

This chapter focuses on a challenge that persists even as such capabilities improve: when robot policies are deployed in real-world settings---where conditions cannot be perfectly controlled---they will inevitably encounter \textit{out-of-distribution} (OOD) scenarios in which their behavior becomes unreliable and difficult to predict~\cite{argall2009survey}. Moreover, as such environmental shifts may be subtle and their relation to the \textit{in-distribution}, reliable regime of a policy's experience is often unclear---particularly for policies initialized from internet-pretrained foundation models---failures may arise abruptly and without clear warning. Consequently, methods that monitor learned policies at deployment time and provide early indicators of failure are essential for enabling scalable and reliable real-world robotics deployment.

\begin{figure}[ht]
    \includegraphics[width=\linewidth]{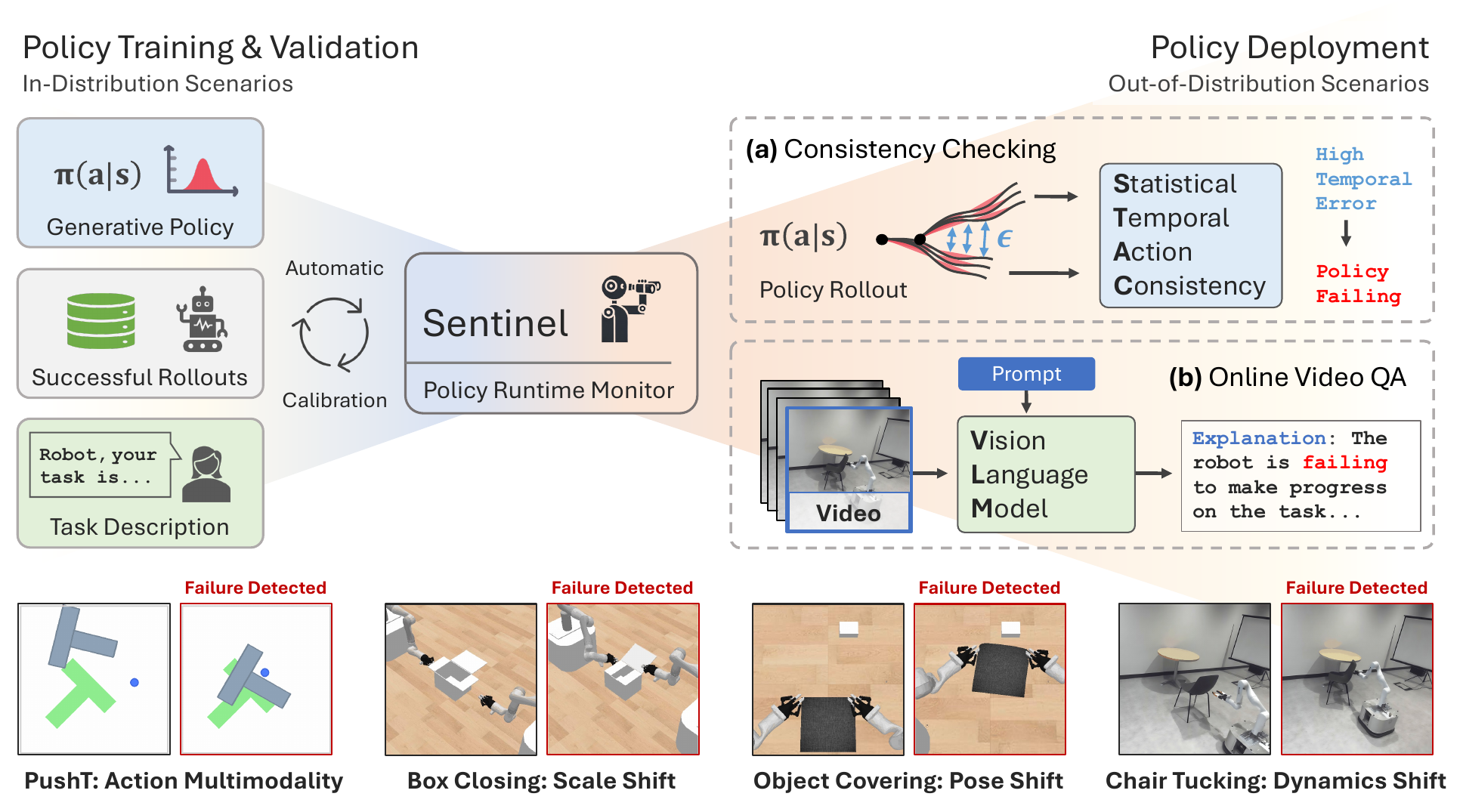}
    \caption[Overview of Sentinel]{\small
        We present \textbf{Sentinel}, a runtime monitor that detects unknown failures of generative robot policies at deployment time.
        Constructing Sentinel requires only a set of successful policy rollouts and a description of the task, from which it detects diverse failures by monitoring \textbf{(a)} the temporal consistency of action-chunk distributions generated by the policy and \textbf{(b)} the task progress of the robot(s) through video QA with Vision-Language Models. More details can be found on the Sentinel website: \url{https://sites.google.com/stanford.edu/sentinel}.
    }
    \label{fig:sentinel-teaser}
\end{figure}

Identifying when a learned model behaves unreliably is typically framed as an OOD detection problem, for which a taxonomy of methods exist~\cite{sinha2022system,yang2021generalized}.
While these methods can signal distribution shift~\cite{richter2017safe,RuffVandermeulenEtAl2018} or quantify uncertainty~\cite{GalZoubin2016, LakshminarayananPritzelEtAll2017} \textit{w.r.t.} individual input-output samples, they do not fully characterize closed-loop policy failures that arise from multiple, time-correlated prediction errors along a trajectory rollout.
Action multimodality further complicates the failure detection problem: that is, actions sampled from multimodal generative policies can vary greatly from one timestep to the next, leading to complex runtime behaviors and, by extension, diverse failure modes compared to previous model-free policies~\cite{pmlr-v164-mandlekar22a,zhang2018deep}. 
Therefore, the special case of generative robot policies necessitates the design of new failure detectors suited to their multimodal characteristics and closed-loop operational nature in deployment.

In this chapter, we present \textbf{Sentinel}, a runtime monitoring framework that splits the task of detecting generative policy failures into two complementary categories.
The first is the detection of failures in which the policy exhibits erratic behavior as characterized by its temporal inconsistency.
For example, the robot may collide with its surroundings if the policy's action distributions contain conflicting action modes across time.
To detect erratic failures, we propose to evaluate \textit{how much} a generative policy's action distributions are changing across time using \underline{S}tatistical measures of \underline{T}emporal \underline{A}ction \underline{C}onsistency (STAC).
The second category is the detection of failures in which the policy is temporally consistent but struggles to make progress on its task.
For example, the robot can stall in place or drift astray if the policy produces constant outputs.
We propose to detect task progression failures (undetectable by STAC) zero-shot with Vision-Language Models (VLMs), which can distinguish off-nominal behavior when prompted to reason about the robot's progress in a video question answering setup. 
Notably, one would want to catch erratic failures (the first category) fast, whereas task progression failures (the second category) do not require immediate intervention. 

Our contributions are three-fold: 1) A formulation of failure detection for generative policies that splits failures into two complementary categories, thus admitting the use of specialized detectors toward system-level performance increases (i.e., a \textit{divide-and-conquer} strategy); 
2) We propose STAC, a novel temporal consistency detector that tracks the statistical consistency of a generative policy's action distributions to detect erratic failures; %
3) We propose the use of VLMs to monitor the task progress of a policy over the duration of its rollout, and we offer practical insights for their use as failure detectors.
Provided with only a set of successful policy rollouts and a natural language description of the task, \textbf{Sentinel} (which runs STAC and the VLM monitor in parallel) detects over 97\% of unknown failures exhibited by diffusion policies~\cite{chi2023diffusion} across simulated and real-world robotic mobile manipulation domains.

\section{Related Work}\label{sec:sentinel-related-work}	

Advances in \textbf{robot imitation learning} include new policy architectures~\cite{chi2023diffusion,yang2023equivact,bharadhwaj2023roboagent,goyal2023rvt}, hardware innovations for data collection~\cite{Zhao-RSS-23,chi2024universal,fu2024mobile}, community-wide efforts to scale robot learning datasets~\cite{khazatsky2024droid,o2024open,walke2023bridgedata}, and training high-capacity behavior policies on these datasets~\cite{octo_2023,rt22023arxiv,rt12022arxiv,pmlr-v270-kim25c}.
Of recent interest is the use of generative models~\cite{lecun2006tutorial,vaswani2017attention,ho2020denoising,song2020score} to effectively learn from heterogeneous and multimodal datasets of human demonstrations~\cite{chi2023diffusion,o2024open,octo_2023,pmlr-v270-kim25c}.
\textit{Generative policies} thereby learn to represent highly multimodal distributions from which diverse robot actions can be sampled.
While state-of-the-art generative policies demonstrate remarkable performance, their inherent multimodality results in more stochastic runtime behavior than that of previous model-free policies~\cite{pmlr-v164-mandlekar22a,zhang2018deep,rahmatizadeh2018vision,florence2019self,pmlr-v80-haarnoja18b,nair2020awac}.
In this chapter, we focus on characterizing the behavior of generative robot policies for failure detection.

Despite recent progress, it is well known that \textbf{learned policies may fail} beyond their training distribution~\cite{argall2009survey,sinha2022system,yang2021generalized}, in part due to compounding prediction errors on states induced by the policy~\cite{ross2010efficient,ross2011reduction}.
As such, a recent work proposes to bound the performance of imitation learned policies prior to deployment~\cite{vincent2024generalizable}.
Other works propose to retrain the policy on OOD states using corrective supervision from humans~\cite{bajcsy2018learning,kelly2019hg,mandlekar2020human,pmlr-v164-hoque22a,cui2023no}.
Notably, these methods apply \textit{after} failures have occurred, maintaining the need for runtime monitors that detect policy failures and prevent their downstream consequences.
Thus, our focus can be viewed as complementary to methods that learn post hoc from corrective feedback.

The existing literature on \textbf{out-of-distribution detectors} and \textbf{runtime monitoring} for learned models is highly diverse, spanning multiple categories of methods. Model-based methods (e.g., \cite{MarcoMorleyEtAl2023, LiuDassEtAl2024}) are not directly applicable to the model-free policies we consider.
Some methods only pursue failure modes that are known \textit{a priori}~\cite{FaridSnyderEtAl2022, DaftryZengEtAl2016, RabieeBiswas2019, finonet}, \edit{whereas we seek to detect unknown failures at deployment time.}
Many OOD detection works detect dissimilarity from training data via reconstruction~\cite{richter2017safe, RuffKauffmanEtAl2021} or embedding similarity~\cite{LeeLeeEtAl2018, RuffVandermeulenEtAl2018}, however, observational differences may not always result in policy failure.
Other methods directly quantify epistemic uncertainty~\cite{GalZoubin2016,LakshminarayananPritzelEtAll2017,LiuLinEtAl2020, SharmaAzizanEtAl2021}, but come with considerable computational expense or may not be suitable for autoregressive, generative policy architectures, e.g., diffusion policies~\cite{chi2023diffusion}.
Several works monitor symbolic states to detect manipulation failures~\cite{migimatsu2022symbolic, hegemann2022learning}, but assume access to multiple sensor modalities (e.g., vision, haptic, proprioception) for symbolic state estimation.
Most related to our approach are algorithms that perform consistency checks across sensor modalities~\cite{lee2021detect} and time~\cite{AntonanteSpivakEtAl2021}.
Different from these, we directly monitor the consistency of a learned policy's action distributions and its task progress to detect closed-loop failure.

There is a growing interest in the use of \textbf{Foundation Models}~\cite{bommasani2021opportunities} toward increasing robustness in robotic systems. 
Large Language Models are used to detect anomalies~\cite{elhafsi2023semantic, sinha2024real} and to replan under execution failures~\cite{sinha2024real,pmlr-v205-huang23c,pmlr-v229-liu23g,skreta2024replan}. 
Reward models in the form of visual representations~\cite{ma2023vip,cui2022can} or VLMs~\cite{yang2023robot} could be repurposed for failure detection by thresholding predicted rewards.
However, \cite{yang2023robot} shows that additional fine-tuning is required to obtain reliable reward estimates.
\citet{pmlr-v232-du23b} fine-tunes a VLM for episode-level success classification using a human annotated dataset on the order of $10^5$ trajectories.
In contrast to this work, we 1) focus on zero-shot assessment with VLMs, 2) seek to detect policy failure amidst task execution, and 3) consider the system-level role of VLMs operating alongside policy-level failure detectors, and as such, assign each detector to a specified category of failure.

\section{Problem Setup}\label{sec:sentinel-problem-setup}

\paragraph{Failure Detection}
Our goal is to detect when a generative robot policy $\pi(a | s)$ fails to complete its task.
The policy operates within a Markov Decision Process (MDP) with a finite horizon $H$, but it may terminate upon completion of the task at an earlier timestep.
Given an initial state $s_0$ representative of a new test scenario, executing the policy for $t$ timesteps produces a trajectory $\tau_t = (s_0, a_0, \ldots, s_t)$.
We define \textbf{failure detection} as the task of detecting whether a trajectory rollout $\tau_H$ constitutes a failure at the earliest possible timestep $t$.
To do so, we aim to construct a failure detector $f(\tau_t) \rightarrow \{\texttt{ok}, \texttt{failure}\}$ that, at each timestep $t$, can provide a classification as to whether the policy \textit{will fail} if it continues executing for the remaining $H-t$ timesteps of the MDP.
Note that the failure detector makes its assessment solely based on the history of observed states and sampled actions up to the current timestep $t$.

The failure detector may contain parameters that require calibration, such as a detection threshold (as in \cref{sec:sentinel-temporal-consistency}).
Therefore, we assume a scenario in which the policy $\pi$ is first trained, then validated on test cases where it is expected to perform reliably.
This validation process yields a small dataset of $M$ successful trajectories $\mathcal{D}_\tau = \{\tau^i\}_{i=1}^M$ that can be used to calibrate the failure detector $f$ (if it contains parameters).
Intuitively, the dataset $\mathcal{D}_\tau$ characterizes nominal policy behavior within or near the distribution of states it has been trained on, which helps to ground the assessment of potentially OOD trajectories at test time. 

We measure failure detection performance in terms of true positive rate (TPR), true negative rate (TNR), and detection time (DT). We count a true positive if the failure detector raises a warning at any timestep in a trajectory where the policy fails, the earliest of which counts as the detection time. 
We count a true negative if the failure detector never raises a warning in a trajectory where the policy succeeds. 
We refer to \cref{appx:sentinel-evaluation-protocol} for supporting definitions of policy failure and key performance metrics.

\paragraph{Policy Formulation}
\begin{wrapfigure}{rt}{0.34\textwidth}
  \centering
  \vspace{-14pt}
  \includegraphics[width=1.0\textwidth]{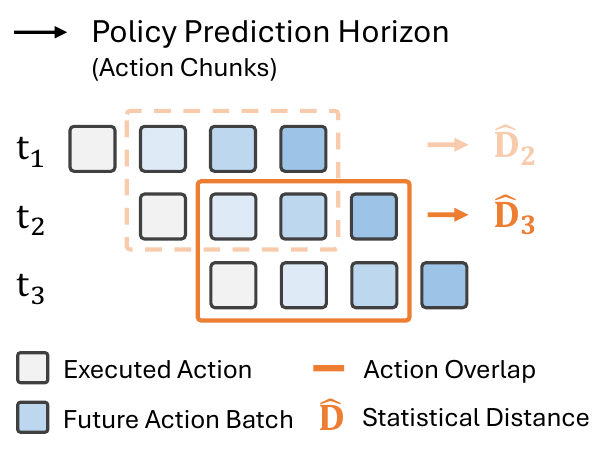}
  \caption[Action chunk overlap during policy rollout (Sentinel)]{\small 
    Action sequence prediction overlap during policy rollout.
    }
  \label{fig:sentinel-action-overlap}
\end{wrapfigure}

We consider the setting where the policy $\pi$ is stochastic and predicts a sequence of actions (also referred to as an \textit{action chunk}~\cite{Zhao-RSS-23}) for the next $h$ timesteps.
That is, the action sequence sampled at the $t$-th timestep, $a_t \sim \pi(\cdot | s_t)$, consists of $h$ actions $a_t := (a_{t|t}, a_{t+1|t}, \ldots, a_{t+h-1|t})$, where the notation $a_{t+i|t}$ denotes the action prediction for time $t+i$ generated at timestep $t$ (as in~\cite{BorrelliBemporadMorari2017}).
The actions $a_{\cdot|t} \in \mathcal{A}$ may correspond to e.g., end-effector poses or velocities. 
To control the robot, we sample an action sequence and execute the first $k<h$ actions, $a_{t:t+k|t}$, after which we re-evaluate the policy at timestep $t+k$. 
We visualize this receding horizon rollout for $k=1$ in \cref{fig:sentinel-action-overlap}.
Notably, $a_t$ and $a_{t+k}$ contain actions that temporally overlap for $h-k$ timesteps (i.e., at $a_{t+k:t+h-1|t}$ and $a_{t+k:t+h-1|t+k}$). 

Several recently proposed policy architectures achieve state-of-the-art performance by sampling action sequences using generative models~\cite{chi2023diffusion,Zhao-RSS-23,octo_2023}, to which our approach is generically applicable. Here, we specify the failure detection problem for diffusion policies (DP)~\cite{chi2023diffusion}, which a) are stable to train and b) address action multimodality by representing the policy distribution with a denoising diffusion probabilistic model (DDPM)~\cite{ho2020denoising}. 
We note that the computationally intensive, iterative nature of the denoising process makes it challenging to directly apply existing OOD detection methodologies (e.g., \cite{SharmaAzizanEtAl2021}) to diffusion policies for failure detection, thus motivating several of our design decisions.
Further details on the training and properties of these models are provided in \cref{appx:sentinel-diffusion-policy}.

\begin{figure}
    \includegraphics[width=\linewidth]{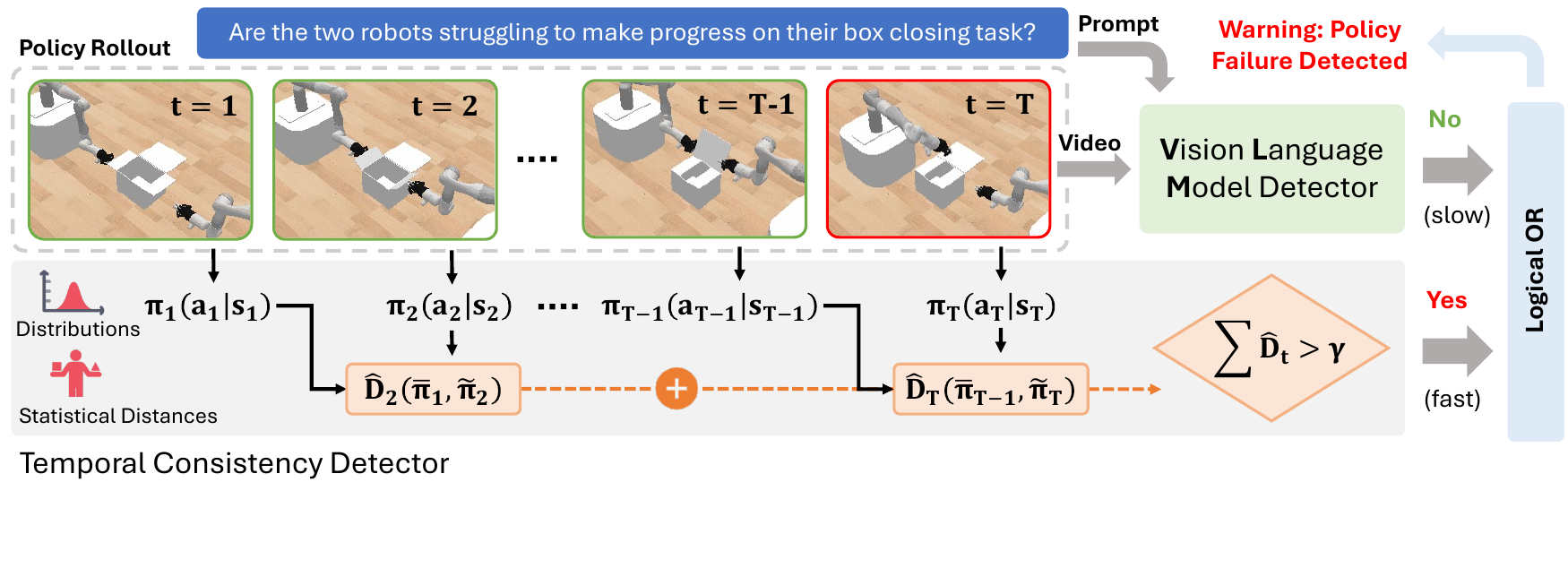}
    \vspace{-32pt}
    \caption[System architecture for failure detection (Sentinel)]{\small
        \textbf{Overview of Sentinel.} 
        The images depict a policy rollout for timesteps $t=1,\ldots, T$. Temporal Consistency Detector: At each timestep $t$, the state $s_t$ is passed to the generative policy to obtain action distributions $\pi_t$ between which statistical distances $\hat{D}_t$ are computed to measure temporal consistency. The statistical distances are summed up to the current timestep $T$ (as in Eq.~\ref{eq:sentinel-cum-score-fn}) and thresholded by $\gamma$ to detect policy failure. Vision-Language Model (VLM) Detector: The VLM classifies whether the policy is failing to make progress on its task given a video up to timestep $T$ and a description of the task. Execution stops if either detector raises a warning.
    }
    \label{fig:sentinel-full-system}
\end{figure}

\section{Proposed Approach: Sentinel}\label{sec:sentinel-approach}

The failure behavior of a generative policy by OOD conditions can be highly diverse, and we therefore argue that the desiderata for a failure detector may vary between qualitative types of failures.
Accordingly, we propose to split the failure detection task into two complementary failure categories.

The first is the detection of failures resulting from erratic policy behavior, which may cause a robot to end up in states that are difficult or costly to reset from, knock over objects, or lead to safety hazards. Therefore, it is important to detect erratic behavior as quickly as possible (\cref{sec:sentinel-temporal-consistency}).
The second category is the detection of failures in which the policy struggles to make progress on its task (hereafter referred to as \textit{task progression failures}) but does so in a temporally consistent manner. 
For example, the policy may confidently place an object in the wrong location. 
Here, we must observe the robot over a longer period of time to identify that the policy is not making progress towards task completion (\cref{sec:sentinel-vlm-assessment}).

The \textbf{key insight} of our approach is that by defining one failure category as the complement of the other, it becomes trivial to combine failure detectors to form an accurate overall detection pipeline whilst satisfying the requirements of each failure category. 
Our full pipeline, \textbf{Sentinel}, is shown in \cref{fig:sentinel-full-system}. 

\subsection{STAC: Detecting Erratic Failures with Temporal Consistency}\label{sec:sentinel-temporal-consistency}

When a policy operates in nominal, in-distribution settings, it should complete its task in a temporally consistent manner. 
For example, a policy may plan to avoid an obstacle on the right or on the left, but not jitter between the two options.
Moreover, as noted in \cite{chi2023diffusion}, training a diffusion policy that predicts action sequences rather than individual actions encourages temporal consistency between action predictions.

Therefore, we propose to construct a quantifiable measure of temporal action consistency to detect whether the policy is behaving erratically, and hence, is likely to fail at the task. 
However, the multimodal distributional nature of DPs makes it difficult to directly compare two sampled actions $a_t \sim \pi(a_t | s_t)$ and $a_{t+k} \sim \pi(a_{t+k} | s_{t+k})$, e.g., throughout execution.
This is because the actions may differ substantially along their prediction horizon when the policy commits or switches between different action modes, or simply due to randomness in sampling. 
Instead, we quantify erratic policy behavior with \underline{s}tatistical measures of \underline{t}emporal \underline{a}ction \underline{c}onsistency (\textbf{STAC}, which we term our approach).

Let $\bar\pi_t := \pi(a_{t+k:t+h-1|t} | s_t)$ and $\tilde{\pi}_{t+k} := \pi(a_{t+k:t+h-1|t+k} | s_{t+k})$ be the marginal action distributions of the temporally overlapping actions between timesteps $t$ and $t+k$. 
We compute the temporal consistency between two contiguous timesteps $t$ and $t+k$ as $\hat{D}(\bar\pi_t, \tilde\pi_{t+k}) \geq 0$, where $\hat{D}$ denotes the chosen statistical distance function (e.g., maximum mean discrepancy, KL-divergence)\footnote{Due to the iterative denoising procedure of the diffusion policy, analytically computing a distance $D(\bar\pi_t, \tilde\pi_{t+k})$ is challenging, as evaluating the densities of $\bar\pi_t$ and $\tilde\pi_{t+k}$ requires marginalizing out the intermediate diffusion steps and the non-overlapping actions. 
Instead, we approximate $D$ with its empirical counterpart $\hat{D}$ by sampling a batch of action sequences (parallelized on a GPU) at each timestep $t$ and $t+k$ rather than a single action sequence.}.
In addition, we propose to take the cumulative sum of statistical distances along a trajectory as a measure of the overall temporal consistency in a policy rollout. At each policy-inference timestep $t = jk$ with $j\in \{0,1,\dots\}$, we compute the temporal consistency score as
\begin{align}\label{eq:sentinel-cum-score-fn}
    \eta_t := \sum_{i=0}^{j-1} \hat{D}(\bar\pi_{ik}, \tilde\pi_{(i+1)k}).
\end{align}
Computing the consistency score in a cumulative manner has two advantages over thresholding the distance at each timestep individually. Firstly, it allows us to detect cases where the temporal consistency metric $\hat D$ is marginally larger than usual throughout the episode (e.g., jitter). Secondly, it allows us to detect instances where the policy is temporally inconsistent more often than in nominal scenarios. 

\begin{wrapfigure}{rt}{0.31\textwidth}
  \centering
  \vspace{-14pt}
  \includegraphics[width=1.0\textwidth]{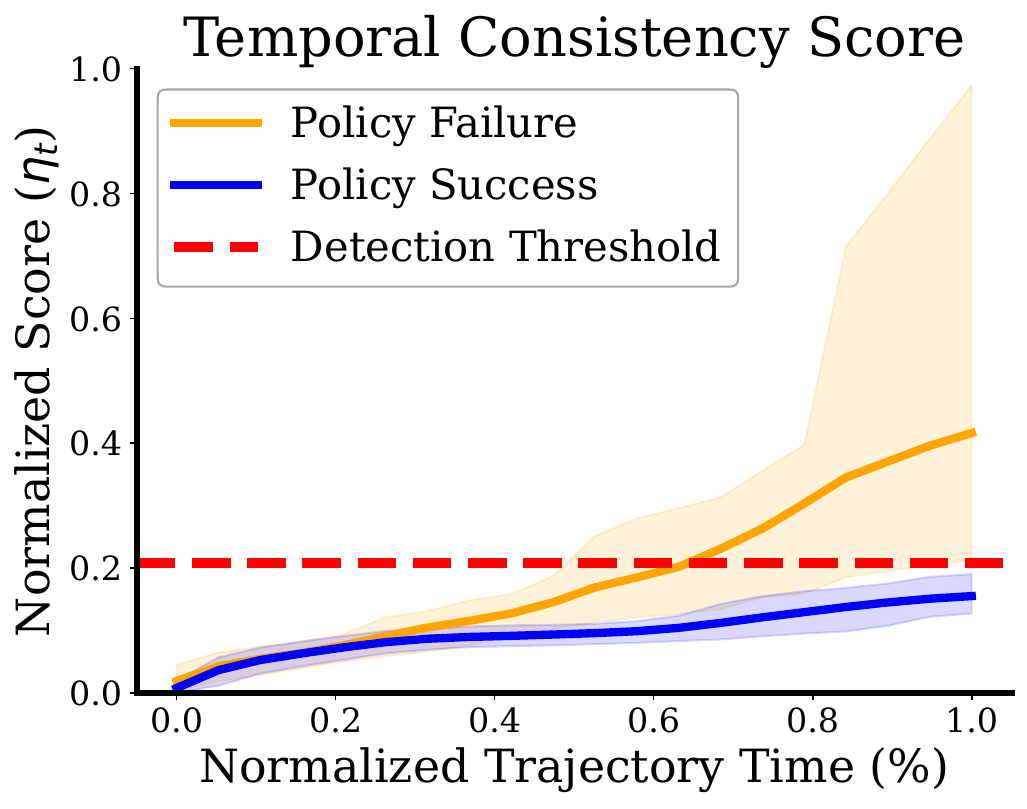}
  \caption[Detection threshold for temporal consistency score (Sentinel)]{\small
    Temporal consistency scores grow faster when the policy fails. Error bars indicate the 5-th and 95-th score quantiles.}
  \label{fig:sentinel-error-result}
\end{wrapfigure}

At runtime, we raise a failure warning at the moment that $\eta_t$ exceeds a failure detection threshold $\gamma$, which we calibrate offline using the validation dataset of successful trajectories $\calD_\tau$. 
Here, we first compute the cumulative temporal consistency scores throughout the entirety of the lengths $H_i \leq H$ of trajectories in $\calD_\tau$, yielding $\{\eta^i_{H_i}\}_{i=1}^M$.
Then, we set the threshold $\gamma$ to the $1 - \delta$ quantile of $\{\eta^i_{H_i}\}_{i=1}^M$, where $\delta \in (0,1)$ is a hyperparameter. 
Intuitively, this ensures that the false positive rate (FPR)---the probability that we raise a false alarm and terminate on any trajectory that is i.i.d. with respect to $\calD_\tau$---remains low, such that any warnings are likely failures. 
We can formalize this intuition using recent results from conformal prediction~\cite{angelopoulos2021gentle, LuoZhaoSample2023}:

\begin{proposition}[STAC has low FPR]\label{cor:sentinel-stac}
    Let $P_\tau$ denote the distribution of success trajectories in the validation dataset $\calD_\tau = \{\tau^i\}_{i=1}^M \iid P_\tau$.
    Then, the \emph{FPR}---the probability of raising a false alarm at any point during an i.i.d. test trajectory $\tau \sim P_\tau$ of length $H' \leq H$---is at most $\delta$:
    \begin{equation*}
        \mathrm{FPR} := \prob_{P_\tau} \big(\exists \ 0 \leq t \leq H' \ \mathrm{s.t.} \ \eta_t > \gamma \big) \leq \delta.
    \end{equation*}
\end{proposition}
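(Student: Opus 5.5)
The plan is to reduce the ``exists a timestep'' event to a single scalar event about the final score, and then apply the standard split-conformal quantile argument to that scalar. First, since every summand $\hat D(\bar\pi_{ik},\tilde\pi_{(i+1)k})\ge 0$, the cumulative score $\eta_t$ in \cref{eq:sentinel-cum-score-fn} is nondecreasing in $t$ along any rollout. Hence for a test trajectory of length $H'$,
\begin{equation*}
\big\{\exists\, 0\le t\le H' \ \mathrm{s.t.}\ \eta_t>\gamma\big\} \;=\; \big\{\eta_{H'}>\gamma\big\},
\end{equation*}
so the FPR equals $\prob_{P_\tau}(\eta_{H'}>\gamma)$. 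This monotonicity step is why the time-uniform guarantee comes for free, with no union bound over timesteps.

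Second, I would treat $\eta_{H'}$ as a scalar nonconformity score $S=g(\tau)$ of the whole trajectory. The map $g$ is the same deterministic-given-randomness function applied to each trajectory, where I fold the action-sample batches used to form $\hat D$ into the trajectory's randomness. Because $\tau^1,\dots,\tau^M,\tau$ are i.i.d.\ from $P_\tau$, the scores $S_1,\dots,S_M,S$ are i.i.d., and in particular exchangeable. The threshold $\gamma$ is the empirical quantile of $\{S_i\}$. I would take it with the usual finite-sample correction, i.e.\ $\gamma=S_{(\lceil (M+1)(1-\delta)\rceil)}$, matching \cite{angelopoulos2021gentle}, and read the paper's ``$1-\delta$ quantile'' in this sense.

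Third, I would invoke the standard conformal lemma. By exchangeability, the rank of $S$ among $S_1,\dots,S_M,S$ is uniform on $\{1,\dots,M+1\}$, with ties only helping. It follows that $\prob(S\le S_{(\lceil (M+1)(1-\delta)\rceil)})\ge \lceil (M+1)(1-\delta)\rceil/(M+1)\ge 1-\delta$. Therefore $\prob(\eta_{H'}>\gamma)\le\delta$, marginally over both the calibration set and the test trajectory.

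The main obstacle is not the conformal step, which is routine, but making the reduction rigorous. Two points need care. The first is that $\eta$ is monotone only at inference timesteps $t=jk$; between them I would define $\eta_t$ to be held constant, so the event is unchanged. The second is that trajectories have random, variable lengths $H_i\le H$, and the test score must be computed by exactly the same procedure used on the calibration trajectories, so that exchangeability holds. I would also note that if $\gamma$ were taken as the plain empirical $1-\delta$ quantile without the $(M+1)$ correction, the bound holds only up to an $O(1/M)$ slack. In that case I would either state the corrected quantile or cite the high-probability variant of \cite{LuoZhaoSample2023}.
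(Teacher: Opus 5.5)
Your proposal is correct and follows the paper's proof essentially step for step. The paper likewise uses $\hat{D}\geq 0$ to show that $\eta_t$ is nondecreasing, reducing the time-uniform event to $\{\eta_{H'}>\gamma\}$. It then applies the split-conformal result of \cite{angelopoulos2021gentle} to the i.i.d.\ terminal scores, with $\gamma$ set to the empirical $\lceil (M+1)(1-\delta)\rceil/M$ quantile, which is exactly the corrected quantile you chose. Your extra care (folding the action-sample batches into each trajectory's randomness, holding $\eta_t$ constant between inference steps) only makes explicit what the paper leaves implicit: it assumes $H_i \bmod k = 0$ and does not discuss the sampling randomness.
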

We refer to \cref{appx:sentinel-stac} for additional details on STAC and \cref{appx:sentinel-derivations} for a full statement and proof of \cref{cor:sentinel-stac}.

\subsection{Detecting Task Progression Failures with VLMs}\label{sec:sentinel-vlm-assessment}
A policy operating in out-of-distribution settings may not always fail by exhibiting erratic behavior that we can detect with STAC (\cref{sec:sentinel-temporal-consistency}). 
For example, suppose the policy confidently commits to the wrong plan or produces approximately constant outputs. 
Detecting such failures requires an understanding as to whether or not the policy is progressing on its task, which necessitates a more comprehensive analysis of the robot's behavior within the context of its task specification. 
Therefore, we propose to use VLMs to monitor the task progress of the policy by providing them with the robot's image observations up to the current timestep as a video. We do so because recent work has shown that high-capacity VLMs possess robotics relevant knowledge and contextual reasoning abilities~\cite{nasiriany2024pivot,gao2023physically,pmlr-v232-du23b,li2024visionlanguage,yang2023robot}.

We formulate the detection of task progression failures as a chain-of-thought (CoT)~\cite{wei2022chain}, video question answering (QA) task~\cite{antol2015vqa,xu2016msr}, reflecting current best practices in prompting.
To capture a notion of \textit{task progress}, the VLM must reason across time and in the context of the policy's task.
Thus, we construct a prompt that contains a description of the task and the VLM's role as a runtime monitor.
We query the VLM online using the text prompt and the history of observed images (i.e., a video) $I_{0:t} := (I_0, I_{\nu k}, I_{2\nu k}, \ldots, I_t)$ up to the current timestep $t$, where $\nu$ determines the frequency of the images relative to the execution horizon $k$ of the DP (\cref{sec:sentinel-problem-setup}). 
Differentiating between partial progress and task failure can be ambiguous for a slow moving robot, and thus, we also specify the current elapsed time $t$ and the time limit for the task $H$. 
This enables the VLM to gauge whether the rate at which the robot is executing will result in a timely task completion.
After performing a CoT analysis, the VLM concludes with a classification in $\{\texttt{ok}, \ \texttt{failure}\}$.
For additional details on the VLM and prompt, please see \cref{appx:sentinel-vlm}.

At the time of writing, cloud-querying a state-of-the-art VLM for video QA incurs significant latency (e.g., \texttt{GPT-4o}'s mean response time was $14.0\mathrm{s}$). However, we emphasize that VLM inference latency is a lesser concern for detecting task progression failures because they are likely to occur at longer timescales and do not require urgent intervention. 
In contrast, we assign the rapid detection of erratic failures to STAC (\cref{sec:sentinel-temporal-consistency}).
Notably, the fast and slow detection requirements of our complementary failure categories mean that STAC and the VLM can operate at different timescales, offering potential benefits such as reduced costs and a lower likelihood of false positives when they run in parallel (\cref{fig:sentinel-full-system}).

\section{Experiments}\label{sec:sentinel-experiments}

We conduct a series of experiments to test our failure detection framework. 
These experiments take place in both simulation and the real world (\cref{fig:sentinel-teaser}), and host an extensive list of baselines.
We refer to \cref{appx:sentinel-experiments} for a detailed description of our environments, hardware setup, baselines, and evaluation protocol.

\textbf{Environments.} 
We include the \textbf{PushT} domain from \cite{chi2023diffusion} to evaluate the detection of failures under action multimodality.
The \textbf{Close Box} and \textbf{Cover Object} domains involve two mobile manipulators and thus present the challenge of a high-dimensional, 14 degree-of-freedom action space.
We additionally conduct hardware experiments with a mobile manipulator for a nonprehensile \textbf{Push Chair} task.
This task presents greater dynamic complexity than the simulation domains~\cite{ruggiero2018nonprehensile}.
At test time, we generate OOD scenarios by randomizing a) the scale and dimensions of objects in \textbf{PushT} and \textbf{Close Box} and b) the pose of the object in \textbf{Cover Object} and \textbf{Push Chair} beyond the policy's demonstration data.

\textbf{Baselines.} 
We evaluate Sentinel (i.e., both STAC and the VLM) against baselines representative of multiple methodological categories in the OOD detection literature~\cite{sinha2022system}. 
Intuitively, these categories represent different formulations of the failure detector's score function, responsible for computing the per-timestep scores that are then summed to compute the trajectory score as in \cref{eq:sentinel-cum-score-fn}.
We consider score functions based on the embedding similarity of observed states \textit{w.r.t.} $\mathcal{D}_\tau$ \cite{LeeLeeEtAl2018}, the reconstruction error of actions sampled from the DP~\cite{graham2023denoising}, and the output variance of the DP. 
To strengthen the comparison, we introduce a new baseline that uses the DDPM loss (\cref{eq:sentinel-ddpm-loss}) on re-noised actions sampled from the DP as the failure detector's score function.
Where applicable, we implement temporal consistency variants of these baselines to ablate the design of STAC. 
Further details on these baselines are provided in \cref{appx:sentinel-baselines}.

\textbf{Evaluation Protocol.} 
We train a DP for each environment and use standard settings for the DP's prediction and execution horizon~\cite{chi2023diffusion}. 
We use the same calibration and evaluation protocol across all failure detection methods.
That is, we calibrate detection thresholds to the 95-th quantile of scores in a dataset $\mathcal{D}_\tau = \{\tau^i\}_{i=1}^M$ of $M = 50$ in-distribution rollouts for each simulated task and $M = 10$ in-distribution rollouts for the real-world task.
Finally, we report standard detection metrics including TPR, TNR, Mean Detection Time, Accuracy, and Balanced Accuracy, following the definitions in \cref{sec:sentinel-problem-setup}.

\section{Results}\label{sec:sentinel-results}

\textbf{STAC detects diffusion policy failures in multimodal domains.}
\cref{fig:sentinel-pusht-result} (Left) compares STAC against the best performing method of each baseline category in the \textbf{PushT} domain. 
Here, STAC is the only method to achieve a balanced accuracy of over 90\%, indicating that temporal consistency (or lack thereof) is strongly correlated with success (or failure).
Alternative output metrics, such as the DP's output variance, do not perform well because both successes and failures can exhibit high-variance outputs in multimodal domains.
Interestingly, the embedding similarity approach performs strongly, which indicates that state dissimilarity \textit{w.r.t.} the calibration dataset happens to be correlated with failure in this domain.

\begin{figure}
  \includegraphics[width=0.8\textwidth]{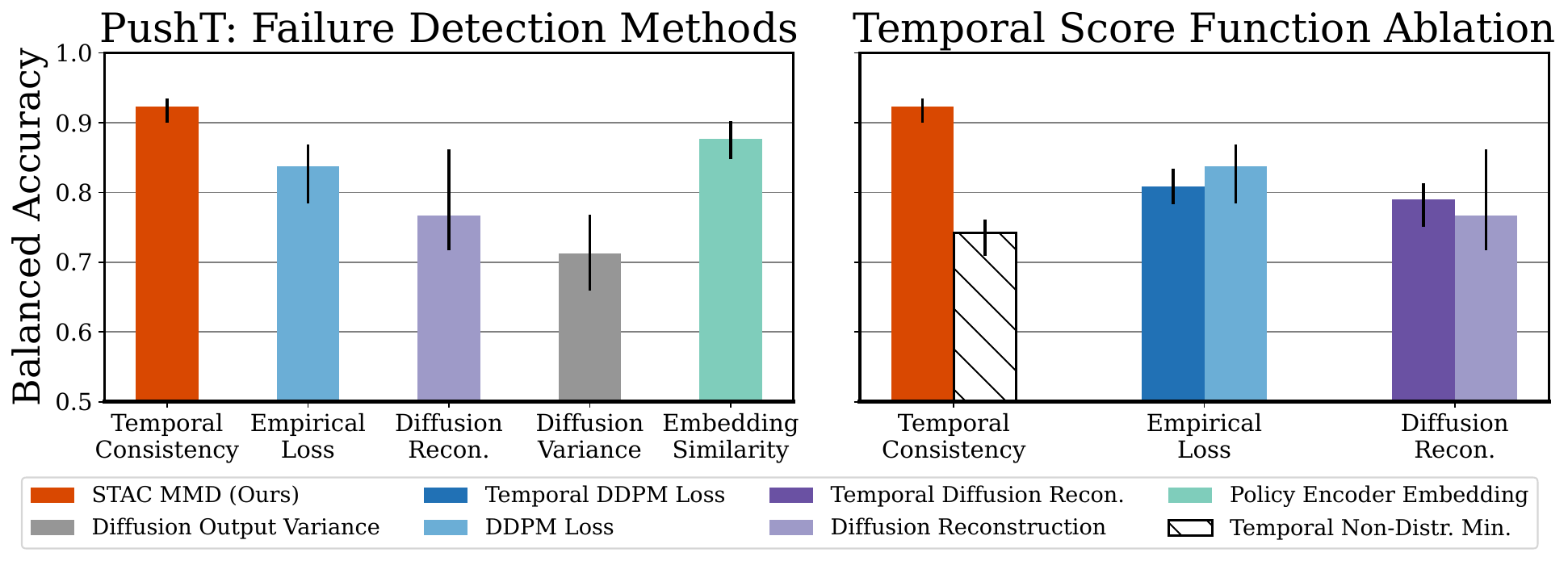}
  \caption[Failure detection results on simulated PushT diffusion policy (Sentinel)]{\small
        \textbf{Detecting failures in PushT.}
        Left: Our failure detector (\otext{STAC}) which measures the temporal consistency of a generative policy outperforms several families of out-of-distribution detectors. 
        Right: The best performance comes from measuring temporal consistency with statistical distance functions; augmenting baselines with temporal consistency does not always increase their performance.
    }
    \label{fig:sentinel-pusht-result}
\end{figure}

\textbf{Statistical measures of action similarity enable temporal consistency detection.}
\cref{fig:sentinel-pusht-result} (Right) ablates the design decisions of STAC.
First, we observe that augmenting baselines with temporal consistency will at most marginally increase their performance. 
Second, using a non-statistical distance function (e.g., min. distance) to measure temporal action consistency performs worse than the baselines because it omits action multimodality.
Thus, it is the combination of statistical distance functions with temporal consistency that yields the best result. 
We refer to \cref{appx:sentinel-stac-ablations} for an extended ablation of STAC.

\begin{table*}[t]
    \centering
    \caption[Failure detection results on simulated box closing diffusion policy (Sentinel)]{\small
        \textbf{Detecting erratic policy failures in the Close Box domain.} Results are averaged over 3 random seeds. Our temporal consistency detector, STAC, accounts for when a policy fails (\btext{high true positive rate}) and when it generalizes to out-of-distribution test cases (\btext{high true negative rate}), in contrast to embedding-based methods that associate state atypicality with policy failure (\rtext{low true negative rate}). Select baselines that \ktext{accurately detect} erratic policy failures in this domain experience a decrease in performance under multimodal conditions (i.e., PushT, as shown in \cref{fig:sentinel-pusht-result}), whereas STAC continues to exhibit \gtext{strong performance} across multiple domains. VLMs \vtext{must reason} over video to attain high true negative rates, as is necessary to combine them with STAC (see \cref{fig:sentinel-full-system-result}).
        Sentinel, which runs STAC and the VLM monitor in parallel, detects 100\% of erratic policy failures in this domain.
    } 
    \label{tab:sentinel-erratic-failures}
    \adjustbox{max width=\textwidth}{\begin{tabular}{clcccccccc|cccc}
        \toprule
        & \textbf{Category 1: Erratic Failures} & \multicolumn{3}{c}{\textbf{Close Box: In-Distribution}} & & \multicolumn{3}{c}{\textbf{Close Box: Out-of-Distribution}} & & & \multicolumn{3}{c}{\textbf{Close Box: Combined}} \\
        & & \multicolumn{3}{c}{(Policy Success Rate: 91\%)} & & \multicolumn{3}{c}{(Policy Success Rate: 41\%)} & & & \multicolumn{3}{c}{(Policy Success Rate: 67\%)} \\
        \cmidrule{3-5} \cmidrule{7-9} \cmidrule{12-14}
        & \textbf{Failure Detector} & TPR $\uparrow$ & TNR $\uparrow$ & Det. Time (s) $\downarrow$ & & TPR $\uparrow$ & TNR $\uparrow$ & Det. Time (s) $\downarrow$ & & & TPR $\uparrow$ & TNR $\uparrow$ & Accuracy $\uparrow$ \\
        \midrule
        \multirow{6}{*}{\STAB{\rotatebox[origin=c]{90}{\small \textbf{Diffusion}}}}
            & Temporal Non-Distr. Min. & 1.00 & 0.97 & 5.00 & & 1.00 & 0.27 & 12.35 & & & 1.00 & 0.77 & 0.85 \\
            & Diffusion Recon.~\cite{graham2023denoising} & 0.33 & 0.95 & 13.60 & & 0.40 & 1.00 & 17.08 & & & 0.37 & 0.96 & 0.76 \\
            & Temporal Diffusion Recon. & 1.00 & 0.96 & 8.47 & & 0.92 & 1.00 & 15.75 & & & \vcl{0.92} & \vcl{0.97} & \vcl{\underline{0.95}} \\
            & DDPM Loss (\cref{eq:sentinel-ddpm-loss}) & 1.00 & 0.90 & 8.27 & & 1.00 & 0.94 & 14.54 & & & \vcl{1.00} & \vcl{0.91} & \vcl{0.94} \\
            & Temporal DDPM Loss & 1.00 & 0.95 & 7.53 & & 1.00 & 0.37 & 13.66 & & & 1.00 & 0.79 & 0.86 \\
            & Diffusion Output Variance & 0.33 & 0.94 & 14.00 & & 0.28 & 1.00 & 17.27 & & & 0.26 & 0.96 & 0.72 \\
        \midrule
        \multirow{3}{*}{\STAB{\rotatebox[origin=c]{90}{\footnotesize \textbf{Embed.}}}}
            & Policy Encoder & 0.25 & 0.98 & 16.27 & & \rcl{1.00} & \rcl{0.00} & \rcl{1.59} & & & 0.94 & 0.70 & 0.78 \\
            & CLIP Pretrained & 1.00 & 0.95 & 15.73 & & \rcl{1.00} & \rcl{0.00} & \rcl{8.20} & & & 1.00 & 0.68 & 0.79 \\
            & ResNet Pretrained & 1.00 & 0.95 & 17.87 & & \rcl{1.00} & \rcl{0.00} & \rcl{15.51} & & & 1.00 & 0.68 & 0.79 \\
        \midrule
        \multirow{3}{*}{\STAB{\rotatebox[origin=c]{90}{\small \textbf{STAC}}}}
            & STAC For. KL (Ours) & 1.00 & 0.90 & 6.60 & & \bcl{0.99} & \bcl{0.85} & \bcl{14.04} & & & 0.99 & 0.89 & 0.92 \\
            & STAC Rev. KL (Ours) & 1.00 & 0.95 & 7.60 & & \bcl{0.93} & \bcl{0.97} & \bcl{15.12} & & & \gcl{0.93} & \gcl{0.96} & \gcl{\underline{0.95}} \\
            & \textbf{STAC MMD*} (Ours) & 1.00 & 0.94 & 7.20 & & \bcl{0.99} & \bcl{0.93} & \bcl{14.72} & & & \gcl{0.99} & \gcl{0.94} & \gcl{\textbf{0.96}} \\
        \midrule
        \multirow{2}{*}{\STAB{\rotatebox[origin=c]{90}{\small \textbf{VLM}}}}
            & GPT-4o Image QA & 1.00 & 0.00 & 23.20 & & 1.00 & 0.00 & 23.20 & & & \kcl{1.00} & \kcl{0.00} & \kcl{0.29} \\
            & \textbf{GPT-4o Video QA*} (Ours) & 1.00 & 0.89 & 21.20 & & 0.69 & 0.95 & 21.02 & & & \kcl{0.77} & \kcl{0.91} & \kcl{0.87} \\        
        \midrule
        \midrule
        \multicolumn{2}{l}{\textbf{Sentinel (STAC MMD* + GPT-4o Video QA*)}} & 1.00 & 0.86 & 5.47 & & 1.00 & 0.90 & 14.25 & & & 1.00 & 0.87 & 0.91 \\
        \bottomrule
    \end{tabular}}
\end{table*}

\textbf{STAC accounts for OOD failures and generalization.} 
Results on the \textbf{Close Box} domains are shown in \cref{tab:sentinel-erratic-failures}.
STAC attains the highest accuracy in aggregate (96\%). 
However, two of our newly proposed baselines---using the DDPM loss (\cref{eq:sentinel-ddpm-loss}) and a temporal reconstruction variant of \cite{graham2023denoising}---also perform well, perhaps due to a decrease in action multimodality relative to \textbf{PushT}.
Notably, we find that embedding similarity methods conflate OOD states with policy failure, resulting in false positives when the policy succeeds OOD.
In contrast, STAC effectively differentiates OOD successes from failures.

\textbf{VLMs must reason across time.}
In \cref{tab:sentinel-erratic-failures}, we find that a state-of-the-art VLM (\texttt{GPT-4o}) struggles to identify task success when given only a single image. Instead, it must observe the robot over the extent of a policy rollout to more accurately reason about task progression and changes in state (resulting in a $91\%$ TNR). 
While erratic failures are time-sensitive, they are visually more subtle and thus difficult for the VLM to detect (77\% TPR). The robot takes more obviously wrong actions (e.g., stalling, drifting astray) in task progression failures (\cref{fig:sentinel-full-system-result}). As expected, the VLM has a significantly slower detection time relative to STAC, further highlighting STAC's value at quickly detecting erratic behavior.

\begin{wrapfigure}{rt}{0.44\textwidth}
  \centering
  \vspace{-14pt}
  \includegraphics[width=1.0\textwidth]{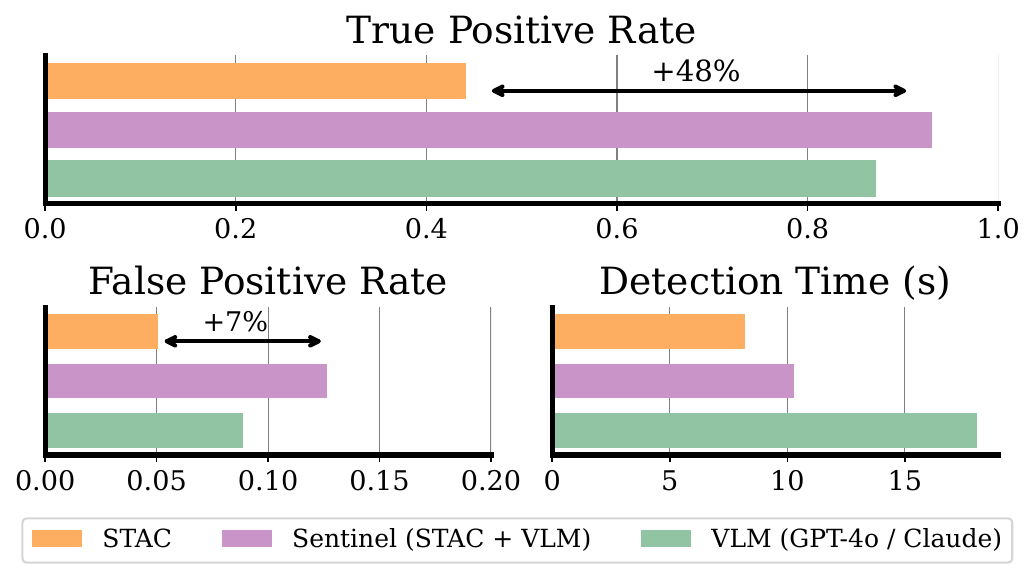}
  \caption[Failure detection results on simulated object covering diffusion policy (Sentinel)]{\small
      \textbf{Detecting task progression failures.} Combining VLMs with STAC yields an accurate overall detector (\vtext{Sentinel}) for both task progression and erratic failures (\cref{tab:sentinel-erratic-failures}).
      See \cref{appx:sentinel-extended-vlm-results} for extended results and analysis.
  }
  \label{fig:sentinel-full-system-result}
\end{wrapfigure}

\textbf{Sentinel: Combining STAC and VLMs for system-level performance increases.}
We evaluate our failure detectors on distribution shifts that primarily lead to task progression failures in the \textbf{Cover Object} and \textbf{Close Box} domains.
The result is shown in \cref{fig:sentinel-full-system-result}. 
STAC achieves a low TPR (44\%) when the policy fails in a temporally consistent manner, whereas the VLM (\texttt{GPT-4o} for \textbf{Close Box}, \texttt{Claude} for \textbf{Cover Object}) accurately detects task progression failures. 
As a result, their combination (Sentinel) achieves a 93\% TPR whilst incurring only a 7\% increase in FPR.
The rise in detection time indicates that both STAC (fast) and the VLM (slow) are contributing to the detection of failures. 

\textbf{Sentinel detects real-world, generative policy failures.}
We evaluate Sentinel on the \textbf{Push Chair} task across 10 successful and 10 failed policy rollouts.
The results are shown in \cref{tab:sentinel-push-chair}.
When calibrated on only 10 successful in-distribution rollouts, STAC detects 80\% of policy failures and raises only one false alarm (90\% TNR).
The VLM exhibits stronger performance in the real world (90\% TPR, 100\% TNR) than it does in the simulation domains, perhaps because real-world images constitute a lesser domain gap for visual reasoning compared to images rendered in simulation. 
Overall, Sentinel achieves a 95\% detection accuracy, highlighting its efficacy for detecting failures in real-world robotic settings.

\begin{wraptable}{r}{0.44\textwidth}
    \vspace{-10pt}
    \centering
    \caption[Failure detection results on real-world chair tucking diffusion policy (Sentinel)]{\small
        \textbf{Detecting real-world failures.} Sentinel demonstrates strong failure detection performance on the real-world Push Chair task, achieving an overall accuracy of 95\%.
    }
    \label{tab:sentinel-push-chair}
    \adjustbox{max width=\textwidth}{
        \begin{tabular}{lccc}
            \toprule
            \textbf{Failure Detector} & \textbf{TPR $\uparrow$} & \textbf{TNR $\uparrow$} & \textbf{Det. Time (s) $\downarrow$} \\
            \midrule
            Diffusion Output Variance & 0.60 & 0.90 & 10.67 \\
            Temporal Non-Distr. Min. & 0.70 & 0.80 & 9.52 \\
            \textbf{STAC Rev. KL} (Ours) & 0.80 & 0.90 & 9.83 \\
            \textbf{GPT-4o Video QA} (Ours) & 0.90 & 1.00 & 12.89 \\
            \midrule
            \midrule
            \rowcolor{green!13}
            \textbf{Sentinel (STAC + GPT-4o)} & 1.00 & 0.90 & 9.60 \\
            \bottomrule
        \end{tabular}
    }
\end{wraptable}

\textbf{Discussion.}
Holistic analysis of \cref{tab:sentinel-erratic-failures}, \cref{fig:sentinel-full-system-result}, \edit{and \cref{tab:sentinel-push-chair}} shows that we can easily combine STAC and the VLM to yield a performant overall detector for both erratic and task progression failures, particularly because both detectors achieve a high overall TNR.
Since all the baselines may 1) show low accuracy on either of the erratic failure domains (i.e., the multimodal \textbf{PushT} and \textbf{Close Box} domains) or 2) yield a low TNR, it is unclear how to combine them with other detectors in a way that outperforms Sentinel. 

\section{Conclusion}\label{sec:sentinel-conclusion}

In this chapter, we investigate the problem of failure detection for generative robot policies.
We propose Sentinel, a runtime monitor that splits the failure detection task into two categories: 1) Erratic failures, which we detect by measuring the statistical change of a policy's action distributions over time; 2) task progression failures, where we use Vision-Language Models to assess whether the policy is consistently taking actions that do not solve the task. 
Our results highlight the importance of targeting complementary failure categories with specialized detectors.
Future work includes the use of Sentinel to monitor high-capacity policies~\cite{octo_2023,pmlr-v270-kim25c}, inform data collection, and accelerate policy learning.

\section{Limitations and Future Work}\label{sec:sentinel-limitations}
We summarize the limitations of our approach and opportunities for future investigation:
\begin{enumerate}
    \item \textbf{Failure categories.} While categorizing erratic and task progression failures leads to accurate detection of failures across the domains we consider, these two failure categories may not be exhaustive. In the future, introducing additional categories or further partitioning existing ones might provide a broader coverage of failures, allow for more efficient failure detection, and inform mitigation strategies.
    \item \textbf{Detection guarantees.} Furthermore, our approach does not provide formal guarantees on detecting failures. However, providing such guarantees would require data of both successful and unsuccessful policy rollouts to calibrate the detector~\cite{LuoZhaoSample2023}.
    \item \textbf{Combining detectors.} Although our detectors attain low false positive rates in aggregate, taking the union of their predictions may, in the worst case, increase the risk of false alarms. Thus, exploring more sophisticated ways to combine complementary failure detectors is a possible point of extension.
    \item \textbf{Failure prediction.} Finally, our approach is not targeted at predicting failures before they occur, but instead focuses on detecting failures as they occur.
\end{enumerate}

\chapter{Understanding Policy Behavior through Training Data}\label{chapter:cupid}

\section{Introduction}\label{sec:cupid-introduction}

In \cref{chapter:sentinel}, we examined how policy reliability can be improved through deployment-time failure detection, enabling systems to identify when learned behaviors are likely to break down. While such mechanisms are essential for mitigating unsafe outcomes, they offer limited insight into \textit{why} failures occur. This chapter addresses this limitation from a data-centric perspective, motivated by the observation that a policy’s deployment-time behavior is fundamentally shaped by the quality and composition of its training data---yet this relationship often remains opaque. Accordingly, we seek to elucidate how individual training demonstrations contribute to downstream outcomes, such as closed-loop task success or failure, thereby providing principled insight into the underlying causes of policy behavior.

More broadly, while some of the largest breakthroughs in deep learning have emerged from architectural innovations, data often remains an underrecognized yet critical driver of a model's overall performance.
The success of scaling vision and language models has been followed by a rising interest in data attribution~\cite{grosse2023studying, park2023trak, engstrom2024dsdm}---methods that causally link model behavior to training data---and in automatic data curation algorithms~\cite{lee2021deduplicating, tirumala2023d4, albalak2024survey}, grounded in the idea that not all data points contribute equally, or even positively, to a model's performance. As the robotics community continues to scale imitation learning and robotics datasets become increasingly diverse~\cite{o2024open, khazatsky2024droid}, developing a deeper understanding of (i) how demonstration data shapes policy behavior and (ii) how we can extract maximum utility from training datasets will be imperative to advancing policy performance toward reliable, open-world deployment.

Curating data for robot imitation learning has been the focus of several recent works~\cite{kuhar2023learning, hejna2025robotdatacurationmutual, chen2025curating}.
A common approach retains demonstrations deemed most valuable under a heuristic, \textit{task-agnostic quality} metric, resulting in a smaller dataset curated offline~\cite{hejna2025robotdatacurationmutual}. 
This approach typically rests on the implicit assumption that the designed quality metric aligns well with the policy’s downstream performance---an assumption that may not hold uniformly across diverse robotics tasks.
While recent efforts attempt to learn \textit{performance-correlated} heuristics using online policy experience~\cite{chen2025curating}, they do not establish strong causal links between training data and policy behavior. 
As a result, these methods risk misattributing the root cause of policy success or failure with respect to the training data~\cite{de2019causal}.

\begin{figure}[t]
    \includegraphics[width=1.0\linewidth]{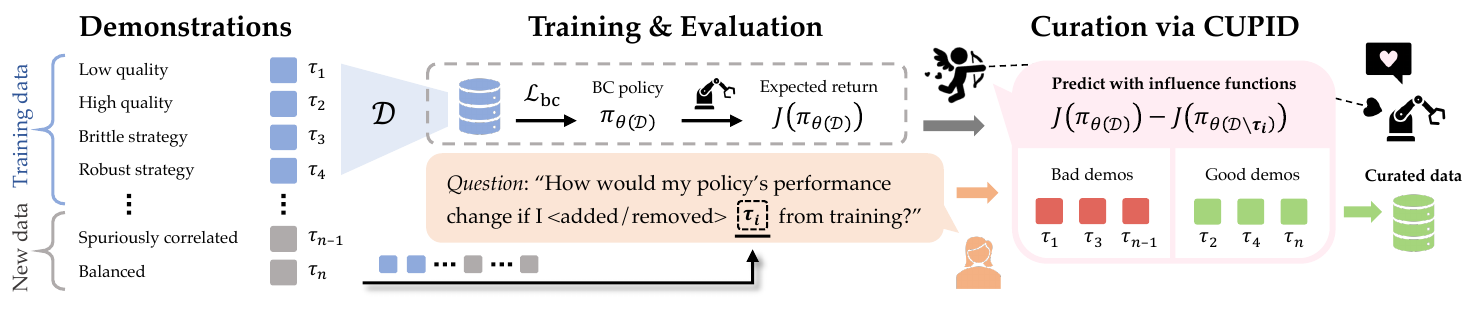}
    \caption[Overview of CUPID]{\small
        We present \textbf{CUPID}, a robot data curation method that leverages influence functions to predictively answer counterfactual questions about the effect of each demonstration on downstream policy performance.
        More details can be found on the CUPID website: \url{https://cupid-curation.github.io}.
    }
    \label{fig:cupid-teaser}
\end{figure}

In this chapter, we formally define data curation in imitation learning as the problem of identifying which expert demonstrations maximally contribute to the policy’s expected return. 
We then introduce \basemethod{} (\textsc{Cu}rating \textsc{P}erformance-\textsc{I}nfluencing \textsc{D}emonstrations), a data curation method that directly targets this objective by leveraging influence functions~\cite{koh2017understanding, koh2019accuracy}---a technique popularized in the data attribution literature~\cite{hammoudeh2024training}---to identify which demonstrations influenced a policy’s predictions during closed-loop execution.
We show that a demonstration’s influence on expected return decomposes into a tractable sum over its state-action transitions and can be efficiently approximated using a \texttt{REINFORCE}-style estimator~\cite{sutton1999policy} given a set of policy rollouts. 
Ranking demonstrations by their estimated performance impact facilitates curation in two settings: (a) filtering existing demonstrations from training sets and (b) selecting high-impact demonstrations from newly collected or pre-collected data---whereas prior work focuses solely on filtering~\cite{hejna2025robotdatacurationmutual, chen2025curating}. 
Finally, while our approach offers a general and effective standalone signal for curating demonstration data, we investigate its combined use with task-agnostic quality metrics (also derived from influence scores), identifying conditions under which the integration of performance- and quality-based metrics strengthens or weakens overall curation performance.

Our contributions are three-fold: (1) We formulate robot data curation as the problem of valuating demonstrations in accordance with their downstream impact on policy performance; (2) We propose \basemethod{}, a novel approach for curating imitation learning datasets based on influence functions, causally linking demonstrations to the policy’s expected return; (3) We characterize the conditions under which the integration of task-agnostic quality metrics strengthens performance-based data curation, providing practical insights into when such integration is beneficial.
Extensive simulation and hardware experiments show that curation with \basemethod{} significantly improves policy performance in mixed-quality regimes, even when using only a fraction of the training data.
Moreover, it identifies robust strategies under test-time distribution shifts and can disentangle spurious correlations in training data that hinder generalization---all by observing policy outcomes alone, without requiring additional supervision.

\section{Related Work}\label{sec:cupid-related-work}

\textbf{Data Curation in Robotics.} 
Assembling larger and more diverse datasets has been central to scaling efforts in robot imitation learning~\cite{o2024open, khazatsky2024droid, rt12022arxiv, rt22023arxiv, octo_2023, pmlr-v270-kim25c, black2410pi0}, yet how to extract greater utility from these datasets remains an open question.
Several works have explored data augmentation~\cite{mandlekar2023mimicgen, yu2023scaling, mandi2022cacti, smith2024steer, pmlr-v270-zawalski25a} and mixture optimization~\cite{pmlr-v270-hejna25a}.
Only recently has attention shifted to valuating individual demonstrations for data curation~\cite{kuhar2023learning, hejna2025robotdatacurationmutual, chen2025curating}.
\citet{hejna2025robotdatacurationmutual} estimate demonstration quality offline via mutual information---without considering policy performance---while \citet{chen2025curating} train classifiers to distinguish successful and failed rollouts across policy checkpoints. In contrast, we directly measure the causal influence of each demonstration on the policy's expected return, providing a signal that (a) does not require observing both successes and failures, (b) uses only a single policy checkpoint, (c) is robust to spurious correlations in the policy's rollout distribution, and (d) naturally extends to selecting new data, whereas~\cite{hejna2025robotdatacurationmutual, chen2025curating} only filter existing data.
Concurrent work includes DataMIL~\cite{dass2025datamil}, which uses datamodels to select from large multi-task datasets with an offline metric, whereas we focus on single-task curation with an influence measure that directly reflects closed-loop returns from online policy rollouts.

\textbf{Data Attribution outside Robotics.}
Data attribution methods model the relationship between training data and learned behavior, with applications in model interpretability~\cite{park2023trak, shah2023modeldiff}, data valuation~\cite{ghorbani2019data, choe2024your}, machine unlearning~\cite{georgiev2024attribute}, and more~\cite{madry2024icml}.
Recent work has focused on improving the accuracy of data attribution methods~\cite{basu2021influence, bae2022if, ilyas2025magic}, such as influence functions~\cite{koh2017understanding, koh2019accuracy}, and extending them to increasingly complex generative architectures~\cite{grosse2023studying, zheng2023intriguing, georgiev2023journey}.
A related line of research explores improving language model pre-training~\cite{engstrom2024dsdm} and fine-tuning~\cite{xia2024less, liu2024tsds, engstrom2025optimizing} through data selection.
However, these settings typically assume aligned training and evaluation objectives (i.e., prediction loss) and access to test-time labels.
In contrast, robot imitation learning involves an objective mismatch: policies are trained via supervised learning but evaluated through closed-loop environment interactions, where task success depends on many sequential predictions and ground-truth action labels are unavailable at test-time.

\section{Background: Data Attribution via Influence Functions}\label{sec:cupid-background}
At a high-level, \textbf{the goal of data attribution} methodologies is to explicitly relate model performance and behavior to the training data, so that we can answer \emph{counterfactual} questions about the contribution of training samples towards test-time predictions. 
Consider a standard supervised learning setting, where we fit model parameters $\theta$ on a given training dataset $\mathcal{D} := \{z^1, \ldots, z^n\}$ of input-label pairs $z^i=(x^i, y^i) \in \mathcal{Z}$ with $\theta(\mathcal{D}) =  \arg \min_{\theta'} \{\calL(\theta'; \calD) := \frac{1}{n} \sum_{i=1}^n \ell(z^i; \theta')\}$. Moreover, let $f(\hat{z}; \theta) \in \R$ be any chosen performance metric on a test sample $\hat{z} = (\hat{x}, \hat{y}) \in \mathcal{Z}$ given model parameters $\theta$ (e.g., cross-entropy loss for a classifier). Then, a data attribution method $\Psi^{\mathrm{out}}: \mathcal{Z} \times \mathcal{Z} \rightarrow \mathbb{R}$ aims to approximate the change in the performance metric $f$ if we were to exclude sample $z^i$ from the model's training data. That is, we aim to design $\Psi^{\mathrm{out}}$ such that $\Psi^{\mathrm{out}}(\hat{z}, z^i) \approx  f\left(\hat{z}; \theta(\mathcal{D} \setminus \{z^i\})\right) - f(\hat{z}; \theta(\mathcal{D}))$.

\textbf{The influence function} is a data attribution technique that approximates $\Psi^{\mathrm{out}}$ \emph{without} retraining any models~\cite{hammoudeh2024training}. 
Consider perturbing the training objective as $\calL_{\epsilon, z}(\theta'; \calD) := \calL(\theta'; \calD) + \epsilon \ell(z, \theta'),$ where we add an infinitesimal weight $\epsilon$ on the loss of some sample $z$ to $\calL$. The \emph{influence function} estimates the change in the performance metric $f$ as a function of $\epsilon$ with a first-order Taylor approximation as
\begin{align}
    \Psi_{\text{inf}}(\hat{z}, z)  &:= \frac{df(\hat{z}; \theta)}{d \epsilon}\bigg|_{\epsilon=0} = -\nabla_\theta f(\hat{z}; \theta(\mathcal{D}))^\top  H_{\theta}^{-1}  \nabla_\theta \ell(z; \theta(\mathcal{D})),
        \label{eq:cupid-influence-fn}
\end{align}
where $H_{\theta} = \frac{1}{n} \sum_{i=1}^{n} \nabla_\theta^2 \ell(z^i; \theta(\mathcal{D}))$ denotes the Hessian of the training loss\footnote{\label{fn:cupid-track} To reduce the computational cost of \cref{eq:cupid-influence-fn}, we use TRAK~\cite{park2023trak}, which leverages random projections and a Gauss-Newton Hessian approximation for efficient influence estimation. This also makes the influence function amenable to the non-smooth, non-convex loss functions in practical deep learning problems, so we assume \cref{eq:cupid-influence-fn} is well-defined throughout this chapter.}~\cite{koh2017understanding}. Therefore, we can use the influence function to directly approximate the \emph{leave-one-out} influence $\Psi^{\mathrm{out}}$ of a sample $z^i\in\calD$ as $\Psi^{\mathrm{out}}_{\mathrm{inf}}(\hat{z}, z^i) := -\frac{1}{n}\Psi_{\mathrm{inf}}(\hat{z}, z^i)$. In addition, for $z \not \in \calD$ we similarly define the \emph{add-one-in} influence as $\Psi^{\mathrm{in}}_{\mathrm{inf}}(\hat{z}, z):= \frac{1}{n} \Psi_{\mathrm{inf}}(\hat{z}, z) \approx f(\hat{z}; \theta(\calD \cup \{z\})) - f(\hat{z}; \theta(\calD))$ with $z$ excluded from the Hessian $H_{\theta}$.

\section{Problem Formulation}\label{sec:cupid-formulation}

\textbf{Imitation Learning (IL):}
Our objective is to understand how demonstration data contributes to closed-loop performance in robot imitation learning. Thus, we consider a Markov Decision Process $\langle \mathcal{S}, \mathcal{A}, \mathcal{T}, R, \rho_0 \rangle$ with state space~$\mathcal{S}$, action space~$\mathcal{A}$, transition model~$\mathcal{T}$, reward model~$R$, initial state distribution~$\rho_0$, and finite horizon~$H$. We train a policy~$\pi_\theta$ to minimize a behavior cloning (BC) objective, i.e., $\theta = \arg\min_{\theta'} \{\mathcal{L}_{\text{bc}}(\theta'; \mathcal{D}) := \frac{1}{|\calD| H}\sum_{\xi^i\in\calD}\sum_{(s, a) \in \xi^i} \ell(s, a; \pi_{\theta'})\}$, using a dataset of $n$ expert demonstrations $\mathcal{D} = \{\xi^1, \ldots, \xi^n\}$. Each demonstration $\xi^i = ((s^i_0, a_0^i), \ldots, (s_H^i, a_H^i))$ consists of a state-action trajectory where the robot successfully completes the task. 
We treat a trajectory $\tau = (s_0, a_0, \ldots, s_{H})$ as either a \emph{success} or a \emph{failure}, corresponding to the binary returns $R(\tau) = 1$ and $R(\tau) = -1$ respectively. 

Therefore, in IL, we train the policy $\pi_\theta$ to match the distribution of successful behaviors in $\calD$, rather than directly maximize its expected return $J(\pi_\theta) := \E_{p(\tau|\pi_\theta)}[R(\tau)]$.
As a result, the policy's performance is intimately linked to the relative suboptimality of the demonstration data---a function of its quality and composition---not just to validation losses, model capacity, or bias-variance tradeoffs. This makes it extremely challenging to systematically improve performance. 
Recent works underscore that simply scaling demonstration collection may result in datasets that contain substantial redundancies and behaviors that may actually harm policy performance, even though $R(\xi^i) = 1$ for all demonstrations $\xi^i \in \calD$~\cite{belkhale2023data}.

\textbf{Robot Data Curation:} 
While several recent works propose intuitive measures of quality to curate data, we find that such heuristics can misalign with how deep models actually learn, sometimes even worsening test-time performance compared to randomly choosing samples
(see \cref{sec:cupid-experiments}). Therefore, we first formally define robot data curation as the problem of identifying demonstration data that maximizes the policy's closed-loop performance.
In particular, assume that we have a \emph{base policy} $\pi_{\theta}$ trained on the demonstration data $\calD$. We consider two settings that are essential to a policy debugging toolchain. The first is that of \emph{data filtering}, where our goal is to identify and remove redundant or harmful demonstrations from $\calD$ that may be limiting the performance of the base policy $\pi_\theta$.

\begin{task}[Filter-$k$ demonstrations]\label{task:filter-k}
    Let $\Xi^-_k = \left\{ S \subseteq \mathcal{D} \;\middle|\; |S| = k \right\}$ denote all possible $k$-demonstration subsets of the training dataset $\mathcal{D} = \{\xi^1, \ldots, \xi^n\}$, where $k \leq n$. Determine which $k$ demonstrations should be removed from $\mathcal{D}$ to maximize policy performance with respect to the task objective $J$. That is, find 
    \begin{align*}
        S^\star =\;& \arg \max _{S \in \Xi^-_k} J(\pi_{\theta}) \quad \mathrm{s.t.} \quad \theta = \arg \min_{\theta'} \ \mathcal{L}_{\mathrm{bc}}(\theta'; \mathcal{D} \setminus S).
    \end{align*}
\end{task}
The second is that of \emph{data selection}, where we seek to guide the subselection of new demonstration data to maximally improve our base policy, given a fixed budget.
\begin{task}[Select-$k$ demonstrations]\label{task:select-k}
    Let $\Xi^+_k = \left\{ S \subseteq \mathcal{H} \;\middle|\; |S| = k \right\}$ denote all possible $k$-demonstration subsets of a holdout dataset $\mathcal{H} = \{\xi^1, \ldots, \xi^{n'}\}$, where $k \leq n'$. Determine which $k$ demonstrations should be added to $\mathcal{D}$ from $\mathcal{H}$ to maximize policy performance with respect to the task objective $J$. That is, find 
    \begin{align*}
        S^\star =\;& \arg \max_{S \in \Xi^+_k} J(\pi_{\theta}) \quad \mathrm{s.t.} \quad \theta = \arg \min_{\theta'} \ \mathcal{L}_{\mathrm{bc}}(\theta'; \mathcal{D} \cup S). 
    \end{align*} 
\end{task}
In \cref{task:select-k}, we consider the problem of identifying the most impactful trajectories from a newly collected batch of demonstrations or from an existing pre-collected dataset, akin to performing quality control.

\textbf{Policy Testing \& Evaluation:} To make progress on \cref{task:filter-k} and \cref{task:select-k}, we assume access to a small dataset of $m$ rollouts $\mathcal{D}_\tau = \{\tau^1, \ldots, \tau^m\} \iid p(\tau | \pi_\theta)$ of the base policy $\pi_\theta$ along with their associated returns $\{R(\tau^1), \ldots, R(\tau^m)\}$ to estimate $J(\pi_\theta)$. This aligns with how we currently evaluate policies in practice~\cite{vincent2024generalizable}, despite lacking principled strategies to leverage evaluations towards BC policy improvement. 

\begin{figure}[t]
    \centering
    \includegraphics[width=\linewidth]{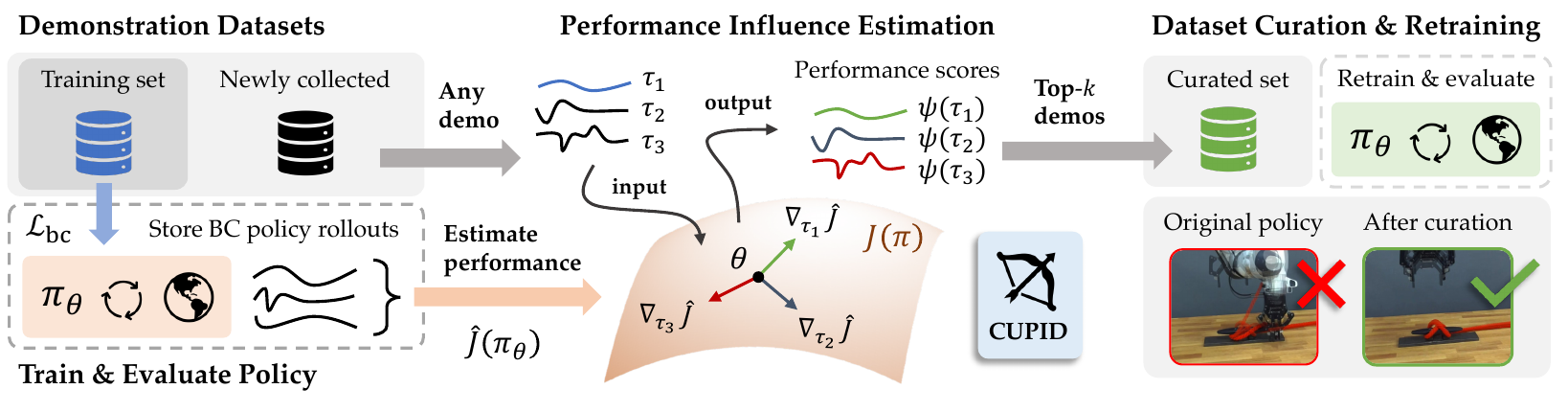}
    \caption[System architecture for data curation (CUPID)]{\small
        \textbf{Data curation with \basemethod{}.} Upon training a policy on a set of demonstrations using behavior cloning, we evaluate it online to collect closed-loop rollout trajectories and estimate the policy’s expected return. \basemethod{} ranks demonstration based on their measured influence on this performance estimate and selects the top-$k$. Thus, curating with \basemethod{} results in a dataset of demonstrations that most strongly influences closed-loop policy success.
    }
    \label{fig:cupid-method-overview}
\end{figure}

\section{CUPID: Curating Performance-Influencing Demonstrations}\label{sec:cupid-method}

In the literature, the desiderata for curating demonstration data appears diverse and often context specific, with recent works valuating demonstrations upon heuristic measures of similarity~\cite{chen2025curating}, compatibility~\cite{gandhi2023eliciting}, uncertainty~\cite{cui2019uncertainty}, and information gain~\cite{hejna2025robotdatacurationmutual}. \textbf{The key insight} of our approach is that solving curation problems, i.e., \cref{task:filter-k} and \cref{task:select-k} (\cref{sec:cupid-formulation}), requires causally connecting training data to the policy's closed-loop performance. 
Therefore, we first adapt techniques from data attribution, as defined in \cref{sec:cupid-background}, to directly compute the influence of a training demonstration on the performance of a policy. This allows us to use our \emph{performance influence} to directly curate data in alignment with our objectives.

\subsection{Demonstration-Performance Influence}\label{sec:cupid-performance-influence}

While existing data attribution methods can trace validation losses back to the training set $\calD$ for curation purposes, the BC loss is not always reflective of a policy's closed-loop performance~\cite{ross2011reduction}. Thus,
we must first develop an analogous notion of the influence function to capture the impact of a \emph{demonstration trajectory} on the \emph{closed-loop performance} of an imitation learning policy.
To do so, we group the BC training objective into trajectory-level losses by introducing  $\ell_{\mathrm{traj}}(\xi ; \pi_{\theta'}) := \frac{1}{H}\sum_{(s,a) \in \xi}\ell(s, a; \pi_{\theta'})$, so that $\calL_{\mathrm{bc}}(\theta'; \calD) = \frac{1}{|\calD|} \sum_{\xi^i \in \calD}\ell_{\mathrm{traj}}(\xi^i ; \pi_{\theta'})$. 
We now formally define the \emph{performance influence} of a demonstration as the application of the influence function (see \cref{eq:cupid-influence-fn}) on the policy's expected return:

\begin{definition}[Performance Influence]\label{def:polinf} Let $\xi$ be a demonstration of interest. Suppose we train a policy $\pi_\theta$ to minimize the perturbed BC objective $\calL_{\mathrm{bc}}^{\epsilon, \xi}(\theta' ; \calD) := \calL_{\mathrm{bc}}(\theta'; \calD) + \epsilon \ell_{\mathrm{traj}}(\xi; \pi_{\theta'})$. Then, demonstration $\xi$'s  \emph{\textbf{performance influence}} is the derivative of the policy's expected return $J(\pi_\theta)$ with respect to the weight $\epsilon$. That is,
\begin{align}\label{eq:cupid-polinf}
\polinf(\xi) &:= \frac{dJ(\pi_\theta)}{d\epsilon}\bigg|_{\epsilon=0} \nonumber = -\nabla_\theta J(\pi_\theta)^\top H_{\mathrm{bc}}^{-1} \nabla_\theta \ell_{\mathrm{traj}}(\xi; \pi_\theta),
\end{align}
where $H_{\mathrm{bc}} := \nabla^2_\theta\calL_{\mathrm{bc}}(\theta ; \calD)$ denotes the Hessian of the BC objective.
\end{definition}
In essence, \cref{def:polinf} enables us to predictively answer the counterfactual: ``How would the policy's expected return change if we upweighted---or by negating, downweighted---the loss on a demonstration $\xi$ during training?''
While \cref{def:polinf} neatly aligns with the standard definition of the influence function in \cref{eq:cupid-influence-fn}---using $J$ as the performance metric and $\ell_{\mathrm{traj}}$ as the demonstration-level loss---we distinguish the \emph{performance influence} from the standard influence function~\cite{koh2017understanding} for two key reasons: (1) The performance influence attributes the \emph{outcome} of a policy's sequential decisions to time-series demonstrations, whereas the existing techniques discussed in \cref{sec:cupid-background} only relate an individual labeled prediction to a single training sample; (2) We cannot directly compute $\polinf$ because the policy's expected return $J(\pi_\theta)$ depends on the unknown transition dynamics and reward function.
To alleviate these challenges, we show that we can decompose the \emph{performance influence} into influence scores of individual action predictions, which we define as the \textit{action influence}.
\begin{definition}[Action Influence]\label{def:actinf} The \emph{\textbf{action influence}} of a state-action pair $(s,a)$ on a test state-action pair $(s',a')$ is the influence of $(s,a)$ on the policy's log-likelihood $ \log \pi_\theta(a'|s')$. That is,
    \begin{equation}\label{eq:cupid-actinf}
    \actinf((s',a'), (s,a)) := -\nabla_\theta \log\pi_\theta(a'|s')^\top H_{\mathrm{bc}}^{-1}\nabla_\theta \ell(s, a; \pi_\theta).
\end{equation}
\end{definition}
The advantage of the \emph{action influence} is that we can easily compute the quantities in \cref{eq:cupid-actinf} given the policy weights $\theta$ and the training demonstrations $\calD$, e.g., using the attribution methods discussed in \cref{sec:cupid-background}.
However, we emphasize that computing \emph{action influences} over state-action samples from a policy rollout $\tau \sim p(\tau | \pi_\theta)$ only tells us what demonstration data led to the policy taking those actions, without ascribing value to the resulting outcome (e.g., success or failure).
We now show that the performance influence decomposes into the sum of individual action influences, weighted by the trajectory return $R(\tau)$.

\begin{proposition}\label{prop:cupid-polinf}
Assume that $\theta(\calD) = \arg\min_{\theta'} \calL_{\mathrm{bc}}(\theta'; \calD)$, that $\calL_{\mathrm{bc}}$ is twice differentiable in $\theta$, and that $H_{\mathrm{bc}} \succ 0$ is positive definite (i.e., $\theta(\calD)$ is not a saddle point)\footnoteref{fn:cupid-track}. Then, it holds that\footnote{Note that the fraction $1/H$ appears from the assumption that all trajectories have equal length, which we make purely for notational simplicity without loss of generality. We refer to \cref{appx:cupid-proof-length} for the variable length case.}
\begin{equation}\label{eq:cupid-policy-inf-deriv}
    \polinf(\xi) = \E_{\tau\sim p(\tau|\pi_\theta)}\bigg[\frac{R(\tau)}{H} \sum_{(s',a')\in\tau}\sum_{(s,a)\in\xi}\actinf\big((s',a'),(s,a)\big)\bigg].
\end{equation}
\end{proposition}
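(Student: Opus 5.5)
We start from the closed form in \cref{def:polinf}, $\polinf(\xi) = -\nabla_\theta J(\pi_\theta)^\top H_{\mathrm{bc}}^{-1}\nabla_\theta \ell_{\mathrm{traj}}(\xi;\pi_\theta)$. It is justified under the stated assumptions by the standard implicit-function argument. Because $\theta(\calD)$ minimizes $\calL_{\mathrm{bc}}$, $\calL_{\mathrm{bc}}$ is twice differentiable, and $H_{\mathrm{bc}}\succ 0$, the perturbed minimizer $\theta_\epsilon$ is differentiable in $\epsilon$ near $0$. Differentiating the first-order condition $\nabla_\theta \calL_{\mathrm{bc}}(\theta_\epsilon;\calD) + \epsilon\nabla_\theta\ell_{\mathrm{traj}}(\xi;\pi_{\theta_\epsilon}) = 0$ at $\epsilon=0$ gives
\[
\frac{d\theta_\epsilon}{d\epsilon}\Big|_{\epsilon=0} = -H_{\mathrm{bc}}^{-1}\nabla_\theta\ell_{\mathrm{traj}}(\xi;\pi_\theta).
\]
The chain rule applied to $J(\pi_{\theta_\epsilon})$ then yields the expression in \cref{def:polinf}.

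The key step is to replace $\nabla_\theta J(\pi_\theta)$ by the \texttt{REINFORCE} (likelihood-ratio) identity. Write $p(\tau|\pi_\theta)=\rho_0(s_0)\prod_t \pi_\theta(a_t|s_t)\mathcal{T}(s_{t+1}|s_t,a_t)$. Only the policy factors depend on $\theta$, so
\[
\nabla_\theta J(\pi_\theta) = \E_{\tau\sim p(\tau|\pi_\theta)}\Big[R(\tau)\sum_{(s',a')\in\tau}\nabla_\theta\log\pi_\theta(a'|s')\Big].
\]
This step is the main obstacle, because it requires exchanging $\nabla_\theta$ with the expectation over trajectories. I would handle it by assuming the finite horizon $H$, a bounded return $R\in\{-1,1\}$, and enough regularity (e.g.\ $\pi_\theta$ differentiable with integrable score functions, dominated convergence) to differentiate under the integral. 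I would state this as a mild implicit assumption, consistent with the footnote treating influence quantities as well-defined.

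Finally, I substitute both pieces and use linearity. The quantities $H_{\mathrm{bc}}^{-1}$ and $\nabla_\theta\ell_{\mathrm{traj}}(\xi;\pi_\theta)$ are deterministic (independent of $\tau$), so the inner product moves inside the expectation. Expanding $\ell_{\mathrm{traj}}(\xi;\pi_\theta)=\frac{1}{H}\sum_{(s,a)\in\xi}\ell(s,a;\pi_\theta)$ gives
\[
\polinf(\xi) = \E_\tau\Big[\frac{R(\tau)}{H}\sum_{(s',a')\in\tau}\sum_{(s,a)\in\xi} -\nabla_\theta\log\pi_\theta(a'|s')^\top H_{\mathrm{bc}}^{-1}\nabla_\theta\ell(s,a;\pi_\theta)\Big].
\]
Each summand is exactly $\actinf((s',a'),(s,a))$ by \cref{def:actinf}, which gives \cref{eq:cupid-policy-inf-deriv}. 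The $1/H$ factor comes solely from the trajectory-level loss normalization. With variable-length trajectories it becomes $1/H_\xi$ for the demonstration length, which matches the footnoted variable-length case.
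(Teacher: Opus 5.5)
Your proposal is correct and follows the paper's proof essentially step for step: the influence-function closed form, the log-derivative identity for $\nabla_\theta J(\pi_\theta)$, linearity of expectation to move the deterministic factor $H_{\mathrm{bc}}^{-1}\nabla_\theta\ell_{\mathrm{traj}}(\xi;\pi_\theta)$ inside, and then expanding $\ell_{\mathrm{traj}}$ and applying \cref{def:actinf}. Your implicit-function derivation and your explicit regularity assumption for exchanging $\nabla_\theta$ with $\E_\tau$ simply spell out what the paper delegates to the standard influence-function derivation and its footnote.

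One small correction to your closing aside concerns the variable-length case. The paper's result there (\cref{prop:cupid-polinf-length}) has no $1/H_\xi$ factor. It switches to the unnormalized losses $\calL_{\mathrm{bc}}'$ and $\ell_{\mathrm{traj}}'(\xi;\pi_\theta)=\sum_{(s,a)\in\xi}\ell(s,a;\pi_\theta)$, so the identity holds with no length factor at all. Your $1/H_\xi$ version corresponds to a different, per-demonstration-averaged perturbation that weights demonstrations of different lengths differently.
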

In brief, we prove \cref{prop:cupid-polinf} using the log-derivative trick underlying policy gradient methods~\cite{sutton1999policy, williams1992simple} to decompose $\polinf$ into $\actinf$ (see \cref{appx:cupid-proof} for proof).
Because \cref{prop:cupid-polinf} relates the performance influence to the average action influence that a demonstration $\xi$ has on the closed-loop distribution of policy rollouts, \cref{prop:cupid-polinf} directly provides a method to estimate $\polinf$: \newline
\textbf{Estimate $\polinf$:} First, evaluate the policy $\pi_\theta$ online to gather a set of rollouts $\mathcal{D}_\tau = \{\tau^1, \ldots, \tau^m\} \iid p(\tau | \pi_\theta)$ and their associated returns $\{R(\tau^1), \ldots, R(\tau^m)\}$. Then, construct an empirical estimate of the performance influence $\polinfest$ using \cref{eq:cupid-policy-inf-deriv}, by averaging action influences across the rollouts in $\calD_\tau$. 

We illustrate the performance influence in \cref{fig:cupid-method-overview} and summarize its estimation procedure in \cref{alg:cupid-polinf-est}.

\begin{algorithm}[H]
\small
\caption{Performance Influence}\label{alg:cupid-polinf-est}
\begin{algorithmic}[1]
\State \textbf{Input:} Policy $\pi_\theta$, training data $\calD$, demonstration $\xi$, data attribution method $\Psi$
\State Collect rollouts $\mathcal{D}_\tau = \{\tau^1, \ldots, \tau^m\} \iid p(\tau | \pi_\theta)$ and returns $\{R(\tau^1), \ldots, R(\tau^m)\}$
\State Use $\Psi, \calD$ to compute $\actinf((s',a'), (s,a))$ for all $(s',a') \in \calD_\tau$, $(s,a) \in \xi$
\State Estimate $\widehat{\Psi}_{\pi\text{-}\mathrm{inf}}(\xi) := \frac{1}{m}\sum_{\tau^i \in \calD_\tau} \frac{R(\tau^i)}{H} \sum_{(s',a') \in \tau^i} \sum_{(s,a) \in \xi} \actinf((s',a'), (s,a))$.
\State \textbf{Output:} Estimated performance influence $\widehat{\Psi}_{\pi\text{-}\mathrm{inf}}(\xi)$
\end{algorithmic}
\end{algorithm}

\subsection{Data Curation with Performance Influence}\label{sec:cupid-methods-curation}

In this section, we leverage the performance influence $\polinf,$ which we developed in \cref{sec:cupid-performance-influence}, to curate data towards the filtering and selection tasks (\cref{task:filter-k} and \cref{task:select-k}) defined in \cref{sec:cupid-formulation}. In particular, we use the estimates of $\polinf$ to make the following first-order Taylor approximations on the \emph{leave-one-out} and \emph{add-one-in} influence (as defined in \cref{sec:cupid-background}) of a demonstration trajectory as
\begin{minipage}{\linewidth}
\small
\begin{equation*}
    \polinf^{\mathrm{out}}(\xi) := -\frac{\polinfest(\xi)}{|\calD|} \approx J(\pi_{\theta(\calD \setminus \{\xi\})}) - J(\pi_{\theta(\calD)}), \quad
    \polinf^{\mathrm{in}}(\xi):= \frac{\polinfest(\xi)}{|\calD|} \approx  J(\pi_{\theta(\calD \cup \{\xi\})}) -J(\pi_{\theta(\calD)}).
\end{equation*}
\end{minipage} \\

Then, we use the \emph{leave-one-out} and \emph{add-one-in} influences to counterfactually estimate the change in expected return when removing or adding a set of demonstrations $S$ with a linear approximation as 
$\Delta \widehat{J}(\pi_{\theta(\calD \setminus S)}) \propto \frac{1}{|S|}\sum_{\xi\in S}\polinf^{\mathrm{out}}(\xi)$ and $\Delta \widehat{J}(\pi_{\theta(\calD \cup S)}) \propto \frac{1}{|S|}\sum_{\xi\in S}\polinf^{\mathrm{in}}(\xi)$. As a result, optimally curating data under our approximate linear model on policy performance simply entails selecting the least influential demonstrations from the training data $\calD$---in the case of data filtering---or selecting the most influential demonstrations from a new set of demonstrations $\calH$---in the case of data selection:

\begin{minipage}{0.47\textwidth}
  \textbf{\cref{task:filter-k}: Filter-$k$ Demonstrations}
  \begin{equation}\label{eq:cupid-prune-topk}
    S^\star_{\mathrm{out}} = \argtop\text{-}k\big(\{\polinf^{\mathrm{out}}(\xi^i) : \xi^i \in \mathcal{D}\}\big),
  \end{equation}
\end{minipage}
\hfill
\begin{minipage}{0.47\textwidth}
  \textbf{\cref{task:select-k}: Select-$k$ Demonstrations}
  \begin{equation}\label{eq:cupid-select-topk}
    S^\star_{\mathrm{in}} = \argtop\text{-}k\big(\{\polinf^{\mathrm{in}}(\xi^i) : \xi^i \in \mathcal{H}\}\big).
  \end{equation}
\end{minipage} \\

We note that by linearly approximating policy performance changes using $\polinf$, we construct what is commonly termed a (linear) \emph{datamodel}~\cite{ilyas2022datamodels}. 
As shown in NLP \cite{engstrom2024dsdm}, using such first-order approximations for data curation can often greatly improve model performance over manual notions of quality. 

\subsection{Additional Quality Metrics}\label{sec:cupid-methods-quality}

In \cref{sec:cupid-performance-influence}, we constructed a method to estimate $\polinf$ from a dataset of policy rollouts $\calD_\tau$ by relying on policy gradient methods.
Therefore, the estimated performance influence $\widehat{\Psi}_{\pi\text{-}\mathrm{inf}}$ becomes increasingly noisy as we reduce the number of rollouts $m$ to evaluate the policy---akin to the high variance problem of the \texttt{REINFORCE} algorithm. 
To complement the analysis in \cref{sec:cupid-performance-influence}, we explore the integration of a \emph{reward-agnostic, heuristic} demonstration quality metric based on the action influence scores $\actinf$: 
\begin{equation}\label{eq:cupid-quality-influence-fn}\small
    \Psi_{\text{qual}}(\xi; \mathcal{D}_\tau) := \frac{1}{m}\sum_{\tau \in \mathcal{D}_\tau} \max_{\tiny{(s',a') \in \tau}} \min_{(\tiny{s,a) \in \xi}} \actinf\big((s',a'), (s,a)\big) - \min_{\tiny{(s',a') \in \tau}} \max_{\tiny{(s,a) \in \xi}} \actinf\big((s',a'), (s,a)\big).
\end{equation}
We base the quality score \cref{eq:cupid-quality-influence-fn} on the intuition that we should penalize demonstrations containing outlier or noisy influence scores \cite[Sec. 5.2]{koh2017understanding}, \cite{hejna2025robotdatacurationmutual}.
As such, we posit that this heuristic can reduce variance on tasks requiring precise motion, yet introduce bias uncorrelated with performance in other settings.
Thus, in \cref{sec:cupid-experiments}, we investigate when the quality score can complement $\polinf$ to curate data by taking their convex combination, $\alpha \polinf + (1-\alpha)\Psi_{\mathrm{qual}}$, ablating $\alpha=1$ (\basemethod{}) and $\alpha = 1/2$ (\qualitymethod{}).

\section{Experiments}\label{sec:cupid-experiments}

We conduct a series of experiments to test the efficacy of \basemethod{} alongside state-of-the-art baselines for robot data curation. These experiments take place across three simulated tasks from the RoboMimic benchmark suite~\cite{pmlr-v164-mandlekar22a} and three real-world tasks with a Franka FR3 manipulator (see \cref{fig:cupid-franka-dp-results}). These tasks comprise a taxonomy of settings where data curation may benefit policy performance. For a detailed description of our tasks, datasets, baselines, evaluation protocol, and hardware setup, please refer to \cref{appx:cupid-experiments} 

\textbf{Evaluation.} We study the filter-$k$ (\cref{task:filter-k}) and select-$k$ (\cref{task:select-k}) curation tasks wherever applicable. For statistical significance, we start filter-$k$ and select-$k$ from random $\sim2/3$ and $\sim1/3$ subsets in RoboMimic (300 demonstrations per task total), and random $\sim9/10$ and $\sim4/10$ subsets on Franka tasks (120-160 demonstrations per task total), respectively. We use the official convolutional-based diffusion policy implementation~\cite{chi2023diffusion} for all tasks to measure the effect of curation on a state-of-the-art policy architecture. 
Details on the influence function computation for diffusion models are provided in \cref{appx:cupid-method}.
We also consider the official $\pi_0$ implementation~\cite{black2410pi0} for real-world tasks. To reflect practical constraints, we limit the rollout budget (i.e., the number of rollouts in $\mathcal{D}_\tau = \{\tau^i\}_{i=1}^m$ a curation algorithm may use, as described in \cref{sec:cupid-formulation}) to $m = 100$ and $m = 25$ for simulated and real-world tasks, respectively. We report policy success rates over 500 rollouts averaged over the last 10 policy checkpoints for simulated tasks, and 25 rollouts performed with the last checkpoint for real-world tasks.

\textbf{Baselines.}
We consider baselines from several methodological categories: DemInf~\cite{hejna2025robotdatacurationmutual}---applicable only to filter-$k$ (\cref{task:filter-k})---curates data offline (i.e., without rollouts) by maximizing mutual information, promoting diverse and predictable demonstrations; Demo-SCORE~\cite{chen2025curating} trains binary classifiers to distinguish states from successful and failed rollouts, retaining demonstrations with a high average state success probability; Success Similarity is a custom method that ranks demonstrations by their average state similarity to successful rollouts; Random chooses demonstrations uniformly at random; Oracle approximates an upper bound on performance by curating data with privileged access to ground-truth demonstration labels, e.g., indicating demonstration quality, strategy robustness, or other properties.

\subsection{Setting 1: Improving Policy Performance in Mixed-Quality Regimes}\label{sec:cupid-exp-quality}

We first study curation of mixed-quality datasets, where training on lower-quality demonstrations may degrade policy performance~\cite{pmlr-v164-mandlekar22a, hejna2025robotdatacurationmutual}. 
We use the ``Lift,'' ``Square,'' and ``Transport'' tasks from RoboMimic's multi-human (MH) task suite, which provides ground-truth quality labels for demonstrations. 
The ``Lift'' and ``Square'' tasks contain three quality tiers \{“low”, “medium”, “high”\}, while the more complex bi-manual ``Transport'' task contains six quality tiers \{“low-low”, “low-medium”, \ldots\}. 
On hardware, we design the “Figure-8” task (\cref{fig:cupid-franka-dp-results}(a)), where the robot must tie a simplified cleat hitch---a knot that follows a figure-8 pattern---requiring precise manipulation of a deformable rope. 

\begin{figure}[t]
    \centering
    \includegraphics[width=\linewidth]{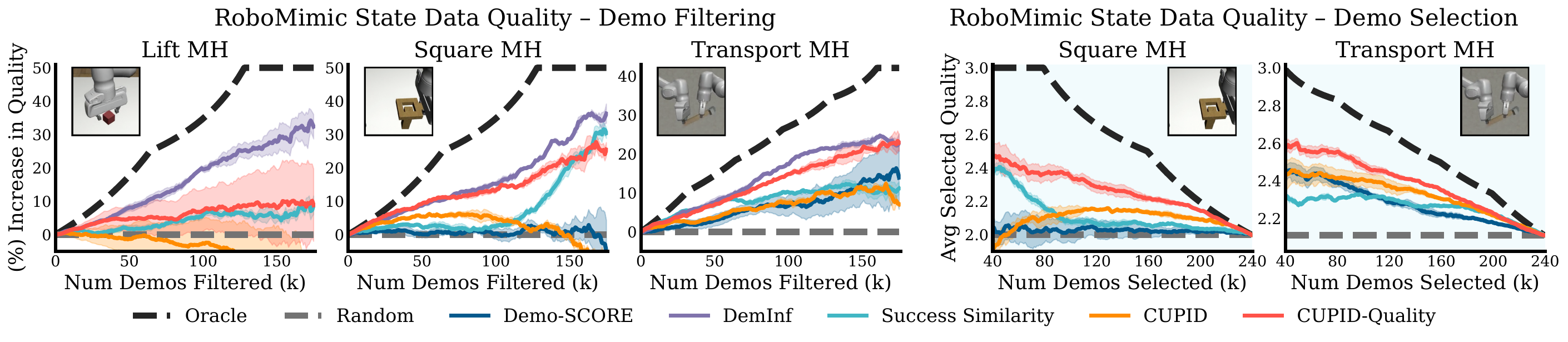}
    \includegraphics[width=\linewidth]{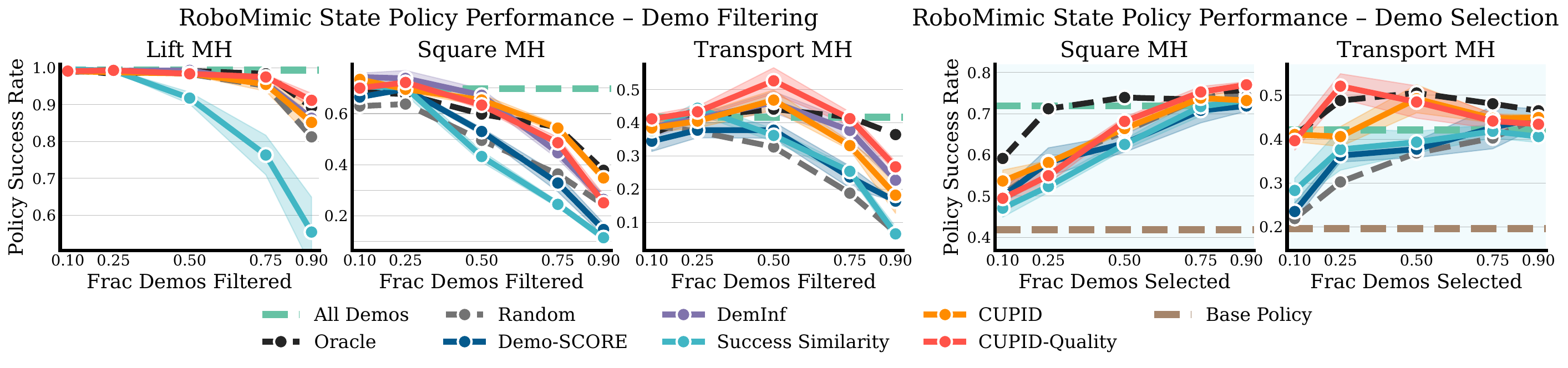}
    \caption[Data curation results on simulated RoboMimic diffusion policy (CUPID)]{\small
        RoboMimic mixed-quality curation results. \textbf{Top: Data Quality.} Baselines often prioritize demonstration quality (e.g., DemInf~\cite{hejna2025robotdatacurationmutual}), but higher demonstration quality does always translate to higher policy success rates. In contrast, \basemethod{} targets demonstrations that most strongly contribute to downstream policy performance. \textbf{Bottom: Policy Performance.} Diffusion policies trained on data curated by \basemethod{} achieve higher success rates than baselines, despite using demonstrations of perceived lower quality. Although combining performance and quality measures (\qualitymethod{}) yields the best policies on mixed-quality datasets, quality measures can degrade performance in other settings (see \cref{fig:cupid-franka-dp-results}).
        Results are averaged over 3 random seeds (500 policies trained across settings). Success rates are computed over 50 rollouts from the last 10 checkpoints (500 rollouts total).
    }
    \label{fig:cupid-robomimic-dp-results}
\end{figure}

\textbf{RoboMimic analysis.} \cref{fig:cupid-robomimic-dp-results} presents the RoboMimic benchmark results: the top row shows data quality trends for filter-$k$ and select-$k$ across varying $k$, while the bottom row reports success rates of diffusion policies trained on the corresponding curated datasets. As expected, we first observe that DemInf---which targets demonstration quality---curates datasets of the highest overall quality by RoboMimic's ground-truth labels for filter-$k$ (top row, \cref{fig:cupid-robomimic-dp-results}). However, policies trained on data curated by \basemethod{} 
consistently match or outperform those of DemInf (bottom row, \cref{fig:cupid-robomimic-dp-results}). 
This indicates that human perception of demonstration quality does not necessarily correspond to data that maximizes downstream policy success.
Second, we find the state similarity heuristics employed by Demo-SCORE and Success Similarity to be relatively ineffective in challenging mixed-quality regimes, where successful and failed rollouts exhibit similar states. 
Lastly, \qualitymethod{}, which evenly balances demonstration quality and downstream performance impact (\cref{sec:cupid-methods-quality}), attains the highest policy success rates---surpassing the Oracle in 3/5 cases, and achieving an even higher success rate than the official diffusion policy~\cite{chi2023diffusion} on ``Transport MH'' while using fewer than (i) 33\% of the original 300 demonstrations and (ii) 10\% of the model parameters. 
We provide an extended discussion of the RoboMimic results in \cref{appx:cupid-results-robomimic-discussion}.

\textbf{Figure-8 analysis.} 
\cref{fig:cupid-franka-dp-results}(a) shows diffusion policy results on the real-world ``Figure-8'' task. First, \basemethod{} improves over the base policy's success rate by 38\% (averaged over filtering and selection). Second, as in RoboMimic, \qualitymethod{} further strengthens curation performance, corroborating the utility of quality metrics (\cref{eq:cupid-quality-influence-fn}) in mixed-quality regimes. 
As shown in \cref{fig:cupid-franka-dp-distr-filter-dataset-results}(a) (filtering; see \cref{appx:cupid-select-distr} for selection), both \basemethod{} and \qualitymethod{} successfully retain high-quality demonstrations, whereas baselines such as Demo-SCORE discard some in favor of lower-quality demonstrations. Overall, training on lower-quality demonstrations appears to adversely affect policy performance on the ``Figure-8'' task.

\begin{figure}[t]
    \centering
    \includegraphics[width=\linewidth]{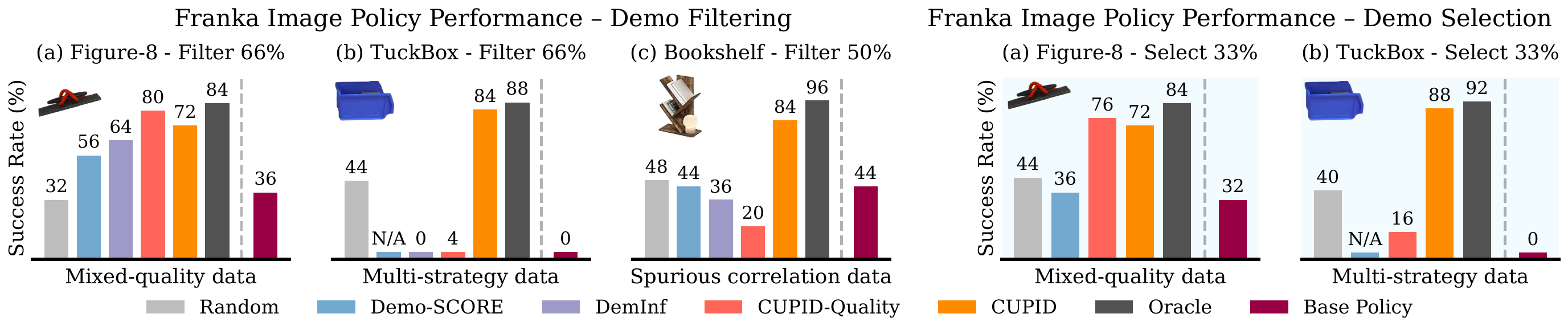}
    \includegraphics[width=\linewidth]{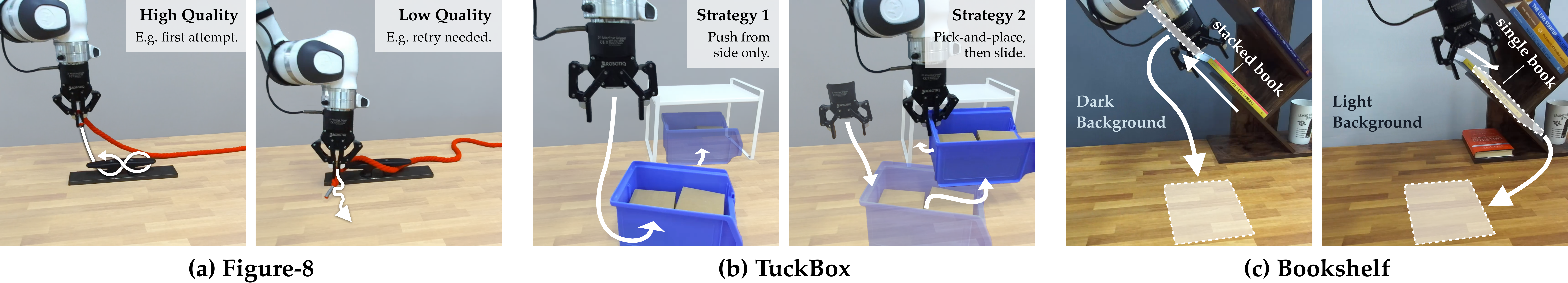}
    \caption[Data curation results on real-world diffusion policy (CUPID)]{\small
        \textbf{Franka real-world diffusion policy performance.} \basemethod{}, which curates demonstrations \textit{w.r.t.} policy performance, improves success rates on mixed-quality datasets, identifies robust strategies, and disentangles spurious correlations that hinder performance. Although quality measures (e.g., DemInf, \qualitymethod{}) help in mixed-quality settings (Figure-8; \cref{fig:cupid-robomimic-dp-results}), they degrade performance when higher-quality demonstrations induce brittle strategies at test time (TuckBox), or when quality is not the primary factor limiting policy success (Bookshelf). Overall, curating data based on performance (\basemethod{}) maintains robustness across these settings. 
    }
    \label{fig:cupid-franka-dp-results}
\end{figure}

\subsection{Setting 2: Identifying Robust Test-time Strategies from Policy Failures}\label{sec:cupid-exp-strategies}

Heterogeneous imitation learning datasets may contain multiple strategies for solving a task, some of which can fail under distribution shifts at deployment. We design a real-world ``TuckBox'' task, where a robot must tuck a recycling bin under a receptacle by (i) sliding or (ii) first repositioning it via pick-and-place (see \cref{fig:cupid-franka-dp-results}(b)). The dataset contains a 2:1 ratio of sliding to pick-and-place demonstrations, making sliding the dominant strategy. At test time, we induce an imperceptible distribution shift by altering the bin's mass distribution, rendering sliding unreliable. 
In this setting, curation aims to rebalance the dataset to promote strategies that are more robust to unforeseen shifts at deployment.

\textbf{TuckBox analysis.} \cref{fig:cupid-franka-dp-results}(b) shows the diffusion policy results on ``TuckBox.'' Due to the strategy imbalance, the base policy exclusively exhibits the sliding behavior, resulting in a 100\% failure rate under the distribution shift. This immediately invalidates the use of Demo-SCORE, which requires both successful and failed rollouts. In contrast, \basemethod{} does not require observing successes: by linking failures to the demonstrations that influenced them, curating with \basemethod{} yields a policy that exhibits increased pick-and-place behavior, performing comparably (84\%-88\% success rate) to the Oracle. 
In contrast, both DemInf and \qualitymethod{} incorrectly associate the higher-variance pick-and-place demonstrations with lower quality, resulting in more uniform filtering across strategies (see \cref{fig:cupid-franka-dp-distr-filter-dataset-results}(b)). As a result, policies trained on data curated by these baselines default to the brittle sliding strategy at deployment.

\begin{figure}[t]
    \centering
    
    \begin{subfigure}[b]{\linewidth}
        \centering
        \includegraphics[width=0.95\linewidth]{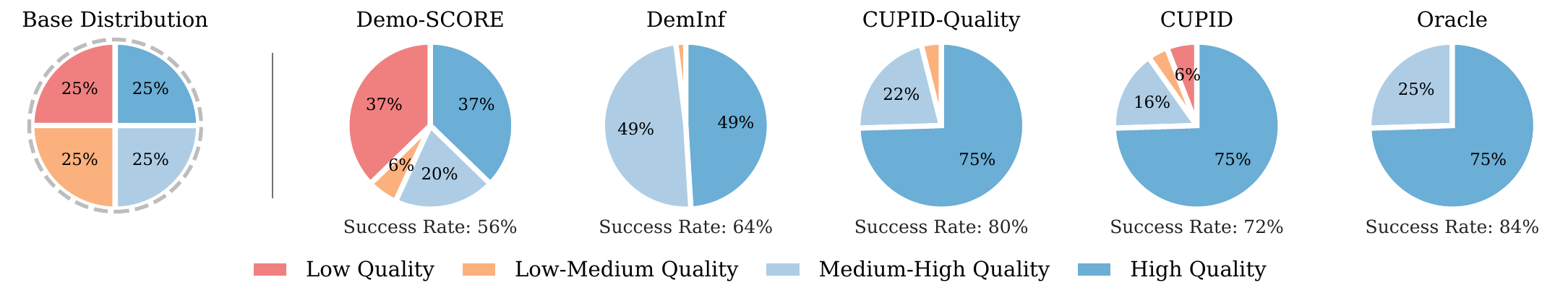}
        \vspace{-6pt}
        \caption{\footnotesize 
            \textbf{Figure-8:} Distribution of curated demonstrations after \textit{filtering} 66\%. Higher-quality demos are better.
        }
        \label{fig:cupid-franka-dp-distr-filter-dataset-results-figure8}
    \end{subfigure}

    \vspace{18pt}

    \begin{subfigure}[b]{\linewidth}
        \centering
        \includegraphics[width=0.95\linewidth]{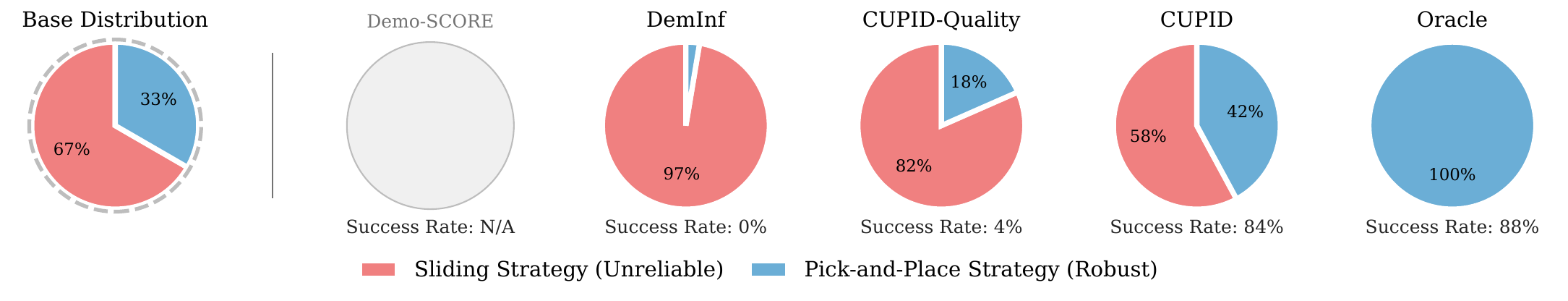}
        \vspace{-6pt}
        \caption{\footnotesize
            \textbf{TuckBox:} Distribution of curated demonstrations after \textit{filtering} 66\%. Pick-and-place demos are better.
        }
        \label{fig:cupid-franka-dp-distr-filter-dataset-results-tuckbox}
    \end{subfigure}
    
    \vspace{18pt}
    
    \begin{subfigure}[b]{\linewidth}
        \centering
        \includegraphics[width=0.95\linewidth]{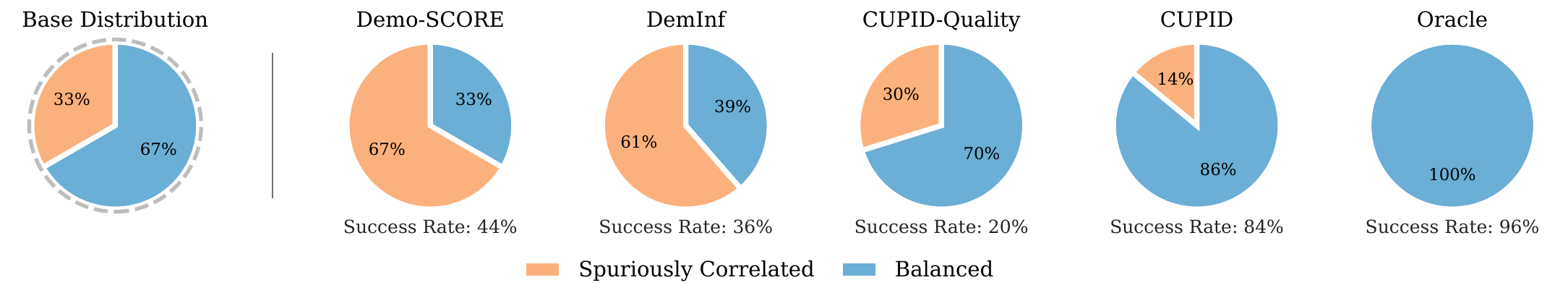}
        \vspace{-6pt}
        \caption{\footnotesize
            \textbf{Bookshelf:} Distribution of curated demonstrations after \textit{filtering} 50\%. Balanced data is better.
        }
        \label{fig:cupid-franka-dp-distr-filter-dataset-results-bookshelf}
    \end{subfigure}
    
    \vspace{8pt}
    
    \caption[Curated dataset distributions for demo filtering on real-world tasks (CUPID)]
    {\small
        \textbf{Franka diffusion policy curated dataset distributions for filtering (\cref{task:filter-k}).} \basemethod{} filters out lower-quality demonstrations (Figure-8), brittle strategies (TuckBox), and spuriously correlated examples (Bookshelf), improving policy performance across tasks. While curation heuristics employed by baselines may be effective in some cases (e.g., DemInf and \qualitymethod{} in Figure-8), they can lead to suboptimal pruning in others.
    }
    \label{fig:cupid-franka-dp-distr-filter-dataset-results}
\end{figure}

\subsection{Setting 3: Disentangling Spurious Correlations in Demonstration Data}\label{sec:cupid-exp-correlations}

Spurious correlations in training data may cause a policy to rely on non-causal features, hindering generalization to variations in the input or task~\cite{de2019causal}. We design a real-world ``Bookshelf'' task, where a robot must extract a target book via (i) horizontal or (ii) vertical pulling motion, depending on whether another book is stacked above the target book. While both strategies are equally represented in the training set, each co-occurs more frequently with a certain background color (see \cref{fig:cupid-franka-dp-results}(c)).
At evaluation, we test the policy under slight variations in the number and position of distractor books, while keeping the white background fixed---the correlate associated with the horizontal pulling behavior.

\textbf{Bookshelf analysis.} Diffusion policy results are shown in \cref{fig:cupid-franka-dp-results}(c). 
The base policy achieves only a 44\% success rate, as the presence of the white background often causes the policy to extract the target book horizontally despite another book being stacked on top (causing it to fall). 
Interestingly, by training classifiers to distinguish failed from successful states, Demo-SCORE appears to misattribute failure to the presence of rollout correlates (the stacked book) rather than causal factors (the white background). 
In contrast, \basemethod{} attains an 84\% success rate by identifying demonstrations that causally drive failure---in this case, horizontal pulling motion with a white background---enabling dataset rebalancing that mitigates the effect of spurious correlations (see \cref{fig:cupid-franka-dp-distr-filter-dataset-results}(c)). 
As in \cref{sec:cupid-exp-strategies}, DemInf and \qualitymethod{} incorrectly prioritize the lower-variance horizontal pulling motion, yielding negligible performance gains.

\section{Discussion and Ablations}\label{sec:cupid-discussion}

\subsection{How Is Curation Performance Affected by Properties of the Data and Task?}\label{sec:cupid-discussion-properties}

Our mixed-quality curation experiments (\cref{fig:cupid-robomimic-dp-results} and \cref{fig:cupid-franka-dp-results}(a)) reveal that while curation strengthens performance on ``Transport MH'' and ``Figure-8'' (i.e., a fraction of the demonstrations harm policy performance), removing almost \textit{any} demonstration degrades performance on ``Square MH'' (i.e., all demonstrations appear important). 
In contrast, only about 15\% of the dataset is necessary to maximize performance on ``Lift MH''  (i.e., the dataset is highly redundant)\footnote{Note that \cref{fig:cupid-robomimic-dp-results} does not include select-$k$ curation results for ``Lift MH'' because the base policy already achieves a 100\% success rate, leaving no further room for improvement by selecting additional demonstrations.}. 
These results indicate that the potential benefits of data curation depend on properties of both the data and the task. 
For example, one possible hypothesis is that curation is most effective in complex, precision-critical settings (e.g., ``Transport MH''), whereas for tasks with greater tolerance for error (e.g., ``Lift MH''), state-of-the-art policies~\cite{chi2023diffusion} appear less sensitive to---and may even benefit from---training on lower-quality demonstrations.

\subsection{How Many Policy Rollouts Are Required for Effective Curation with \basemethod{}?}\label{sec:cupid-discussion-rollouts}

\basemethod{} uses a \texttt{REINFORCE}-style estimator to compute the performance influence of each demonstration (\cref{eq:cupid-policy-inf-deriv}) for curation.
Thus, the accuracy of estimated performance influences depends on the number of policy rollouts.
While \texttt{REINFORCE}~\cite{sutton1999policy} often yields high-variance gradient estimates under limited rollout budgets, e.g., in reinforcement learning contexts~\cite{greensmith2004variance}, we highlight that our curation objective imposes a lower fidelity requirement: since curation with \basemethod{} involves top-$k$ selection (\cref{sec:cupid-methods-curation}), it suffices to rank helpful demonstrations above harmful ones (requiring fewer rollouts) rather than to estimate performance influence precisely (requiring many rollouts). 
As shown in \cref{fig:cupid-robomimic-state-data-quality-cupid-rollout}, the ranking of demonstrations stabilizes with approximately $m \in [25, 50]$ rollouts on ``Lift MH'' and ``Square MH,'' and $m \in [50, 100]$ rollouts on ``Transport MH.'' Similarly, we use only $m = 25$ rollouts for our real-world Franka tasks (\cref{fig:cupid-franka-dp-results}). These results support the practicality of \basemethod{} under realistic rollout budgets, while noting that more complex tasks (e.g., ``Transport MH'') may benefit from a greater number of rollouts.

\begin{figure}[H]
    \centering
    \includegraphics[width=\linewidth]{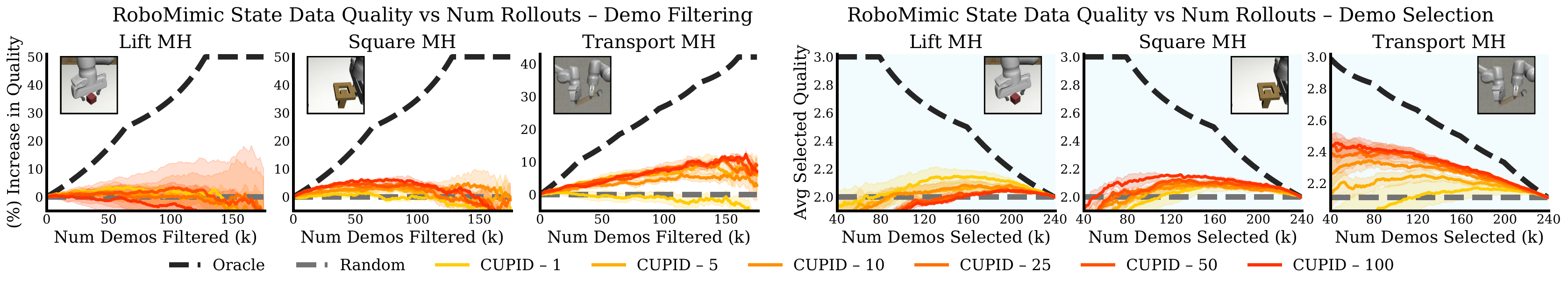}
    \caption[Ablation result on the number of policy rollouts (CUPID)]{\small
        \textbf{\basemethod{} ablation on the number of policy rollouts.} 
        Performance influences (\cref{eq:cupid-policy-inf-deriv}) converge with $m \in [25, 50]$ rollouts on ``Lift MH'' and ``Square MH'' (yielding similar quality trends), and $m \in [50, 100]$ rollouts on ``Transport MH,'' validating the practical applicability of \basemethod{} under realistic rollout budgets. 
        Curation is performed on diffusion policies.
        Results are averaged over 3 random seeds.
        Errors bars represent standard error.
    }
    \label{fig:cupid-robomimic-state-data-quality-cupid-rollout}
\end{figure}

\subsection{Can Data Curated for Single-Task Policies Strengthen Generalist Policy Performance?}\label{sec:cupid-discussion-pi0-transfer}

\begin{wrapfigure}{rt}{0.42\textwidth}
    \centering
    \vspace{-14pt}
    \includegraphics[width=1.0\textwidth]{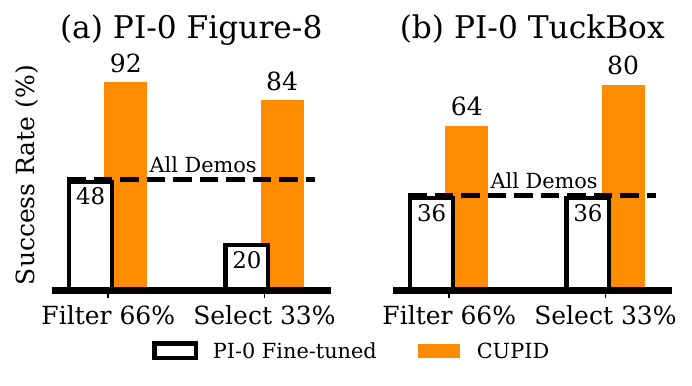}
    \caption[Data curation results on real-world PI-0 multi-task policy (CUPID)]{\small 
        Data curated for single-task diffusion policies improves $\pi_0$~\cite{black2410pi0} post-training performance. 
        Additional results in \cref{fig:cupid-franka-pi0-results-appx}.
    }
    \vspace{-4pt}
    \label{fig:cupid-franka-pi0-transfer-results}
\end{wrapfigure}

In \cref{fig:cupid-franka-pi0-transfer-results}, we show that datasets curated by \basemethod{} for single-task diffusion policies can significantly improve the fine-tuned performance of a generalist Vision-Language-Action (VLA) model, $\pi_0$~\cite{black2410pi0}.
While \basemethod{} is, in principle, tailored to the specific policy used during rollouts, it consistently identifies low-quality, stochastic behaviors in ``Figure-8'' and unreliable strategies in ``TuckBox'' (\cref{fig:cupid-franka-dp-distr-filter-dataset-results})---both intrinsic properties of the data. 
Filtering these poorer demonstrations (or selecting better ones) is thereby likely to improve the performance \textit{any} policy.
This highlights a promising direction to alleviate the computational cost of \basemethod{} in large-scale settings: use smaller, single-task policies to curate datasets for larger, multi-task models. 

\textit{VLA Robustness.} \cref{fig:cupid-franka-pi0-transfer-results} also suggests that scaling the pre-training of VLA models does not inherently enable them to leverage their generalist knowledge to, e.g., \textit{ignore} low-quality behaviors or brittle strategies in demonstration data. That is, data curation still appears important for VLA post-training. 

\section{Conclusion}\label{sec:cupid-conclusion}

In this chapter, we study the problem of data curation for robot imitation learning. We present \basemethod{}, a novel data curation method that uses influence functions to measure the causal impact of a demonstration on the policy's closed-loop performance. Our results highlight the general utility of performance-based curation for two key curation tasks---filtering existing training demonstrations and subselecting new demonstrations---and across diverse curation settings, where a policy's test-time performance varies with the choice of training data. 

Among many key problems in robotics, it is inherently difficult to develop strong intuitions about how training data influences downstream policy behavior, and to delineate why a policy trained exclusively on expert demonstration data would exhibit suboptimal performance at deployment. We hope this investigation spurs continued interest in pursuit of these questions.

\section{Limitations and Future Work}\label{sec:cupid-limitations}

We summarize the limitations of our approach and opportunities for future investigation:
\begin{enumerate}
    \item \textbf{Curation tasks.} The curation tasks considered in this chapter (\cref{task:filter-k} and \cref{task:select-k}) aim to curate performance-maximizing datasets for a specified filtering or selection quantity of demonstrations $k$. Determining the suitable quantity of demonstrations to curate represents a possible point of extension.
    \item \textbf{Data properties.} Critically, future work should further investigate how properties of the data dictate the extent to which curation can improve policy performance, as discussed in \cref{sec:cupid-discussion-properties}. 
    \item \textbf{Data explainability.} Our methods focus on curating existing demonstrations as a first step. However, future work may seek to interpret the properties of influential demonstrations to actively inform subsequent data collection efforts---for example, by providing instructions to data collectors. 
    \item \textbf{Selection methods.} While the \emph{greedy} selection procedures used in \cref{eq:cupid-prune-topk} and \cref{eq:cupid-select-topk} are tractable to optimize and often improve over quality- and similarity-based measures~\cite{engstrom2024dsdm}, they ignore the interactions between demonstrations in the curated set~\cite{koh2019accuracy, ilyas2022datamodels}. This can temper performance gains when the size of the curated set is large. Future work should investigate higher-order approximations that consider the joint diversity of the curated dataset, as is common in the active learning literature (e.g., \cite[Sec. 4.3]{Settles2012}).
    \item \textbf{Larger datasets.} Estimating performance influences over the full demonstration dataset incurs a computational cost comparable to that of policy training. Reducing this expense in large-scale settings is an important future direction. For example, one could approximate group effects~\cite{koh2019accuracy} via random sampling or limit influence estimation to smaller data subsets identified using coarse-grained heuristics. 
    \item \textbf{Estimator variance.} Finally, although we observe stable performance from \basemethod{} across curation settings, the use of the \texttt{REINFORCE} estimator may result in high variance influence scores, e.g., when the number of policy rollouts is small. In such settings, variance reduction techniques, such as those typically used in reinforcement learning~\cite{greensmith2004variance}, may further improve the fidelity of our influence scores.  
\end{enumerate}

\part{Policy Coordination and Planning}\label{part:2}

\chapter{Coordinating Policy Sequences for Long-Horizon Tasks}\label{chapter:stap}

\section{Introduction}\label{sec:stap-intro}

In the preceding chapters, we examined how deployment-time monitoring and data-centric interpretability mechanisms can help safeguard learned policies, diagnose failures, and improve the reliability of individual behaviors. However, many everyday tasks---such as rearranging a table, shelving kitchenware, or packing objects---are inherently \textit{long-horizon}: they require the successful execution of a sequence of behaviors rather than a single action in isolation. This chapter focuses on a critical challenge that arises when independently learned behaviors are composed. That is, subtle dependencies between behaviors emerge when they are sequenced, potentially rendering downstream behaviors infeasible. As a result, achieving reliable outcomes for multi-step, real-world tasks requires robots to explicitly reason about and coordinate these dependencies at deployment time.

Consider the example in \cref{fig:stap-teaser}, where the robot needs to retrieve an object outside of its workspace by first using an L-shaped hook to pull the target object closer. How the robot picks up the hook affects whether the target object will be reachable. %
Traditionally, planning actions to ensure the geometric feasibility of a sequential manipulation task is handled by motion planning~\cite{lavalle:2006,toussaint2015-lgp,toussaint2018differentiable}, which typically requires full observability of the environment state and knowledge of its dynamics. %
Learning-based approaches~\cite{Kaelbling1996ReinforcementLA,argall2009survey,Schaal1999-SCHIIL-2} can acquire skills without this privileged information. 
However, using independently learned skills to perform unseen long-horizon manipulation tasks is an unsolved problem. 
The skills could be myopically executed one after another to solve a simpler subset of tasks, but solving more complex tasks requires planning with these skills to ensure the feasibility of the entire skill sequence.

\begin{figure}[t]
    \centering
    \includegraphics[width=\linewidth]{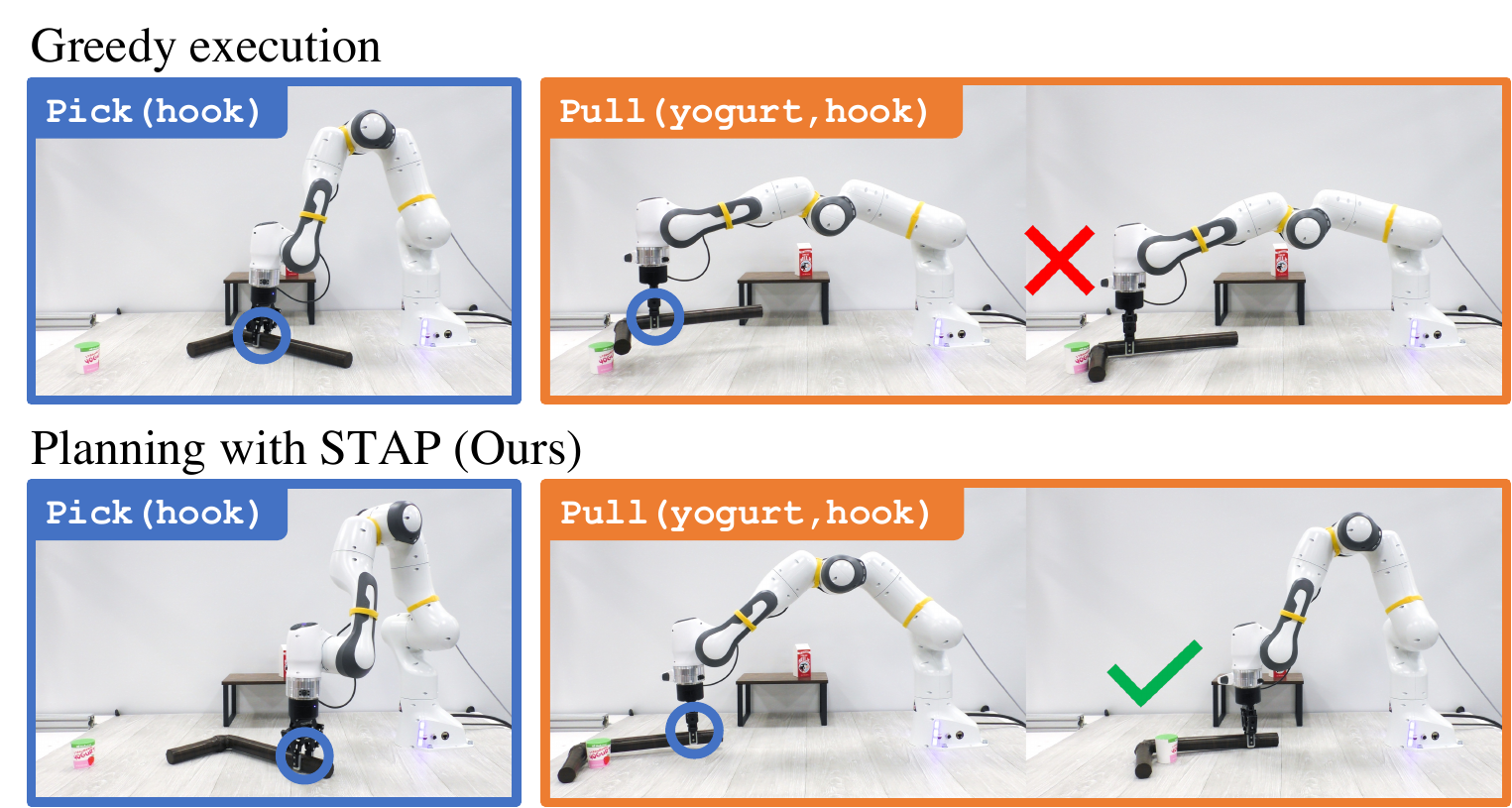}
    \caption[Overview of STAP]{\small
        Sequential manipulation tasks often contain geometric dependencies between actions. In this example, the robot needs to use the hook to pull the block into its kinematic workspace so it is close enough to pick up. The top row shows how greedy execution of skills results in the robot picking up the hook in a way that prevents it from reaching the block. We present a method for planning with skills to maximize long-horizon success without the need to train the skills on long-horizon tasks.
        More details can be found on the STAP website: \url{https://sites.google.com/stanford.edu/stap}.
        \copyright{} 2023 IEEE.
    }
    \label{fig:stap-teaser}
\end{figure}

Prior work focuses on sequencing skills at \textit{train time} to solve a small set of sequential manipulation tasks~\cite{xu2021-daf, dalal2021-raps}. 
To contend with long-horizons, these methods often learn skills~\cite{da2012learning} that consist of a policy and parameterized manipulation primitive~\cite{felip2013manipulation}. The policy predicts the parameters of the primitive, thereby governing its motion.
Such methods are \textit{task-specific} in that they need to be trained on skill sequences that reflect the tasks they might encounter at test time.
In our framework, we assume that a task planner provides a novel sequence of skills at \textit{test time} that will then be grounded with parameters for manipulation primitives through optimization.
This makes our method \textit{task-agnostic}, as skills can be sequenced to solve long-horizon tasks not seen during training.

At the core of our method, {\em Sequencing Task-Agnostic Policies} (STAP), we use Q-functions to optimize the parameters of manipulation primitives in a given sequence.
Policies and Q-functions for each skill are acquired through off-the-shelf Reinforcement Learning. 
We then define a planning objective to maximize all Q-functions in a skill sequence, ensuring its geometric feasibility.
To evaluate downstream Q-functions of future skills, we learn a dynamics model that can predict future states. 
We also use {\em Uncertainty Quantification\/} (UQ) to avoid visiting states that are {\em Out-Of-Distribution\/} (OOD) for the learned skills. 
We train all of these components independently per skill, making it easy to gradually expand a library of skills without the need to retrain existing ones.

Our contributions are three-fold: we propose 
1) a framework to train an extensible library of task-agnostic skills, 
2) a planning method that optimizes arbitrary sequences of skills to solve long-horizon tasks,
and 3) a method to solve Task and Motion Planning (TAMP) problems with learned skills. 
In extensive experiments, we demonstrate that planning with STAP promotes long-horizon success on tasks with complex geometric dependencies between actions.
We also demonstrate that our framework works on a real robot.

\section{Related Work}\label{sec:stap-related}

\subsection{Robot Skill Learning}
How to represent and acquire composable manipulation skills is a widely studied problem in robotics. 
A broad class of methods uses Learning from Demonstration (LfD)~\cite{argall2009survey}. 
Dynamic Movement Primitives (DMPs) \cite{schaal2006-dmp,pastor2009-dmp,khansari2011-dmp} are a form of LfD that learns the parameters of dynamical systems encoding movements~\cite{ijspeert2002movement,matsubara2010learning,ude2010task,kulvicius2011joining}.
More recent extensions integrate DMPs with deep neural networks to learn more flexible policies~\cite{bahl2020-ndp,bahl2021-hndp}---for instance, to build a large library of skills from human video demonstrations~\cite{concept2robot-2021}.
Skill discovery methods instead identify action patterns in offline datasets~\cite{shankar2019-discovering} and either distill them into policies~\cite{shankar2020-learning,ajay2020-opal} or extract skill priors for use in downstream tasks~\cite{singh2020-parrot,pertsch2020-spiral}. 
Robot skills can also be acquired via active learning~\cite{da2014active}, Reinforcement Learning (RL)~\cite{rajeswaran2017-dext-manip,kalashnikov2018-qtopt,kalashnikov2021-mtopt,sharma2019-dynamics,lu2022-awopt}, and offline RL~\cite{chebotar2021-actionable}. 

An advantage of our planning framework is that it is agnostic to the types of skills employed, requiring only that it is possible to predict the probability of the skill's success given the current state and action.
Here, we learn skills~\cite{da2012learning} that consist of a policy and a parameterized manipulation primitive~\cite{felip2013manipulation}. 
The actions output by the policy are the parameters of the primitive determining its motion. 
In STAP, we will use the Q-functions of the policy to optimize suitable parameters~\cite{kalashnikov2018-qtopt,concept2robot-2021} for a sequence of manipulation primitives.

\subsection{Long-Horizon Robot Planning}
Once manipulation skills have been acquired, using them to perform sequential manipulation tasks remains an open challenge.
\cite{kappler2015data,kaelbling2017learning,ames2018learning,migimatsu2022symbolic} propose data-driven methods to determine the \textit{symbolic} feasibility of skills and only control their timing, while we seek to ensure the \textit{geometric} feasibility of skills by controlling their trajectories.
Other techniques rely on task planning~\cite{wu2021-embr,huang2019-crsp}, subgoal planning~\cite{simeonov2020-subgoal-skills}, or meta-adaptation~\cite{xu2018-ntp, huang2019-ntg} to sequence learned skills to novel long-horizon goals.  
However, the tasks considered in these works do not feature rich geometric dependencies between actions that necessitate motion planning or skill coordination.

The options framework~\cite{sutton1999between} and the parameterized action
Markov Decision Process (MDP)~\cite{masson2016-pamdps} train a high-level policy to engage low-level policies~\cite{bacon2017-option,nachum2018-hiro} or primitives~\cite{dalal2021-raps,chitnis2020-efficient,vuong2021learning,nasiriany2022-augmenting} towards long-horizon goals.
\cite{shah2021-vfs} proposes a hierarchical RL method that uses the value functions of lower-level policies as the state space for a higher-level RL policy.
Our approach is also related to model-based RL methods which jointly learn dynamics and reward models to guide planning~\cite{finn2017-dvf,chua2018-pets,hafner2019-planet}, policy search~\cite{janner2019-mbpo,hafner2019-dreamer}, or combine both~\cite{xie2020-skill,sekar2020-plan2explore}. 
While these methods demonstrate that policy hierarchies and model-based planning can enable RL to solve long-horizon problems, they are typically trained in the context of a single task.
In contrast, we seek to \textit{plan} with lower-level skills to solve tasks never seen before.

Closest in spirit to our approach is that of \citet{xu2021-daf}, Deep Affordance Foresight (DAF), which proposes to learn a dynamics model, skill-centric affordances (value functions), and a skill proposal network that serves as a higher-level RL policy. 
We identify several drawbacks with DAF: 
first, because DAF relies on multi-task experience for training, generalizing beyond the distribution of training tasks may be difficult; 
second, the dynamics, affordance models, and skill proposal network need to be trained synchronously, which complicates expanding the current library of trained skills;
third, their planner samples actions from uniform random distributions, which prevents DAF from scaling to high-dimensional action spaces and long horizons. 
STAP differs in that our dynamics, policies, and affordances (Q-functions) are learned independently per skill. 
Without any additional training, we combine the skills at planning time to solve unseen long-horizon tasks. 
We compare our method against DAF in the planning experiments (\cref{sec:stap-experiments-planning}).

\subsection{Task and Motion Planning}

TAMP solves problems that require both symbolic and geometric reasoning~\cite{garrett2021integrated, toussaint2015-lgp}. 
DAF learns a skill proposal network to replace the typical task planner in TAMP, akin to~\cite{wang2022-gtp}.
Another prominent line of research learns components of the TAMP system, often from a dataset of precomputed solutions~\cite{wang2018active,driess2020-dvr,chitnis2021camps,silver2021-ploi,driess2021-lgr,wang2021learning}.
The problems we consider involve complex geometric dependencies between actions that are typical in TAMP.
However, STAP only performs geometric reasoning and by itself is not a TAMP method. 
We demonstrate in experiments (\cref{sec:stap-experiments-tamp}) that STAP can be combined with symbolic planners to solve TAMP problems.

\section{Problem Setup}\label{sec:stap-problem}

\subsection{Long-Horizon Planning}\label{sec:stap-planning-tamp}

Our objective is to solve long-horizon manipulation tasks that require sequential execution of learned skills.
These skills come from a skill library $\mathcal{L} = \{\psi^1, \dots, \psi^K\}$, where each skill $\psi^k$ consists of a parameterized manipulation primitive~\cite{felip2013manipulation} $\phi^k$ and a learned policy $\pi^k$.
A primitive $\func{\phi^k}{a^k}$ takes in parameters $a^k$ and executes a series of motor commands on the robot, while a policy $\func{\pi^k}{a^k \given s^k}$ is trained to predict a distribution of suitable parameters $a^k$ from the current state $s^k$.
For example, the $\actioncall{Pick}{a}{b}$ skill may have a primitive which takes as input an end-effector pose and executes a trajectory to pick up object $\obj{a}$, where the robot first moves to the commanded pose, closes the gripper to grasp $\obj{a}$, and then lifts $\obj{a}$ off of $\obj{b}$.
The learned policy $\pi^k$ for this skill will then try to predict end-effector poses to pick up $\obj{a}$.

We assume access to a high-level planner that computes \textit{plan skeletons} (i.e. skill sequences) to achieve a high-level goal.
STAP aims to solve the problem of turning plan skeletons into geometrically feasible \textit{action plans} (i.e. parameters for each manipulation primitive in the plan skeleton).

STAP is agnostic to the choice of high-level planner. 
For instance, it can be used in conjunction with Planning Domain Definition Language (PDDL) \cite{mcdermott1998pddl} task planners to perform hierarchical TAMP~\cite{kaelbling2011hierarchical}.
In this setup, the task planner and STAP will be queried numerous times to find multiple plan skeletons grounded with optimized action plans.
STAP will also evaluate each action plan's probability of success (i.e. its geometric feasibility).
After some termination criterion is met, such as a timeout, the candidate plan skeleton and action plan with the highest probability of success is returned.

\subsection{Task-Agnostic Policies}

We aim to learn policies $\{\pi^1, \dots, \pi^K\}$ for the skill library $\mathcal{L}$ that can be sequenced by a high-level planner in arbitrary ways to solve any long-horizon task.
We call these policies task-agnostic because they are not trained to solve a specific long-horizon task.
Instead, each policy $\pi^k$ is associated with a skill-specific contextual bandit (i.e. a single timestep MDP)
\begin{align}\label{eq:stap-skill-mdp}
    \mathcal{M}^k = \left(\mathcal{S}^k, \mathcal{A}^k, T^k, R^k, \rho^k\right),
\end{align}
where $\mathcal{S}^k$ is the state space, $\mathcal{A}^k$ is the action space, $\func{T^k\!}{s'^k \given s^k, a^k}$ is the transition model, $\func{R^k\!}{s^k, a^k, s'^k}$ is the binary reward function, and $\func{\rho^k\!}{s^k}$ is the initial state distribution.
Given a state $s^k$, the policy $\pi^k$ produces an action $a^k$, and the state evolves according to the transition model $\func{T^k\!}{s'^k \given s^k, a^k}$. 
Thus, the transition model encapsulates the execution of the manipulation primitive $\phi^k$ (\cref{sec:stap-planning-tamp}). 

A long-horizon domain is one in which each timestep involves the execution of a single policy, and it is specified by 
\begin{align}
    \overline{\mathcal{M}}
        &= \left(\mathcal{M}^{1:K}, \overline{\mathcal{S}}, \overline{T}^{1:K}, \overline{\rho}^{1:K}, \Gamma^{1:K}\right), \label{eq:stap-long-horizon-domain}
\end{align}
where $\mathcal{M}^{1:K}$ is the set of MDPs whose policies can be executed in the long-horizon domain, $\overline{\mathcal{S}}$ is the state space of the long-horizon domain, $\func{\overline{T}^k\!}{\overline{s}' \given \overline{s}, a^k}$ is an extension of dynamics $\func{T^k\!}{s'^k \given s^k, a^k}$ that models how the entire long-horizon state evolves with action $a^k$, $\func{\overline{\rho}^k}{\overline{s}}$ is an extension of initial state distributions $\func{\rho^k}{s^k}$ over the long-horizon state space, and $\Gamma^k: \overline{\mathcal{S}} \rightarrow \mathcal{S}^k$ is a function that maps from the long-horizon state space to the state space of policy $k$. We assume that the dynamics $\func{T^k\!}{s'^k \given s^k, a^k}, \func{\overline{T}^k\!}{\overline{s}' \given \overline{s}, a^k}$ and initial state distributions $\func{\rho^k\!}{s^k}, \func{\overline{\rho}^k\!}{\overline{s}^k}$ are unknown.

Note that while the policies may have different state spaces $\mathcal{S}^k$, policy states $s^k$ must be obtainable from the long-horizon state space $\overline{\mathcal{S}}$ via $s^k = \func{\Gamma^k\!}{\overline{s}}$. 
This is to ensure that the policies can be used together in the same environment to perform long-horizon tasks. 
In the base case, all the state spaces are identical and $\Gamma^k$ is simply the identity function. 
Another case is that $\overline{s}$ is constructed as the concatenation of all $s^{1:K}$ and $\func{\Gamma^k\!}{\overline{s}}$ extracts the slice in $\overline{s}$ corresponding to $s^k$.

\section{Sequencing Task-Agnostic Policies}\label{sec:stap-taps}

Given a task in the form of a sequence of skills to execute, our planning framework constructs an optimization problem with the policies, Q-functions, and dynamics models of each skill.
Solving the optimization problem results in parameters for all manipulation primitives in the skill sequence such that the entire sequence's probability of success is maximized.

We formalize our planning methodology in this section and outline its implementation in \cref{sec:stap-planning}. 
Lastly, we describe our procedure for training modular skill libraries in \cref{sec:stap-training}.

\subsection{Grounding Skill Sequences with Action Plans}
\label{sec:stap-taps-grounding}

\begin{figure}
    \centering
    \includegraphics[width=\columnwidth]{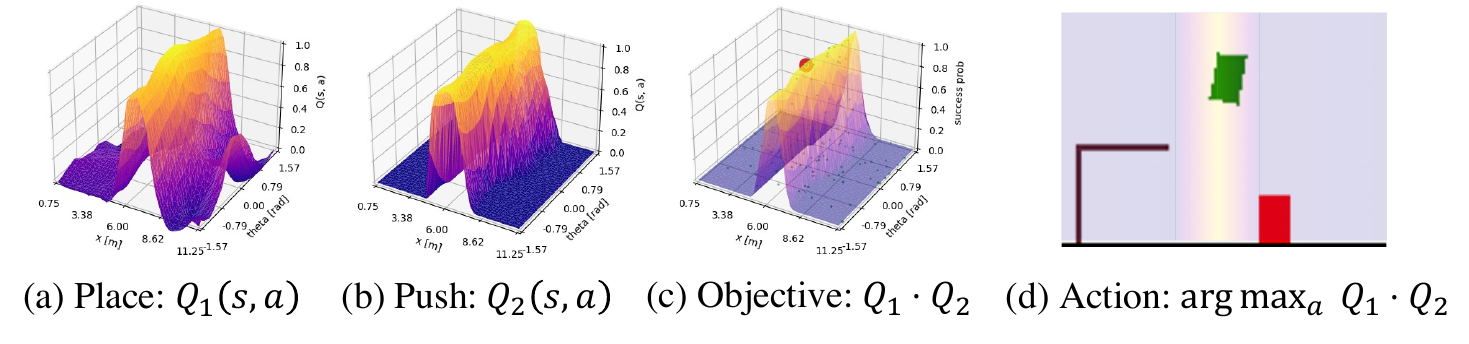}
    \caption[Coordinating policy behaviors in a 2D toy domain (STAP)]{\small
    Planning in a 2D toy domain. The agent needs to get the green block under the brown receptacle with two skills: $\action{Place}$ and $\action{Push}$ that operate on the horizontal position $x$ of the green block.
    Plots (a) and (b) show the Q-functions across $(x, \theta)$ for each skill. 
    $\action{Place}$ is only trained to get the green block on the ground, so the planner must determine $a=x$ \textit{s.t.} $\action{Push}$ is unobstructed. 
    The optimal action maximizes the probability of long-horizon task success (\cref{eq:stap-planning-objective}), approximated by the product of Q-functions in plot (c). \copyright{} 2023 IEEE.
    }
    \label{fig:stap-pybox2d}
\end{figure}

We assume that we are given a plan skeleton of skills $\tau = [\psi_1, \dots, \psi_H] \in \mathcal{L}^H$ (hereafter denoted by $\tau = \psi_{1:H}$) that should be successfully executed to solve a long-horizon task.
Let $\mathcal{M}_h$ with subscript $h$ denote the MDP corresponding to the $h$-th skill in the sequence---in contrast to $\mathcal{M}^k$ with superscript $k$, which denotes the $k$-th MDP in the skill library. 
A long-horizon task is considered successful if every skill reward $r_1, \dots, r_H$ received during execution is $1$.

Given an initial state $\overline{s}_1 \in \overline{\mathcal{S}}$, our problem is to ground the plan skeleton $\tau = \psi_{1:H}$ with an action plan $\xi = [a_1, \dots, a_H] \in \mathcal{A}_1 \times \dots \times \mathcal{A}_H$ that maximizes the probability of succeeding at the long-horizon task. This is framed as an optimization problem $\argmax_{a_{1:H}} J$, where the maximization objective $J$ is the task success probability
\begin{equation*}
    J(a_{1:H}; \overline{s}_1)
        = \probstap{r_1 = 1, \dots, r_H = 1 \given \overline{s}_1, a_{1:H}}.
\end{equation*}
Here, $r_{1:H}$ are the skill rewards received at each timestep.

With the long-horizon dynamics models $\func{\overline{T}^k\!}{\overline{s}' \given \overline{s}, a^k}$, the objective can be cast as the expectation
\begin{equation*}
    J%
        = \Estap{\overline{s}_{2:H} \sim \overline{T}_{1:H-1\!}\!}{\probstap{r_1 = 1, \ldots, r_H = 1 \given \overline{s}_{1:H}, a_{1:H}}}.
\end{equation*}
By the Markov assumption, rewards are conditionally independent given states and actions. We can express the probability of task success as the product of reward probabilities
\begin{equation*}
    J%
        = \Estap{\overline{s}_{2:H} \sim \overline{T}_{1:H-1\!}\!}{\Pi_{h=1}^H \probstap{r_h = 1 \given \overline{s}_h, a_h}}.
\end{equation*}
Because the skill rewards are binary, the skill success probabilities are equivalent to Q-values:
\begin{align*}
    \probstap{r_h = 1 \given \overline{s}_h, a_h}
        &= \Estap{\overline{s}_{h+1} \sim \overline{T}_{h}}{r_h \given \overline{s}_h, a_h} \\
        &= \func{Q_h}{\!\func{\Gamma_h}{\overline{s}_h}, a_h}.
\end{align*}
The final objective is expressed in terms of Q-values:
\begin{equation}
    J%
        = \Estap{\overline{s}_{2:H} \sim \overline{T}_{1:H-1}}{\Pi_{h=1}^{H} \func{Q_h}{\!\func{\Gamma_h}{\overline{s}_h}, a_h}}. \label{eq:stap-planning-objective}
\end{equation}

This planning objective is simply the product of Q-values evaluated along the trajectory $(\overline{s}_1, a_1, \dots, \overline{s}_H, a_H)$, where the states are predicted by the long-horizon dynamics model: $\overline{s}_2 \sim \func{\overline{T}_1}{\cdot \given \overline{s}_1, a_1}, \dots, \overline{s}_H \sim \func{\overline{T}_{H-1}}{\cdot \given \overline{s}_{H-1}, a_{H-1}}$.\footnote{One might consider maximizing the \textit{sum} of Q-values instead of the product, but this may not reflect the probability of task success. For example, if we want to optimize a sequence of ten skills, consider a plan that results in nine Q-values of $1$ and one Q-value of $0$, for a total sum of $9$. One Q-value of $0$ would indicate just one skill failure, but this is enough to cause a failure for the entire task. Compare this to a plan with ten Q-values of $0.9$. This plan has an equivalent sum of $9$, but it is preferable because it has a non-zero probability of succeeding.}

\subsection{Ensuring Action Plan Feasibility}

A plan skeleton $\tau = \psi_{1:H}$ is feasible only if, for every pair of consecutive skills $\psi_i$ and $\psi_j$, there is a non-zero overlap between the terminal state distribution of $i$ and the initial state distribution of $j$. 
More formally,
\begin{equation}
    \Estap{\overline{s}_i \sim \overline{\rho}_i, a_i \sim \mathcal{A}_i, \overline{s}_j \sim \overline{\rho}_j}{\func{\overline{T}_i\!}{\overline{s}_j \given \overline{s}_i, a_i}}
        > 0, \label{eq:stap-transition-feasibility}
\end{equation}
where $\overline{\rho}_i$ and $\overline{\rho}_j$ are the initial state distributions for skills $\psi_i$ and $\psi_j$, respectively, and $a_i$ is uniformly distributed with respect to action space $\mathcal{A}_i$ for skill $\psi_i$. 
Given a state $\overline{s}_i \sim \overline{\rho}_i$, it is part of the planner's job to determine an action $a_i$ that induces a valid subsequent state $\overline{s}_j \sim \overline{\rho}_j$ if one exists.
Failing to do so constitutes an OOD event for skill $\psi_j$, where the state $\overline{s}_j$ has drifted beyond the region of the state space where $\func{Q_j}{\func{\Gamma_j}{\overline{s}_j}, a_j}$ is well-defined and $\psi_j$ is executable.

Neglecting state distributional shift over an action plan $\xi$ may degrade the quality of objective function $J$ with spuriously high Q-values (\cref{eq:stap-planning-objective}).
Moreover, \cref{eq:stap-transition-feasibility} cannot be explicitly computed to determine the validity of actions because the initial state distributions of all skills $\func{\overline{\rho}^k\!}{\overline{s}^k}$ are unknown.
We can detect OOD states (and actions) by performing UQ on the Q-functions $Q^k(s^k, a^k)$. 
Filtering out Q-values with high uncertainty would result in action plans $\xi$ that are robust (i.e. have low uncertainty) while maximizing the task feasibility objective. We discuss efficient methods for training UQ models on learned Q-functions in \cref{sec:stap-training-scod}.

\begin{figure*}[t]
    \centering
    \begin{minipage}{0.42\textwidth}
        \centering
        \includegraphics[width=\linewidth]{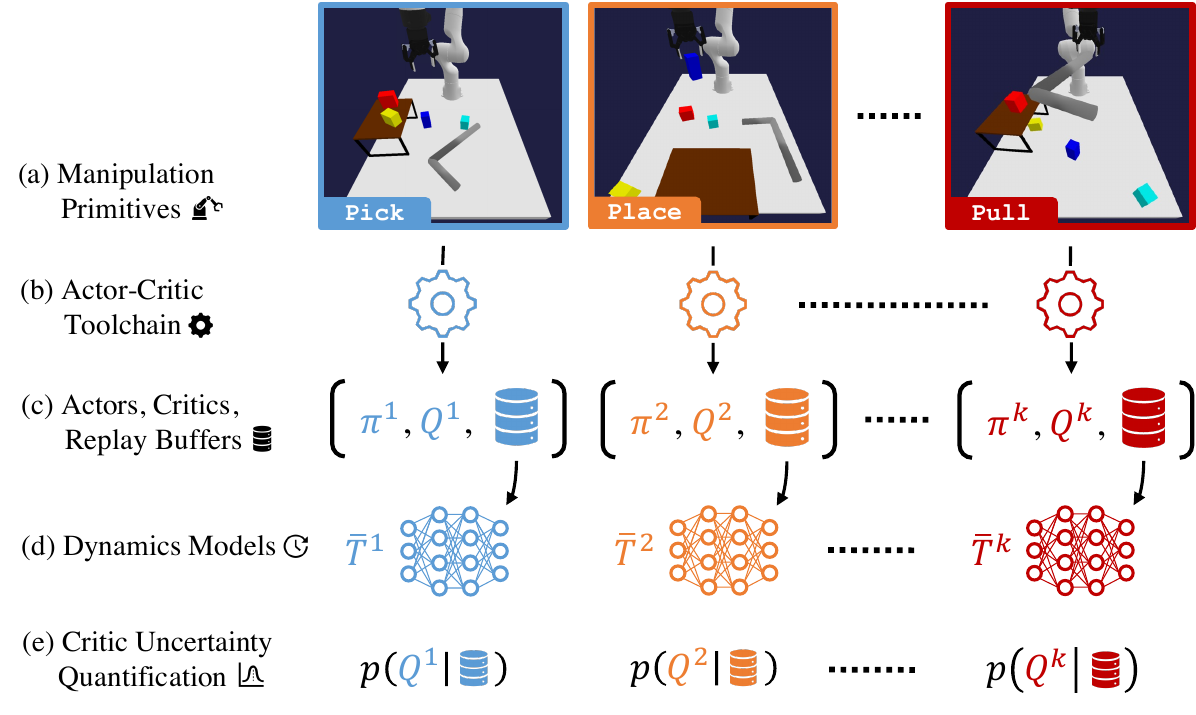}
        \label{fig:stap-taps-training}
    \end{minipage}
    \hfill
    \begin{minipage}{0.56\textwidth}
        \centering
        \includegraphics[width=\linewidth]{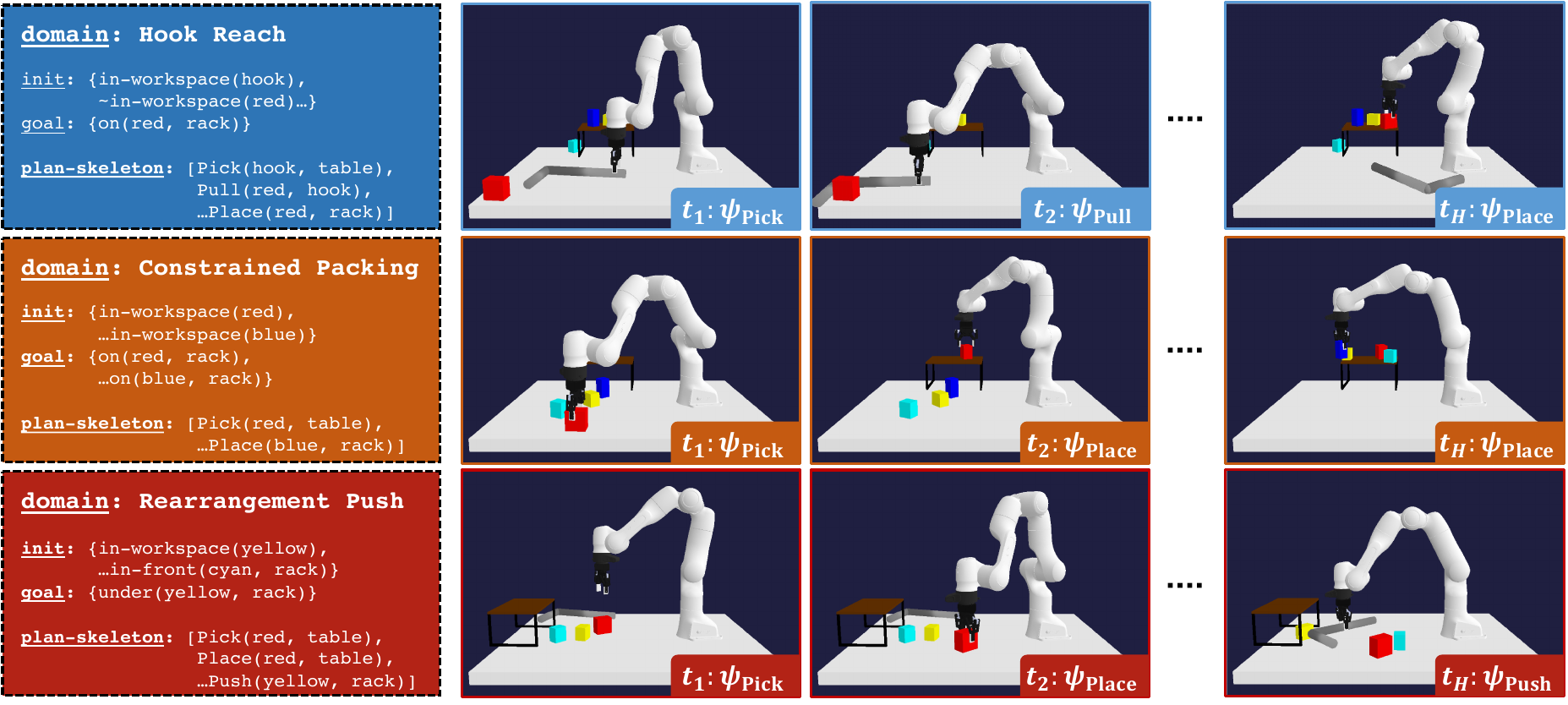}
        \label{fig:stap-taps-tasks}
    \end{minipage}
    \vspace*{-2mm}
    \caption[Policy training pipeline and policy sequence coordination tasks (STAP)]{\small \textbf{Left:} Training pipeline. We train each skill independently on single-step environments with skill-specific rewards. We use the experience collected by each skill to train (1) a dynamics model and (2) a SCOD~\cite{SharmaAzizanEtAl2021} model to predict OOD critic inputs per skill. The benefit of training each skill independently is that we can easily add skills to the library or even mix skill acquisition strategies (e.g. RL, imitation learning, and handcrafted skills). Our planning framework ensures that the skills can be composed to solve any long-horizon task even if the skills were not explicitly trained to perform those tasks. \textbf{Right:} Example evaluation tasks. We evaluate our method on 9 tasks: 3 \texttt{Hook Reach} tasks, where the robot needs to use the hook to bring objects closer, 3 \texttt{Constrained Packing} tasks, where the robot needs to place blocks on the rack, and 3 \texttt{Rearrangement Push} tasks, where the robot needs to remove obstacles to push a target block under the rack. The 9 tasks feature a range of geometric complexities and plan skeleton lengths. \copyright{} 2023 IEEE.}
    \label{fig:stap-taps-methods}
\end{figure*}

\section{Planning Action Sequences}\label{sec:stap-planning}

To find action sequences that maximize the probability of long-horizon task success (\cref{eq:stap-planning-objective}), we use sampling-based optimization techniques: shooting and cross-entropy method (CEM)~\cite{rubinstein1999-cem}. In shooting, we simply sample action plans $\xi = a_{1:H} \in \mathcal{A}_1 \times \dots \times \mathcal{A}_H$ and select the one with the highest predicted objective score. 
CEM is an extension of shooting that iteratively refines the action sampling distribution to fit a fraction of the population with the highest objective scores.

Sampling action plans from uniform distributions may be sufficient for small action spaces and short skill sequences. 
However, this strategy suffers from the curse of dimensionality and may not scale desirably to the large action spaces and long skill sequences that we consider.
Meanwhile, directly executing actions $a^k \sim \func{\pi^k\!}{\cdot | s^k}$ from policies that are trained to solve skill-specific tasks produces myopic behavior that rarely succeeds for long-horizon tasks with complex geometric dependencies between actions.

The policies can be leveraged to initialize a sampling-based search by producing an action plan that is likely to be closer to an optimal plan than one sampled uniformly at random. 
We therefore use two variants of shooting and CEM, termed policy shooting and policy CEM, which sample actions from Gaussian distributions $a^k \sim \func{\mathcal{N}}{\func{\pi^k\!}{s^k}, \sigma}$, where the mean is the action predicted by the policy and the standard deviation is a planning hyperparameter.

\section{Training Skills}\label{sec:stap-training}
\subsection{Policies and Q-functions}
\label{sec:stap-training-policies}

One of the key advantages of our approach is that the policies can be trained independently and then composed at test time to solve unseen sequential tasks. 
For each skill $\psi_k$, we want to obtain a policy $\pi^k: \mathcal{S}^k \rightarrow \mathcal{A}^k$ that solves the task specified by the skill-specific MDP $\mathcal{M}^k$ (\cref{eq:stap-skill-mdp}), along with a Q-function $Q^k$ modeling the policy's expected success
\begin{equation*}
    Q^k(s^k, a^k)
        = \Estap{s'^k \sim \func{T^k}{\cdot \given s^k, a^k}}{\!\func{R^k\!}{s^k, a^k, s'^k}}.
\end{equation*}

Our framework is agnostic to the method for acquiring the policy and Q-function. 
Many deep RL algorithms are able to simultaneously learn the policy (i.e. actor) and Q-function (i.e. critic) with unknown dynamics~\cite{lillicrap2015-ddpg,fujimoto2018-td3}. 
We therefore leverage off-the-shelf RL algorithms to learn a policy and Q-function for each skill (\cref{fig:stap-taps-methods} - Left (c)). 
In our experiments, we specifically use Soft Actor-Critic (SAC)~\cite{haarnoja2018-sac}.
For other policy acquisition methods, policy evaluation can be performed to obtain a Q-function after a policy has been learned. 

\subsection{Dynamics}
\label{sec:stap-training-dynamics}
The dynamics models are used to predict future states at which each downstream Q-function in the plan skeleton will be evaluated. 
We learn a deterministic model $\func{\overline{T}^k}{\overline{s}, a^k}$ for each skill $\psi_k$ using the single-step forward prediction loss
\begin{equation*}
    \func{L_\text{dynamics}}{\overline{T}^k; \overline{s}, a^k, \overline{s}'}
        = \left\| \func{\overline{T}^k\!}{\overline{s}, a^k} - \overline{s}' \right\|_2^2.
\end{equation*}
Each dynamics model $\overline{T}^k$ is trained on the state transition experience $(\overline{s}, a^k, \overline{s}')$ collected during the training of policy $\pi^k$ (\cref{sec:stap-training-policies}), stored in the replay buffer $\mathcal{D}^k$ (\cref{fig:stap-taps-methods} - Left (d)).
Training our dynamics models on existing state transitions is efficient and circumvents the challenges associated with learning dynamics in the context of a long-horizon task~\cite{janner2019-mbpo}.

\subsection{Uncertainty Quantification}
\label{sec:stap-training-scod}
Measuring the epistemic uncertainty over the Q-values allows us to identify when dynamics-predicted states and planned actions drift OOD for downstream critics $Q^k$.
We leverage recent advances in neural network UQ to obtain an explicit Gaussian posterior predictive distribution
\begin{equation}
    \probstap{Q^k \given s^k, a^k, \mathcal{D}^k; w^k}
        = \func{\mathcal{N}}{\mu_{Q^k}, \sigma_{Q^k}; w^k}
    \label{eq:stap-scod-posterior}
\end{equation}
with sketching curvature for OOD detection (SCOD)~\cite{SharmaAzizanEtAl2021}. 
SCOD computes the weights $w^k$ that parameterizes the posterior distribution over each critic $Q^k$ using only the experience $(\overline{s}, a^k) \sim \mathcal{D}^k$ collected over the course of training policy $\pi^k$ (\cref{fig:stap-taps-methods} - Left (e)).
An advantage of SCOD over common UQ techniques~\cite{ensemblereview2021,GalZoubin2016} is that it imposes no train-time dependencies on any algorithms used in our framework.

\section{Experiments}\label{sec:stap-experiments}
In our experiments, we test the following hypotheses:
\begin{description}
    \item[H1] Maximizing the product of learned Q-functions (\cref{eq:stap-planning-objective}) translates to maximizing long-horizon task success.
    \item[H2] Skills trained with our framework are able to generalize to unseen long-horizon tasks by optimizing \cref{eq:stap-planning-objective}.
    \item[H3] Our planning method can be combined with a task planner and UQ (\cref{eq:stap-scod-posterior}) to solve TAMP problems.
\end{description}

We evaluate our method on a 3D manipulation domain with 4 skills: $\action{Pick a b}$: pick $\obj{a}$ from $\obj{b}$; $\action{Place a b}$: place $\obj{a}$ onto $\obj{b}$; $\action{Pull a hook}$: pull $\obj{a}$ into the robot's workspace with a $\obj{hook}$; and $\action{Push a hook}$: push $\obj{a}$ with a $\obj{hook}$.

The long-horizon state space $\mathcal{\overline{S}}$ is a sequence of low-dimensional object states that contains information such as 6D poses. 
The policy state spaces $\mathcal{S}^k$ are constructed so that the first $m$ object states correspond to the $m$ arguments of the corresponding skill. 
For example, a state for the policy of $\action{Pick box rack}$ will contain first the $\obj{box}$'s state, then the $\obj{rack}$'s state, followed by a random permutation of the remaining object states.
The policy action spaces $\mathcal{A}^k$ are all 4D. 
For example, a policy-predicted action for $\action{Pick a b}$ specifies the 3D grasp position of the end-effector relative to the target $\obj{a}$ and orientation about the world $z$-axis.

Our evaluation is on 9 different long-horizon tasks (i.e. plan skeletons $\tau$). The tasks cover a range of symbolic and geometric complexities (\cref{fig:stap-taps-methods} - Right), with plan skeleton lengths ranging from 4 to 10 skills. 
Each task involves geometric dependencies between actions, which motivates the need for planning. 
We use 100 randomly generated instances (i.e. object configurations) for evaluation on each task.

\subsection{Product of Q-Functions Approximates Task Success (H1)}

We test \textbf{H1} by comparing STAP to an \textbf{Oracle} baseline that runs forward simulations with policy shooting to find action plans that achieve ground-truth task success. Our method uses learned Q-functions and dynamics to predict task success as the product of Q-functions. We expect that planning with this objective will come close to matching the task success upper bound provided by \textbf{Oracle}.

We compare several planning methods: \textbf{Policy Shooting} and \textbf{Policy CEM}, which use the learned policies to initialize the action sampling distributions (\cref{sec:stap-planning}), as well as \textbf{Random Shooting} and \textbf{Random CEM}, which use uniform action priors. 
We also compare with \textbf{Greedy}, which does not plan but greedily executes the skills. 
The evaluation metrics are ground-truth task success, sub-goal completion rate (what percentage of skills in a plan are successfully executed), and predicted task success computed from \cref{eq:stap-planning-objective}.

Due to the significant amount of time required to run forward simulations for \textbf{Oracle}, we limit the number of sampled trajectories evaluated during planning to 1000 for all methods. 
This is not enough to succeed at the most complex tasks, and thus, we evaluate on the simplest task from the \texttt{Hook Reach} and \texttt{Constrained Packing} domains.

The results from both tasks are averaged and presented in \cref{fig:stap-exp_a}. As expected, \textbf{Oracle} achieves the highest success rate, although not perfect because 1000 samples are not enough to solve all of the tasks. \textbf{Policy CEM} nearly matches \textbf{Oracle}'s success rate, which demonstrates that maximizing the product of Q-functions is a good proxy for maximizing task success. \textbf{Policy CEM} also exhibits a low success prediction error, which demonstrates that the learned Q-functions and dynamics generalize well to these unseen long-horizon tasks. Meanwhile, planning with these learned models runs 4 orders of magnitude faster than \textbf{Oracle} and does not require ground-truth knowledge about the environment state or dynamics.

\textbf{Policy Shooting} performs slightly worse than \textbf{Policy CEM}, which demonstrates CEM's strength in finding local maxima through iterative refinement. \textbf{Random CEM} and \textbf{Random Shooting} perform quite poorly, indicating that the planning space is too large (16D for these tasks) for random sampling. \textbf{Greedy} performs strongly, perhaps indicating that these simpler tasks can be solved without planning.

\begin{figure}[t]
    \centering
    \includegraphics[width=\columnwidth]{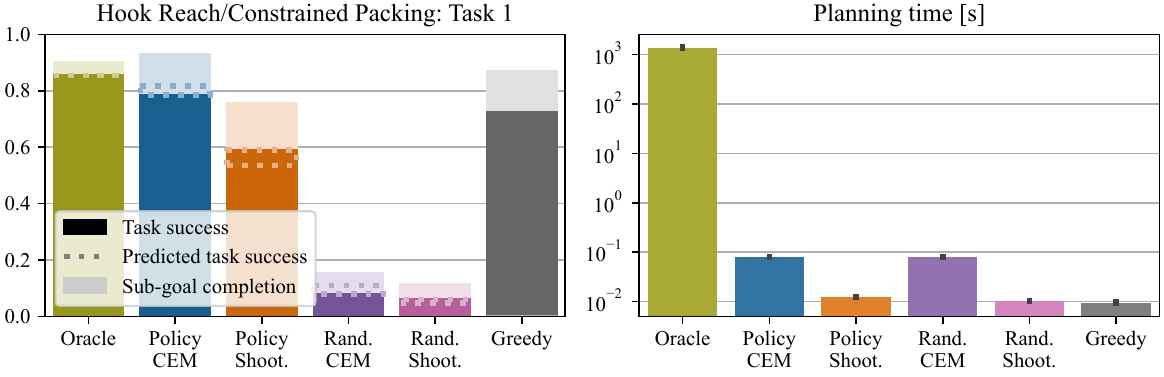}
    \caption[Ablation on sequence feasibility estimation and computation time (STAP)]{\small A small-scale experiment comparing the performance of our method to \textbf{Oracle} planning. The left plot shows the average success rates across two domains (\texttt{Hook Reach} and \texttt{Constrained Packing}). The dark bars indicate the ground truth task success, and the light bars indicate sub-goal completion rate, which measures how close the plan was to successfully completing the task. The predicted task success computed from the product of Q-values is indicated by a dotted line. Our method with \textbf{Policy CEM} is able to nearly match the success rate of \textbf{Oracle} while taking 4 orders of magnitude less time, as shown in the plot on the right. \copyright{} 2023 IEEE.}
    \label{fig:stap-exp_a}
\end{figure}

\begin{figure}[t]
    \centering
    \includegraphics[width=\columnwidth]{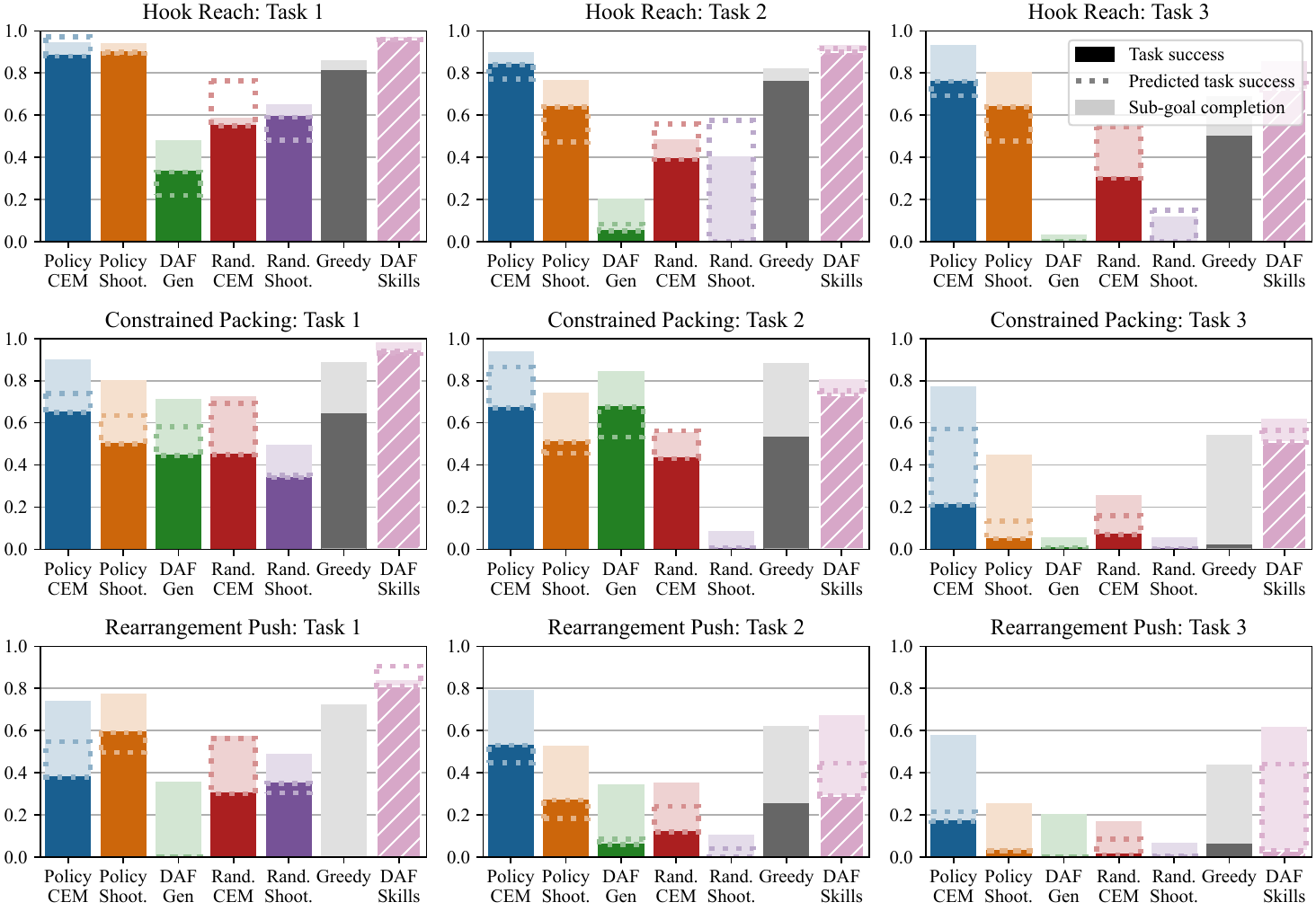}
    \caption[Policy sequence coordination results in simulation (STAP)]{\small
        Planning experiment with 3 domains, each with 3 tasks. 
        Our method with \textbf{Policy CEM} is able to generalize to all of these tasks without ever seeing them during training.
        On 6 out of the 9 tasks, our method either matches or outperforms \textbf{DAF-Skills}, which is trained directly on the evaluation task. 
        \textbf{DAF-Gen} shows the generalization performance of the \textbf{DAF-Skills} models when evaluated on unseen tasks within the same domain. \copyright{} 2023 IEEE.
    }
    \label{fig:stap-exp_abc}
\end{figure}

\subsection{STAP Skills Generalize to Long-Horizon Tasks (H2)}
\label{sec:stap-experiments-planning}

In this experiment, we test the ability of our framework to solve 9 long-horizon tasks with geometric dependencies between actions. We compare against DAF~\cite{xu2021-daf}, a state-of-the-art method for learning to solve TAMP problems. 
As task planning is outside the scope of this chapter, we omit DAF's skill proposal network and compare only to the skills trained with DAF (\textbf{DAF-Skills}), which are comprised of dynamics and affordance models.
DAF's planning objective is similar to ours, except that it evaluates the product of affordances rather than Q-functions. 
We give \textbf{DAF-Skills} the same plan skeleton $\tau$ that is given to our method and augment DAF's shooting planner with CEM for a more even comparison.

Like other model-based RL methods, DAF requires training on a set of long-horizon tasks that is representative of the evaluation task distribution. 
We therefore train one \textbf{DAF-Skills} model per task (9 total) and run evaluation on the same task. 
We also test the ability of these models to generalize to the other two tasks within the same domain (\textbf{DAF-Gen}). 
Since \textbf{DAF-Skills} is trained on its evaluation task, we expect it to perform at least as well as STAP, if not better.
However, we expect \textbf{DAF-Gen} to perform slightly worse than STAP, since the evaluation tasks differ from the training tasks, even if they are similar.
We train all models for 48 hours each and allow $1000$ samples per dimension for planning.

The results are presented in \cref{fig:stap-exp_abc}. 
Our method with \textbf{Policy CEM} achieves competitive success rates with \textbf{DAF-Skills} on 4 out of the 9 tasks and outperforms it on 2 tasks with highly complex action dependencies (\texttt{Rearrangement Push}). 
While \textbf{DAF-Gen} matches the performance of \textbf{Policy CEM} on 2 tasks, it gets relatively low success on the others. 
This indicates that skills trained on one long-horizon task may not  effectively transfer to other tasks with similar action dependencies. 
Our method of training skills in independent environments and then generalizing to long-horizon tasks via planning is efficient from a training perspective, since the same trained skills can be used for all downstream tasks.

\subsection{STAP Can be Extended for TAMP with UQ (H3)}
\label{sec:stap-experiments-tamp}
In this experiment, we combine our framework with a PDDL task planner as described in \cref{sec:stap-planning-tamp} and evaluate it on two TAMP problems. 
In \texttt{Hook Reach}, the robot needs to decide the best way to pick up a block, which may or may not be in its workspace. 
In \texttt{Constrained Packing}, the robot needs to place a fixed number of objects on the rack but is free to choose among any of the objects on the table. 
To mimic what the robot might find in an unstructured, real-world environment, some of these objects are distractor objects that are initialized in ways not seen by the skills during training (e.g. the blocks can be stacked, placed behind the robot base, or tipped over). 
The task planner may end up selecting these distractor objects for placing on the rack, but since the skills have not been trained to handle these objects, their predicted success (Q-values) may be unreliable. 
UQ is particularly important for such scenarios, so we introduce \textbf{SCOD Policy CEM}, which filters out candidate action plans with high uncertainty in the predicted task success score (\cref{eq:stap-planning-objective}).
That is, the $n$ action plans with the highest skill uncertainties (\cref{eq:stap-scod-posterior}) are not considered for execution.

The results are presented in \cref{fig:stap-tamp_results}. \textbf{Policy CEM} achieves 97\% success on the \texttt{Hook Reach} TAMP problem, while \textbf{SCOD Policy CEM} suffers a slight performance drop. However, for \texttt{Constrained Packing}, which contains OOD states, \textbf{SCOD Policy CEM} strongly outperforms the other methods. Exploring different ways to integrate UQ into our planning framework is a promising direction for future work.

\begin{figure}
    \centering
    \includegraphics[width=\columnwidth]{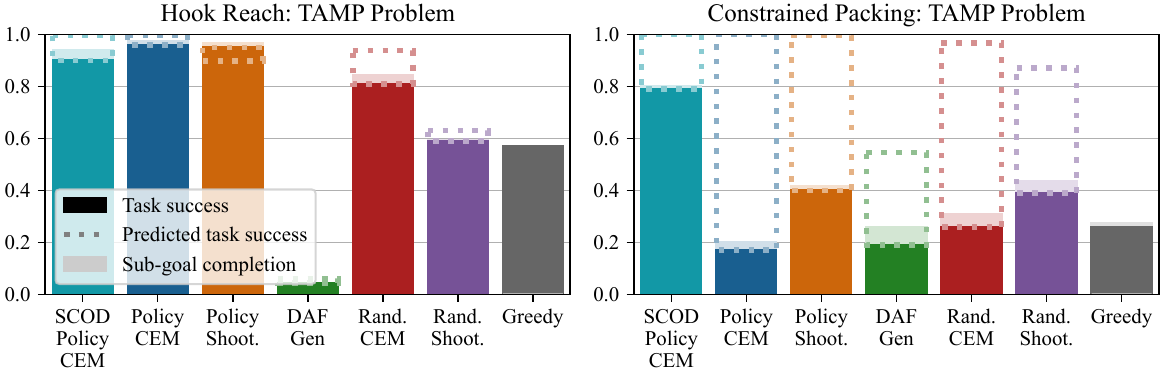}
    \caption[Policy task and motion planning results in simulation (STAP)]{\small
        Integration of our planning framework with task planning and UQ to solve TAMP problems. 
        The poor performance of the methods without SCOD in the \texttt{Constrained Packing} problem highlights the importance of UQ when attempting to solve unseen TAMP problems with learned skills.
        \copyright{} 2023 IEEE.
    }
    \label{fig:stap-tamp_results}
\end{figure}

\subsection{Real-World Sequential Manipulation}

We demonstrate that skills trained with our framework can be used to perform sequential manipulation tasks in a real robot environment. We take RGB-D images from a Kinect v2 camera and use manually tuned color thresholds to segment objects in the scene. With these segmentations, we estimate object poses using the depth image, which is then used to construct the initial environment state $\overline{s}_1$. Qualitative results are provided in the supplementary video.

\section{Conclusion}\label{sec:stap-conclusion}

We present a framework for sequencing task-agnostic policies that have been trained independently. The key to generalization is planning actions that maximize the probability of long-horizon task success, which we model using the product of learned Q-values. This requires learning a dynamics model to predict future states and using UQ to filter out OOD states that the skills do not support. The result is a library of skills that can be composed to solve arbitrary long-horizon tasks with complex geometric dependencies between actions. 

\subsection{Future Work}\label{sec:stap-future-work}
Future work includes the investigation of methods for scaling skills to high-dimensional observations, combining the library of learned skills with a set of handcrafted skills, and exploring planning objectives that capture other desirable properties of trajectories, beyond their geometric feasibility.

\chapter{Searching for Feasible Policy Sequences from Language}\label{chapter:text2motion}

\section{Introduction}\label{sec:text2motion-introduction}

In \cref{chapter:stap}, we examined a central challenge in solving real-world, long-horizon tasks with learned behavior policies: even when individual behaviors can be executed reliably, unmodeled dependencies between them may critically affect the feasibility and success of an entire sequence. To address this, we introduced a method that estimates the success probability of a given policy sequence and uses this estimate to coordinate policy outputs at deployment time. This chapter builds on that foundation by observing that, for many tasks, multiple candidate sequences of behaviors may achieve the same goal. Generating such sequences---commonly referred to as task planning or symbolic planning---and selecting those with the highest likelihood of success therefore represents an additional avenue for improving reliability. Because (1) tasks are often most naturally specified in language (e.g., ``shelve the dishes''), and (2) pretrained foundation models possess broad, internet-scale knowledge relevant to everyday activities, we explore the use of \textit{Large Language Models} as symbolic planners to flexibly generate and search over candidate policy sequences directly from natural language task descriptions.

Long-horizon robot planning is traditionally formulated as a joint symbolic and geometric reasoning problem, where the symbolic reasoner is supported by a formal logic representation (e.g. first-order logic~\cite{aeronautiques1998pddl}).
Such systems can generalize within the logical planning domain specified by experts. 
However, many desirable properties of plans that can be \rone{conveniently expressed in language by non-expert users} may be cumbersome to specify in formal logic.
Examples include the specification of user intent or preferences.

\begin{figure}[t!]
    \centering
    \includegraphics[width=0.90\textwidth]{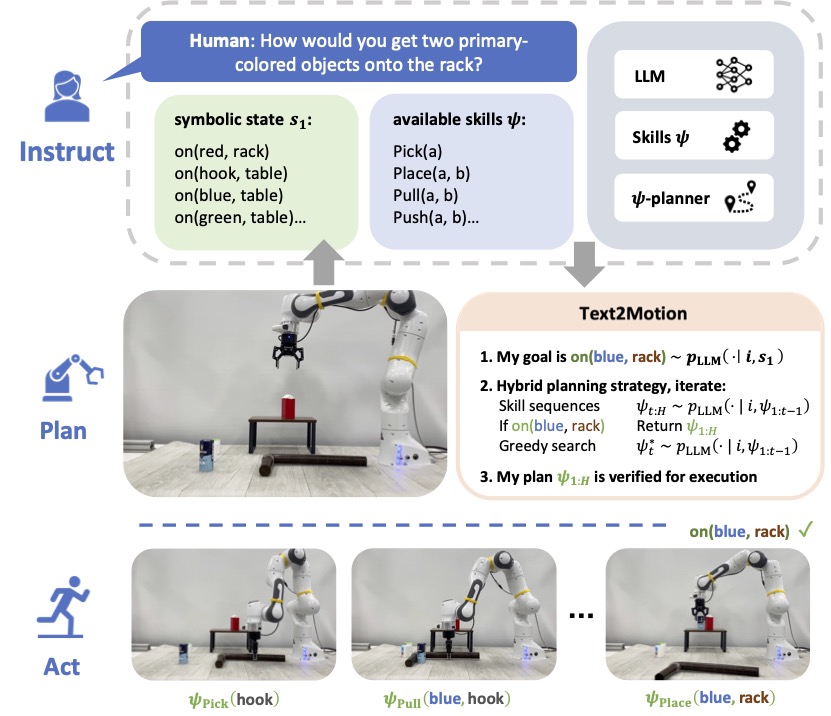}
    \caption[Overview of Text2Motion]{
        To carry out the instruction ``get two primary-colored objects onto the rack,'' the robot must apply symbolic reasoning over the scene description and language instruction to deduce what skills should be executed to acquire a second primary-colored object, after noticing that a red object is already on the rack (i.e. \graytext{on(red, rack)}). It must also apply geometric reasoning to ensure that skills are sequenced in a manner that is likely to succeed. Unlike prior work~\cite{ahn2022-saycan, pmlr-v205-huang23c} that myopically executes skills at the current timestep, \ttmnb{} constructs \textit{sequences} of skills and coordinates their geometric dependencies with geometric feasibility planning~\cite{taps-2022}. Upon planning the skill sequence \graytext{Pick(hook)}, \graytext{Pull(blue, hook)}, \graytext{Pick(blue)}, \graytext{Place(blue, rack)}, our method computes a grasp position on the hook that enables pulling the blue object into the robot workspace so that it can be successfully picked up in the next step.
        More details can be found on the Text2Motion website: \url{https://sites.google.com/stanford.edu/text2motion}.
        \copyright{} 2023 Springer Nature.
    }
    \label{fig:text2motion-teaser}
\end{figure}

The emergence of Large Language Models (LLMs)~\cite{bommasani2021opportunities} as a task-agnostic reasoning module presents a promising pathway to general robot planning capabilities.
Several recent works~\cite{ahn2022-saycan, pmlr-v205-huang23c, code-as-policies-2022, wu2023tidybot} capitalize on their ability to perform task planning for robot systems without needing to manually specify symbolic planning domains.
\rone{
Nevertheless, these prior approaches adopt myopic or open-loop execution strategies, trusting LLMs to produce correct plans without verifying them on the symbolic or geometric level.
Such strategies are challenged in long-horizon settings, where the task planning abilities of even the most advanced LLMs appear to degrade~\cite{llms-cant-plan-2022}, and the overall success of a seemingly correct task plan depends as well on how it is executed to ensure long-horizon feasibility.
Therefore, we ask in this chapter: how can we verify the correctness and feasibility of LLM-generated plans prior to execution?}

We propose \ttm{}, a language-based planning framework that interfaces an LLM with a library of learned skills and a geometric feasibility planner~\cite{taps-2022} to solve complex sequential manipulation tasks (\cref{fig:text2motion-teaser}).
Our contributions are two-fold: (i) a hybrid LLM planner that synergistically integrates shooting-based and search-based planning strategies to construct geometrically feasible plans for tasks not seen by the skills during training; and (ii) a plan termination method that infers goal states from a natural language instruction to verify the completion of plans before executing them. 
We find that our planner achieves a success rate of 82\% on a suite of challenging table top manipulation tasks, while prior language-based planning methods achieve a 13\% success rate.

\section{Related Work}\label{sec:text2motion-related-work}
\subsection{Language for Robot Planning}
\label{subsec:text2motion-language-literature}
Language is increasingly being explored as a medium for solving long-horizon robotics problems.
For instance, {\em Language-conditioned policies\/} (LCPs) are not only used to learn short-horizon skills~\cite{stepputtis2020language, jang2021bczero, concept2robot-2021, cliport-2022, perceiver-2022, vima-2022}, but also long-horizon policies~\cite{mees2022calvin, rt12022arxiv, dalal2023imitating}.
However, LCPs require expensive data collection and training procedures if they are to generalize to a wide distribution of long-horizon tasks with diverse instructions.

Several recent works leverage the generative qualities of LLMs by prompting them to predict long-horizon plans.
\cite{zeroshot-llms-2022} grounds an LLM planner to admissible action sets for task planning, \cite{silver2022pddl, liu2023llm+} explore the integration of LLMs with PDDL~\cite{aeronautiques1998pddl}, and \cite{wang2023describe, skreta2023errors} focuses on task-level replanning with LLMs.
Tangential works shift the representation of plans from action sequences to code~\cite{code-as-policies-2022, progprompt-2022, zelikman2022parsel, vemprala2023chatgpt} and embed task queries, robot actions, solution samples, and fallback behaviors as programs in the prompt. 
In contrast to these works, which primarily address challenges in task planning with LLMs, we focus on verifying LLM-generated plans for feasibility on the geometric level.

Closest in spirit to our approach are SayCan~\cite{ahn2022-saycan} and Inner Monologue (IM)~\cite{pmlr-v205-huang23c} which at each timestep score the \textit{usefulness} and \textit{feasibility} of all possible skills and execute the one with the highest score.
Termination occurs when the score of the $\texttt{stop}$ ``skill'' is larger than any other.
IM provides additional sources of feedback to the LLM in the form of skill successes and task-progress cues. 

While SayCan and IM are evaluated on a diverse range of tasks, there are several limitations that impede their performance in the settings we study.
First, by only myopically executing the next skill at each timestep, they may fail to account for geometric dependencies that exist over the extent of a skill sequence. 
For an example, see \cref{fig:text2motion-teaser}.
Second, they do not explicitly predict a multi-step plan, which prevents verification of desired properties or outcomes prior to execution.
Examples of such properties could include whether the final state induced by the plan satisfies symbolic constraints or whether the plan adheres to safety criteria.
Lastly, these methods ignore the uncertainty of skill feasibility predictions (i.e. affordances), which \cite{taps-2022} demonstrates is important when sequencing learned skills to solve long-horizon tasks.
By addressing these limitations, \ttm{} outperforms SayCan and IM by a large margin on tasks with geometric dependencies, as demonstrated in the experiments.

\subsection{Task and Motion Planning}
\label{subsec:text2motion-tamp-literature}
Task and Motion Planning (TAMP) refers to a problem setting in which a robot solves long-horizon tasks through symbolic and geometric reasoning~\cite{kaelbling2012integrated, garrett2021integrated}.
The hierarchical approach~\cite{kaelbling2011hierarchical} characterizes the most common family of solution methods.
Such methods typically employ a) a symbolic task planner~\cite{bonet2001planning, helmert2006-fd} to produce a candidate plan skeleton, and b) a motion planner to verify the plan skeleton for its geometric feasibility and compute a motion trajectory subject to robot and environmental constraints~\cite{garrett2020-pddlstream, toussaint2015-lgp, driess2019-hlgp}.

For complex tasks, classical TAMP solvers~\cite{kaelbling2011hierarchical, lagriffoul2014efficiently, toussaint2015-lgp, dantam2016incremental, bidot2017geometric, garrett2020-pddlstream} may iterate between task planning and motion planning for minutes until a plan is found.
To amortize planning costs, works learn sampling distributions~\cite{wang2018active, xu2021-daf, kim2019adversarial, kim2022representation, taps-2022}, visual feasibility heuristics~\cite{driess2020-dvr, driess2020-dvh, driess2021-lpr}, low-level controllers~\cite{driess2021-lgr, silver2022learning}, or state sparsifiers~\cite{chitnis2021camps, silver2021-ploi}, from datasets of solutions computed by classical TAMP solvers.
Another line of works learn symbolic representations for TAMP~\cite{kroemer2016learning, ames2018learning, konidaris2018skills, silver2021learning, wang2021learning, curtis2022discovering, chitnis2022learning, silver2022learning}, often from task-specific symbolic transition experience.

\rone{
As is common in TAMP, \ttm{} also assumes knowledge of task-relevant objects and their poses in order to plan feasible trajectories for long-horizon tasks.
However, central to our investigation is the use of LLMs instead of symbolic task planners often used in TAMP~\cite{garrett2021integrated}, and language as convenient medium to express tasks that may be cumbersome to specify in formal logic (e.g. user preferences~\cite{wu2023tidybot}). 
Accordingly, we address early challenges concerning the reliable use of LLMs (discussed in \cref{subsec:text2motion-language-literature}) in the long-horizon settings typically solved by TAMP.
\ttm{} thereby presents several qualitative differences from TAMP: i) the ability to interpret free-form language instructions for the construction of multi-step plans, and ii) the capacity to reason over an unrestricted set of object classes and object properties, both of which are supported by the commonsense knowledge of LLMs~\cite{brown2020language}.
We leave the extension of our framework to open-world settings (e.g. via environment exploration~\cite{chen2022open} or interaction~\cite{curtis2022-unknown}) to future work.}

\section{Problem Setup}
\label{sec:text2motion-problem-setup}
We aim to solve long-horizon sequential manipulation problems that require symbolic and geometric reasoning from a natural language instruction $i$ and the initial state of the environment $s_1$. 
\rtwo{We assume a closed-world setting, whereby the initial state $s_1$ contains knowledge of task-relevant objects and their poses as provided by an external perception system (\cref{appx-sub:text2motion-hardware-setup}).
Fulfillment of the instruction $i$ corresponds to achieving a desired goal configuration of the task-relevant objects which can be symbolically expressed with a closed set of predicates (\cref{appx-sub:text2motion-scene-descr-symbolic}).}

\subsection{LLM and Skill Library}
\label{subsec:text2motion-prelminaries}
We assume access to an LLM and a library of skills $\mathcal{L}^\psi = \{\psi^1, \ldots, \psi^N\}$.
Each skill $\psi$ consists of a policy $\pi(a | s)$ and a parameterized manipulation primitive $\phi(a)$~\cite{felip2013manipulation}, and is associated with a contextual bandit, or a single-timestep Markov Decision Process (MDP):
\begin{equation}
    \label{eq:text2motion-skill-mdp}
    \mathcal{M} = (\mathcal{S}, \mathcal{A}, T, R, \rho), 
\end{equation}
where $\mathcal{S}$ is the state space, $\mathcal{A}$ is the action space, $T(s' | s, a)$ is the transition model, $R(s, a, s')$ is the binary reward function, and $\rho(s)$ is the initial state distribution.
When a skill $\psi$ is executed, an action $a \in \mathcal{A}$ is sampled from its policy $\pi(a|s)$ and fed to its primitive $\phi(a)$, which consumes the action and executes a series of motor commands on the robot. 
If the skill succeeds, it receives a binary reward of $r$ (or $\neg r$ if it fails).
We subsequently refer to policy actions $a \in \mathcal{A}$ as \textit{parameters} for the primitive, which,  depending on the skill, can represent grasp poses, placement locations, and pulling or pushing distances (\cref{appx-sub:text2motion-learning-models}).

A timestep in our environment corresponds to the execution of a single skill.
We assume that each skill comes with a language description and that methods exist to obtain its policy $\pi(a|s)$, Q-function $Q^\pi(s, a)$, and dynamics model $T^\pi(s' | s, a)$. 
Our framework is agnostic to the approach used to obtain these models.
We also assume a method to convey the environment state $s \in \mathcal{S}$ to the LLM as natural language. 

\subsection{The Planning Objective}
\label{subsec:text2motion-planning-objective}
Our objective is to find a plan in the form of a sequence of skills $[\psi_1, \ldots, \psi_H]$ (for
notational convenience, we hereafter represent sequences
with range subscripts, e.g. $\psi_{1:H}$) that is both likely to satisfy the instruction $i$ and can be successfully executed from the environment's initial state $s_1$. 
This objective can be expressed as the joint probability of skill sequence $\psi_{1:H}$ and binary rewards $r_{1:H}$ given the instruction $i$ and initial state $s_1$:
\begin{equation}
    \begin{split}
        &p(\psi_{1:H}, r_{1:H} \mid i, s_1) \\
        &\quad\quad= p(\psi_{1:H} \mid i, s_1)\, p(r_{1:H} \mid i, s_1, \psi_{1:H}).
    \end{split} \label{eq:text2motion-tamp-score}
\end{equation}
The first term in this product $p(\psi_{1:H} \mid i, s_1)$ considers the probability that the skill sequence $\psi_{1:H}$ will satisfy the instruction $i$ from a symbolic perspective. 
However, a symbolically correct skill sequence may fail during execution due to kinematic constraints of the robot or geometric dependencies spanning the skill sequence.
We must also consider the \textit{success probability} of the skill sequence $\psi_{1:H}$ captured by the second term in this product $p(r_{1:H} \mid i, s_1, \psi_{1:H})$.
The success probability depends on the parameters $a_{1:H}$ fed to the underlying sequence of primitives $\phi_{1:H}$ that control the robot's motion:
\begin{equation}
    \label{eq:text2motion-motion-parameter-score}
    p(r_{1:H} \mid i, s_1, \psi_{1:H}) = p(r_{1:H} \mid s_1, a_{1:H}).
\end{equation}
\cref{eq:text2motion-motion-parameter-score} represents the probability that skills $\psi_{1:H}$ achieve rewards $r_{1:H}$ when executed from initial state $s_1$ with parameters $a_{1:H}$; which is independent of the instruction $i$.
If just one skill fails (reward $\neg r$), then the entire plan fails.

\subsection{Geometric Feasibility Planning}
\label{subsec:text2motion-geometric-feasibility-planning}
The role of geometric feasibility planning is to maximize the success probability (\cref{eq:text2motion-motion-parameter-score}) of a skill sequence $\psi_{1:H}$ by computing an optimal set of parameters $a_{1:H}$ for the underlying primitive sequence $\phi_{1:H}$.
This process is essential for finding plans that maximize the overall planning objective in \cref{eq:text2motion-tamp-score}.
In our experiments, we leverage Sequencing Task-Agnostic Policies (STAP)~\cite{taps-2022}.

STAP resolves geometric dependencies across the skill sequence $\psi_{1:H}$ by maximizing the product of step reward probabilities of parameters $a_{1:H}$:
\begin{equation}
    \label{eq:text2motion-taps-objective}
    a_{1:H}^* = \arg \max_{a_{1:H}} \, \Estap{s_{2:H}}{\prod_{t=1}^H p(r_t \mid s_t, a_t)},
\end{equation}
where future states $s_{2:H}$ are predicted by dynamics models $s_{t+1} \sim T^{\pi_t}(\cdot | s_t, a_t)$. 
Note that the reward probability $p(r_t \mid s_t, a_t)$ is equivalent to the Q-function $Q^{\pi_t}(s_t, a_t)$ for skill $\psi_t$ in a contextual bandit setting with binary rewards (\cref{eq:text2motion-skill-mdp}).
The success probability of the optimized skill sequence $\psi_{1:H}$ is thereby approximated by the product of Q-functions evaluated from initial state $s_1$ along a sampled trajectory $s_{2:H}$ with parameters $a^*_{1:H}$:
\begin{equation}
    \label{eq:text2motion-stap-score}
    p(r_{1:H} \mid s_1, a_{1:H}) \approx \prod_{t=1}^H Q^{\pi_t}(s_t, a^*_t).
\end{equation}

In principle, our framework is agnostic to the specific approach used for geometric feasibility planning, requiring only that it is compatible with the skill formalism defined in \cref{subsec:text2motion-prelminaries} and provides a reliable estimate of \cref{eq:text2motion-motion-parameter-score}.

\section{Methods}
\label{sec:text2motion-text2motion}

The core idea of our approach is to ensure the geometric feasibility of an LLM task plan---and thereby its correctness---by predicting the success probability (\cref{eq:text2motion-motion-parameter-score}) of learned skills that are sequenced according to the task plan.
In the following sections, we outline two strategies for planning with LLMs and learned skills: a shooting-based planner and a search-based planner.
We then introduce the full planning algorithm, \ttm{}, which synergistically integrates the strengths of both strategies.
These strategies represent different ways of maximizing the overall planning objective in \cref{eq:text2motion-tamp-score}.

\begin{figure*}
    \centering
    \includegraphics[width=0.98\textwidth]{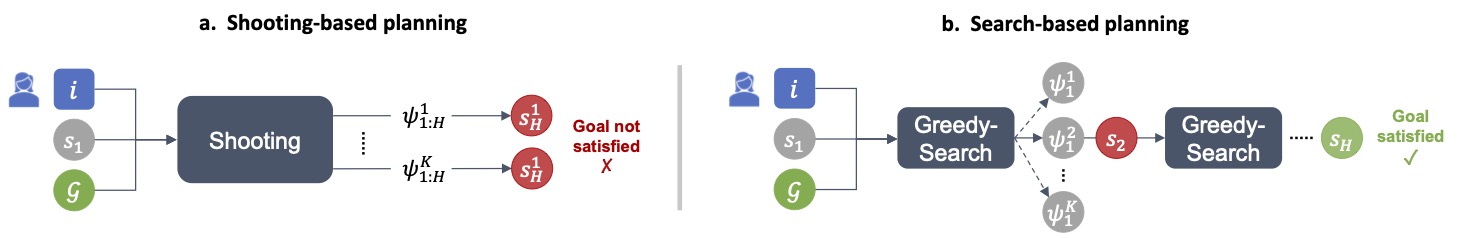}
    \caption[System architectures for shooting- and search-based policy planners (Text2Motion)]{\textbf{\shnb{} and \senb{} planning overview}. 
    Both \shnb{} and \senb{} planners use the LLM to predict the set of valid goal states given the user's natural language instruction and a description of the current state of the environment. These predicted goals are used to decide when the instruction is satisfied and planning can terminate. \textbf{Left:} The \shnb{} strategy uses the LLM to propose full skill sequences first and then runs geometric feasibility planning afterwards. As shown in the experiments, this approach fails when the space of candidate task plans is large but few skill sequences are geometrically feasible. \textbf{Right:} In the \senb{} strategy, the LLM is used to propose $K$ candidate skills with the top LLM scores. The geometric feasibility planner then evaluates the feasibility of each candidate skill, and the one with the highest product of LLM and geometric feasibility scores is selected. The successor state of this skill is predicted by the geometric feasibility planner's dynamics model. If the successor state does not satisfy any of the predicted goals, then it is given to the LLM to plan the next skill. If a goal is satisfied, then the planner returns the skill sequence for execution. By interleaving LLM task planning with geometric feasibility planning at each planning iteration, \senb{} is able to reliably find feasible plans across the different families of tasks we study in the experiments. 
    \copyright{} 2023 Springer Nature.
    }
    \label{fig:text2motion-system}
\end{figure*}

\subsection{Goal Prediction} 
\label{subsec:text2motion-goal-prediction}
Plans with high overall objective scores (\cref{eq:text2motion-tamp-score}) are not guaranteed to satisfy their instruction.
Consider the instruction ``move all the dishes from the table to the sink'' issued in an environment with two dishes on the table. 
While a plan that picks and places one of the two dishes in the sink may have a high language model likelihood and success probability, it fails to satisfy the instruction. 

The first step in all planning strategies is to convert the language instruction into a goal condition that can be checked against a candidate sequence of skills.
\rone{Given an instruction $i$, a set of objects $O$ in the scene, and a library of predicate classifiers $\mathcal{L}^\chi=\{\chi^1,\ldots,\chi^M\}$, we use the LLM to predict a set of $|\mathcal{G}|$ symbolic goal propositions $\mathcal{G} = \{g_1, \ldots, g_j\}$ that would satisfy the instruction. 
Each goal proposition $g\in\mathcal{G}$ is a set of predicates grounded over objects in the scene.
Each predicate is a binary-valued function over objects and has a one-to-one correspondence with a predicate classifier $\chi\in\mathcal{L}^\chi$ that implements the predicate (details in \cref{appx-sub:text2motion-scene-descr-symbolic}).
We define a satisfaction function $\func{F^\mathcal{G}_{\text{sat}}}{s}: \mathcal{S} \rightarrow \{0, 1\}$ which takes as input a geometric state $s$ and evaluates to $1$ if any goal proposition $g\in\mathcal{G}$ predicted by the LLM holds in state $s$.}

A sequence of skills $\psi_{1:H}$ is said to satisfy the instruction $i$ \textit{iff}:
\begin{equation}
    \label{eq:text2motion-satisfaction-function}
    \exists\: s \in s_{2:H+1}: F^\mathcal{G}_{\text{sat}}(s) = 1,
\end{equation}
where the future states $s_{2:H+1}$ are predicted by the geometric feasibility planner (see \cref{subsec:text2motion-geometric-feasibility-planning}).
If $F^\mathcal{G}_{\text{sat}}(s_t)$ evaluates to $1$ for a geometric state $s_t$ at timestep $t \le H+1$, then the planner returns the subsequence of skills $\psi_{1:t-1}$ for execution.

\begin{algorithm}
\small
\caption{Shooting-based LLM planner}\label{alg:text2motion-shooting}
\begin{algorithmic}[1]
\State \textbf{globals:} $\mathcal{L}^\psi, \mathcal{L}^\chi, \textsc{SatFunc}, \textsc{LLM}, \textsc{STAP}$
\Function{Shooting}{$i, s_1, \mathcal{G}; K$}
    \State $F^\mathcal{G}_{\text{sat}} \gets \Call{SatFunc}{\mathcal{G}, \mathcal{L}^\chi}$ \Comment{Goal checker}
    \State $\{\psi^{(j)}_{1:H}\}_{j=1}^K \gets \Call{LLM}{i, s_1, \mathcal{G}, K}$ \Comment{Gen. plans}
    \State $C = \{\,\}$ \Comment{Init. candidate set}
    \For{$j = 1 \ldots K$}
        \State $s^{(j)}_{2:H+1}, a^{(j)}_{1:H} \gets \Call{STAP}{s_1, \psi^{(j)}_{1:H}, \mathcal{L}^\psi}$ 
        \If{$F^\mathcal{G}_{\text{sat}}(s^{(j)}_t) == 1$ \textbf{for} $t \leq H+1$}
            \State $\psi^{(j)}_{1:t-1} \gets \psi^{(j)}_{1:H}[:t-1]$ \Comment{Slice plan}
            \State $C \gets C \cup \{j\}$ \Comment{Add to candidate set}
        \EndIf
        \State Compute $p_{\text{success}}^{(j)}$ via \cref{eq:text2motion-stap-score}
    \EndFor
    \State Filter OOD plans from $C$ as per \cref{eq:text2motion-ood-detection}
    \If{$C == \emptyset$}
        \State \textbf{raise} \texttt{planning failure}
    \EndIf
    \State $j^* = \argmax_{j \in C}\; p_{\text{success}}^{(j)}$
    \State \Return $\psi^{(j^*)}_{1:t-1}$ \Comment{Return best plan}
\EndFunction
\end{algorithmic}
\end{algorithm}

\subsection{Shooting-Based Planning}
\label{sec:text2motion-shooting}
The planner is responsible for finding geometrically feasible plans that satisfy the goal condition predicted by the LLM (\cref{subsec:text2motion-goal-prediction}).
To this end, the first strategy we propose is a shooting-based planner, termed \sh{} (see \cref{fig:text2motion-system}, Left), which takes a single-step approach to maximizing the overall planning objective in \cref{eq:text2motion-tamp-score}.
\sh{}'s process is further outlined in \cref{alg:text2motion-shooting}.
\sh{} requires querying the LLM only once to generate $K$ candidate skill sequences $\{\psi^{1}_{1:H}, \ldots, \psi^{K}_{1:H}\}$ in an open-ended fashion.
Each candidate skill sequence is processed by the geometric feasibility planner which returns an estimate of the sequence's success probability (\cref{eq:text2motion-stap-score}) and its predicted future state trajectory ${s_{2:H+1}}$.
Skill sequences that satisfy the goal condition (\cref{eq:text2motion-satisfaction-function}) are added to a candidate set. 
Invalid skill sequences as determined by \cref{subsec:text2motion-ood-detection} are filtered-out of the candidate set.
If the candidate set is not empty, \sh{} returns the skill sequence with the highest success probability, or raises a \texttt{planning failure} otherwise.

\subsection{Search-Based Planning}
\label{sec:text2motion-greedy-search}
We propose a second planner, \se{} (see \cref{fig:text2motion-system}, Right), which at each planning iteration ranks candidate skills predicted by the LLM and adds the top scoring skill to the running plan.

This iterative approach can be described as a decomposition of the planning objective in \cref{eq:text2motion-tamp-score} by timestep $t$:
\begin{equation}
    \label{eq:text2motion-tamp-score-decomp}
    \begin{split}
        &p(\psi_{1:H}, r_{1:H} \mid i, s_1) \\
        &\quad\quad= \prod_{t=1}^H p(\psi_t, r_t \mid i, s_1, \psi_{1:t-1},  r_{1:t-1}).
    \end{split}
\end{equation}
We define the joint probability of $\psi_t$ and $r_t$ in \cref{eq:text2motion-tamp-score-decomp} as the skill score $S_{\text{skill}}$:
\begin{equation*}
    \label{eq:text2motion-tamp-step-score}
    S_{\text{skill}}(\psi_t) 
        = p(\psi_t, r_t \mid i, s_1, \psi_{1:t-1}, r_{1:t-1}),
\end{equation*}
which we factor using conditional probabilities:
\begin{equation}
    \label{eq:text2motion-tamp-step-score-factor}
    \begin{split}
        S_{\text{skill}}(\psi_t) &= p(\psi_t \mid i, s_1, \psi_{1:t-1}, r_{1:t-1}) \\ 
        &\quad\quad\quad\quad p(r_t \mid i, s_1, \psi_{1:t}, r_{1:t-1}).
    \end{split}
\end{equation}
Each planning iteration of \se{} is responsible for finding the skill $\psi_t$ that maximizes the skill score (\cref{eq:text2motion-tamp-step-score-factor}) at timestep $t$. 

\textbf{Skill usefulness:}
The first factor of \cref{eq:text2motion-tamp-step-score-factor} captures the \textit{usefulness} of a skill generated by the LLM with respect to satisfying the instruction. 
We define the skill usefulness score $S_{\text{llm}}$:
\begin{align}
    S_{\text{llm}}(\psi_t) 
        &= p(\psi_t \mid i, s_1, \psi_{1:t-1}, r_{1:t-1}) \label{eq:text2motion-llm-step-score} \\
        &\approx p(\psi_t \mid i, s_{1:t}, \psi_{1:t-1}). \label{eq:text2motion-llm-step-score-decomp}
\end{align}
In \cref{eq:text2motion-llm-step-score-decomp}, the probability of the next skill $\psi_t$ (\cref{eq:text2motion-llm-step-score}) is cast in terms of the predicted state trajectory $s_{2:t}$ of the running plan $\psi_{1:t-1}$, and is thus is independent of prior rewards $r_{1:t-1}$. 
We refer to \cref{appx-sub:text2motion-skill-usefulness} for a detailed derivation of \cref{eq:text2motion-llm-step-score-decomp}.

At each planning iteration $t$, we optimize $S_{\text{llm}}(\psi_t)$ by querying an LLM to generate $K$ candidate skills $\{\psi_t^1, \dots, \psi_t^K\}$. 
We then compute the usefulness scores $S_{\text{llm}}(\psi_t^k)$ by summing the token log-probabilities of each skill's language description \rtwo{(visualized in \cref{subsec:text2motion-prompt-engineering})}.
These scores represent the likelihood that $\psi^k_t$ is the correct skill to execute from a language modeling perspective to satisfy instruction $i$.

\begin{algorithm}[t]
\small
\caption{Search-based LLM planner}\label{alg:text2motion-greedy-search}
\begin{algorithmic}[1]
\State \textbf{globals:} $\mathcal{L}^\psi, \mathcal{L}^\chi, \textsc{SatFunc}, \textsc{LLM}, \textsc{STAP}$
\Function{Greedy-Search}{$i, s_1, \mathcal{G}; K, d_{\text{max}}$}
    \State $F^\mathcal{G}_{\text{sat}} \gets \Call{SatFunc}{\mathcal{G}, \mathcal{L}^\chi}$ \Comment{Goal checker}
    \State $\Psi = [\,]$; $\tau = [s_1]$ \Comment{Init. running plan}
    \While{$len(\Psi) < d_{\text{max}}$}
        \State $\Psi, \tau \gets \Call{Greedy-Step}{i, s_1, \mathcal{G}, \Psi, \tau, K}$
        \If{$F^\mathcal{G}_{\text{sat}}(\tau[-1]) == 1$} 
            \State \Return $\Psi$ \Comment{Return goal-reaching plan}
        \EndIf
    \EndWhile
    \State \textbf{raise} \texttt{planning failure}
\EndFunction
\Function{Greedy-Step}{$i, s_1, \mathcal{G}, \Psi, \tau; K$}
    \State $t = len(\Psi) + 1$ \Comment{Curr. planning iteration}
    \State $\{\psi^{(j)}_t\}_{j=1}^K \gets \Call{LLM}{i, \tau, \mathcal{G}, K}$ \Comment{Gen. skills}
    \State $C = \{\,\}$ \Comment{Init. candidate set}
     \For{$j = 1 \ldots K$}
        \State $\psi^{(j)}_{1:t} \gets \Psi.append(\psi^{(j)})$
        \State $s^{(j)}_{2:t+1}, a^{(j)}_{1:t} \gets \Call{STAP}{s_1, \psi^{(j)}_{1:t}, \mathcal{L}^\psi}$ 
        \State Compute $S_{\text{llm}}(\psi^{(j)}_t)$ via \cref{eq:text2motion-llm-step-score-decomp}
        \State Compute $S_{\text{geo}}(\psi^{(j)}_t)$ via \cref{eq:text2motion-motion-step-score-decomp}
        \State $S_{\text{skill}}(\psi^{(j)}_t) \gets S_{\text{llm}}(\psi^{(j)}_t) \times S_{\text{geo}}(\psi^{(j)}_t)$
        \If{$\psi^{(j)}_t$ is not OOD} \Comment{As per \cref{eq:text2motion-ood-detection}}
            \State $C \gets C \cup \{j\}$ \Comment{Add to candidate set}
        \EndIf
    \EndFor
    \State $j^* = \argmax_{j \in C}\; S_{\text{skill}}(\psi^{(j)}_t)$
    \State \Return $\psi^{(j^*)}_{1:t}, s^{(j^*)}_{1:t+1}$ \Comment{Return running plan}
\EndFunction
\end{algorithmic}
\end{algorithm}

\textbf{Skill feasibility:} 
The second factor of \cref{eq:text2motion-tamp-step-score-factor} captures the \textit{feasibility} of a skill generated by the LLM. 
We define the skill feasibility score $S_{\text{geo}}$:
\begin{align}
    S_{\text{geo}}(\psi_t) 
        &= p(r_t \mid i, s_1, \psi_{1:t}, r_{1:t-1}) \label{eq:text2motion-motion-step-score} \\
        &\approx Q^{\pi_t}(s_t, a^*_t) \label{eq:text2motion-motion-step-score-decomp},
\end{align} 
where \cref{eq:text2motion-motion-step-score-decomp} approximates \cref{eq:text2motion-motion-step-score} by the Q-value evaluated at predicted future state $s_t$ with optimized parameter $a^*_t$, both of which are computed by the geometric feasibility planner.
We refer to \cref{appx-sub:text2motion-skill-feasibility} for a detailed derivation of \cref{eq:text2motion-motion-step-score-decomp}.

\textbf{Skill selection:}
The skill feasibility score (\cref{eq:text2motion-motion-step-score-decomp}) and skill usefulness score (\cref{eq:text2motion-llm-step-score-decomp}) are then multiplied to produce the overall skill score (\cref{eq:text2motion-tamp-step-score-factor}) for each of the $K$ candidate skills $\{\psi_t^1, \dots, \psi_t^K\}$. 
Invalid skills as determined by \cref{subsec:text2motion-ood-detection} are filtered-out of the candidate set.
Of the remaining skills, the one with the highest skill score $\psi^*_t$ is added to the running plan $\psi_{1:t-1}$.
If the predicted geometric state $s_{t+1}$ that results from skill $\psi^*_t$ satisfies the predicted goal condition (\cref{eq:text2motion-satisfaction-function}), the skill sequence $\psi_{1:t}$ is returned for execution.
Otherwise, $s_{t+1}$ is used to initialize planning iteration $t+1$.
The process repeats until the planner returns or a maximum search depth $d_{\text{max}}$ is met raising a \texttt{planning failure}.
This process is outlined in \cref{alg:text2motion-greedy-search}.

The baselines we compare to~\cite{ahn2022-saycan, pmlr-v205-huang23c} only consider the feasibility of skills $\psi^k_t$ in the current state $s_t$. 
In contrast, \se{} considers the feasibility of skills $\psi^k_t$ in the context of the planned sequence $\psi_{1:t-1}$ via geometric feasibility planning.

\begin{figure}
    \centering    
    \includegraphics[width=0.90\textwidth]{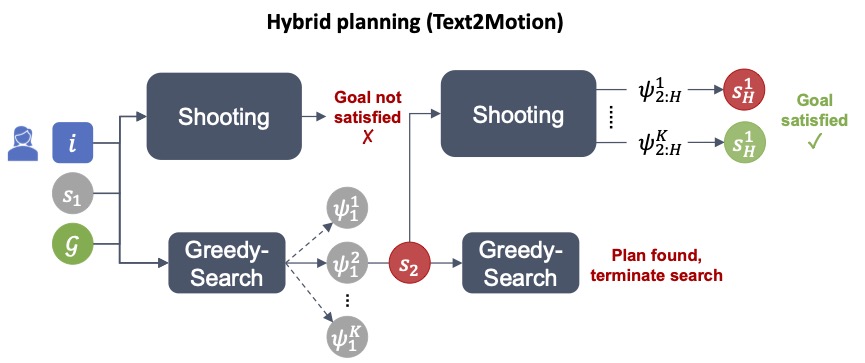}
    \caption[System architecture for proposed hybrid policy planner (Text2Motion)]{
    \textbf{Proposed hybrid planner}. After predicting goals for a given instruction, \ttmnb{} iterates the process: i) invoke \shnb{} to plan full skill sequences, and if no goal-reaching plan is found, ii) take a \senb{} step and check if executing the selected “best” skill would reach the goal.
    Note that the entire planning process occurs before execution. See \cref{fig:text2motion-system} for a visualization of the \shnb{} and \senb{} planners.
    \copyright{} 2023 Springer Nature.
    }
    \label{fig:text2motion-overall-method}
\end{figure}

\subsection{Text2Motion}
\label{subsec:text2motion-text2motion}
We present \ttm{}, a hybrid planning algorithm that inherits the strengths of both shooting-based and search-based planning strategies.
In particular, \sh{} offers efficiency when geometrically feasible skill sequences can be easily predicted by the LLM given the initial state and the instruction. 
\se{} serves as a reliable fall-back strategy that can determine what skills are feasible at the current timestep, should \sh{} fail to find a plan.
A visualization is provided in \cref{fig:text2motion-overall-method}.

At each planning iteration $t$, \ttm{} optimistically invokes \sh{} to plan $K$ candidate skill sequences.
If \sh{} raises a \texttt{planning failure}, then \ttm{} falls back to a single step of \se{}, which adds the skill $\psi^*_t$ with the highest skill score (\cref{eq:text2motion-tamp-step-score-factor}) to the running plan $\psi_{1:t-1}$.
The geometric feasibility planner predicts the state $s_{t+1}$ that would result from executing $\psi^*_t$.
If state $s_{t+1}$ satisfies the goal condition (\cref{eq:text2motion-satisfaction-function}), the skill sequence $\psi_{1:t}$ is returned for execution.
Otherwise, the next planning iteration starts by invoking \sh{} on predicted state $s_{t+1}$.
The process repeats until the planner returns or a maximum search depth $d_{\text{max}}$ is met.
\ttm{} is outlined in \cref{alg:text2motion-text2motion}.

\begin{algorithm}[t]
\algblock[TryCatchFinally]{try}{endtry}
\algcblock[TryCatchFinally]{TryCatchFinally}{finally}{endtry}
\algcblockdefx[TryCatchFinally]{TryCatchFinally}{catch}{endtry}
	[1]{\textbf{catch} #1}
	{\textbf{end try}}
\small
\caption{Text2Motion hybrid planner}\label{alg:text2motion-text2motion}
\begin{algorithmic}[1]
\State \textbf{globals:} $\mathcal{L}^\chi, \textsc{SatFunc}, \textsc{Shooting}, \textsc{Greedy-Step}$
\Function{Text2Motion}{$i, s_1, \mathcal{G}; K, d_{\text{max}}$}
    \State $F^\mathcal{G}_{\text{sat}} \gets \Call{SatFunc}{\mathcal{G}, \mathcal{L}^\chi}$ \Comment{Goal checker}
    \State $\Psi = [\,]$; $\tau = [s_1]$ \Comment{Init. running plan}
    \While{$len(\Psi) < d_{\text{max}}$}
        \try
            \State \Return $\Call{Shooting}{i, \tau, \mathcal{G}, K}$
        \catch{\texttt{planning failure}}
            \State $\Psi, \tau \gets \Call{Greedy-Step}{i, s_1, \mathcal{G}, \Psi, \tau, K}$
            \If{$F^\mathcal{G}_{\text{sat}}(\tau[-1]) == 1$} 
                \State \Return $\Psi$
            \EndIf
        \endtry
    \EndWhile
    \State \textbf{raise} \texttt{planning failure}
\EndFunction
\end{algorithmic}
\end{algorithm}

\subsection{Out-of-Distribution Detection}
\label{subsec:text2motion-ood-detection}
During planning, the LLM may propose skills that are out-of-distribution (OOD) given a state $s_t$ and optimized parameter $a^*_t$.
For instance, a symbolically incorrect skill, like \graytext{Place(dish, table)} when the \graytext{dish} is not in hand, may end up being selected if we rely on learned Q-values, since the Q-value for an OOD input can be spuriously high. 
We therefore reject plans that contain an OOD skill.

We consider a skill $\psi_t$ to be OOD if the variance of its Q-value (\cref{eq:text2motion-motion-step-score-decomp}) predicted by an ensemble~\cite{LakshminarayananPritzelEtAll2017} exceeds a calibrated threshold $\epsilon^{\psi_t}$:
\begin{equation}
    \label{eq:text2motion-ood-detection}
    \func{F_{\text{OOD}}}{\psi_t} = \mathbbm{1} \left(\Vstap{i\sim1:B}{Q^{\pi_t}_i(s_t, a_t^*)} \geq \epsilon^{\psi_t} \right),
\end{equation}
where $\mathbbm{1}$ is the indicator function and $B$ is the ensemble size. 
\rtwo{We refer to \cref{appx-sub:text2motion-ood-calibration} for details on calibrating OOD thresholds $\epsilon^{\psi}$.}

\section{Experiments}
\label{sec:text2motion-experiments}
We conduct experiments to test four hypotheses:
\begin{description}
    \item[\textbf{H1}] Geometric feasibility planning is a necessary ingredient when using LLMs and robot skills to solve manipulation tasks with geometric dependencies from a natural language instruction.
    \item[\textbf{H2}] \se{} is better equipped to solve tasks with partial affordance perception (as defined in \cref{subsec:text2motion-task-suite}) compared to \sh{}.
    \item[\textbf{H3}] \ttm{}'s hybrid planner inherits the strengths of shooting- and search-based strategies.
    \item[\textbf{H4}] \textit{A priori} goal prediction is a more reliable plan termination strategy than \texttt{stop} scoring.
\end{description}

The following subsections describe the baseline methods we compare against, details on LLMs and prompts, the tasks over which planners are evaluated, and performance metrics we report.

\subsection{Baselines}
\label{subsec:text2motion-baselines}
We compare \ttm{} with a series of language-based planners, including the proposed \sh{} and \se{} strategies.
For consistency, we use the same skill library $\mathcal{L^\psi}$, with independently trained policies $\pi$ and Q-functions $Q^{\pi}$, the OOD rejection strategy (\cref{subsec:text2motion-ood-detection}) and, where appropriate, the dynamics models $T^\pi(s, a)$ and geometric feasibility planner (\cref{subsec:text2motion-geometric-feasibility-planning}) across all methods and tasks.

\scgs{}:
We implement a cost-considerate variant of SayCan~\cite{ahn2022-saycan} with a module dubbed \gs{} (GS).
At each timestep $t$, SayCan ranks \textit{all possible} skills by $p(\psi_t \mid i, \psi_{1:t-1}) \cdot V^{\pi_t}(s_t)$, before executing the top scoring skill (Scorer).
However, the cost of ranking skills scales unfavorably with the number of scene objects $\mathcal{O}$ and skills in library $\mathcal{L}^\psi$.
\scgs{} limits the pool of skills considered in the ranking process by querying the LLM for the $K$ most useful skills $\{\psi_t^1, \dots, \psi_t^K\} \sim p(\psi_t \mid i, \psi_{1:t-1})$ (Generator) before engaging Scorer.
Execution terminates when the score of the \texttt{stop} ``skill'' is larger than the other skills.

\imgs{}:
We implement the \textit{Object + Scene} variant of Inner Monologue~\cite{pmlr-v205-huang23c} by providing task-progress scene context in the form of the environment's symbolic state.
We acquire \imgs{} by equipping~\cite{pmlr-v205-huang23c} with \gs{} for cost efficiency. 
LLM skill likelihoods are equivalent to those from \scgs{} except they are now also conditioned on the visited state history $p(\psi_t \mid i, s_{1:t}, \psi_{1:t-1})$.

\subsection{Large Language Model}
\label{subsec:text2motion-llm}
We use two pretrained language models, both of which were accessed through the OpenAI API: i) \texttt{text-davinci-003}, a variant of the InstructGPT \cite{ouyang2022training} language model family which is finetuned from GPT-3 with human feedback and ii) the Codex model \cite{chen2021evaluating} (specifically, \texttt{code-davinci-002)}. 
For the \sh{} planner, we empirically found \texttt{text-davinci-003} to be the most capable at open-ended generation of skill sequences.
For all other queries, we use \texttt{code-davinci-002} as it was found to be reliable. 
We do not train or finetune the LLMs and only use few shot prompting.

\subsection{Prompt Engineering}
\label{subsec:text2motion-prompt-engineering}
The in-context examples are held consistent across all methods and tasks in the prompts passed to the LLM. 
We provide an example of the prompt structure used to query \se{} for $K=5$ skills at the first planning iteration (prompt template is in \texttt{black} and LLM output is in \texttt{\textcolor{orange}{orange}}):

\vspace{0.3cm}
\noindent\fbox{\parbox{0.97\linewidth}{\small{\texttt{{
Available scene objects: [`table', `hook', `rack', `yellow box', `blue box', `red box']\\\\
Object relationships: [`inhand(hook)', `on(yellow box, table)', `on(rack, table)', `on(blue box, table)']\\\\
Human instruction: How would you push two of the boxes to be under the rack?\\\\
Goal predicate set: {\color{code-constant}[[`under(yellow box, rack)', `under(blue box, rack)'], [`under(blue box, rack)', `under(red box, rack)'], [`under(yellow box, rack)', `under(red box, rack)']]}\\\\
Top 5 next valid robot actions (python list): {\color{code-constant}['push(yellow box, rack)', 'push(red box, rack)', 'place(hook, table)', 'place(hook, rack)', 'pull(red box, hook)']}
}}}}}
\vspace{0.3cm}

\noindent
The prompt above chains the output of two queries together: one for goal prediction (\cref{subsec:text2motion-goal-prediction}), and another for skill generation (\cref{sec:text2motion-greedy-search}). 
To compute the skill usefulness (\cref{eq:text2motion-llm-step-score-decomp}), we replace \texttt{Top 5 next valid robot actions} with \texttt{Executed action:}, append the language description of the generated skill (e.g. \graytext{Push(yellow box, rack)}), and sum token log-probabilities. 
We provide the full set of in-context examples in the Appendix (\cref{appx-sub:text2motion-suppl-incontext-examples}).

\begin{figure}
    \centering    
    \includegraphics[width=0.80\textwidth]{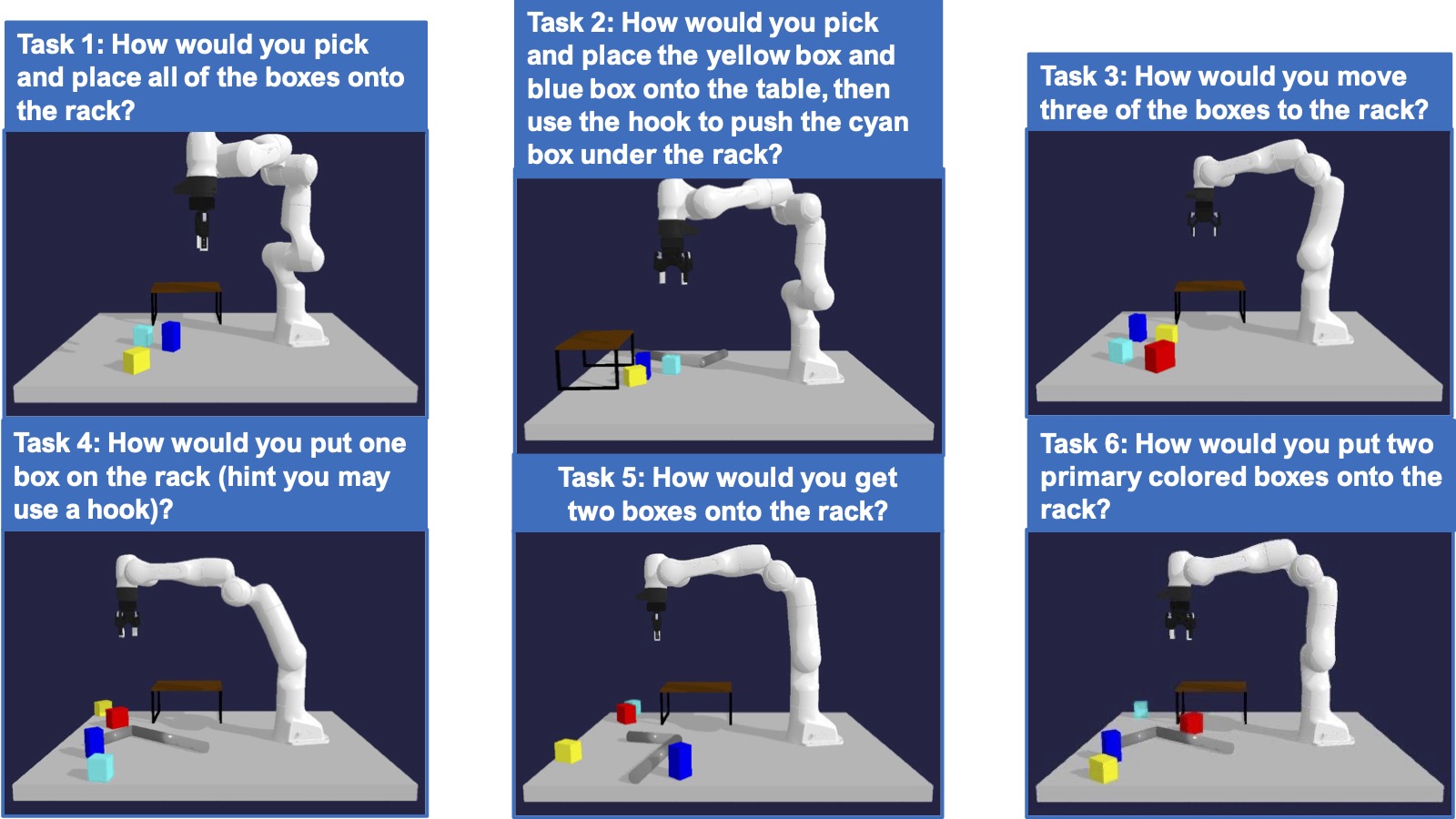}
    \vspace{8pt}
    \caption[Simulated TableEnv manipulation planning task suite (Text2Motion)]{\textbf{TableEnv manipulation evaluation task suite}. We evaluate the performance of all methods on tasks based on the above manipulation domain. The tasks considered vary in terms of difficulty and each task contains a subset of three properties: being long horizon (Tasks 1, 2, 3, 5, 6), containing lifted goals (Tasks 4, 5, 6), and having partial affordance perception (Tasks 4, 5, 6). During evaluation, we randomize the geometric parameters of each task.
    \copyright{} 2023 Springer Nature.
    }
    \label{fig:text2motion-evaluation_task_suite}
\end{figure}

\subsection{Task Suite}
\label{subsec:text2motion-task-suite}
We construct a suite of evaluation tasks (\cref{fig:text2motion-evaluation_task_suite}) in a table-top manipulation domain.
Each task includes a natural language instruction $i$ and initial state distribution $\rho(s)$ from which geometric task instances are sampled. 
For the purpose of experimental evaluation only, tasks also contain a ground-truth goal criterion to evaluate whether a plan has satisfied the corresponding task instruction.
Finally, each task contains subsets of the following properties:
\begin{itemize}
    \item \textbf{Long-horizon (LH):} Tasks that require skill sequences $\psi_{1:H}$ of length six or greater to solve. For example, Task 1 in \cref{fig:text2motion-evaluation_task_suite} requires the robot to pick and place three objects for a total of six skills. In our task suite, \textbf{LH} tasks also contain geometric dependencies that span across the sequence of skills which are unlikely to be resolved by myopically executing each skill. For example, Task 2 (\cref{fig:text2motion-evaluation_task_suite}) requires the robot to pick and place obstructing boxes (i.e. blue and yellow) to enable a collision-free push of the cyan box underneath the rack using the hook.
    \item \textbf{Lifted goals (LG):} Goals are expressed over object classes rather than object instances. For example, the lifted goal instruction ``move three boxes to the rack'' specifies an object class (i.e. boxes) rather than an object instance (e.g. the red box). This instruction is used for Task 3 (\cref{fig:text2motion-evaluation_task_suite}). 
    Moreover, \LG{} tends to correspond to planning tasks with many possible solutions. For instance, there may only be a single solution to the non-lifted instruction ``fetch me the red box and the blue box,'' but an LLM must contend with more options when asked to, for example, ``fetch any two boxes.''
    \item \textbf{Partial affordance perception (PAP):} Skill affordances cannot be perceived solely from the spatial relations described in the initial state $s_1$. For instance, Task 5 (\cref{fig:text2motion-evaluation_task_suite}) requires the robot to put two boxes onto the rack. However, the scene description obtained through predicate classifiers $\mathcal{L}^\chi$ (described in \cref{subsec:text2motion-goal-prediction}) and the instruction $i$ do not indicate whether it is necessary to use a hook to pull an object closer to the robot first. 
\end{itemize}

\subsection{Evaluation and Metrics}
\label{subsec:text2motion-evaluation-metrics}
\ttm{}, \sh{}, \se{}:
We evaluate these language planners by marking a plan as successful if, upon execution, they reach a final state $s_{H+1}$ that satisfies the instruction $i$ of a given task. 
A plan is executed only if the geometric feasibility planner predicts a state that satisfies the inferred goal conditions (\cref{subsec:text2motion-goal-prediction}).

Two failure cases are tracked: i) planning failure: the method does not produce a sequence of skills $\psi_{1:H}$ whose optimized parameters $a^*_{1:H}$ (\cref{eq:text2motion-taps-objective}) results in a state that satisfies $F^\mathcal{G}_{\text{sat}}$ within a maximum plan length of $d_{\text{max}}$; ii) execution failure: the execution of a plan that satisfies $F^\mathcal{G}_{\text{sat}}$ does not achieve the ground-truth goal of the task.

Since the proposed language planners use learned dynamics models to optimize parameters $a_{1:H}$ with respect to (potentially erroneous) future state predictions $s_{2:H}$, we perform the low-level execution of the skill sequence $\psi_{1:H}$ in closed-loop fashion. 
Thus, upon executing the skill $\psi_{t}$ at timestep $t$ and receiving environment feedback $s_{t+1}$, we call STAP~\cite{taps-2022} to perform geometric feasibility planning on the remaining planned skills $\psi_{t+1:H}$. 
We do not perform task-level replanning, which would involve querying the LLM at timestep $t+1$ for a new sequence of skills $\psi_{t+1:H}$.

\scgs{} \textbf{\&} \imgs{}:
These myopic agents execute the next best admissible skill $\psi_t$ at each timestep $t$ without looking-ahead. 
Hence, we evaluate them in a closed-loop manner for a maximum of $d_{\text{max}}$ steps.
We mark a run as a success if the agent issues the \texttt{stop} skill and the current state $s_t$ satisfies the ground-truth goal. 
Note that this comparison is advantageous for these myopic agents because they are given the opportunity to perform closed-loop replanning at the task-level (e.g. re-attempting a failed skill), whereas task-level replanning does not occur for \ttm{}, \sh{}, or \se{}.
This advantage does not lead to measurable performance gains on the challenging evaluation domains that we consider.

\textbf{Reported metrics:} 
We report success rates and subgoal completion rates for all methods.
Success rates are averaged over ten random seeds per task, where each seed corresponds to a different geometric instantiation of the task (\cref{subsec:text2motion-task-suite}).
Subgoal completion rates are computed over all plans by measuring the number of steps an \textit{oracle planner} would take to reach the ground-truth goal from a planner's final state.
To further delineate the performance of \ttm{} from \sh{} and \se{}, we also report the percentages of planning and execution failures.

\section{Results}
\label{sec:text2motion-results}

\begin{figure}[t]
    \centering
    \includegraphics[width=0.86\textwidth]{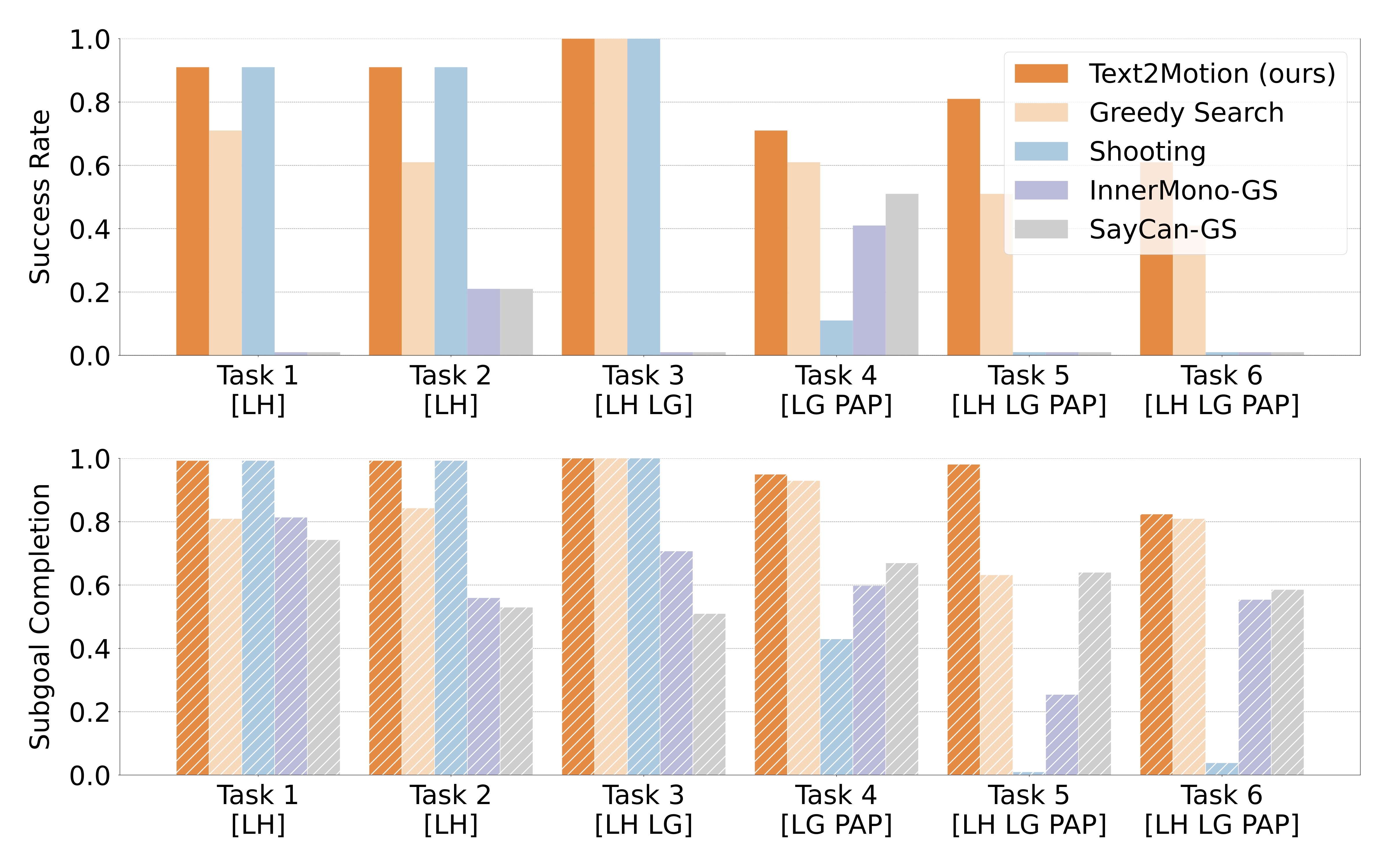}
    \caption[Policy planning results in the TableEnv simulated task suite (Text2Motion)]{
    \textbf{Results on the TableEnv manipulation domain} with 10 random seeds for each task. {\bf Top:} Our method (Text2Motion) significantly outperforms all baselines on tasks involving partial affordance perception (Task 4, 5, 6). For tasks without partial affordance perception, the methods that use geometric feasibility planning (\ttmnb{}, \shnb{}, \senb{}) convincingly outperform the methods (\scgsnb~and \imgsnb) that do not. We note that \shnb{}~performs well on the tasks without partial affordance perception as it has the advantage of outputting \textit{multiple} goal-reaching candidate plans and selecting the one with the highest execution success probability. {\bf Bottom:} Methods without geometric feasibility planning tend to have high sub-goal completion rates but very low success rates. This divergence arises because it is possible to make progress on tasks without resolving geometric dependencies in the earlier timesteps; however, failure to account for geometric dependencies results in failure of the overall task. 
    \copyright{} 2023 Springer Nature.
    }
    \label{fig:text2motion-planning-result}
\end{figure}

\subsection{Feasibility Planning Is Required to Solve Tasks with Geometric Dependencies (H1)}
\label{subsec:text2motion-components-tamp}

Our first hypothesis is that performing geometric feasibility planning on task plans output by the LLM is essential to task success. To test this hypothesis, we compare methods that use geometric feasibility planning (\ttm{}, \sh{}, \se{}) against myopic methods that do not (\scgs{} and \imgs{}).

Instructions $i$ provided in the first two planning tasks (\LH{}) allude to skill sequences that, if executed appropriately, would solve the task.
In effect, the LLM plays a lesser role in contributing to plan success, as its probabilities are conditioned to mimic the skill sequences in $i$.
On such tasks, \ttm{}, \sh{} and \se{} which employ geometric feasibility planning over skills sequences better contend with geometric dependencies prevalent in \LH{} tasks and thereby demonstrate higher success rates. 

In contrast, the myopic baselines (\scgs{} and \imgs{}) fail to surpass success rates of 20\%, despite completing between 50\%-80\% of the subgoals (\cref{fig:text2motion-planning-result}).
This result is anticipated as the feasibility of downstream skills requires coordination with earlier skills in the sequence, which these methods do not consider.
As the other tasks combine aspects of \LH{} with \LG{} and \PAP{}, it remains difficult for \scgs{} and \imgs{} to find solutions.

Surprisingly, we see that \scgs{} closely matches the performance of \imgs{}, which is additionally provided with descriptions of all states encountered during execution (as opposed to just the initial scene description).
This result suggests that explicit language feedback does not contribute to success on our tasks when considered in isolation from plan feasibility.

\begin{figure}[t]
    \centering    
    \includegraphics[width=0.95\textwidth]{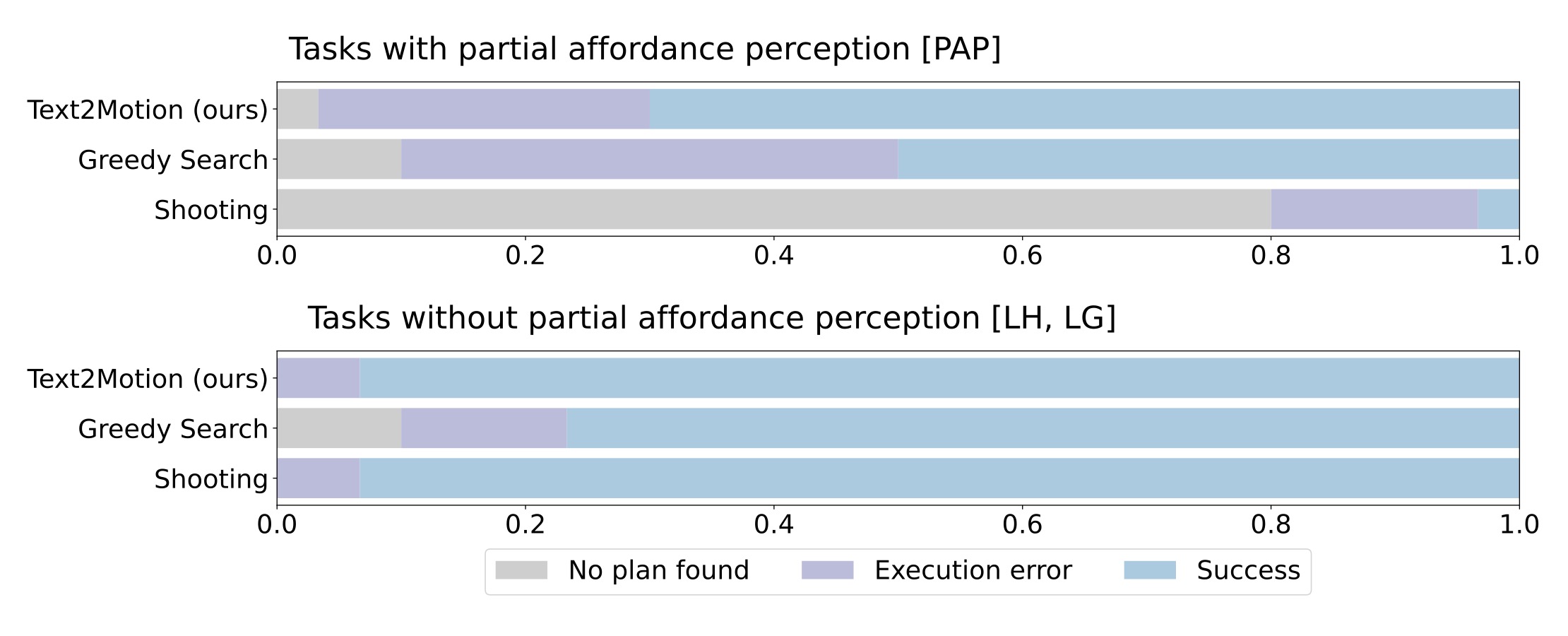}
    \caption[Ablation result on planning failure modes under partial observability (Text2Motion)]{\textbf{Failure modes of language-based planners on two categories of tasks}.
    In this plot, we analyse the various types of failure modes that occur with \ttmnb{}, \shnb{} and \senb{} when evaluated on tasks with partial affordance perception (PAP; see \cref{subsec:text2motion-task-suite} for an explanation) and tasks without partial affordance perception (non-PAP). 
    \textbf{Top}: For the PAP tasks, \shnb{} incurs many planning failures because the space of possible plans is large but only few can be feasibly executed. In contrast, \senb{} uses value functions during search to narrow down the space of plans to those that are feasible.
    \ttmnb{} relies on \senb{} as a fallback if \shnb{} fails, and thus can also contend with PAP tasks. 
    \textbf{Bottom}: For the non-PAP tasks, \shnb{} outperforms \senb{}. We attribute this difference to \shnb{}'s ability to output multiple task plans while \senb{} can only output a single plan. Finally, \ttmnb{} matches the performance of \shnb{} as it also outputs and selects among multiple task plans.
    \copyright{} 2023 Springer Nature.
    }
    \label{fig:text2motion-failure}
\end{figure}

\subsection{Search-Based Reasoning Is Required for PAP Tasks
 (H2)}
\label{subsec:text2motion-integrated-planning}
Our second hypothesis is that search-based reasoning is required to solve the \PAP{} family of tasks (defined in \cref{subsec:text2motion-task-suite}). 
We test this hypothesis by comparing \se{} and \sh{}, which represent two distinct approaches to combining symbolic and geometric reasoning to maximize the overall planning objective (\cref{eq:text2motion-tamp-score}).
\sh{} uses Q-functions of skills to optimize $K$ skill sequences (\cref{eq:text2motion-taps-objective}) \textit{after} they are generated by the LLM.
\se{} uses Q-functions as skill feasibility heuristics (\cref{eq:text2motion-motion-step-score-decomp}) to guide search \textit{while} a skill sequence is being constructed.

In the first two tasks (\LH{}, \cref{fig:text2motion-planning-result}), we find that \sh{} achieves slightly higher success rates than \se{}, while both methods achieve 100\% success rates in the third task (\LH{} + \LG{}).
This result indicates a subtle advantage of \sh{} when multiple feasible plans can be directly inferred from $i$ and $s_1$. \sh{} can capitalize on diverse orderings of $K$ generated skill sequences (including the one specified in $i$) and select the one with the highest success probability (\cref{eq:text2motion-motion-parameter-score}).
For example, Task 1 (\cref{fig:text2motion-evaluation_task_suite}) asks the robot to put three boxes onto the rack; \sh{} allows the robot to test multiple different skill sequences while \se{} only outputs a single plan.
This advantage is primarily enabled by bias in the Q-functions: \cref{eq:text2motion-stap-score} may indicate that \graytext{Place(dish, rack)} then \graytext{Place(cup, rack)} is more geometrically complex than \graytext{Place(cup, rack)} then \graytext{Place(dish, rack)}, while they are geometric equivalents.

The plans considered by \se{} at planning iteration $t$ share the same sequence of predecessor skills $\psi_{1:t-1}$.
This affords limited diversity for the planner to exploit.
However, \se{} has a significant advantage when solving the \PAP{} family of problems (\cref{fig:text2motion-planning-result}, Tasks 4-6).
Here, skill sequences with high success probabilities (\cref{eq:text2motion-motion-parameter-score}) are difficult to infer directly from $i$, $s_1$, and the in-context examples provided in the prompt.
As a result, \sh{} incurs an 80\% planning failure rate, while \se{} finds plans over 90\% of the time (\cref{fig:text2motion-failure}).
In terms of success, \se{} solves 40\%-60\% of the \PAP{} tasks, while \sh{} achieves a 10\% success rate on Task 4 (\LG{} + \PAP{}) and fails to solve any of the latter two tasks (\LH{} + \LG{} + \PAP{}).
Moreover, \sh{} does not meaningfully advance on any subgoals, unlike \scgs{} and \imgs{}, which consider the geometric feasibility of skills at each timestep (albeit, myopically).

\begin{table}[]
    \rtwo{
    \centering
    \resizebox{0.64\textwidth}{!}{%
    \begin{tabular}{lccc}
    \toprule
     Hybrid planning breakdown & Task 4 & Task 5 & Task 6 \\
    \midrule
    \% \shnb{} only  &  14\% &  0\% &  0\% \\
    \% \senb{} only &  0\% &  0\%  &  0\% \\
    \% Combination &  86\% & 100\%  & 100\% \\
    Avg. \senb{} Steps & 1.0  &  2.6 &  3.0 \\
    Avg. Plan Length & 5.0 &  7.0 &  7.0 \\
    \bottomrule \\
    \end{tabular}}
        \caption[Ablation result on hybrid planner behavior for complex tasks (Text2Motion)]{
            \textbf{Ablation on hybrid planning method.} We analyze the usage percentages of both \shnb{} and \senb{} in successful plans found by our hybrid planner (see \cref{fig:text2motion-overall-method}). We find that, as tasks increase in difficulty (Task 4, 5, 6), the majority of solutions involve a combination of both planners. This result indicates that shooting-based and search-based planning strategies play complementing roles in the success of \ttmnb{}.
            \copyright{} 2023 Springer Nature.
        }
    \label{tab:text2motion-ablation-hybrid}
    }
\end{table}

\subsection{Hybrid Planning Integrates the Strengths of Shooting-Based and Search-Based Methods (H3)}
\label{subsec:text2motion-hybrid-planning}
Our third hypothesis is that shooting-based planning and search-based planning have complementing strengths that can be unified in a hybrid planning framework.
We test this hypothesis by comparing the performance of \ttm{} against \sh{} and \se{}. 

The results are presented in \cref{fig:text2motion-planning-result}.
We find that \ttm{} matches the performance of \sh{} on tasks that do not consist of \PAP{} (Task 1, 2, 3).
This is expected because \sh{} does not exhibit planning failures on these tasks (\cref{fig:text2motion-failure}) and \ttm{} starts by invoking \sh{}, which results in their identical performance.
However, on tasks with \PAP{} (Task 4, 5, 6) we observe that \ttm{} succeeds more often than \se{}.
This suggests that interleaving \sh{} and \se{} at each planning iteration enables \ttm{} to consider a more diverse set of goal-reaching solutions.
This result is corroborated in \cref{fig:text2motion-failure}, where we see that \ttm{} incurs fewer planning and execution failures than \se{}.

\rtwo{In \cref{tab:text2motion-ablation-hybrid}, we further analyze the usage percentages of \sh{} and \se{} within successful plans executed by \ttm{}. 
The results show that, for tasks involving \PAP{}, over 90\% of solutions involve a combination of both shooting- and search-based strategies, which confirms our third hypothesis.}

\subsection{Plan Termination Is Made Reliable via Goal Prediction (H4)}
\label{subsec:text2motion-plan-termination}
Our fourth hypothesis is that predicting goals from instructions \textit{a priori} and selecting plans based on their satisfication (\cref{subsec:text2motion-goal-prediction}) is more reliable than 
scoring plan termination with a dedicated \texttt{stop} skill at each timestep.
We test this hypothesis in an ablation experiment (\cref{fig:text2motion-ablation-termination}), comparing our plan termination method to that of SayCan and Inner Monologue's, while keeping all else constant for our \se{} planner. 
We run 120 experiments (two variations, six tasks, and ten seeds each) in total on the TableEnv Manipulation task suite. 
The results in \cref{fig:text2motion-ablation-termination} suggest that, for the tasks we consider, our proposed goal prediction method leads to 10\% higher success rates than the scoring baseline. 

We also note the apparent advantages of both techniques.
First, goal prediction is more efficient than scoring \texttt{stop} as the former requires only one LLM query, whereas the latter needs to be queried at every timestep.
Second, goal prediction offers interpretability over \texttt{stop} scoring, as it is possible to inspect the goal that the planner is aiming towards prior to execution. 
Nonetheless, \texttt{stop} scoring does provide benefits in terms of expressiveness, as its predictions are not constrained to any specific output format.
This advantage, however, is not captured in our evaluation task suite, which at most require conjunctive ($\land$) and disjunctive ($\lor$) goals.
For instance, ``Stock two boxes onto the rack'' could correspond to (\graytext{on(red box, rack)} $\land$ \graytext{on(blue box, rack)}) $\lor$ (\graytext{on(yellow box, rack)} $\land$ \graytext{on(cyan box, rack)}), while in theory, \texttt{stop} scoring can represent all goals expressible in language.

\begin{figure}
    \centering        
    \includegraphics[width=0.90\textwidth]{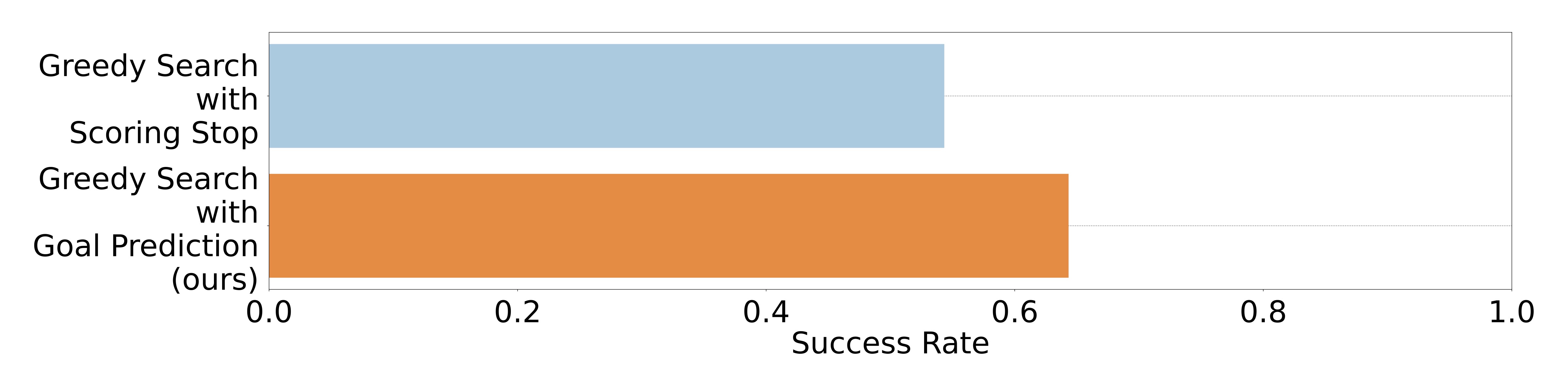}
    \caption[Ablation result on planning termination strategies (Text2Motion)]{\textbf{Ablation on termination method: goal proposition prediction vs stop scoring}. 
    We compare the performance of \senb{} using two different plan termination methods: using the LLM to predict goals \textit{a priori} (ours) and scoring a \texttt{stop} skill~\cite{ahn2022-saycan} during search. 
    We present results averaged across all six tasks and ten seeds for each variation (120 experiments in total). 
    We find that terminating planning when LLM-predicted goals are satisfied results in a 10\% boost in success rate over \texttt{stop} scoring.
    \copyright{} 2023 Springer Nature.
    }
    \label{fig:text2motion-ablation-termination}
\end{figure}

\section{Conclusion}
\label{sec:text2motion-conclusion}

We present a language-based planning framework that combines LLMs, learned skills, and geometric feasibility planning to solve long-horizon robotic manipulation tasks containing geometric dependencies.
\ttm{} constructs a task- and motion-level plan and verifies that it satisfies a natural language instruction by testing planned states against inferred goals.
In contrast to prior language planners, our method verifies that its plan satisfies the instruction before executing any actions in the environment.
\ttm{} represents a hybrid planning formalism that optimistically queries an LLM for long-horizon plans and falls back to a reliable search strategy should optimistic planning fail. 
As a result, \ttm{} inherits the strengths of both shooting-based and search-based planning formalisms.

Our results highlight the following: (i) geometric feasibility planning is important when using LLMs and learned skills to solve sequential manipulation tasks from natural language instructions; (ii) search-based reasoning can contend with a family of tasks where the space of possible plans is large but only few are feasible; (iii) shooting-based and search-based planning strategies can be synergistically integrated in a hybrid planner that outperforms its constituent parts; (iv) terminating plans based on inferred symbolic goals is more reliable than prior LLM scoring techniques.

\section{Limitations and Future Work}\label{sec:text2motion-limitations}

We summarize the limitations of our approach and opportunities for future investigation:

\begin{enumerate}
    \item \textbf{LLM likelihoods:} We observed an undesirable pattern emerge in the planning phase of \se{} and the execution phase of \scgs{} and \imgs{}, where \textit{recency bias}~\cite{zhao2021calibrate} would cause the LLM to produce unreliable likelihoods (\cref{eq:text2motion-llm-step-score-decomp}), inducing a cyclic state of repeating feasible skills. While we mitigate such failures by combining \se{} and \sh{} in the hybrid \ttm{} algorithm, leveraging calibration techniques to increase the reliability LLM likelihoods~\cite{li2022contrastive, chen2022close} may improve the performance search-based planning over long-horizons.
    \item \textbf{Skill library:} As with other methods that use skill libraries, \ttm{} is reliant on the fidelity of the learned skills, their value functions, and the ability to accurately predicted future states with dynamics models. Thus, incorporating skills that operate on high-dimensional observations (e.g. images~\cite{concept2robot-2021}) into our framework may require adopting techniques for stable long-term predictions in these observation spaces~\cite{2022-driess-compNerf}.
    \item \textbf{Runtime complexity:} \ttm{} is mainly comprised of learned components, which comes with its associated efficiency benefits. Nonetheless, runtime complexity was not a core focus of this investigation, and STAP~\cite{taps-2022} was frequently called during planning, increasing the overall planning time. LLMs also impose an inference bottleneck as each API query (\cref{subsec:text2motion-llm}) requires 2-10 seconds, but we anticipate improvements with advances in both LLM inference techniques and in methods that distill LLM capabilities into smaller, cost-efficient models~\cite{touvron2023llama}. Thus, there remains opportunities to increase the plan-time efficiency of our method, for instance, by warm starting geometric feasibility planning with solutions cached in earlier planning iterations~\cite{williams2015model}.
    \item \textbf{Closed-world assumptions:} Our framework operates in a closed-world setting (\cref{sec:text2motion-problem-setup}), where we assume to know which objects are task-relevant and the poses of objects are estimated by an external perception system at the time of receiving a language instruction. Extending \ttm{} to open-world settings may necessitate exploring~\cite{chen2022open} or interacting~\cite{curtis2022-unknown} with the environment to discover objects that are not initially observable, and training skills to support a diverse set of real-world objects~\cite{jang2021bczero, mtopt2021arxiv}. This could be accomplished by integrating \ttm{} as part of a broader planning system, where it is used to produce feasible and verified plans to subgoals, while building knowledge of the environment in unobserved or partially observable regions during the execution of those subgoals. The use of Visual Question and Answering (VQA)~\cite{zhou2020unified} and multi-modal foundation models that are visually grounded~\cite{driess2023palm, openai2023gpt4} also holds promise. Such models may support scaling \ttm{} to higher-dimensional observation spaces and potentially serve as a substitutes for closed-world components used in our framework (e.g. detecting a variable number of predicates using VQA).
\end{enumerate}


\chapter{Conclusion}\label{chapter:conclusion}

\section{Summary of Contributions and Findings}\label{sec:thesis-summary}

Continued progress in general-purpose learned policies for robot control suggests a trajectory toward increasingly broad deployment across industrial settings and everyday human environments, where reliability is critical.
This dissertation advances the perspective that deployment-time reliability is a \textit{property} that emerges from deliberately designed, complementary mechanisms that operate \textit{around} learned policies---reinforcing them at points of fragility and providing practical insight into their behavior to enable systematic improvement. 
We have examined several such mechanisms, including: (1) runtime failure detection and intervention~\cite{agia2024unpacking}, (2) data-centric interpretation of policy behavior and performance~\cite{agia2025cupid}, and (3) composition and coordination of learned behaviors for real-world, long-horizon tasks~\cite{taps-2022, lin2023text2motion}---the key contributions and findings of which we summarize below.

\textbf{Monitoring Policies for Runtime Failure Detection (Sentinel):} 
In \cref{chapter:sentinel}, we studied the problem of detecting imminent deployment-time failures of learned robot policies and introduced Sentinel, a runtime monitoring framework for generative policy architectures that detected failures without requiring any failure data. The central insight was that policy failures manifested across multiple timescales---ranging from local behavioral inconsistencies to global stagnation in task progress---and were therefore best captured by a hierarchy of specialized, complementary detectors. Across both simulation and real-world experiments, Sentinel detected 18\% more failures than any individual monitoring approach alone, while enabling earlier and more reliable intervention.

\textbf{Understanding Policy Behavior through Training Data (CUPID):}
In \cref{chapter:cupid}, we investigated the opaque relationship between closed-loop policy behavior and training data. We introduced CUPID, a data-centric interpretability framework that leveraged influence functions to estimate how individual training demonstrations affected a policy’s expected return. The key insight was that deployment-time performance could be meaningfully interpreted---and systematically improved---by attributing outcomes such as task success or failure to specific training examples, enabling counterfactual reasoning about data quality and composition. Empirically, CUPID reliably identified the demonstrations most influential to policy performance and achieved state-of-the-art results while using less than 50\% of the available training data.

\textbf{Coordinating Policy Sequences for Long-Horizon Tasks (STAP):}
In \cref{chapter:stap}, we examined how individually learned behaviors often failed when composed to solve long-horizon tasks, due to unmodeled dependencies that compromise sequence feasibility. To address this, we introduced STAP, a framework that coordinated sequences of learned policies by explicitly estimating and optimizing their joint success probability at deployment time. The key insight was that this joint success probability could be captured as the product of Q-functions---models learned during policy training---whose optimization automatically resolved dependencies between behaviors. STAP significantly improved long-horizon task success in complex environments and achieved up to four orders of magnitude faster planning than baselines that relied on simulation for future prediction.

\textbf{Searching for Feasible Policy Sequences from Language (Text2Motion):}
In \cref{chapter:text2motion}, we studied the problem of solving long-horizon manipulation tasks specified in natural language, where multiple candidate behavior sequences may satisfy the goal semantically but differ in feasibility.
We introduced Text2Motion, a language-based task and motion planning framework that used LLMs to generate candidate policy sequences and feasibility-aware estimates from STAP to guide their selection. The key insight was that interleaving high-level language-guided search with low-level feasibility estimation was essential for reliably solving tasks with large combinatorial search spaces. Empirically, Text2Motion significantly outperformed prior language-based planning approaches, achieving a 69\% improvement in success rate across a suite of long-horizon manipulation tasks.

Together, these contributions constitute an articulated exploration into improving the reliability of learned policies at deployment. Each addresses a distinct yet complementary challenge that inevitably arises when robot systems transition from controlled laboratory settings to the open-ended variability of the real world.

\section{Directions for Future Work}\label{sec:thesis-future-work}
There remain many promising avenues for future investigation. Several reflect key extensions of the methodologies developed in this dissertation---monitoring, interpreting, and coordinating policies---and are discussed in the future work sections of their respective chapters. We focus here on a set of new directions that have received comparatively less attention in the study of policy reliability.

\textbf{Robotics World Models:} 
A core challenge for policy reliability is that training-time metrics such as validation loss are often weak predictors of closed-loop performance at deployment~\cite{ross2011reduction} (e.g., policies with lower validation loss do not necessarily achieve higher task success on a real robot). However, directly evaluating policy behavior in closed loop is expensive and inherently unscalable~\cite{barreiros2025careful}, since finite test conditions cannot reflect the breadth of variability a policy will encounter over its deployment lifetime. 
High-fidelity world models offer a promising path toward more reliable policies by enabling low-cost, closed-loop evaluation of policy behavior in imagination, reducing reliance on real-world testing~\cite{quevedo2025evaluating, veorobotics2025}. 
Furthermore, building controllable world models---as in autonomous driving research~\cite{rempe2022generating, tan2025promptable, diffusionrenderer, lu2025infinicube, mao2025dreamdrive}---could support systematic stress-testing of policies under rare, unexpected, or safety-critical scenarios, enabling targeted robustness evaluation and policy improvement~\cite{chen2025reimagination}.
Recent work has begun exploring the integration of world models in the deployment loop to steer~\cite{WuY1-RSS-25} or filter policy actions~\cite{pmlr-v305-seo25a}, while more expressive models that capture distributions over future states under partial observability could enable policies to anticipate and avoid failure modes before they occur. 
Given the computational cost of forward prediction, such approaches may benefit from fast-slow architectures that selectively invoke expensive look-ahead when policy uncertainty is high.

\textbf{Hierarchical Policy Architectures:} 
Hierarchical design has long played a central role in robotics, with its benefits clearly demonstrated in autonomous driving~\cite{karkus2023diffstack, wang2025alpamayo} and task and motion planning systems~\cite{kaelbling2011hierarchical, garrett2021integrated}. By stratifying decision-making and control across multiple levels of abstraction and timescales, hierarchical architectures enable coordinated, intelligent, and robust runtime behavior~\cite{migimatsu2023long}. While earlier efforts toward general-purpose policies unified high-level reasoning and low-level control within a single model (e.g., reasoning-based VLAs~\cite{Belkhale-RSS-24, pmlr-v270-zawalski25a}), more recent work has gravitated toward system-1–system-2 architectures~\cite{team2025gemini15}, which pair a slower, high-level planning policy with a fast, reactive control policy. Such architectures may offer several advantages for deployment-time reliability. First, high-capacity system-2 models can support reliability-enhancing functions such as failure monitoring~\cite{ICLR2025_70a06501}, online correction~\cite{lin2025failsafe}, and fallback intervention~\cite{sinha2024real}---capabilities that have already shown promise in handling unexpected and safety-critical scenarios~\cite{pmlr-v305-ganai25a}. Second, as discussed in \cref{chapter:cupid}, policy failures are often difficult to diagnose and remediate; modular architectures may improve interpretability by isolating failures to specific components (e.g., planning versus control), enabling more targeted improvement. 
Lastly, while the approaches discussed in \cref{chapter:stap} and \cref{chapter:text2motion} employed hierarchies for sequencing single-step primitive policies, integrating more advanced, closed-loop policies such as VLAs for task and motion planning remains an open problem. 

\textbf{Explainability and Mechanistic Interpretability:} 
While \cref{chapter:cupid} established a quantitative link between deployment-time policy behavior and the training data that influenced policy predictions, such approaches do not surface conceptual, human-interpretable insight into why a policy succeeds or fails. This limits our ability to derive actionable guidance from closed-loop evaluation data, which may be used to efficiently debug and improve policies through fine-grained data curation~\cite{agia2025cupid}, determine what data should be collected next~\cite{zha2025guiding}, or construct evidence-based safety cases prior to deployment. Early work on explainability has begun to characterize policy failure modes~\cite{sagar2025batcave, sagar2024mystery}, and initial mechanistic interpretability approaches have explored concept-based steering of policy behavior at deployment time~\cite{haon2025mechanistic}. However, these efforts only scratch the surface of the types of questions such methods could address. 
Promising future work lies in identifying new reliability-critical problems that explainability and interpretability are well suited to, and in developing new ways to apply these tools to learned policies---beyond post hoc analysis or manual concept probing. 

\section{Concluding Remarks}\label{sec:thesis-concluding-remarks}

This dissertation casts deployment as an emerging problem category in robot learning---one whose importance will grow as policy capabilities mature and robots move beyond controlled laboratory settings. We have argued that reliability is a central desideratum for learned robot policies and demonstrated that it can be systematically advanced by addressing complementary challenges arising from the uncertainty, diversity, and complexity of real-world environments. Looking ahead, emerging tools and models open promising research directions for further strengthening deployment-time reliability, moving the field toward a future of safe and trustworthy robot deployment at scale.

\appendix

\chapter{Additional Details: Sentinel}\label{chapter:sentinel-appx}

\section*{Appendix Overview: Unpacking Failure Modes of Generative Policies}
The appendix offers additional details with respect to the implementation of our failure detection framework (\cref{appx:sentinel-method}), the experiments conducted (\cref{appx:sentinel-experiments}), along with extended results and analysis (\cref{appx:sentinel-results}), and finally, supporting derivations (\cref{appx:sentinel-derivations}) for our proposed failure detectors.
\textbf{Qualitative results} and a \textbf{video abstract} are made available at \url{https://sites.google.com/stanford.edu/sentinel}.

\startcontents[sections]
\printcontents[sections]{l}{1}{\setcounter{tocdepth}{2}}

\section{Method Details: Sentinel}\label{appx:sentinel-method}
As shown in \cref{fig:sentinel-full-system}, the \textbf{Sentinel} runtime monitoring framework consists of the parallel operation of two complementary failure detectors, each assigned to the detection of a particular failure category of generative policies.
The first is a temporal consistency detector that monitors for erratic policy behavior via \underline{s}tatistical \underline{t}emporal \underline{a}ction \underline{c}onsistency (STAC) measures.
The second is a Vision Language Model (VLM) that monitors for failure of the policy to make progress on its task.
In this section, we provide additional details \textit{w.r.t.} the implementation of STAC (\cref{appx:sentinel-stac}) and the VLM runtime monitor (\cref{appx:sentinel-vlm}). 

\subsection{Temporal Consistency Detection with STAC}\label{appx:sentinel-stac}
\paragraph{Background}
To summarize \cref{sec:sentinel-problem-setup}, STAC assumes the use of a stochastic policy $\pi$ that, at each policy-inference timestep $t$, predicts an action sequence for the next $h$ timesteps as $a_{t:t+h-1|t} \sim \pi(\cdot | s_t)$, executes the first $k$ actions $a_{t:t+k|t}$, before re-evaluating the policy at timestep $t+k$. 
Between two contiguous inference timesteps $t$ and $t+k$, sampled action sequences $a_{t+k:t+h-1|t}$ and $a_{t+k:t+h-1|t+k}$ (both in $\R^{(h-k) \times |\calA|}$) overlap for $h-k$ timesteps.
At a high-level, STAC seeks to quantify \textit{how much} a generative policy's action distributions are changing over time.
It does this by computing statistical distances between the distributions of overlapping actions, i.e., given $\bar{\pi}_{t}:=\pi(a_{t+k:t+h-1|t} | s_{t})$ and $\tilde{\pi}_{t+k}:=\pi(a_{t+k:t+h-1|t+k} | s_{t+k})$, we compute $D(\bar{\pi}_{t}, \tilde{\pi}_{t+k})$.

\paragraph{Hypothesis} 
Our central hypothesis is that large statistical distances correlate with downstream policy failure.
Intuitively, a predictive policy can be likened to possessing an internal world model that simulates how robot actions affect environment states.
When the policy is in distribution, we expect this world model to be accurate, thus resulting in smaller statistical distances.
More concretely, if the policy's internal model of state $s_{t+k}$ at timestep $t$ coincides with the actual observed state $s_{t+k}$ at timestep $t+k$, the distribution of actions $\tilde{\pi}_{t+k}$ should be well-represented by the distribution $\bar{\pi}_{t}$.
As a result, the distance $D(\bar{\pi}_{t}, \tilde{\pi}_{t+k})$ will be small (for the right choice of statistical distance function $D$).
Conversely, when the policy is out of distribution (OOD), its internal model of state $s_{t+k}$ at timestep $t$ may be inaccurate, yielding a divergence between 
$\bar{\pi}_{t}$ and $\tilde{\pi}_{t+k}$ and a larger statistical distance.  

\paragraph{Implementation Details} 
As mentioned in \cref{sec:sentinel-temporal-consistency}, we propose to approximate $D(\bar{\pi}_{t}, \tilde{\pi}_{t+k})$ with an empirical distance function $\hat{D}$ instead of computing it analytically, as doing so presents the challenge of marginalizing out both the non-overlapping actions (between timesteps $t$ and $t+k$) and the intermediate steps of the diffusion process~\cite{sohl2015deep}.
We found the following approximations to work well in practice:
\begin{itemize}
    \item Maximum Mean Discrepancy (MMD) with radial basis function (RBF) kernels. We compute
    \begin{align*}
        \hat{D}(\bar{\pi}_{t}, \tilde{\pi}_{t+k}) 
            &= \E_{a_t, a_t' \sim \bar{\pi}_{t}}\left[k(a_t, a_t')\right] + \E_{a_{t+k}, a_{t+k}' \sim \tilde{\pi}_{t+k}}\left[k(a_{t+k}, a_{t+k}')\right] \\
            &- 2 \E_{a_t \sim \bar{\pi}_{t}, a_{t+k} \sim \tilde{\pi}_{t+k}}\left[k(a_t, a_{t+k})\right],\quad \text{where}\quad k(x, y;\; \beta_1) = \exp\left(-\frac{||x-y||^2}{\beta_1}\right).
    \end{align*}
    Here, $k: \R^{(h-k) \times |\calA|} \times \R^{(h-k) \times |\calA|} \rightarrow \R$ computes the similarity between two overlapping action sequences, and $\beta_1$ denotes the bandwidth of the RBF kernel.
    The expectations are taken over a batch of $B$ action sequences sampled from the generative policy. 
    \item Forward KL-divergence via Kernel Density Estimation (KDE) of the policy distributions:
    \begin{align*}
        \hat{D}(\bar{\pi}_{t}, \tilde{\pi}_{t+k}) = \E_{a_{t+k} \sim \tilde{\pi}_{t+k}}\left[\log\;\ \frac{p(a_{t+k})}{q(a_{t+k})} \right],
    \end{align*}
    where $p$ and $q$ are KDEs of $\tilde{\pi}_{t+k}$ and $\bar{\pi}_{t}$ fit on a batch of $B$ action sequences sampled from each policy distribution, respectively. As before, we use Gaussian RBF kernels of the form $k(x, y;\; \beta_2)$, where $\beta_2$ denotes the bandwidth of the RBF kernels used for KDE.
    \item Reverse KL-divergence via KDE of the policy distributions:
    \begin{align*}
        \hat{D}(\bar{\pi}_{t}, \tilde{\pi}_{t+k}) = \E_{a_{t} \sim \bar{\pi}_{t}}\left[\log\;\ \frac{p(a_{t})}{q(a_{t})} \right],
    \end{align*}
    where $p$ and $q$ are KDEs of $\bar{\pi}_{t}$ and $\tilde{\pi}_{t+k}$ fit on a batch of $B$ action sequences sampled from each distribution, respectively, and all other parameters follow the forward KL definitions.
\end{itemize}

\paragraph{Hyperparameters}
The batch size $B$, MMD bandwidth $\beta_1$, and KDE bandwidth $\beta_2$ are hyperparameters that we select for a given environment. 
As expected, we found that larger batch sizes are necessary for accurate mean embeddings and density estimates in domains with higher degrees of multimodality (e.g., $B = 256$ action sequences for \textbf{PushT} and \textbf{Push Chair}).
We also found that using either default settings or dynamic calibration techniques are sufficient to obtain suitable MMD and KDE bandwidth parameters $\beta_1$ and $\beta_2$, respectively.
For example, setting $\beta_2$ in proportion to the maximum eigenvalue of the covariance of overlapping actions $a_{t+k:t+h-1|\cdot}$ sampled from $\bar{\pi}_{t}$ and $\tilde{\pi}_{t+k}$ worked well in multimodal domains.
Further details on hyperparameters are provided in \cref{tab:sentinel-stac-hyperparams}.

\begin{table*}[h!]
    \centering
    \caption[Hyperparameters for temporal consistency detection (Sentinel)]{\small
        \textbf{Hyperparameters settings} for temporal consistency detection with STAC.
    } 
    \label{tab:sentinel-stac-hyperparams}
    \adjustbox{max width=\textwidth}{\begin{tabular}{|l|ccc|}
        \toprule
        \textbf{Hyperparameters} & \textbf{PushT ($\uparrow$ Multimodality)} & \textbf{Close Box \& Cover Object ($\downarrow$ Multimodality)} & \textbf{Push Chair ($\uparrow$ Multimodality)}\\
        \midrule
        MMD + KDE batch size ($B$) & 256 & 32 & 256 \\
        MMD bandwidth ($\beta_1$) & Median Heuristic~\cite{gretton2005kernel,muandet2017kernel} & $1.0/|\calA|$ & Median Heuristic \\
        KDE bandwidth ($\beta_2$) & $\sqrt{\lambda_{\max}(\mathrm{Cov}(a_{t+k:t+h-1|\cdot}))}$ & $1.0$ & $\sqrt{\lambda_{\max}(\mathrm{Cov}(a_{t+k:t+h-1|\cdot}))}$\\
        \midrule
        Policy action space ($\calA$) & Linear Velocity & 2 x (Linear + Angular Velocity) & 1 x (Linear + Angular Velocity) \\
        Policy prediction horizon ($h$) & 16 & 16 & 16 \\
        Policy execution horizon ($k$) & 8 & 4 & 4 \\
        \bottomrule
    \end{tabular}}
\end{table*}

\paragraph{Additional Design Choices} 
There are several additional settings that one could adjust to increase STAC's detection performance on their task. 
First, filtering components of the policy's action space that are either noisy or discrete can increase the quality of the statistical distance score function. 
For example, the policy's action space in our robotic manipulator domains include end-effector linear and angular velocities, as well as a binary gripper command. However, when computing statistical distances, we omit all binary gripper commands. 
Next, reducing the execution horizon $k$ of the generative policy to compare action distributions that are closer in time can mitigate excessively large statistical distances in highly dynamic or stochastic environments.
Likewise, comparing action distributions over a shorter prediction horizon $h$ may be suitable if the tails of predicted action sequences e.g., exhibit high variance. 

\subsection{Runtime Monitoring with Vision Language Models}\label{appx:sentinel-vlm}
As described in \cref{sec:sentinel-vlm-assessment}, we formulate the detection of task progression failures as a chain-of-thought (CoT)~\cite{wei2022chain}, video question answering (Video QA)~\cite{xu2016msr} task with VLMs. 
Below, we provide details on the implementation of our VLM runtime monitor and the prompt templates used in our experiments.

\paragraph{Vision Language Models}
In extended experiments (\cref{appx:sentinel-extended-vlm-results}), we include variants of the VLM runtime monitor based on several models: OpenAI's GPT-4o, Anthropic's Claude 3.5 Sonnet, and Google's Gemini 1.5 Pro~\cite{reid2024gemini}.
At the time of writing, these represent the current state-of-the-art VLMs for complex, multimodal reasoning tasks.
We use consistent prompts across all models, however, we slightly vary the implementation of the monitor to reflect the suggested best practices of each VLM.

\paragraph{Implementation Details} 
We propose to query the VLM online with a parsed text prompt describing the runtime monitoring task and the history of images (i.e., a video) $I_{0:t} := (I_0, I_{\nu k}, I_{2\nu k}, \ldots, I_t)$ captured by the robot's camera system up to the current timestep $t$.
Here, the hyperparameter $\nu$ specifies the frequency of the images relative to the execution horizon $k$ of the generative policy (\cref{sec:sentinel-problem-setup}) for generality, as the video may be captured at a much higher frame rate than the policy's execution rate. 
In experiments, simply setting $\nu \in \{1, 2\}$ provided sufficient granularity for the VLM to identify the motion of the robot.

By making non-blocking API calls, the VLM runtime monitor can operate at relatively high frequencies.
For example, the VLM can be queried at each policy-inference timestep $t = jk$ for $j \in \{0, 1, \dots\}$ (i.e., at STAC's detection frequency) to provide a failure classification.
However, depending on the task, doing so may neither be necessary nor desirable for two reasons.
First, because task progression failures are likely to occur at longer timescales than erratic failures, querying the VLM at a reduced frequency might provide time for meaningful changes in state to occur.
In turn, this would reduce redundancy in the VLM's predictions e.g., if no meaningful changes in state occurred since the last time it was queried.
Second, while STAC---a statistical output monitor---can be evaluated at negligible cost (computationally and monetarily), querying state-of-the-art, closed-source VLMs may come with considerable expense. 
Since task progression failures are unlikely to require immediate intervention (in contrast to erratic failures), querying the VLM less often than STAC could be preferable.
In experiments, we queried the VLM to detect task progression failures twice per episode.

The prompt template consists of three parts: 1) a brief description of the VLM's role as the runtime monitor of a robotic manipulator system, which it must execute by analyzing the attached video; 2) a description of the robot's task, the total amount of time that has elapsed\footnote{Online runtime monitoring requires the VLM to differentiate whether a) the robot is still in progress of executing the task correctly (i.e., partial progress) or whether b) the robot will fail to complete the task (e.g., by stalling in a partially completed state). 
Differentiating between partial progress and task failure can be ambiguous for a slow moving robot, and thus, providing the VLM with the current elapsed time serves as a reference to gauge whether or not the rate at which the robot is executing the task will result in a timely task completion.}, and time limit for the task (corresponding to the MDP horizon $H$ in \cref{sec:sentinel-problem-setup}); 3) instructions to elicit a CoT response, ensuring that the VLM describes and analyzes the motion of the robot and all task-relevant objects and outputs a classification that can be easily parsed.
To remove ambiguity over the expected behavior of the robot and what constitutes task completion, we make sure that the task description is sufficiently detailed:

\begin{mdframed}[backgroundcolor=light-gray, roundcorner=10pt, leftmargin=1, rightmargin=1, innerleftmargin=14, innertopmargin=10,innerbottommargin=0, outerlinewidth=1, linecolor=light-gray]
\begin{lstlisting}[linewidth=\columnwidth,breaklines=true]
task_descriptions:
  cover: "hide the white box by covering it with the black blanket. The white box is located somewhere in front of the two robot arms and does not move. The black blanket starts directly in between the two robot arms"
  close: "close the white box by folding in the two smaller white side lids and the bigger white back lid. The white box is located in between the two robot arms and does not move. The robots should concurrently approach the side lids and push both side lids up, followed by approaching the back lid and folding up the back lid with both arms, without grasping the lids with the grippers"
  push_chair: "push the black chair into the circular table. The black chair starts directly in front of the robot. The robot should push black chair in a relatively straight line, without the chair rotating to the left or to the right, so that the seat of the chair is properly tucked under the circular table"
\end{lstlisting}
\end{mdframed}

We elicit a four-step CoT response from the VLM that 1) generates a set of task-relevant questions whose correct answers would fully characterize the motion of the robot and all task-relevant objects in the video, 2) answers the task-relevant questions while providing fine-grained visual details, 3) analyzes the questions, answers, and elapsed time to identify whether the robot is making progress towards task completion within the episode time limit, and 4) concludes with an overall classification in $\{\texttt{ok}, \ \texttt{failure}\}$. 
Interestingly, we found that prompting the VLM to generate its own questions instead of manually specifying them leads to more accurate descriptions of the video and ensuing predictions.

\paragraph{Prompt Template (Video QA)}
\begin{mdframed}[backgroundcolor=light-gray, roundcorner=10pt, leftmargin=1, rightmargin=1, innerleftmargin=14, innertopmargin=12,innerbottommargin=0, outerlinewidth=1, linecolor=light-gray]
\begin{lstlisting}[linewidth=\columnwidth,breaklines=true]
You are the runtime monitor for an autonomous mobile manipulator robot capable of solving common household tasks. A camera system captures a series of image frames (i.e., a video) of the robot executing its current task online. The image frames are captured at approximately 1Hz. As a runtime monitor, your job is to analyze the video and identify whether the robot is a) in progress of executing the task or b) failing to execute the task, for example, by acting incorrectly or unsafely.

The robot's current task is to {DESCRIPTION}. The robot may take up to {TIME_LIMIT} seconds to complete this task. The current elapsed time is {TIME} seconds.

Format your output in the following form:
[start of output]
Questions: First, generate a set of task-relevant questions that will enable you to understand the full, detailed motion of the robot and all task-relevant objects from the beginning to the end of the accompanying video.
Answers: Second, precisely answer the generated questions, providing fine-grained visual details that will help you accurately assess the current state of progress on the task.        
Analysis: Assess whether the robot is clearly failing at the task. Since the video only represents the robot's progress up to the current timestep and the robot moves slowly, refrain from making a failure classification unless the robot is unlikely to complete the task in the allotted time. Explicitly note the amount of time that has passed in seconds and compare it with the time limit (e.g., x out of {TIME_LIMIT} seconds). Finally, based on the questions, answers, analysis, and elapsed time, decide whether the robot is in progress, or whether the robot will fail to complete its task in the remaining time (if any).
Overall assessment: {CHOICE: [ok, failure]}
[end of output]

Rules:
1. If you see phrases like {CHOICE: [choice1, choice2]}, it means you should replace the entire phrase with one of the choices listed. For example, replace the entire phrase '{CHOICE: [A, B]}'  with 'B' when choosing option B. Do NOT enclose your choice in '{' '}' brackets. If you are not sure about the value, just use your best judgement.
2. Do NOT forget to conclude your analysis with an overall assessment. As indicated above with '{CHOICE: [ok, failure]}', your only options for the overall assessment are 'ok' or 'failure'.
3. Always start the output with [start of output] and end the output with [end of output].
\end{lstlisting}
\end{mdframed}

\subsubsection{Prompt Ensembling}\label{appx:sentinel-prompt-ensemble}
The Video QA failure detection task requires comprehensive and detailed reasoning of over potentially long sequences of images, which, at the time of writing, is a challenge for even the most capable VLMs.
As such, we can expect the performance of the VLM runtime monitor to degrade when it is deployed in domains that are visually OOD \textit{w.r.t.} the VLM's training data (e.g., images rendered in simulation or captured from unusual camera poses). 
In these settings, the VLM runtime monitor may provide a reasonable but imperfect signal on task progression failures, resulting in misclassifications.

To strengthen the reliability of our VLM runtime monitor, we propose a simple prompt ensembling strategy~\cite{pitis2023boosted}, whereby we construct multiple Video QA prompts, query the VLM with each prompt, and take the overall failure classification to be the \textit{majority vote} of the predictions across all prompts. 
Intuitively, if the failure detectors associated with each individual prompt are fairly accurate to begin with, the resulting majority-vote detector will have an even higher probability of correctness.

In experiments, we only found it necessary to use prompt ensembling in the \textbf{Cover Object} domain.
We construct two variants of the Video QA prompt (3 prompts total), each of which follow a similar CoT structure while including additional information to diversify the VLM's reasoning. 
The first prompt variant, \textbf{Video QA + Success Video}, includes a video of a successful policy rollout for the current task.
This allows the VLM to distinguish off-nominal policy behavior at test time from nominal policy behavior illustrated in the example video.
The second prompt variant, \textbf{Video QA + Goal Images}, includes example images of the scene at the end of successful policy rollouts, which serve as a visual reference for task completion. 
In accordance with the assumptions of our framework, these prompt variants only require data associated with policy success to detect unknown failures at test time. 

\paragraph{Prompt Template (Video QA + Success Video)}
\begin{mdframed}[backgroundcolor=light-gray, roundcorner=10pt, leftmargin=1, rightmargin=1, innerleftmargin=14, innertopmargin=12,innerbottommargin=0, outerlinewidth=1, linecolor=light-gray]
\begin{lstlisting}[linewidth=\columnwidth,breaklines=true]
You are the runtime monitor... # Same as Video QA

To inform your analysis, you will be provided with an example video that shows the full motion of the robot and all task-relevant objects when the task is successfully executed. For example, the last image frame in the example video will show what the scene should look like at the end of a successsfully executed task. By comparing the current video with the example video, you may be able to visually distinguish when the robot is failing at the task versus when it is making steady progress or has completed.

The robot's current task is... # Same as Video QA

Questions: First, generate a set of task-relevant questions that will enable you to understand the full, detailed motion of the robot and all task-relevant objects from the beginning to the end of the accompanying video. In addition, generate questions that will enable you to identify any key similarities or differences between the current video and the example success video.
Answers: Second, precisely answer... # Same as Video QA
\end{lstlisting}
\end{mdframed} 
\vspace{6pt}

\paragraph{Prompt Template (Video QA + Goal Images)}
\begin{mdframed}[backgroundcolor=light-gray, roundcorner=10pt, leftmargin=1, rightmargin=1, innerleftmargin=14, innertopmargin=12,innerbottommargin=0, outerlinewidth=1, linecolor=light-gray]
\begin{lstlisting}[linewidth=\columnwidth,breaklines=true]
You are the runtime monitor... # Same as Video QA

To inform your analysis, you will be provided with several example images that show what the scene (i.e., the robot and all task-relevant objects) should look like at the end of a successfully executed task. By comparing the last few image frames of the current video with these example images, you may be able to visually distinguish when the robot is failing at the task versus when it is making steady progress or has completed.

The robot's current task is... # Same as Video QA

Questions: First, generate a set of task-relevant questions that will enable you to understand the full, detailed motion of the robot and all task-relevant objects from the beginning to the end of the accompanying video. In addition, generate questions that will enable you to identify any key similarities or differences between the current video and the example images of successfully completed tasks.
Answers: Second, precisely answer... # Same as Video QA
\end{lstlisting}
\end{mdframed}

\section{Experiment Details}\label{appx:sentinel-experiments}

\subsection{Environments}\label{appx:sentinel-environments}
We provide additional details on the environments used to evaluate Sentinel. 
These environments vary in terms of their data distribution (e.g., multimodal or high-dimensional actions) and support different types of distribution shift (e.g., object scale, pose, dynamics), under which the behavior of generative diffusion policies can be methodically studied. 
The environments are visualized in \cref{fig:sentinel-environments}.

\subsubsection{Simulation Domains}

\paragraph{PushT Domain}
The policy is tasked with pushing a planar ``T''-shaped object into a goal configuration. 
A trajectory is considered successful if the overlap between the ``T''-shaped object and its goal exceeds 90\% within 300 environment steps.
The action space is the 2-DoF linear velocity of the end-effector. 
We generate OOD test scenarios by non-uniformly randomizing the scale and dimensions of the ``T''-shaped object beyond the randomizations contained in the policy's demonstration data.
The policy tends to fail by converging to a locally optimal configuration, where the ``T'' overlaps with its goal but in an incorrect orientation.
Since the task can be solved in a number of ways, we include this domain to evaluate the performance of various score functions in the presence of action multimodality.
We refer to \cite{chi2023diffusion} for the process of generating demonstration data in this domain.
    
\paragraph{Close Box Domain} The policy is tasked with closing a box that has three lids. 
A trajectory is considered successful if all three lids are closed within 120 environment steps (24 seconds).
The action space is the 14-DoF linear + angular velocities and gripper commands for the end-effectors of two mobile manipulators. 
Demonstration data is generated by an oracle policy that sets a series of waypoints for the end-effectors based on the initial state. 
We generate OOD test scenarios by non-uniformly randomizing the scale of the box beyond the randomizations contained in the policy's demonstration data.
The policy tends to fail erratically when the robots e.g., collide with the box or its lids, however, task progression failures may also occur.
This domain is primarily used to evaluate the detection of erratic policy failures on a bi-manual robotic system with a high-dimensional action space.

\paragraph{Cover Object Domain} The policy is tasked with covering a rigid object with a cloth. 
A trajectory is considered successful if over 75\% of the object is covered by the cloth within 64 environment steps (13 seconds). 
The action space and process of generating demonstration data is identical to that of \textbf{Close Box}. 
We generate OOD test scenarios by non-uniformly randomizing the position of the object beyond the randomizations contained in the policy's demonstration data.
The policy tends to fail by releasing the cover before reaching the object.
Hence, this domain is used to evaluate the detection of task progression failures, where reasoning over longer durations is required to assess task progress.

\begin{figure}
    \includegraphics[width=\linewidth]{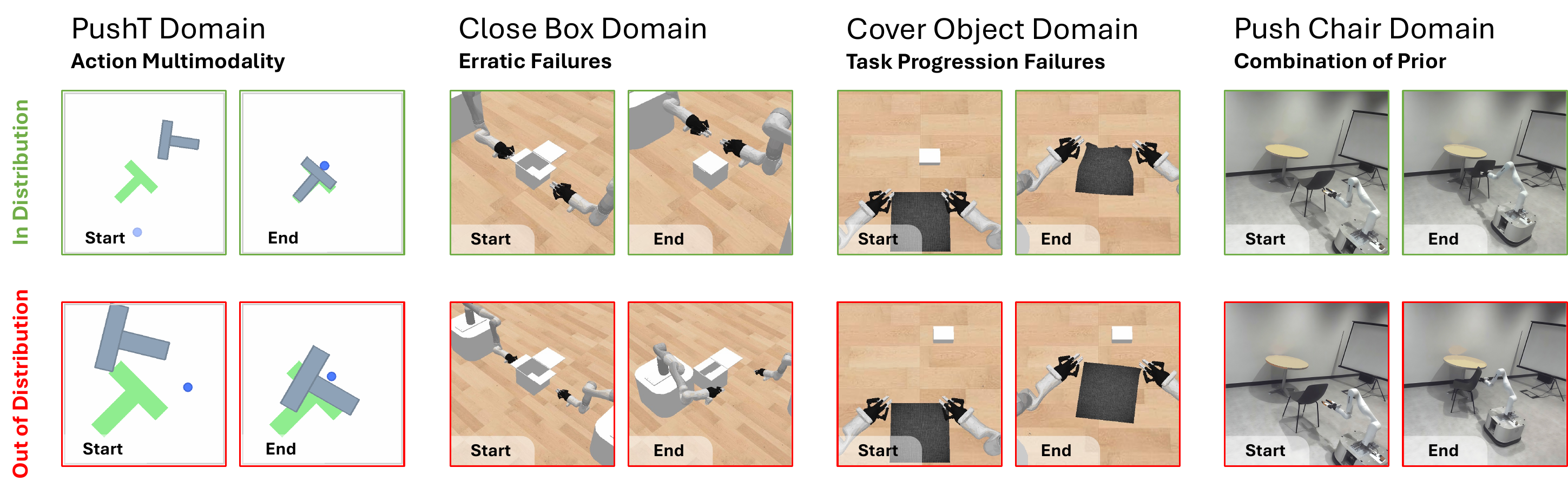}
    \caption[Simulated and real-world evaluation tasks for failure detection (Sentinel)]{\small
        \textbf{Evaluation Domains.} We evaluate our failure detection framework across three simulation domains and one real-world domain. These domains provide coverage over different data distributions (e.g., action multimodality, high-dimensional actions) and modes of generative policy failure. For example, generative policies tend to fail erratically in the \textbf{Close Box} domain, but smoothly in the \textbf{Cover Object} domain. 
        An effective failure detector should be performant across multiple domains, which entails coverage over heterogeneous failure modes.
    }
    \label{fig:sentinel-environments}
\end{figure}

\subsubsection{Real-World Domains}\label{appx:sentinel-environments-real}

\paragraph{Mobile Robot Setup}
We use a holonomic mobile base equipped with a Kinova Gen3 7-DoF arm. 
A single ZED 2 camera is fixed in the workspace to capture visual observations for the generative policy. 
The ZED 2 camera first generates a partial-view point cloud of the environment, from which we segment task-relevant objects using the Grounded Segment Anything Model~\cite{ren2024grounded} based on a natural language description related to the task. 
The segmented point cloud serves as the visual input to the policy.
Additionally, we use a motion capture system to track the pose of the mobile base. 
During evaluation, the policy processes the point clouds, predicts a sequence of 16 actions, of which the first 4 are executed on the robot.
The mobile manipulator robot then maneuvers its arm according to these commands, adjusting the pose of the base if the end-effector moves outside a pre-defined workspace.

\paragraph{Push Chair Domain} 
The policy is tasked with tucking a chair into a table using a single-arm mobile manipulation platform.
A trajectory is considered successful if the seat of the chair is properly tucked under the table by the end of the policy rollout.
The action space is the 7-DoF linear + angular velocities and gripper command for the end-effector of the mobile manipulator robot. 
Demonstration data is extracted from human videos: we use an off-the-shelf hand detection model~\cite{pavlakos2024reconstructing}, an object segmentation model~\cite{cheng2023tracking,ren2024grounded}, and a stereo-to-depth model to extract human hand poses and object point clouds from a subsampled set of frames in each of the 15 human demonstration videos. 
We generate OOD test scenarios by randomizing the initial pose of the chair beyond the randomizations contained in the demonstration data. 
The policy tends to fail erratically if the chair rotates away in either direction when pushed, but such failures are also visually apparent. 
Therefore, this task is used to test the efficacy of both STAC and the VLM runtime monitor in a dynamically complex~\cite{ruggiero2018nonprehensile}, real-world setting.

\subsection{Diffusion Policies}\label{appx:sentinel-diffusion-policy}

We train a diffusion policy (DP) for each environment, using 200 demonstrations for the \textbf{PushT} domain, 50 demonstrations for each of the \textbf{Close Box} and \textbf{Cover Object} domains, and 15 demonstrations for the real-world \textbf{Push Chair} domain.
In a DP, actions are generated by iteratively denoising an initially random action $a^N_t \sim \calN(0, 1)$ over $N$ steps as $a^N_t, \ldots, a^0_t$, where $a^i_t$ with a superscript $i$ denotes the generated action sequence at the $i$-th denoising iteration.
In an imitation learning setting, the DP's noise prediction network $\epsilon_\theta$ is trained to predict the \edit{random noise $\epsilon^i$} added to actions drawn from a dataset of expert demonstrations $\mathcal{D}_\mathrm{train}$ by minimizing
\begin{equation}\label{eq:sentinel-ddpm-loss}
    \mathcal{L}_\mathrm{ddpm} := \E_{(s, a^0)\sim \mathcal{D_\mathrm{train}}, \epsilon^i, i} \left[||\epsilon^i - \epsilon_\theta(\sqrt{\bar\alpha_i} a^0 + \sqrt{1 - \bar{\alpha}_i}\epsilon^i, s, i)||^2 \right],
\end{equation}
where the constants $\bar \alpha_i$ depend on the chosen noise schedule of the diffusion process.

To increase the salience of distribution shift \textit{w.r.t.} the position and scale of objects, we use point clouds as inputs to the policy instead of RGB images (i.e., a 5\% increase in object scale may not be salient in an image). 
For simulation experiments, we use a diffusion policy architecture identical to the original paper~\cite{chi2023diffusion} except for the visual encoder, where we substitute the ResNet-based encoder for a PointNet-based one: a 4-layer PointNet++ encoder~\cite{qi2017pointnet++} with hidden dimension 128. 
The output of this encoder is concatenated with the proprioceptive inputs and then fed to the noise prediction network.
For real-world experiments, we use the recently proposed EquiBot diffusion policy architecture~\cite{yang2024equibot}, which additionally incorporates SIM(3) equivariance into the diffusion process.
We use EquiBot to evaluate our failure detectors on a current state-of-the-art approach for learning generative policies in the real world.
All diffusion policies produce an action over $N=100$ denoising iterations. 
Unless otherwise specified, we use standard settings for the prediction $h$ and execution horizon $k$ of the diffusion policy.

\subsection{Baselines}\label{appx:sentinel-baselines}
We outline the implementation details of our baselines as introduced in \cref{sec:sentinel-experiments}.
First, with the exception of the VLM runtime monitors, all evaluated failure detection methods consist of computing a score $S(\cdot)$ at each policy-inference timestep in a rollout, taking the cumulative sum of scores up to the current timestep $t$, and then checking if the cumulative sum exceeds a calibrated threshold to detect policy failure.
As such, the baselines differ in their \textit{score function}, i.e., how they compute the per-timestep scores that are then summed and thresholded.
Intuitively, a good score function should be well-correlated with policy failure, that is, it should output small values when the policy is succeeding and large ones when it is failing.
For example, \cref{fig:sentinel-error-result} demonstrates that STAC holds this property.
We baseline against an extensive suite of score functions, some of which we newly introduce for the case of generative diffusion policies, and others that are common in the OOD detection literature~\cite{sinha2022system}.

\subsubsection{STAC Baselines (Policy-Level Monitors)}\label{appx:sentinel-stac-baselines}
\begin{itemize}
    \item \textbf{Policy Encoder Embedding} quantifies the dissimilarity of the current point cloud observation $o_t$ \textit{w.r.t.} to the point clouds in the calibration dataset of $M$ successful policy rollouts $\calD_\tau = \{\tau^i\}_{i=1}^M$ (as described in \cref{sec:sentinel-problem-setup}) within the embedding space of the policy's encoder (here, $o_t$ denotes the point cloud input to the policy, which includes the point cloud at the current and previous timestep).
    More concretely, let $E$ be the policy's encoder, $z_t = E(o_t)$ be the current point cloud embedding, and $\calD_{z} = E(\calD_\tau)$ be the embeddings of all point clouds contained in the calibration dataset. We compute the per-timestep score as the Mahalanobis distance
    \begin{equation}\label{eq:sentinel-mahal-embeddings}
        S(z_t;\;\; \calD_{z}) = \sqrt{(z_t - \mu_z)^T\; \Sigma_{zz}^{-1}\; (z_t - \mu_z)},
    \end{equation}
    where $\mu_z$ is the mean and $\Sigma_{zz}$ is the covariance of the embeddings in $\calD_{z}$. 
    At test time, we raise a failure warning if the cumulative score $\eta_t$ exceeds a calibrated detection threshold $\gamma$
    \begin{equation*}
        \eta_t > \gamma,\quad \text{where}\quad \eta_t = \sum_{i=0}^{j-1} S(z_i;\;\; \calD_{z}), \;\;\;\; t=jk.
    \end{equation*}
    Here, $\gamma$ is set to the $1 - \delta$ quantile of cumulative scores computed over the calibration trajectories $\{\eta^i_{|\tau^i|}\}_{i=1}^M$, where $\tau^i \in \calD_\tau$.
    Importantly, when computing the calibration scores $\eta^i$, we do so in a \textit{leave-trajectory-out} fashion: i.e., for a point cloud $o_t \in \tau^i$ where $\tau^i \in \calD_\tau$, we compute the per-timestep score as $S(E(o_t);\;\; E(\calD_\tau \setminus \tau^i))$. This ensures that the dissimilarity of observation $o_t$ is computed \textit{w.r.t.} trajectories other than its own, which a) aligns with how scores are computed at test time and b) ensures that calibration scores are not trivially low.

    We experimented with alternatives to the Mahalanobis distance in \cref{eq:sentinel-mahal-embeddings}, substituting it with top-$k$ scoring for $k \in \{1, 5, 10\}$ based on cosine similarity and L2 distance metrics.
    However, we found the Mahalanobis distance to be the most stable.
    We also evaluated variants of this baseline that compute the dissimilarity of the full policy state $s_t$ (including both the point cloud embedding and the robots' end-effector poses), but found equivalent performance. 
        
    \item \textbf{CLIP Pretrained Embedding} quantifies the dissimilarity of the current image observation $I_t$ \textit{w.r.t.} to the images in the calibration dataset $\calD_\tau = \{\tau^i\}_{i=1}^M$ within the embedding space of a pretrained CLIP encoder~\cite{radford2021learning}. The score function (\cref{eq:sentinel-mahal-embeddings}) and calibration process are identical to those of \textbf{Policy Encoder Embedding}. 
    Importantly, the encoder used here is trained with a representation learning objective, which results in a structured embedding space and more interpretable embedding similarity scores. In our experiments, we use the open-source \texttt{clip-vit-base-patch32} version of CLIP without any fine-tuning.
    
    \item \textbf{ResNet Pretrained Embedding} is identical to \textbf{CLIP Pretrained Embedding}, except it quantifies image-space dissimilarity using embeddings from a ResNet18 pretrained model~\cite{he2016deep}.
    
    \item \textbf{Temporal Non-Distributional Minimum} is similar to STAC (\cref{appx:sentinel-stac}) in that it seeks to compute a consistency score between overlapping actions $a_{t+k:t+h-1|t}$ and $a_{t+k:t+h-1|t+k}$ sampled from the generative policy at contiguous policy-inference timesteps $t$ and $t+k$, respectively. However, it does so by using a non-statistical distance function. 
    In particular, this baseline computes the per-timestep temporal consistency score at timestep $t+k$ as
    \begin{equation*}
        S(s_{t+k}) = \underset{b \in \{1..B\}}{\min} \;\;\left\lVert a_{t+k:t+h-1|t} - a^b_{t+k:t+h-1|t+k} \right\rVert, 
    \end{equation*}
    where $a^b_{t+k:t+h-1|t+k} \sim \tilde{\pi}_{t+k}(\cdot | s_{t+k})$.
    That is, we sample a batch of $B$ action sequences at timestep $t+k$, compute their L2 distances \textit{w.r.t.} the overlapping actions of the previously executed action sequence $a_{t+k:t+h-1|t}$, and return the L2 distance associated with the most similar action sequence. Intuitively, this baseline attempts to find the closest action sequence at timestep $t+k$ to the previously executed action sequence, while STAC attempts to quantify how well the action distribution $\tilde{\pi}_{t+k}$ at timestep $t+k$ is represented in the distribution $\bar{\pi}_{t}$ at timestep $t$. 
    The values of $B$ are provided in \cref{tab:sentinel-stac-hyperparams}.
    The calibration and runtime procedures of this baseline are identical to those of STAC (\cref{sec:sentinel-temporal-consistency}).
    
    \item \textbf{Diffusion Reconstruction} adapts the diffusion-based OOD detection approach of \citet{graham2023denoising} for the case of diffusion policies. Specifically, this baseline computes the reconstruction error on re-noised action sequences sampled from the diffusion policy as
    \begin{equation}\label{eq:sentinel-diffusion-recon-score}
        S(s_t) = \E_{a^0\sim \pi(\cdot | s_t), \epsilon^i, i} \left[\left\lVert a^0 - \epsilon^{i:0}_\theta(\sqrt{\bar\alpha_i} a^0 + \sqrt{1 - \bar{\alpha}_i}\epsilon^i, s_t)\right\rVert^2 \right],
    \end{equation}
    where $\epsilon^{i:0}_\theta$ denotes the reverse diffusion process from the $i$-th denoising iteration to the $0$-th iteration, resulting in the reconstructed action. We approximate the expectation in \cref{eq:sentinel-diffusion-recon-score} over a batch of $B = 256$ action sequences sampled from the diffusion policy, each re-noised for $i \in \{5, 10, 25, 50\}$ forward diffusion steps (also referred to as \textit{reconstruction depths}). 
    We experimented with several sets of reconstruction depths and found comparable performance. 
    We note that this baseline comes with significant computational expense as it needs to perform the denoising process multiple times: i.e., if we would like to compute $R$ reconstructions, this baseline is approximately $R$ times more expensive than a single reverse diffusion process.
    The calibration and runtime procedures of this baseline are identical to those of STAC (\cref{sec:sentinel-temporal-consistency}).
    
    \item \textbf{Temporal Diffusion Reconstruction} is a temporal variant of \textbf{Diffusion Reconstruction} that also computes the reconstruction error on re-noised action sequences sampled from the diffusion policy, but reconstructs the action sequences conditioned on the previous state $s_t$ as
    \begin{equation*}
        S(s_t, s_{t+k}) = \E_{a_{t+k:t+h-1|t+k}^0\sim \tilde{\pi}_{t+k}, \epsilon^i, i} \left[\left\lVert \hat{a}^0 - \epsilon^{i:0}_\theta(\sqrt{\bar\alpha_i} \hat{a}^0 + \sqrt{1 - \bar{\alpha}_i}\epsilon^i, s_t)\right\rVert^2 \right].
    \end{equation*}
    Here, $\hat{a}^0$ denotes the action sequence over which reconstructions are computed, concatenating the first $k$ (executed) actions sampled at timestep $t$ with following $h-k$ (predicted) actions sampled at timestep $t+k$: that is, $\hat{a}^0 = a_{t:t+k|t} \oplus a_{t+k:t+h-1|t+k}^0$. 
    This step is necessary to ensure that the denoising process conditioned on $s_t$ only considers actions within the policy's prediction horizon. 
    This baseline represents an alternative form of temporal consistency.
    Intuitively, it asks whether action sequences sampled at timestep $t+k$ would also be sampled at timestep $t$, to which the answer is likely yes if the policy is in distribution, and likely no if the policy is OOD---because the marginal distributions conditioned on $s_t$ versus $s_{t+k}$ may be different.
    The hyperparameters of this baseline follow those of \textbf{Diffusion Reconstruction}.
    
    \item \textbf{DDPM Loss} computes the empirical DDPM loss on re-noised action sequences sampled from the diffusion policy as
    \begin{equation*}
        S(s_{t}) = \E_{a^0\sim \pi(\cdot | s_t), \epsilon^i, i} \left[\left\lVert \epsilon^i - \epsilon_\theta(\sqrt{\bar\alpha_i} a^0 + \sqrt{1 - \bar{\alpha}_i}\epsilon^i, s_t, i)\right\rVert^2 \right].
    \end{equation*}
    Here, the expectation is taken over a batch of $B = 256$ sampled action sequences and $10$ sampled denoising iterations $i \sim \calU[0, N)$, where $N$ is the total number of denoising iterations (\cref{appx:sentinel-diffusion-policy}). 
    We can think of this baseline as a more efficient version of \textbf{Diffusion Reconstruction}, since it directly quantifies the diffusion policy's performance on its training task without the need to reconstruct actions over numerous denoising iterations. 
    The calibration and runtime procedures of this baseline are identical to those of STAC (\cref{sec:sentinel-temporal-consistency}).
    
    \item \textbf{Temporal DDPM Loss} is a temporal variant of \textbf{DDPM Loss} that also computes the empirical DDPM loss on re-noised action sequences sampled from the diffusion policy, but does so conditioned on the previous state $s_t$ as
    \begin{equation*}
        S(s_t, s_{t+k}) = \E_{a_{t+k:t+h-1|t+k}^0\sim \tilde{\pi}_{t+k}, \epsilon^i, i} \left[\left\lVert \epsilon^i - \epsilon_\theta(\sqrt{\bar\alpha_i} \hat{a}^0 + \sqrt{1 - \bar{\alpha}_i}\epsilon^i, s_t, i)\right\rVert^2 \right],
    \end{equation*}
    where $\hat{a}^0 = a_{t:t+k|t} \oplus a_{t+k:t+h-1|t+k}^0$ (as defined in \textbf{Temporal Diffusion Reconstruction}).
    The hyperparameters of this baseline follow those of \textbf{DDPM Loss}, over which it is expected to offer advantages via temporal consistency.

    \item \textbf{Diffusion Output Variance} computes the variance over $B$ action sequences sampled from the diffusion policy and thresholds it \textit{w.r.t.} the $1-\delta$ quantile of sample variances computed over the calibration dataset $\calD_\tau$. 
    This baseline reflects an alternative output metric to temporal consistency that can be monitored to detect policy failure. 
    While computing output variances might bear resemblance to ensemble methods~\cite{LakshminarayananPritzelEtAll2017}, we note that this approach does not quantify epistemic model uncertainty. Doing so would require training multiple diffusion policies and performing inference with each at test time, which we avoid due to computational expense.
    The hyperparameters of this baseline are identical to those of STAC (see \cref{tab:sentinel-stac-hyperparams}). 
        
\end{itemize}
\paragraph{Discussion on Baselines} 
First, we highlight that the embedding-based approaches predict failure solely based on the dissimilarity or atypicality of the current state.
Hence, these baselines are not \textit{policy aware}: they may raise failure warnings for states that are dissimilar from those contained in the calibration dataset $\calD_\tau$ without understanding how the policy behaves in those states. 
In some cases, the policy may still succeed or generalize to minor distribution shifts in state, causing the detection performance of these baselines to significantly diminish.
The reconstruction-based approaches may account for the generalization characteristics of the policy but come with computational expense, which may prohibit their use in real-time settings.
The DDPM loss approaches present the next best alternative to STAC, as their score functions coincide with the diffusion policy's training task and can be computed at negligible computational cost. 
However, we note that the DDPM loss baseline is specific to diffusion policies, whereas STAC is agnostic to the generative policy formulation.

\subsubsection{VLM Baselines (Task-Level Monitors)}\label{appx:sentinel-vlm-baselines}
As described in \cref{sec:sentinel-vlm-assessment}, we propose to monitor the task progress of a generative policy by zero-shot prompting a VLM to analyze a video of the robot's execution up to the current timestep.
We contrast the performance of our Video QA approach with a variation, Image QA, that queries the VLM using only $I_t$, the image recorded at the current timestep $t$, rather than the full video $I_{0:t}$
This baseline is used to evaluate the importance of video-based reasoning compared to single images. 
We construct the Image QA prompt by minimally modifying the Video QA prompt (\cref{appx:sentinel-vlm}) as shown below:

\paragraph{Prompt Template (Image QA)}
\begin{mdframed}[backgroundcolor=light-gray, roundcorner=10pt, leftmargin=1, rightmargin=1, innerleftmargin=14, innertopmargin=12,innerbottommargin=0, outerlinewidth=1, linecolor=light-gray]
\begin{lstlisting}[linewidth=\columnwidth,breaklines=true]
You are the runtime monitor for an autonomous mobile manipulator robot capable of solving common household tasks. A camera system captures image frames (at approximately 1Hz) of the robot executing its current task online. As a runtime monitor, your job is to analyze the most recent image frame and identify whether the robot is a) in progress of executing the task or b) failing to execute the task, for example, by acting incorrectly or unsafely.

The robot's current task is to {DESCRIPTION}. The robot may take up to {TIME_LIMIT} seconds to complete this task. The current elapsed time is {TIME} seconds.

Format your output in the following form:
[start of output]
Questions: First, generate a set of task-relevant questions that will enable you to thoroughly analyze the image frame and identify what actions the robot has taken so far.
Answers: Second, precisely answer the generated questions, providing fine-grained visual details that will help you accurately assess the current state of progress on the task.
Analysis: Assess whether the robot is clearly failing at the task. Since the image frame only represents the robot's progress up to the current timestep and the robot moves slowly, refrain from making a failure classification unless the robot takes unsafe actions or is unlikely to complete the task in the allotted time. Explicitly note the amount of time that has passed in seconds and compare it with the time limit (e.g., x out of {TIME_LIMIT} seconds). Finally, based on the questions, answers, analysis, and elapsed time, decide whether the robot is in progress, or whether the robot will fail to complete its task in the remaining time (if any).
Overall assessment: {CHOICE: [ok, failure]}
[end of output]

Rules:
1. If you see phrases like {CHOICE: [choice1, choice2]}, it means you should replace the entire phrase with one of the choices listed. For example, replace the entire phrase '{CHOICE: [A, B]}'  with 'B' when choosing option B. Do NOT enclose your choice in '{' '}' brackets. If you are not sure about the value, just use your best judgement.
2. Do NOT forget to conclude your analysis with an overall assessment. As indicated above with '{CHOICE: [ok, failure]}', your only options for the overall assessment are 'ok' or 'failure'.
3. Always start the output with [start of output] and end the output with [end of output].
\end{lstlisting}
\end{mdframed}

\subsection{Evaluation Protocol}\label{appx:sentinel-evaluation-protocol}

\subsubsection{Definition: Policy Failure}\label{appx:sentinel-policy-failure}
Consider a policy $\pi(a | s)$ that operates within a finite-horizon Markov Decision Process (MDP): a 5-tuple $\langle \mathcal{S}, \mathcal{A}, T, R, H \rangle$, where $\mathcal{S}$ and $\mathcal{A}$ are the state and action spaces, $T(s' | s, a)$ is the transition model, $R(s, a, s')$ is the reward model, and $H$ is the MDP horizon.
Given an initial state $s_0$ representative of a new test scenario, executing the policy for $t$ timesteps produces a trajectory $\tau_t = (s_0, a_0, \ldots, s_t)$.
The trajectory's \textit{return} is defined as the cumulative sum of rewards: $R(\tau_t) = \sum_{t'=0}^{t-1}R(s_{t'}, a_{t'}, s_{t'+1})$.

We define \textbf{policy failure} simply in terms of task completion. 
More formally, given a defined success threshold $R_\tau$, the policy fails if the return on its trajectory $\tau_t$ does not exceed the success threshold within the MDP horizon: $R(\tau_t) < R_\tau$ where $t \geq H$.
In the simplest case, the success threshold $R_\tau$ equals $1$, and the reward model $R(s, a, s')$ equals $1$ \textit{iff} the task is complete at state $s'$.
For example, if the robot is tasked with picking up a cup and receives a reward of $1$ only once the cup is firmly grasped.
In experiments, we adhere to this definition of policy failure and threshold trajectory returns as $R(\tau_H) < R_\tau$ to compute ground-truth labels for whether or not a policy failed in a trajectory $\tau_H$.

\textbf{Relation to failure detection:} In \cref{sec:sentinel-problem-setup}, we provide a definition of the failure detection task---to detect whether a trajectory $\tau_H$ constitutes a policy failure at the earliest possible timestep $t$---that is different from detecting the specific timestep at which (or before) the policy ``fails.'' 
Doing so removes the need to manually specify task-specific failure criteria required to e.g., label each timestep in a trajectory. 
While the goal of our failure detectors is thus to flag failure episodes, it is still beneficial to catch failures at the earliest possible timestep, which is why a) our above definition of policy failure is formulated in terms of a partial trajectory $\tau_t$ up to the current timestep $t \leq H$ of the MDP, b) we propose an online detection scheme that monitors for failure at each timestep $t$ based on the trajectory up to the current timestep (i.e., $f(\tau_t) \rightarrow \{\texttt{ok}, \texttt{failure}\}$), and c) we report the detection time as a performance metric.

\subsubsection{Constructing the Calibration Dataset} 
Calibrating STAC and its baselines requires a small dataset of successful policy rollouts $\calD_\tau = \{\tau^i\}_{i=1}^M$, which provide grounding on the nominal, in-distribution behavior of the policy.
This allows us to evaluate the test-time behavior of a potentially failing policy \textit{w.r.t.} its known nominal behavior.

\paragraph{Calibration Data Quality}
We found it important to ensure the quality of trajectories $\tau^i \in \calD_\tau$.
Specifically, trajectories in which the policy succeeds but in an undesired or unacceptable manner should not be used for calibration.
For example, the policy may solve the \textbf{Close Box} task (\cref{fig:sentinel-environments}) but damage the lids of the box in the process. 
Including such a trajectory in the calibration dataset would define this behavior as \textit{nominal} and degrade the sensitivity of the detectors at test time. 
Returning to our example, the detectors may not raise a failure warning if the policy damages a box at test time.

\paragraph{Collecting the Calibration Dataset}
In practice, such a calibration dataset could be collected during a policy validation phase prior to deployment.
For instance, we collect $M = 50$ successful policy rollouts for each simulation domain, manually filtering episodes where the policy succeeded with unacceptable behavior (e.g., with jitter). 
We hypothesize that the performance of the detectors \textit{w.r.t.} the number of rollouts $M$ is task specific.
For example, a smaller calibration dataset may be sufficient for tasks with low variability (i.e., in a single, structured environment), while a larger dataset may be necessary if the policy is to be deployed at scale. 
We note, however, that increasing the calibration dataset size may be desirable to achieve stronger conformal guarantees on the detector's FPR (as derived in \cref{appx:sentinel-derivations}).

\paragraph{Calibrating on Demonstration Data} 
Finally, in attempt to eliminate the need to collect an additional calibration dataset of successful policy rollouts, we experimented with variants of STAC that directly calibrated on trajectories contained in the policy's demonstration dataset.
However, doing so led to a significant increase in the detector's FPR.
We attribute this to the well-known covariate shift problem for imitation learned policies~\cite{ross2010efficient,ross2011reduction}. 
That is, their prediction error increases quadratically on states induced under the policy, causing the detectors' to mistake successful test-time rollouts for failures. 

\subsubsection{Testing and Evaluation}
Instead of evaluating the failure detectors online (i.e., during policy rollouts), we collect several test datasets of policy rollouts, which consist of both successes and failures. 
Each trajectory is labeled either success or failure by thresholding the return at the final state of the episode (as detailed in \cref{appx:sentinel-policy-failure}).
We then perform offline evaluation of the failure detectors by invoking them at each timestep of the trajectory, which allows us to identify the first timestep at which the detectors issue a warning.

\subsubsection{Reported Metrics}\label{appx:sentinel-reported-metrics}
We expand on the metrics outlined in \cref{sec:sentinel-problem-setup}. 
We first define a \textit{positive} as a trajectory where the policy fails and a \textit{negative} as a trajectory where the policy succeeds.
A true positive is counted if the failure detector raises a warning at any timestep in a trajectory where the policy fails. 
A true negative is counted if the failure detector never raises a warning in a trajectory where the policy succeeds. 
The definitions for false positive and false negative follow accordingly.
Detection time is defined as the earliest timestep in which the failure detector raises a warning in a trajectory where the policy fails. 

In our experiments, we report true positive rate (TPR), true negative rate (TNR), false positive rate (FPR), detection time (DT), accuracy, and balanced accuracy.
TPR measures the number of true positives (detected failures) over total number of positives (failures). 
TNR measures the number of true negatives (detected successes) over total number of negatives (successes).
FPR measures the number of false positives (false alarms) over the total number of negatives (successes). 
Accuracy and balanced accuracy account for both the TPR and TNR of the detector. 
However, we report balanced accuracy when the test set contains a non-negligible imbalance of positive and negative trajectories.

\section{Additional Results}\label{appx:sentinel-results}

\subsection{Ablation Experiments on STAC}\label{appx:sentinel-stac-ablations}

\textbf{Does STAC's performance depend on the policy's prediction and execution horizon?}

We conduct an ablation study on the \textbf{PushT} domain to test how the performance of STAC varies with respect to the prediction horizon $h$ and execution horizon $k$ of the diffusion policy.
Together, the prediction and execution horizons determine the number of temporally overlapping action components (i.e., between $a_{t+k:t+h-1|t}$ and $a_{t+k:t+h-1|t+k}$) that are statistically compared by STAC, while the execution horizon governs how far apart in time the action distributions $\bar{\pi}_{t}$ and $\tilde{\pi}_{t+k}$ are generated. 

\begin{wrapfigure}{rt}{0.37\textwidth}
  \centering
  \vspace{-14pt}
  \includegraphics[width=1.0\textwidth]{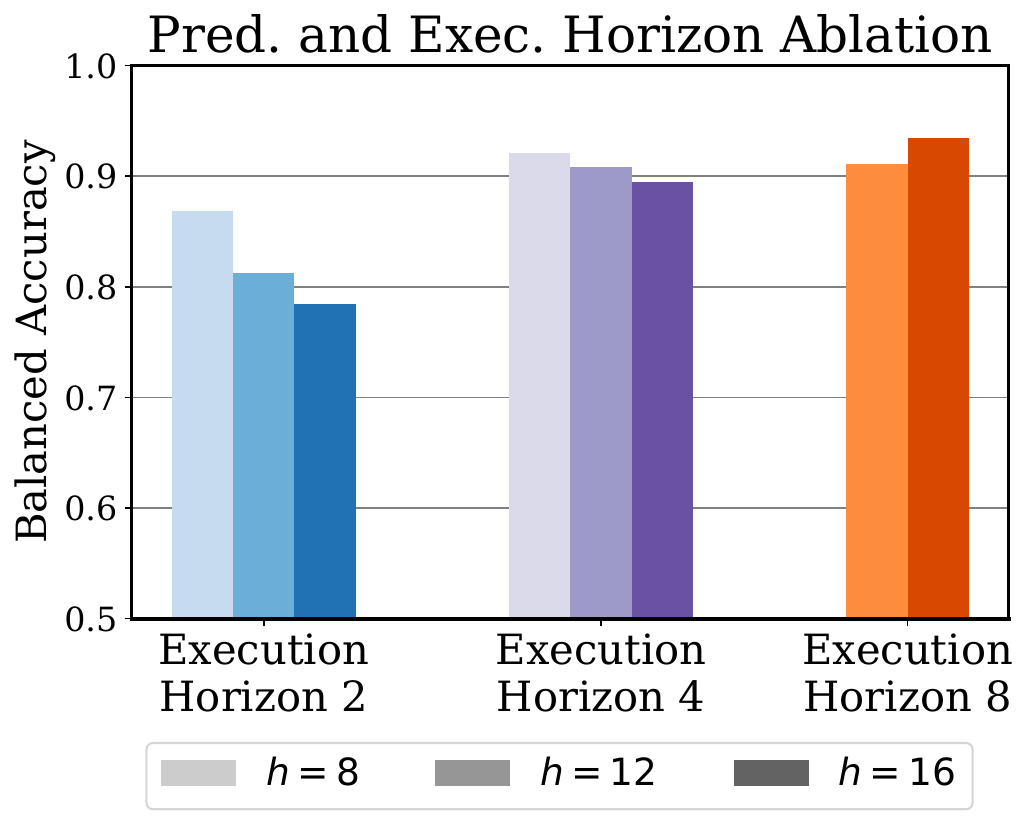}
  \caption[Ablation result on policy prediction and execution horizon for failure detection (Sentinel)]{\small
        Performance variation of STAC subject to different policy prediction and execution horizons in \textbf{PushT}.
    }
  \label{fig:sentinel-pusht-abl-result}
\end{wrapfigure}

The result is shown in \cref{fig:sentinel-pusht-abl-result}.
We find that STAC (MMD) performs comparatively across execution horizons of $k = 4$ and $k = 8$, but performs best with the standard diffusion policy settings of $k = 8$ and $h = 16$ (used for the main result in \cref{fig:sentinel-pusht-result}).
The detector's performance drops when using the smallest execution horizon of $k = 2$.
We attribute this to the relatively small amount of environment change that occurs within two execution steps, which causes $\bar{\pi}_{t}$ and $\tilde{\pi}_{t+k}$ to be similarly distributed and leads to overly conservative statistical distances. 
This is reflected in our results, where the detectors attain $>95$\% TNRs across various execution horizons, but using $k = 2$ leads to a significant drop in TPR to $61$\%, while $k = 4$ and $k = 8$ attain TPRs of $78$\% and $95$\%, respectively.

\begin{table}{
    \centering
    \caption[Ablation result on policy prediction horizon for failure detection (Sentinel)]{\small
        STAC ablation on policy prediction horizon $h$.
    }
    \label{tab:sentinel-horizon-ablation}
    \adjustbox{max width=\textwidth}{
        \begin{tabular}{clcccc}
            \toprule
            & \textbf{Domain} & \textbf{Pred. Horizon ($h$)} & \textbf{TPR $\uparrow$} & \textbf{TNR $\uparrow$} & \textbf{Accuracy $\uparrow$} \\
            \midrule
            \multirow{3}{*}{\STAB{\rotatebox[origin=c]{90}{\small \textbf{Sim.}}}}
            & Close Box & 8 & 0.92 & 0.94 & 0.93 \\
            & Close Box & 12 & 0.88 & 1.00 & 0.93 \\
            & Close Box & 16 & 0.96 & 1.00 & 0.98 \\
            \midrule
            \multirow{3}{*}{\STAB{\rotatebox[origin=c]{90}{\small \textbf{Real}}}}
            & Push Chair & 8 & 1.00 & 0.80 & 0.90 \\
            & Push Chair & 12 & 0.80 & 0.80 & 0.80 \\
            & Push Chair & 16 & 0.80 & 0.90 & 0.85 \\
            \bottomrule
        \end{tabular}
        }
    }
\end{table}

To further ablate the choice of policy prediction horizon, we conduct a similar study on the simulated \textbf{Close Box} and real-world \textbf{Push Chair} domains.
The result is shown in \cref{tab:sentinel-horizon-ablation}, where we find that STAC is quite robust to the choice of prediction horizon, while the best result is achieved by using the standard setting of $h=16$. 

Overall, STAC's performance may vary with the policy's execution horizon, but is relatively stable across choices of the policy's prediction horizon.
The fact that we calibrate STAC and deploy it with the same prediction horizon may normalize the influence of this parameter at test time.

\textbf{How does STAC's performance vary with the choice of statistical distance function?}

\begin{wrapfigure}{rt}{0.35\textwidth}
  \centering
  \vspace{-14pt}
  \includegraphics[width=1.0\textwidth]{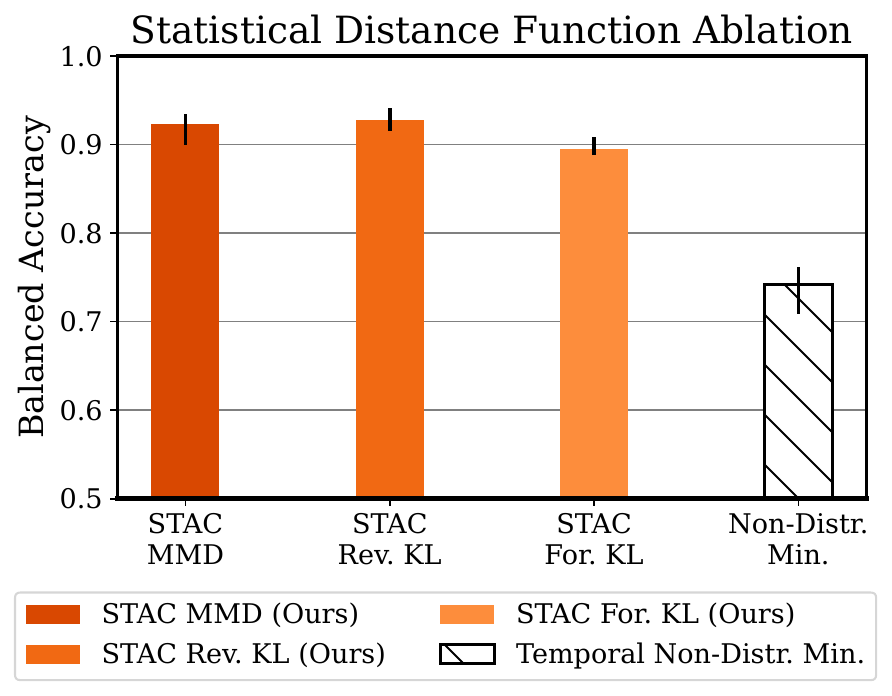}
  \caption[Ablation result on statistical distance function for failure detection (Sentinel)]{\small
        Performance variation of STAC based on the choice of statistical distance function in \textbf{PushT}.
    }
  \label{fig:sentinel-pusht-abl-result-functions}
\end{wrapfigure}

\cref{fig:sentinel-pusht-abl-result-functions} ablates STAC's performance across various choices of statistical distance functions in the \textbf{PushT} domain.
Here, we find that STAC performs comparably across common choices like maximum mean discrepancy (MMD) with RBF kernels and KL-divergence via kernel density estimation (details in \cref{appx:sentinel-stac}).
This finding is corroborated in \cref{tab:sentinel-erratic-failures-full}, where all variants of STAC attain a detection accuracy of over 90\% in the \textbf{Close Box} domain.
In contrast, we observe a large performance drop when using a non-statistical distance function (``Temporal Non-Distr. Min''; \cref{appx:sentinel-stac-baselines}) to measure temporal action consistency.
This performance drop can be attributed to stochastic multimodality of the generative policy, which makes it challenging to sample individual actions that are similar to those at preceding timesteps during policy rollout.
Because the non-statistical distance function is more sensitive than statistical distance functions to stochasticity in action sampling, we see an increased occurrence of false alarms.

\subsection{Extended Results: VLM Runtime Monitor}\label{appx:sentinel-extended-vlm-results}
To supplement the analysis provided in \cref{sec:sentinel-results}, we herein focus on the performance of our VLM runtime monitor and its complementary role to STAC for the detection of erratic and task progression failures.

\paragraph{Erratic Failure Analysis}
\cref{tab:sentinel-erratic-failures-full} presents the extended results of our experiments on the \textbf{Close Box} domain, where we aim to detect erratic policy failures that result from OOD scaling of the box.
We run several evaluations of our VLM runtime monitor, varying the choice of VLM (GPT-4o, Claude 3.5 Sonnet, Gemini 1.5 Pro~\cite{reid2024gemini}) and prompt template (Video QA, Image QA).
Since erratic failures in this domain are assigned to STAC---which detects 99\% of them---we would like the VLM to avoid raising false alarms so as to keep the overall FPR of Sentinel low when the two detectors are combined.

\begin{table*}[h!]
    \centering
    \caption[Extended results on simulated box closing erratic failures (Sentinel)]{\small
        \textbf{Extended results on detecting erratic failures in the Close Box domain.} 
        Our temporal consistency detector, STAC, detects 99\% of erratic failures exhibited by diffusion policies. 
        VLMs raise many false alarms when prompted with just a single image (Image QA), whereas performing Video QA (Ours) leads to a stark increase in TNR across all models.
        Performance also varies across the choice of VLM; GPT-4o is the most reliable in this domain, in contrast to other VLMs that struggle to accurately characterize the robot's task progress in the video. 
        Overall, Sentinel detects 100\% of failures, while combining STAC and the VLM increases false alarms to 13\%.
    } 
    \label{tab:sentinel-erratic-failures-full}
    \adjustbox{max width=\textwidth}{\begin{tabular}{clcccccccc|cccc}
        \toprule
        & \textbf{Category 1: Erratic Failures} & \multicolumn{3}{c}{\textbf{Close Box: In-Distribution}} & & \multicolumn{3}{c}{\textbf{Close Box: Out-of-Distribution}} & & & \multicolumn{3}{c}{\textbf{Close Box: Combined}} \\
        & & \multicolumn{3}{c}{(Policy Success Rate: 91\%)} & & \multicolumn{3}{c}{(Policy Success Rate: 41\%)} & & & \multicolumn{3}{c}{(Policy Success Rate: 67\%)} \\
        \cmidrule{3-5} \cmidrule{7-9} \cmidrule{12-14}
        & \textbf{Failure Detector} & TPR $\uparrow$ & TNR $\uparrow$ & Det. Time (s) $\downarrow$ & & TPR $\uparrow$ & TNR $\uparrow$ & Det. Time (s) $\downarrow$ & & & TPR $\uparrow$ & TNR $\uparrow$ & Accuracy $\uparrow$ \\
        \midrule
        \multirow{6}{*}{\STAB{\rotatebox[origin=c]{90}{\small \textbf{Diffusion}}}}
            & Temporal Non-Distr. Min. & 1.00 & 0.97 & 5.00 & & 1.00 & 0.27 & 12.35 & & & 1.00 & 0.77 & 0.85 \\
            & Diffusion Recon.~\cite{graham2023denoising} & 0.33 & 0.95 & 13.60 & & 0.40 & 1.00 & 17.08 & & & 0.37 & 0.96 & 0.76 \\
            & Temporal Diffusion Recon. & 1.00 & 0.96 & 8.47 & & 0.92 & 1.00 & 15.75 & & & 0.92 & 0.97 & 0.95 \\
            & DDPM Loss (\cref{eq:sentinel-ddpm-loss}) & 1.00 & 0.90 & 8.27 & & 1.00 & 0.94 & 14.54 & & & 1.00 & 0.91 & 0.94 \\
            & Temporal DDPM Loss & 1.00 & 0.95 & 7.53 & & 1.00 & 0.37 & 13.66 & & & 1.00 & 0.79 & 0.86 \\
            & Diffusion Output Variance & 0.33 & 0.94 & 14.00 & & 0.28 & 1.00 & 17.27 & & & 0.26 & 0.96 & 0.72 \\
        \midrule
        \multirow{3}{*}{\STAB{\rotatebox[origin=c]{90}{\footnotesize \textbf{Embed.}}}}
            & Policy Encoder & 0.25 & 0.98 & 16.27 & & 1.00 & 0.00 & 1.59 & & & 0.94 & 0.70 & 0.78 \\
            & CLIP Pretrained & 1.00 & 0.95 & 15.73 & & 1.00 & 0.00 & 8.20 & & & 1.00 & 0.68 & 0.79 \\
            & ResNet Pretrained & 1.00 & 0.95 & 17.87 & & 1.00 & 0.00 & 15.51 & & & 1.00 & 0.68 & 0.79 \\
        \midrule
        \multirow{3}{*}{\STAB{\rotatebox[origin=c]{90}{\small \textbf{STAC}}}}
            & STAC For. KL & 1.00 & 0.90 & 6.60 & & 0.99 & 0.85 & 14.04 & & & 0.99 & 0.89 & 0.92 \\
            & STAC Rev. KL & 1.00 & 0.95 & 7.60 & & 0.93 & 0.97 & 15.12 & & & 0.93 & 0.96 & 0.95 \\
            & \gcl{\textbf{STAC MMD*}} & \gcl{1.00} & \gcl{0.94} & \gcl{7.20} & \gcl{} & \gcl{0.99} & \gcl{0.93} & \gcl{14.72} & \gcl{} & \gcl{} & \gcl{0.99} & \gcl{0.94} & \gcl{0.96} \\
        \midrule
        \multirow{6}{*}{\STAB{\rotatebox[origin=c]{90}{\small \textbf{VLM}}}}
            & GPT-4o Image QA & 1.00 & 0.00 & 23.20 & & 1.00 & 0.00 & 23.20 & & & 1.00 & 0.00 & 0.29 \\
            & \gcl{\textbf{GPT-4o Video QA*}} & \gcl{1.00} & \gcl{0.89} & \gcl{21.20} & \gcl{} & \gcl{0.69} & \gcl{0.95} & \gcl{21.02} & \gcl{} & \gcl{} & \gcl{0.77} & \gcl{0.91} & \gcl{0.87} \\
            & Gemini 1.5 Pro Image QA & 1.00 & 0.00 & 21.20 & & 1.00 & 0.00 & 23.20 & & & 1.00 & 0.00 & 0.29 \\
            & Gemini 1.5 Pro Video QA & 1.00 & 0.57 & 17.20 & & 1.00 & 0.50 & 20.20 & & & 1.00 & 0.55 & 0.68 \\
            & Claude 3.5 Sonnet Image QA & 1.00 & 0.06 & 23.20 & & 0.69 & 0.10 & 23.20 & & & 0.77 & 0.07 & 0.27 \\
            & Claude 3.5 Sonnet Video QA & 0.83 & 0.31 & 23.20 & & 0.44 & 0.40 & 23.20 & & & 0.55 & 0.35 & 0.40 \\
        \midrule
        \midrule
        \rowcolor{green!13}
        \multicolumn{2}{l}{\textbf{Sentinel (STAC MMD* + GPT-4o Video QA*)}} & 1.00 & 0.86 & 5.47 & & 1.00 & 0.90 & 14.25 & & & 1.00 & 0.87 & 0.91 \\
        \bottomrule
    \end{tabular}}
\end{table*}

We first discuss the poor performance of the Image QA baseline. 
When prompted with just a single image, we find that all VLMs struggle to distinguish policy successes from failures and thereby do not exceed a TNR of 7\%. 
Without observing the initial state (i.e., the box with its lids open) and the actions of the robot, the VLM is unable to identify the lids of the box and whether they have been closed.
Thus, once the task's time limit is exceeded, the VLM simply declares failure. 
We iterated on several prompts that included detailed questions in attempt to coerce the VLM to reason about the location of the box and its lids, but this yielded negligible changes in performance.
As a result, all the outputs of the Image QA baseline resemble the following example of a false positive: 
 \begin{mdframed}[backgroundcolor=light-gray, roundcorner=10pt, leftmargin=1, rightmargin=1, innerleftmargin=14, innertopmargin=12,innerbottommargin=0, outerlinewidth=1, linecolor=light-gray]
\begin{lstlisting}[linewidth=\columnwidth,breaklines=true]
Analysis: The current observation shows the manipulator's arms positioned near the white box, with the grippers open and not grasping the lids. The two smaller white side lids and the bigger white back lid of the box are not visible, suggesting they are not yet folded. The elapsed time is 30 out of 30 seconds, which means the robot has reached the time limit for completing the task. Given that the lids are not folded and the task is not completed, the robot is clearly failing the task.

Overall assessment: failure
\end{lstlisting}
\end{mdframed}

Prompting the VLMs in a Video QA setup (\cref{appx:sentinel-vlm}) results in a strict increase in TNR across all models.
Although, depending on the domain, we clearly observe that some VLMs show better visual reasoning performance than others.
On the \textbf{Close Box} domain, GPT-4o generates relatively coherent descriptions of the videos, whereas Claude 3.5 Sonnet and Gemini 1.5 Pro often e.g., confuse closed lids for open lids or fail to recognize that the robot has made any significant progress. 
While this performance discrepancy is difficult to explain, two potential reasons are: a) the \textbf{Close Box} task can require reasoning over long videos (i.e., 20-30 image frames) and, while the VLMs' large context windows are accommodating, the VLMs may still be susceptible to recency bias~\cite{zhao2021calibrate}; b) the images rendered in this domain might be better represented in the training data of one VLM (GPT-4o, in this case) compared to others.

\paragraph{Task Progression Failure Analysis}
In the context of VLM runtime monitoring, task progression failures differ from erratic failures in that they are more visually apparent and hence simpler for the VLM to interpret. 
For example, the policy takes more obviously incorrect actions, e.g., clearly misplacing the cover in the \textbf{Cover Object} domain (see \cref{fig:sentinel-environments}), but it does so in a temporally consistent manner that goes unnoticed by STAC.
Therefore, under task progression failures, we require the VLM to attain both a high TPR and TNR, whereas we are mainly concerned with TNR under erratic failures.
The extended results of our task progression failure experiments are shown in \cref{tab:sentinel-task-progression-failures-cover} for the \textbf{Cover Object} domain and \cref{tab:sentinel-task-progression-failures-close} for the \textbf{Close Box} domain, the two of which are aggregated in \cref{fig:sentinel-full-system-result}.

\begin{table*}[h!]
    \centering
    \caption[Extended results on simulated object covering task-progress failures (Sentinel)]{\small
        \textbf{Extended results on detecting task progression failures in the Cover Object domain.}
        STAC only detects 12\% of task progression failures (i.e., when the policy fails in a temporally consistent manner), which highlights the need for VLM runtime monitoring.
        Claude 3.5 Sonnet exhibits the most reliable detection performance in this domain, however, we use a prompt ensembling technique (details in \cref{appx:sentinel-prompt-ensemble}) to reduce the number of false positives.
        Overall, Sentinel detects 88\% of failures whilst raising 11\% false alarms.
    } 
    \label{tab:sentinel-task-progression-failures-cover}
    \adjustbox{max width=\textwidth}{\begin{tabular}{clcccccccc|cccc}
        \toprule
        & \textbf{Category 2: Task Progression Failures} & \multicolumn{3}{c}{\textbf{Cover Object: In-Distribution}} & & \multicolumn{3}{c}{\textbf{Cover Object: Out-of-Distribution}} & & & \multicolumn{3}{c}{\textbf{Cover Object: Combined}} \\
        & & \multicolumn{3}{c}{(Policy Success Rate: 98\%)} & & \multicolumn{3}{c}{(Policy Success Rate: 3\%)} & & & \multicolumn{3}{c}{(Policy Success Rate: 56\%)} \\
        \cmidrule{3-5} \cmidrule{7-9} \cmidrule{12-14}
        & \textbf{Failure Detector} & TPR $\uparrow$ & TNR $\uparrow$ & Det. Time (s) $\downarrow$ & & TPR $\uparrow$ & TNR $\uparrow$ & Det. Time (s) $\downarrow$ & & & TPR $\uparrow$ & TNR $\uparrow$ & Accuracy $\uparrow$ \\
        \midrule
        \multirow{2}{*}{\STAB{\rotatebox[origin=c]{90}{\small \textbf{Diff.}}}}
            & Temporal Non-Distr. Min. & 1.00 & 0.93 & 2.40 & & 0.06 & 1.00 & 7.60 & & & 0.09 & 0.93 & 0.56 \\
            & Diffusion Output Variance & 0.00 & 0.77 & - & & 0.00 & 1.00 & - & & & 0.00 & 0.77 & 0.44 \\
        \midrule
        \multirow{3}{*}{\STAB{\rotatebox[origin=c]{90}{\small \textbf{STAC}}}}
            & STAC For. KL & 1.00 & 0.95 & 2.40 & & 0.03 & 1.00 & 6.40 & & & 0.06 & 0.95 & 0.56 \\
            & STAC Rev. KL & 1.00 & 0.93 & 2.40 & & 0.03 & 1.00 & 6.40 & & & 0.06 & 0.93 & 0.55 \\
            & \gcl{\textbf{STAC MMD*}} & \gcl{1.00} & \gcl{0.93} & \gcl{2.40} & \gcl{} & \gcl{0.09} & \gcl{1.00} & \gcl{7.73} & \gcl{} & \gcl{} & \gcl{0.12} & \gcl{0.93} & \gcl{0.58} \\
        \midrule
        \multirow{9}{*}{\STAB{\rotatebox[origin=c]{90}{\small \textbf{VLM}}}}            
            & GPT-4o Image QA & 1.00 & 0.07 & 12.00 & & 1.00 & 0.00 & 11.03 & & & 1.00 & 0.07 & 0.47 \\
            & GPT-4o Video QA & 1.00 & 0.05 & 5.60 & & 1.00 & 0.00 & 10.06 & & & 1.00 & 0.05 & 0.46 \\
            & Gemini 1.5 Pro Image QA & 1.00 & 0.05 & 12.00 & & 0.91 & 0.00 & 11.15 & & & 0.91 & 0.05 & 0.42 \\
            & Gemini 1.5 Pro Video QA & 1.00 & 0.00 & 5.60 & & 1.00 & 0.00 & 7.54 & & & 1.00 & 0.00 & 0.44 \\
            & Claude 3.5 Sonnet Image QA & 1.00 & 0.81 & 12.00 & & 0.70 & 1.00 & 12.00 & & & 0.71 & 0.82 & 0.77 \\
            & Claude 3.5 Sonnet Video QA & 1.00 & 0.84 & 12.00 & & 0.79 & 1.00 & 12.00 & & & 0.79 & 0.84 & 0.82 \\
            & Claude 3.5 Sonnet Video QA + Success Video & 1.00 & 0.77 & 12.00 & & 0.94 & 1.00 & 11.59 & & & 0.94 & 0.77 & 0.85 \\
            & Claude 3.5 Sonnet Video QA + Goal Images & 1.00 & 0.93 & 5.60 & & 0.76 & 1.00 & 11.74 & & & 0.76 & 0.93 & 0.86 \\
            & \gcl{\textbf{Claude 3.5 Sonnet Prompt Ensemble* (see \cref{appx:sentinel-prompt-ensemble})}} & \gcl{1.00} & \gcl{0.93} & \gcl{12.00} & \gcl{} & \gcl{0.85} & \gcl{1.00} & \gcl{12.00} & \gcl{} & \gcl{} & \gcl{0.85} & \gcl{0.93} & \gcl{0.90} \\
        \midrule
        \midrule
        \multicolumn{2}{l}{Sentinel (STAC MMD* + Claude 3.5 Sonnet Video QA)} & 1.00 & 0.79 & 2.40 & & 0.82 & 1.00 & 8.68 & & & 0.82 & 0.80 & 0.81 \\
        \rowcolor{green!13}
        \multicolumn{2}{l}{\textbf{Sentinel (STAC MMD* + Claude 3.5 Sonnet Prompt Ensemble*)}} & 1.00 & 0.88 & 2.40 & & 0.88 & 1.00 & 8.69 & & & 0.88 & 0.89 & 0.88 \\
        \bottomrule
    \end{tabular}}
\end{table*}

Among the VLMs considered, we find that Claude 3.5 Sonnet exhibits the best performance in the \textbf{Cover Object} domain (\cref{tab:sentinel-task-progression-failures-cover}), achieving a 79\% TPR and an 84\% TNR when prompted in a Video QA setup.
Qualitative analysis of the responses from GPT-4o and Gemini 1.5 Pro reveals that they misinterpret the videos and thus raise an excessive number of false alarms.
As expected, STAC achieves an appreciable 93\% TNR, but only detects 12\% of task progression failures.
The combination of STAC and Claude 3.5 Sonnet Video QA performs amicably (82\% TPR, 80\% TNR) but can be improved in terms of reducing the number of false positives, the majority of which are raised by the VLM. 
Here, we show that the prompt ensembling strategy discussed in \cref{appx:sentinel-prompt-ensemble} strengthens the reliability of our VLM runtime monitor (``Claude 3.5 Sonnet Prompt Ensemble''), increasing the TNR to 93\%.
Finally, the combination of this improved VLM runtime monitor with STAC results in a version of Sentinel that detects 88\% of failures whilst raising a more acceptable number of false alarms (11\%).

\begin{table*}[h!]
    \centering
    \caption[Extended results on simulated box closing task-progress failures (Sentinel)]{\small
        \textbf{Extended results on detecting task progression failures in the Close Box domain.}
        Here, STAC detects considerably more task progression failures than in the \textbf{Cover Object} domain, yet 35\% of failures are left undetected. 
        As in \cref{tab:sentinel-erratic-failures-full}, we find that video-based reasoning is necessary for VLMs to attain high TNRs, with GPT-4o showing the best performance. 
        Overall, Sentinel detects 96\% of failures whilst raising 14\% false alarms.
    }
    \label{tab:sentinel-task-progression-failures-close}
    \adjustbox{max width=\textwidth}{\begin{tabular}{clcccccccc|cccc}
        \toprule
        & \textbf{Category 2: Task Progression Failures} & \multicolumn{3}{c}{\textbf{Close Box: In-Distribution}} & & \multicolumn{3}{c}{\textbf{Close Box: Out-of-Distribution}} & & & \multicolumn{3}{c}{\textbf{Close Box: Combined}} \\
        & & \multicolumn{3}{c}{(Policy Success Rate: 85\%)} & & \multicolumn{3}{c}{(Policy Success Rate: 0\%)} & & & \multicolumn{3}{c}{(Policy Success Rate: 40\%)} \\
        \cmidrule{3-5} \cmidrule{7-9} \cmidrule{12-14}
        & \textbf{Failure Detector} & TPR $\uparrow$ & TNR $\uparrow$ & Det. Time (s) $\downarrow$ & & TPR $\uparrow$ & TNR $\uparrow$ & Det. Time (s) $\downarrow$ & & & TPR $\uparrow$ & TNR $\uparrow$ & Accuracy $\uparrow$ \\
        \midrule
        \multirow{2}{*}{\STAB{\rotatebox[origin=c]{90}{\small \textbf{Diff.}}}}
            & Temporal Non-Distr. Min. & 1.00 & 0.97 & 4.67 & & 0.67 & - & 7.46 & & & 0.71 & 0.97 & 0.82 \\
            & Diffusion Output Variance & 0.00 & 1.00 & - & & 0.28 & - & 11.57 & & & 0.25 & 1.00 & 0.55 \\
        \midrule
        \multirow{3}{*}{\STAB{\rotatebox[origin=c]{90}{\small \textbf{STAC}}}}
            & STAC For. KL & 1.00 & 0.97 & 5.07 & & 0.61 & - & 8.14 & & & 0.65 & 0.97 & 0.78 \\
            & STAC Rev. KL & 1.00 & 0.97 & 6.13 & & 0.61 & - & 10.11 & & & 0.65 & 0.97 & 0.78 \\
            & \gcl{\textbf{STAC MMD*}} & \gcl{1.00} & \gcl{0.97} & \gcl{5.47} & \gcl{} & \gcl{0.61} & \gcl{-} & \gcl{9.06} & \gcl{} & \gcl{} & \gcl{0.65} & \gcl{0.97} & \gcl{0.78} \\
        \midrule
        \multirow{6}{*}{\STAB{\rotatebox[origin=c]{90}{\small \textbf{VLM}}}}            
            & GPT-4o Image QA & 1.00 & 0.00 & 23.20 & & 1.00 & - & 22.68 & & & 1.00 & 0.00 & 0.60 \\
            & \gcl{\textbf{GPT-4o Video QA*}} & \gcl{1.00} & \gcl{0.89} & \gcl{21.20} & \gcl{} & \gcl{0.87} & \gcl{-} & \gcl{22.00} & \gcl{} & \gcl{} & \gcl{0.88} & \gcl{0.89} & \gcl{0.89} \\
            & Gemini 1.5 Pro Image QA & 1.00 & 0.00 & 21.20 & & 0.96 & - & 23.20 & & & 0.96 & 0.00 & 0.57 \\
            & Gemini 1.5 Pro Video QA & 1.00 & 0.57 & 17.20 & & 0.98 & - & 15.47 & & & 0.98 & 0.57 & 0.82 \\
            & Claude 3.5 Sonnet Image QA & 1.00 & 0.06 & 23.20 & & 0.78 & - & 23.20 & & & 0.81 & 0.06 & 0.51 \\
            & Claude 3.5 Sonnet Video QA & 0.83 & 0.31 & 23.20 & & 0.80 & - & 23.20 & & & 0.81 & 0.31 & 0.61 \\
        \midrule
        \midrule
        \rowcolor{green!13}
        \multicolumn{2}{l}{\textbf{Sentinel (STAC MMD* + GPT-4o Video QA*)}} & 1.00 & 0.86 & 5.47 & & 0.96 & - & 12.20 & & & 0.96 & 0.86 & 0.92 \\
        \bottomrule
    \end{tabular}}
\end{table*}

The results in \cref{tab:sentinel-task-progression-failures-close} reaffirm the following \textbf{key takeaways}: a) image-based VLM reasoning is insufficient for understanding the robot's task progress, thus resulting in low TNRs; b) the VLMs' performances vary across domains, with GPT-4o and Claude 3.5 Sonnet performing the best in the \textbf{Close Box} and \textbf{Cover Object} domains, respectively; c) STAC and the VLM runtime monitor play complementary roles toward a performant overall failure detector across domains.
We expect the performance of our VLM runtime monitor (and thus Sentinel) to improve with the future release of more capable VLMs~\cite{chen2024spatialvlm}, which may also eliminate the discrepancies among VLMs noted above. 

\textbf{Discussion: Why combine failure detectors by taking the union of their predictions?}

Our full failure detector, Sentinel, combines STAC and the VLM by taking the union of their predictions (i.e., the ``Logical OR'' in \cref{fig:sentinel-full-system}), which, in the worst case, compounds their false positive rates by applying the union bound.
However, doing so follows from several design considerations:
\begin{itemize}
    \item \textbf{Importance weighting:} We explicitly define one failure category as the complement of the other and assign a specialized detector to each because it is extremely difficult to design a single detector that captures highly heterogeneous failure modes. Thus, by using the ``OR'' operation, we are placing \textit{equal importance} on the two proposed failure categories.
    We note that our failure detector's primary purpose is to detect unseen failures at deployment time: i.e., we do not assume any data of robot failures to calibrate the detector, which may be necessary to tune importance weights for different detectors.
    Furthermore, importance weights that are optimal on one dataset may perform poorly on failure modes not represented in that data.
    \item \textbf{Performance interpretability:} Using a simple scheme to combine detectors makes it easy to interpret a) the runtime behavior of the combined detector and b) which individual detectors are contributing to performance and when.
    For example, using the logical ``OR'' implies that Sentinel's overall FPR remains acceptably low if both STAC and the VLM runtime monitor have low FPRs.
    STAC achieves a provably low FPR (\cref{appx:sentinel-derivations}), while strategies exist to reduce the VLM's FPR through e.g., prompt ensembling (\cref{appx:sentinel-prompt-ensemble}) or conformal calibration~\cite{knowno2023}.
    Thus, we can expect a low FPR when the detectors are combined.
    We can similarly interpret Sentinel's TPR performance, as exemplified in our experimental analysis.
    Ease of performance interpretation may not hold true with more sophisticated schemes for combining detectors.
    
    \item \textbf{Runtime constraints:} 
    In practice, STAC (fast) and the VLM runtime monitor (slow) come with different inference-time latencies and may need to run at distinct timescales.
    Thus, our logical ``OR'' combination only applies at overlapping timesteps and otherwise allows each failure detector to flag independently, i.e., without synchronizing their detection rates.
    This flexibility is crucial, as more sophisticated combination schemes could encounter issues if e.g., unexpected network latencies result in delayed responses from a cloud-hosted VLM.
\end{itemize}
Nevertheless, we note that more sophisticated combination schemes might offer advantages, such as improved detection performance or scalability when integrating additional detectors. 
Exploring such schemes that align with the above design considerations represents a valuable direction for future work.

\section{Derivations}\label{appx:sentinel-derivations}
To validate our design choices, we show that STAC's score function and calibration procedure in \cref{sec:sentinel-temporal-consistency} provably result in a low FPR. To do so, we apply recently popularized tools from conformal prediction because they are sample efficient and distribution free, meaning that they do not require distributional assumptions on the trajectory rollouts. Our guarantee is a direct application of the standard results in split conformal prediction \cite{angelopoulos2021gentle}, but to ensure the self-containedness of this manuscript, we first briefly reintroduce the core concepts in conformal prediction (taken from \cite{angelopoulos2021gentle}) using the notation in \cref{chapter:sentinel}.

\paragraph{Background on Conformal Inference} In its most basic form, conformal prediction aims to construct a prediction set $\calC$ that will contain the true value of a new test point $X_{\mathrm{test}}$ with a user defined probability of at least $1 - \delta$ \cite{angelopoulos2021gentle}. To do so, a conformal algorithm requires 1) a sequence of calibration samples $\{X^i\}_{i=1}^M$ with all samples $X^1, \dots, X^M, X_{\mathrm{test}}$ i.i.d. and 2) a conformity score function $\eta(X) \in \R$. Intuitively, conformal methods use $\{\eta(X^i)\}_{i=1}^M$ to identify how likely $\eta(X_{\mathrm{test}})$ is to lie within the range of a $1-\delta$ fraction of the calibration samples (i.e., how well $X_{\mathrm{test}}$ conforms to the calibration data). We emphasize that this approach ensures that we construct a valid prediction set $\calC$, regardless of the choice of conformity score and without knowing any properties of the data generating distribution:

\begin{theorem}[Adapted from Thm. D.1 in \cite{angelopoulos2021gentle}]\label{thm:sentinel-conformal}
    Let $\calD_{\mathrm{calib}} = \{X^1, \dots, X^M\}$ be a calibration dataset and let $X_{\mathrm{test}}$ be a test sample. Suppose that the samples in $\calD_{\mathrm{calib}}$ and $X_{\mathrm{test}}$ are independent and identically distributed (i.i.d.). Then, defining 
    \begin{equation*}
        \gamma := \inf \bigg\{ \xi \in \R \ : \ \frac{|\{i : \eta(X^i) \leq \xi\}|}{M} \geq \frac{\lceil (M + 1)(1 - \delta)\rceil}{M}\bigg\}
    \end{equation*}
    as the $\frac{\lceil (M + 1)(1 - \delta)\rceil}{M}$ empirical quantile of the calibration data ensures that
    \begin{equation*}
        \prob \big(\eta(X_{\mathrm{test}}) \leq \gamma\big) \geq 1 - \delta.
    \end{equation*} 
    Here, $\lceil\cdot\rceil$ denotes the ceiling function.
\end{theorem}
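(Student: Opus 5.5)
The plan is the standard exchangeability (rank) argument for split conformal prediction. Write $S_i := \eta(X^i)$ for $i=1,\dots,M$ and $S_{M+1} := \eta(X_{\mathrm{test}})$. Because $X^1,\dots,X^M,X_{\mathrm{test}}$ are i.i.d. and $\eta$ is a fixed function, chosen independently of the data, the scores $S_1,\dots,S_{M+1}$ are i.i.d., and in particular exchangeable. Set $k := \lceil (M+1)(1-\delta)\rceil$. I would first dispose of the degenerate case $k > M$: there the set in the definition of $\gamma$ is empty, so $\gamma=+\infty$ by convention and the claim is trivial. For $k\le M$, the infimum in the definition of $\gamma$ is attained at the $k$-th order statistic $S_{(k)}$ of the calibration scores. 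This holds because the empirical CDF $\xi\mapsto |\{i:S_i\le\xi\}|/M$ is a right-continuous step function that first reaches $k/M$ at $S_{(k)}$.

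The key reduction is an event identity: $\{S_{M+1}\le S_{(k)}\}$ holds exactly when $S_{M+1}$ is among the $k$ smallest of all $M+1$ scores. To make this precise under ties, I would break ties uniformly at random (or by index), which defines a rank $R\in\{1,\dots,M+1\}$ of $S_{M+1}$ in the pooled sample. I would then check that $R\le k$ implies $S_{M+1}\le S_{(k)}$. The reason is that if $S_{M+1}$ has pooled rank at most $k$, then at least $M+1-k$ calibration scores are $\ge S_{M+1}$, so the $k$-th smallest calibration score is $\ge S_{M+1}$. This direction is the only one needed, and it holds with or without ties, since ties only make the inequality $S_{M+1}\le S_{(k)}$ easier.

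By exchangeability together with random tie-breaking, $R$ is uniform on $\{1,\dots,M+1\}$. Hence
\[
\prob\big(\eta(X_{\mathrm{test}})\le\gamma\big)\ \ge\ \prob(R\le k)=\frac{k}{M+1}\ \ge\ 1-\delta .
\]

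The main obstacle is the tie-handling and order-statistic bookkeeping. The steps to get right are the identification of the infimum with $S_{(k)}$, and the direction of the inclusion $\{R\le k\}\subseteq\{S_{M+1}\le S_{(k)}\}$ when scores can coincide. I would handle this first by proving the uniform-rank claim with random tie-breaking, then verifying the inclusion deterministically. Everything else is routine.
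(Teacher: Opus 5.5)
Your argument is correct. The paper gives no proof of its own for this statement and simply cites Thm.~D.1 of \cite{angelopoulos2021gentle}. That proof is exactly your identification $\gamma = S_{(k)}$ followed by the exchangeability/uniform-rank step. It assumes almost surely distinct scores, so your tie-breaking treatment is a mild generalization.

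One small correction is needed. To make $R$ uniform on $\{1,\dots,M+1\}$, the tie-breakers must be i.i.d.\ continuous (e.g., uniform) and independent of the data. Breaking ties by index, as your parenthetical allows, does not give a uniform rank: if all scores coincide, the test point's rank is deterministic. The inclusion $\{R\le k\}\subseteq\{S_{M+1}\le S_{(k)}\}$ holds either way.
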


\paragraph{Conformal guarantee of STAC} The base split conformal procedure outlined by \cref{thm:sentinel-conformal} requires that the samples used for calibration and test are i.i.d. This is not the case for states and actions observed sequentially within a trajectory, complicating the analysis of applying STAC at each timestep within a trajectory. 
Thus, to resolve this issue and provide a guarantee when we sequentially apply STAC on the correlated state-action pairs within a trajectory, we calibrate the detector using the consistency scores generated across full trajectories in \cref{sec:sentinel-temporal-consistency}. This allows us to rigorously bound the FPR using \cref{thm:sentinel-conformal}.
\begin{proposition}[STAC has low FPR]\label{prop:sentinel-stac}
    Let $\calD_\tau = \{\tau^i\}_{i=1}^M \iid P_\tau$ be the validation dataset of successful trajectories, each consisting of $H_i \leq H$ timesteps and drawn i.i.d. from the closed-loop nominal distribution $P_\tau$. For notational simplicity, assume that any trajectory has a length divisible by $k$ (i.e., $H_i \bmod k = 0$). Moreover, let $\eta_t$ be defined as the STAC temporal consistency score at some timestep $0 \leq t \leq H$ in \cref{eq:sentinel-cum-score-fn} and set $\gamma$ equal to the empirical $\frac{\lceil (M + 1)(1 - \delta)\rceil}{M}$ quantile of the terminal STAC scores $\{\eta_{H_i}^i\}_{i=1}^M$ of the trajectories in $\calD_\tau$. 
    Then, the \emph{false positive rate}---that is, the probability that we raise a false alarm at any point during a new successful test trajectory $\tau \sim P_\tau$ of length $H' \leq H$---is at most $\delta$:
    \begin{equation}\label{eq:sentinel-stac-fpr}
        \mathrm{FPR} := \prob_{P_\tau} \big(\exists \ 0 \leq t \leq H' \ \mathrm{s.t.} \ \eta_t > \gamma \big) \leq \delta.
    \end{equation}
\end{proposition}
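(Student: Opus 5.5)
The plan is to reduce the trajectory-level event in \cref{eq:sentinel-stac-fpr} to a single scalar comparison and then invoke \cref{thm:sentinel-conformal}. The key observation is that the STAC score in \cref{eq:sentinel-cum-score-fn} is a cumulative sum of nonnegative terms, $\hat{D}(\cdot,\cdot) \geq 0$. Hence along any trajectory the sequence $\eta_0, \eta_k, \eta_{2k}, \dots$ is nondecreasing in $t$, and $\max_{0\leq t\leq H'} \eta_t = \eta_{H'}$. (Here $\eta_t$ is understood at policy-inference timesteps; by the divisibility assumption $H' \bmod k = 0$, the terminal score $\eta_{H'}$ is well defined.) Consequently the events $\{\exists\, 0\leq t\leq H' : \eta_t > \gamma\}$ and $\{\eta_{H'} > \gamma\}$ coincide. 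The first step is therefore to state and verify this monotonicity carefully, including the convention $\eta_0 = 0$ (empty sum).

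Next, define the conformity score on whole trajectories by $X := \tau$ and $\eta(X) := \eta_{|\tau|}$, the terminal cumulative STAC score computed along $\tau$. This score depends only on the trajectory, including the batches of sampled actions used to form $\hat{D}$; I would treat those samples as part of the random object $X$ so that $\eta$ is a deterministic function of $X$. The calibration set is $\{\tau^i\}_{i=1}^M$ and the test point is $\tau$. These are i.i.d. from $P_\tau$ by assumption, so the exchangeability and i.i.d. hypothesis of \cref{thm:sentinel-conformal} holds even though states within a trajectory are correlated. This is the point of calibrating on terminal scores rather than per-timestep scores.

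With $\gamma$ set as the $\lceil (M+1)(1-\delta)\rceil/M$ empirical quantile of $\{\eta^i_{H_i}\}$, exactly as in \cref{thm:sentinel-conformal}, that theorem gives $\prob(\eta_{H'} \leq \gamma) \geq 1-\delta$. Combined with the first step, this yields $\mathrm{FPR} = \prob(\eta_{H'} > \gamma) \leq \delta$.

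The only real subtlety, and the step I expect to need the most care, is making the i.i.d. reduction rigorous. Two issues must be handled. First, the Monte Carlo randomness in $\hat{D}$ has to be bundled into the trajectory draw, so that calibration and test scores are identically distributed. Second, the test length $H'$ is random, and it is covered by drawing $\tau \sim P_\tau$ with its own length, just as the calibration lengths $H_i$ are drawn. If $\lceil (M+1)(1-\delta)\rceil > M$, then $\gamma = \infty$ and the bound holds trivially.
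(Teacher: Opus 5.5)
Your proposal is correct and follows the paper's proof essentially step for step. Both apply \cref{thm:sentinel-conformal} to the terminal scores, treating whole trajectories (not individual timesteps) as the i.i.d.\ samples, and both use $\hat{D} \geq 0$ to show that $\eta_t$ is nondecreasing, so any threshold crossing implies $\eta_{H'} > \gamma$; your remarks on folding the Monte Carlo action sampling into the trajectory draw and on the $\gamma = \infty$ edge case are sound refinements the paper leaves implicit.
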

\begin{proof}
    Let $H' \leq H$ be the length of the test trajectory $\tau$. If there is no distribution shift, i.e., when the test trajectory $\tau$ is i.i.d. with respect to $\calD_\tau \iid P_\tau$, it holds that $\eta_{H'}$ and $\{\eta_{H_i}^i\}_{i=1}^M$ are i.i.d. Therefore, by \cref{thm:sentinel-conformal}, we have that 
    \begin{equation*}
        \prob_{P_\tau}\big(\eta_{H'} > \gamma \big) \leq \delta.
    \end{equation*}
    Moreover, since we define $\eta_t = \sum_{i=0}^{j-1} \hat{D}(\bar{\pi}_{ik}, \  \tilde{\pi}_{(i+1)k})$ for $t = jk$ in \cref{eq:sentinel-cum-score-fn} and since $\hat{D}(\cdot, \cdot) \geq 0$ because it is a statistical distance, it follows that $\eta_t$ is increasing. That is, $\eta_{0} \leq \eta_{k} \leq \eta_{2k} \leq \dots \leq \eta_{H'}$. Therefore, if $\eta_t$ crosses the threshold $\gamma$ at any time, it also holds that $\eta_{H'} > \gamma$. This immediately implies the proposition, as we then have that 
    \begin{equation*}
        \prob_{P_\tau} \big(\exists \ 0 \leq t \leq {H'} \ \mathrm{s.t.} \ \eta_t > \gamma \big) = \prob_{P_\tau}\big(\eta_{H'} > \gamma \big) \leq \delta.
    \end{equation*}
\end{proof}

We conclude this section with three remarks:
\begin{enumerate}
    \item We only bound the FPR, which ensures that our algorithm does not raise a false alarm with high probability, so that any warnings likely correspond to an OOD scenario. We do so because a system that frequently raises false alarms is impractical to use. Our calibration approach does not guarantee the detection of failures, nor does it guarantee that we do not issue false alarms on OOD successes, as this is not possible without any distributional assumptions on the OOD scenarios or without using failure data for calibration \cite{luo2024online}. Instead, we empirically find that our temporal consistency score performs amicably at detecting failures in our experiments. 
    \item \cref{prop:sentinel-stac} only certifies that the FPR of STAC is low. We make no claims on combined performance of STAC and the VLM, as VLM represents a black-box classifier.
    Future work could investigate methodologies to jointly calibrate an ensemble of failure detectors. 
    \item Conformal guarantees, like those in \cref{thm:sentinel-conformal} and \cref{prop:sentinel-stac}, are \emph{marginal} with respect to the calibration data. That is, they may not hold exactly when given a particular calibration dataset, but if we were to sample thousands of calibration datasets, the guarantees would hold on average. Thus, as expected, STAC does not exactly satisfy \cref{eq:sentinel-stac-fpr} in our results, as compute budgets restricted our experiments to repetitions on a limited number of random seeds.
\end{enumerate}

\chapter{Additional Details: CUPID}\label{chapter:cupid-appx}

\section*{Appendix Overview -- Curating Data your Robot Loves with Influence Functions}
The appendix offers additional details \textit{w.r.t.} the implementation of \basemethod{} (\cref{appx:cupid-method}), the experiments conducted (\cref{appx:cupid-experiments}), along with extended results and analysis (\cref{appx:cupid-results}), and finally, supporting derivations for our data curation methods (\cref{appx:cupid-derivations}). Videos and code are made available at: \url{https://cupid-curation.github.io}.

\startcontents[sections]
\printcontents[sections]{l}{1}{\setcounter{tocdepth}{2}}

\section{Implementation Details}\label{appx:cupid-method}

\subsection{Influence Functions for Diffusion Policies}\label{appx:cupid-actinf-dp}

For ease of reference in this section, we restate the definition of the action influence (\cref{def:actinf}) and the proposition establishing performance influence (\cref{prop:cupid-polinf}), both originally introduced in \cref{sec:cupid-method}.

\paragraph{Restatement of \cref{def:actinf}.}\hspace{-0.8em} 
\textit{The \emph{\textbf{action influence}} of a state-action pair $(s,a)$ on a test state-action pair $(s',a')$ is the influence of $(s,a)$ on the policy's log-likelihood $ \log \pi_\theta(a'|s')$. That is,}
\begin{equation*}
    \actinf((s',a'), (s,a)) := -\nabla_\theta \log\pi_\theta(a'|s')^\top H_{\mathrm{bc}}^{-1}\nabla_\theta \ell(s, a; \pi_\theta).
\end{equation*}

\paragraph{Restatement of \cref{prop:cupid-polinf}.}\hspace{-0.8em}
\textit{Assume that $\theta(\calD) = \arg\min_{\theta'} \calL_{\mathrm{bc}}(\theta'; \calD)$, that $\calL_{\mathrm{bc}}$ is twice differentiable in $\theta$, and that $H_{\mathrm{bc}} \succ 0$ is positive definite (i.e., $\theta(\calD)$ is not a saddle point)\footnoteref{fn:cupid-track}. Then, it holds that}
\begin{equation*}
    \polinf(\xi) = \E_{\tau\sim p(\tau|\pi_\theta)}\bigg[\frac{R(\tau)}{H} \sum_{(s',a')\in\tau}\sum_{(s,a)\in\xi}\actinf\big((s',a'),(s,a)\big)\bigg].
\end{equation*}
\textit{where $\polinf(\xi)$ is the \emph{\textbf{performance influence}} of a demonstration $\xi$ (as introduced in \cref{def:polinf}).}

\subsubsection*{Computing the Action Influence}
Although \cref{prop:cupid-polinf} provides a clean mechanism to attribute policy performance to its training data by leveraging influence scores on action log-likelihoods, computing $\nabla_\theta \log\pi_\theta(a'|s')$ (in the action influence $\actinf$) for diffusion-based policy architectures is nontrivial due to the iterative denoising process~\cite{ho2020denoising, song2020score}. 
Instead, various works outside robotics propose to approximate the log-likelihood with the denoising loss $\ell(s', a'; \pi_{\theta})$ for the purpose of data attribution~\cite{georgiev2023journey}, because the denoising loss is proportionate to the variational lower bound on $\log \pi_\theta(a'|s')$. In \cref{sec:cupid-experiments}, we apply a similar approximation to perform data attribution on state-of-the-art diffusion policies~\cite{chi2023diffusion}, which we describe below.

\textbf{Diffusion Policy:} Consider the standard diffusion policy architecture~\cite{chi2023diffusion}. An action $a := a^0$ is generated by iteratively denoising an initially random action $a^T \sim \calN(0, 1)$ over $T$ steps as $a^T, \ldots, a^0$ using a noise prediction network $\epsilon_\theta$, where $a^i$ denotes the generated action at the $i$-th denoising iteration. Following the imitation learning setting described in \cref{sec:cupid-formulation}, the parameters $\theta$ of the noise prediction network $\epsilon_\theta$ are fit to the BC objective as $\theta = \arg\min_{\theta'} \{\mathcal{L}_{\text{bc}}(\theta'; \mathcal{D}) := \frac{1}{|\calD| H}\sum_{\xi^i\in\calD}\sum_{(s, a) \in \xi^i} \ell(s, a; \pi_{\theta'})\}$.
Here, the noise prediction network $\epsilon_\theta$ is trained to predict random noise $\epsilon^i \sim \calN(0, 1)$ added to the action $a$ at randomly sampled timesteps $i\sim \calU[0, T)$ of the diffusion process using the loss function $\ell$ defined as
\begin{equation}\label{eq:cupid-dp-loss}
    \ell(s, a; \pi_{\theta'}) := \mathbb{E}_{\epsilon^i, i} \left[||\epsilon^i - \epsilon_{\theta'}(\sqrt{\bar\alpha_i} a + \sqrt{1 - \bar{\alpha}_i}\epsilon^i, s, i)||^2 \right],
\end{equation}
where the constants $\bar \alpha_i$ depend on the chosen noise schedule of the diffusion process. 

\textbf{Influence Approximations:} Since the denoising loss $\ell$ in \cref{eq:cupid-dp-loss} is proportionate to the variational lower bound on the action log-likelihood $\log \pi_\theta(a|s)$, it may seem intuitive to substitute $\nabla_\theta\log \pi_\theta(a'|s')$ with $-\nabla_\theta\ell(s', a'; \pi_\theta)$---assuming gradient alignment---to approximate the action influence (\cref{eq:cupid-actinf}) as
\begin{equation}\label{eq:cupid-actinf-approx-loss}
    \actinf((s',a'), (s,a)) \approx \nabla_\theta\ell(s', a'; \pi_\theta)^\top H_{\mathrm{bc}}^{-1}\nabla_\theta \ell(s, a; \pi_\theta).
\end{equation}
A similar approach is taken by \citet{georgiev2023journey} for attributing the generations of image-based diffusion models. However, consistent with more recent results in the data attribution literature~\cite{zheng2023intriguing, lin2024diffusion}, we find this approximation to work poorly in practice, with highly influential training samples $(s, a) \in \mathcal{D}$ rarely reflecting the test-time transitions $(s', a') \in \tau$ over which the action influences are computed. 
Instead, we follow the approach of \citet{zheng2023intriguing}, which entails replacing both $\log \pi_\theta(a'|s')$ and $\ell(s, a; \pi_\theta)$ in \cref{eq:cupid-actinf} with a surrogate, label-agnostic output function $\ell_{\mathrm{square}}(s, a; \pi_\theta) := \mathbb{E}_{\epsilon^i, i} [||\epsilon_{\theta}(\sqrt{\bar\alpha_i} a + \sqrt{1 - \bar{\alpha}_i}\epsilon^i, s, i)||^2]$, making our final approximation of the action influence
\begin{equation}\label{eq:cupid-actinf-approx-square}
    \actinf((s',a'), (s,a)) \approx \nabla_\theta \ell_{\mathrm{square}}(s', a'; \pi_\theta)^\top  H_{\mathrm{square}}^{-1}\nabla_\theta \ell_{\mathrm{square}}(s, a; \pi_\theta).
\end{equation}
Here, $H_{\mathrm{square}} = \frac{1}{|\calD| H}\sum_{\xi^i\in\calD}\sum_{(s, a) \in \xi^i} \nabla_\theta \ell_{\mathrm{square}}(s, a; \pi_\theta) \nabla_\theta \ell_{\mathrm{square}}(s, a; \pi_\theta)^\top$ is the Gauss-Newton approximation of the Hessian---as introduced by \citet{martens2020insights} and applied for stable and efficient influence estimation in \cite{park2023trak, bae2022if}---under the surrogate output function $\ell_{\mathrm{square}}$. 

\textbf{Additional Remarks:} While the use of $\ell_{\mathrm{square}}$ may seem counterintuitive at first, it offers three key advantages for computing action influences: 
\begin{enumerate}
    \item Leave-one-out influences (\cref{sec:cupid-background}) computed using $\ell_{\mathrm{square}}$ (\cref{eq:cupid-actinf-approx-square}) are empirically found to correlate better with actual changes in a diffusion model's loss---i.e., the difference $\ell(s', a'; \pi_{\theta(\mathcal{D} \setminus (s, a))}) - \ell(s', a'; \pi_{\theta(\mathcal{D})})$---than those computed using the loss $\ell$ (\cref{eq:cupid-actinf-approx-loss})~\cite{zheng2023intriguing}.
    \item Theoretical analysis also shows that $\ell_{\mathrm{square}}$ more closely aligns with a distributional formulation of the leave-one-out influence compared to the loss $\ell$~\cite{lin2024diffusion}. In the case of diffusion policies, this distributional formulation would seek to design $\actinf$ such that it approximates the \textit{leave-one-out divergence} $\actinf((s', a')), (s, a)) \approx D_{\mathrm{KL}}( \pi_{\theta(\mathcal{D})} (a' | s') || \pi_{\theta(\mathcal{D} \setminus (s, a))}(a' | s'))$.
    \item Using $\ell_{\mathrm{square}}$ significantly reduces the computational cost of computing action influences for policies with high-dimensional action spaces, because the $\ell^2$-norm collapses the model's prediction into a scalar $||\epsilon_{\theta}(\sqrt{\bar\alpha_i} a + \sqrt{1 - \bar{\alpha}_i}\epsilon^i, s, i)||^2$. As a result, computing \cref{eq:cupid-actinf-approx-square} requires only a single model gradient $\nabla_\theta \ell_{\mathrm{square}}$ per training and test sample. In contrast, while the technique proposed by \citet{lin2024diffusion} offers a more accurate estimate of the leave-one-out divergence $D_{\mathrm{KL}}( \pi_{\theta(\mathcal{D})} (a' | s') || \pi_{\theta(\mathcal{D} \setminus (s, a))}(a' | s'))$, its computational cost scales linearly with the dimensionality of the model's output, which may be prohibitive.
\end{enumerate}

\textbf{Accuracy-Efficiency Tradeoff:} We note that our approach for computing the performance influence of a demonstration (\cref{eq:cupid-policy-inf-deriv}) is agnostic to the choice of influence estimation technique~\cite{georgiev2023journey, zheng2023intriguing, lin2024diffusion, mlodozeniec2024influence, xie2024data}, allowing practitioners to trade off between accuracy and efficiency based on available computational resources, and enabling integration of improved data attribution methods (e.g., \cite{ilyas2025magic}) in the future.

\subsection{CUPID Hyperparameters}\label{appx:cupid-hyperparameters}
We use the same set of hyperparameters for \basemethod{} and \qualitymethod{} across all experiments.

\textbf{Performance Influence (\cref{eq:cupid-policy-inf-deriv}):} For all tasks, we define the trajectory return to be $R(\tau) = 1$ if $\tau$ completes the task and $R(\tau) = -1$ otherwise. As a result, every rollout trajectory $\tau \sim p(\cdot|\pi_\theta)$ provides information on the utility of each demonstration toward the policy's closed-loop performance. We also found \basemethod{} to work with alternative return definitions---for example, focusing solely on successful rollouts by setting $R(\tau) = 0$ when $\tau$ fails. However, such choices may increase sample complexity.

\textbf{Action Influence (\cref{eq:cupid-actinf-approx-square}):} The action influence requires computing the gradient of an expectation $\nabla_\theta\ell_{\mathrm{square}}(s, a; \pi_\theta) = \nabla_\theta\mathbb{E}_{\epsilon^i, i} [||\epsilon_{\theta}(\sqrt{\bar\alpha_i} a + \sqrt{1 - \bar{\alpha}_i}\epsilon^i, s, i)||^2]$. For all tasks, we approximate the expectation using a batch of $B = 64$ samples $(\epsilon^{(b)}, i^{(b)})$, where $\epsilon^{(b)} \sim \mathcal{N}(0, 1)$ and $i^{(b)} \sim \mathcal{U}[0, T)$ are sampled independently.

\textbf{Data Attribution:} We leverage TRAK~\cite{park2023trak} to efficiently compute action influences as defined in \cref{eq:cupid-actinf-approx-square}. First, TRAK uses random projections $\mathbf{P} \sim \calN(0, 1)^{p \times d}$, where $p$ is the number of model parameters and $d << p$ is the specified projection dimension, to reduce the dimensionality of the gradients as $g_\theta = \mathbf{P}^\top\nabla_\theta \ell_{\mathrm{square}}$ while preserving their inner products $g_\theta \cdot g_\theta \approx \nabla_\theta \ell_{\mathrm{square}} \cdot \nabla_\theta \ell_{\mathrm{square}}$~\cite{johnson1984extensions}. Second, TRAK ensembles influence scores over $C$ independently trained models (i.e., from different seeds) to account for non-determinism in learning. In our experiments, we use the standard projection dimension $d = 4000$ and minimize computational cost by using only a single policy checkpoint $C = 1$, noting that ensembling over $C > 1$ policy checkpoints is likely to improve the accuracy of our influence scores.

\subsection{Combining Score Functions}\label{appx:cupid-comb-score-fns}
For ease of exposition in \cref{sec:cupid-methods-quality}, we express the overall score of a demonstration as the convex combination of its performance influence and its quality score $\alpha \polinf + (1-\alpha)\Psi_{\mathrm{qual}}$, where $\alpha = 1$ and $\alpha \in [0, 1)$ instantiates \basemethod{} and \qualitymethod{}, respectively. Here, we additionally note that taking weighted combinations of score functions requires first normalizing them to equivalent scales. 
Hence, our implementation uniformly normalizes demonstration scores within the range $[0, 1]$ (i.e., producing an absolute ranking of demonstrations) for each score function $\polinf$ and $\Psi_{\mathrm{qual}}$ before combining them.
This simple approach can be applied to combine an arbitrary number of demonstration score functions.

\section{Experimental Setup}\label{appx:cupid-experiments}

\subsection{Hardware Setup}
As depicted in \cref{fig:cupid-franka-dp-results}, our hardware experiments involve a Franka FR3 manipulator robot. 
We use a single ZED 2 camera to capture RGB-D observations and disregard the depth information. 
Our image-based policies process $256\times256$ downsampled RGB observations and predict sequences of end-effector poses for the manipulator, which are tracked using operational space control~\cite{khatib2003unified}.

\subsection{Policy Architectures}

\textbf{Diffusion Policy (DP):} We use the original diffusion policy implementation\footnote{DP's open-source implementation: \url{https://github.com/real-stanford/diffusion_policy}.} from \citet{chi2023diffusion}. Specifically, we use the convolutional-based diffusion policy architecture for efficiency. For state-based tasks (e.g., in RoboMimic; \cref{fig:cupid-robomimic-dp-results}), actions are generated solely using the noise prediction network $\epsilon_\theta$ as described in \cref{appx:cupid-actinf-dp}. However, for image-based tasks (e.g., on hardware; \cref{fig:cupid-franka-dp-results}), the policy $\pi_\theta$ contains two sets of parameters $\theta = (\theta_o, \theta_a)$ corresponding to a ResNet-18 encoder $E_{\theta_o}$ and the noise prediction network $\epsilon_{\theta_a}$. When scoring demonstrations, we compute action influences (\cref{eq:cupid-actinf-approx-square}) over all available policy parameters $\theta$, noting that one might also consider using a subset of the parameters, e.g., those of the noise prediction network or an alternative action head, under reduced computational budgets. 

\textit{Other optimizations:} In preliminary experiments, we found that the original diffusion policy (a) was heavily over-parameterized and (b) converged in performance much earlier in training than the specified maximum number of epochs. Thus, to accelerate experimentation in RoboMimic (\cref{fig:cupid-robomimic-dp-results}), we (a) manually determined the smallest model size that performed similarly to the original policy and (b) adjusted the maximum number of epochs to the point where additional training would result in no further performance gains. Importantly, we keep the model size and training epochs consistent across all curation methods for a given RoboMimic task. For real-world hardware experiments, we use the same model size and limit the number of training steps to 200K across all tasks, similar to \citet{hejna2025robotdatacurationmutual}. All other  diffusion policy hyperparameters are consistent with the original implementation~\cite{chi2023diffusion}. 

\begin{wraptable}{r}{0.34\linewidth}
    \small
    \caption[Hyperparameters for PI-0 multi-task policy fine-tuning (CUPID)]{\small
        \textbf{Hyperparameter configuration} used for $\pi_0$~\cite{black2410pi0} post-training.
    }
    \label{tab:cupid-pi0-params}
    \adjustbox{max width=\linewidth}{
    \begin{tabular}{l|l}
        \toprule
        \textbf{Hyperparameter}        & \textbf{Value}                    \\
        \midrule
        Training steps                 & 30{,}000                          \\
        Batch size                     & 16                                \\
        Optimizer                      & AdamW                             \\
        Learning rate schedule         & Cosine decay                      \\
        EMA                            & Disabled                          \\
        Action chunk length            & 50 steps                          \\
        Control frequency              & 10 Hz                             \\
        Image resolution               & $224 \times 224$                  \\
        Observation history            & 1 frame                           \\
        \midrule
        VLM backbone LoRA              & Rank = 16, $\alpha = 16$          \\
        Action expert LoRA             & Rank = 32, $\alpha = 32$          \\
        \bottomrule
    \end{tabular}}
\end{wraptable}

\textbf{Generalist Robot Policy ($\pi_0$):} 
We fine-tune \texttt{Physical Intelligence}'s $\pi_0$ Vision-Language-Action (VLA) policy\footnote{$\pi_0$'s open-source implementation: \url{https://github.com/Physical-Intelligence/openpi}.} via Low-Rank Adaptation (LoRA)~\cite{hu2022lora} on the ``Figure-8'' and ``TuckBox'' tasks. To ensure the post-trained policy's performance is solely a result of the properties of the curated dataset used for training, we use the standard fine-tuning parameter configuration from \citet{black2410pi0} and keep all hyperparameters fixed across experiments (see \cref{tab:cupid-pi0-params}). We trained on 2 NVIDIA RTX 4090 GPUs, which took approximately 15 hours under the configuration in \cref{tab:cupid-pi0-params}. In initial experiments, we found that training for 30K steps was necessary to compensate for mismatch between our robot's action space (target end-effector poses tracked via operational space control) and the action spaces used to pre-train the base $\pi_0$ policy (absolute joint angles). In addition, we found that using a descriptive prompt for the task was necessary to yield performant policies.
We kept these prompts fixed across training, evaluation, and all curation settings. For the ``TuckBox'' task, we used the instruction ``Move the blue box underneath the white shelf'' to avoid biasing the policy towards a particular behavior mode (e.g., ``sliding'' or ``pick-and-place''). For the ``Figure-8'' task, we used the instruction ``Pick up the red rope, then tie a figure 8,'' where we found the two-step instruction to increase performance over shorter instructions like ``Tie the cleat.'' Similar to the diffusion policy experiment, we fine-tune a separate $\pi_0$ model for each curation task---filter-$k$ (\cref{task:filter-k}) and select-$k$ (\cref{task:select-k})---using their corresponding base demonstration datasets.
We then fine-tune additional $\pi_0$ models on datasets curated by our methods.

\subsection{Tasks and Datasets}\label{appx:cupid-tasks}
Here, we provide additional details regarding our real-world hardware tasks and their corresponding datasets. We refer to \citet{pmlr-v164-mandlekar22a} for details on the simulated RoboMimic benchmark. 

\textbf{Figure-8:}
A brief description of the task is provided in \cref{sec:cupid-exp-quality}. The ``Figure-8'' dataset contains 160 demonstrations evenly split across four \textit{quality tiers}. Higher quality demonstrations complete the task at a constant rate without errors, while lower-quality demonstrations vary in progression rate~\cite{agia2024unpacking} and include retry or recovery behaviors. Therefore, the ``Figure-8'' task intends to reflect a practical setting where demonstrations of varying properties are introduced during data collection, whether organically or deliberately, e.g., to improve policy robustness to recoverable failures~\cite{dai2024racer}. Therefore, we expect curation algorithms that distinguish demonstrations upon notions of quality (e.g., predictability~\cite{hejna2025robotdatacurationmutual}) to perform well on this task, which is consistent with our findings in \cref{fig:cupid-franka-dp-results}(a) and \cref{fig:cupid-franka-pi0-transfer-results}(a).

\textbf{TuckBox:}
A brief description of the task is provided in \cref{sec:cupid-exp-strategies}. As mentioned, the ``TuckBox'' dataset contains 120 demonstrations split 2:1 between two subsets: 80 demonstrations solve the task by sliding the box under the receptacle, while 40 demonstrations first reposition the box in front of the receptacle via pick-and-place. Although the sliding strategy appears more smooth and involves just a single step, it is rendered unreliable by imperceptible test-time distribution shifts to the box's mass distribution. As such, ``TuckBox'' stands conceptually opposite to ``Figure-8,'' whereby attending to heuristic properties of demonstrations (e.g., quality) may result in poor curation performance (as shown in \cref{fig:cupid-franka-dp-results}(b)).

\textbf{Bookshelf:}
A brief description of the task is provided in \cref{sec:cupid-exp-correlations}. To summarize, the robot must extract a target book that is either shelved alone---affording a simple, horizontal pulling motion---or with another book stacked on top of it (i.e., a \textit{bookstack}). In the bookstack case, the robot must extract the target book using a vertical pulling motion, such that the stacked book does not fall off the shelf in the process (see \cref{fig:cupid-franka-dp-results}(c)). In total, the ``Bookshelf'' dataset contains 120 demonstrations split across three subsets: (a) 60 demonstrations feature the target book shelved alone with a white background, (b) 20 demonstrations feature the bookstack with a white background, and (c) 40 demonstrations feature the bookstack with a dark background. All subsets feature task-irrelevant distractor books on other shelves.

\textit{Spurious correlations in training data:} Although the vertical pulling solution to the bookstack case is demonstrated in scenes with both white and dark backgrounds, the disproporionate number of demonstrations in subset (a) versus subset (b) spuriously correlates the horizontal pulling motion with the white background. Such spurious correlations may result in \textit{causal confusion}~\cite{de2019causal}, where the policy ignores the bookstack, attends the white background, and executes the failing horizontal strategy.

\textit{Spurious correlations in rollout data:} Like ``TuckBox,'' ``Bookshelf'' represents another limiting case for curating data with quality metrics~\cite{hejna2025robotdatacurationmutual}. However, it also presents an additional challenge for methods that seek to curate data using online experience~\cite{chen2025curating}. For example, approaches that attend to differences in states between successful and failed policy rollouts may be susceptible to spurious correlations in the rollout data. Consider the simple case: if we were to observe successful rollouts when the target book is shelved alone and failed rollouts when another book is stacked above the target, then training a classifier (i.e., as in Demo-SCORE~\cite{chen2025curating}) to distinguish successful from failed states may wrongly attribute failures to the presence of the stacked book. Curating demonstrations with such a classifier would, in turn, worsen the spurious correlation in the training data. Thus, we posit that handling more challenging cases of spurious correlations in real-world data will require methods that \textit{causally attribute} the outcomes of observed test-time experiences to the training data, such as \basemethod{}.

\subsection{Baseline Details}\label{appx:cupid-baselines}

\textbf{DemInf:} We use the official implementation\footnote{DemInf open-source implementation: \url{https://github.com/jhejna/demonstration-information}.} provided by \citet{hejna2025robotdatacurationmutual}. We note that DemInf curates data offline---that is, without using any policy rollouts---and is at present only applicable to the demonstration filtering setting (i.e., filter-$k$, as defined in \cref{task:filter-k}). 

\textbf{Demo-SCORE:} We construct our own implementation based on the description provided by the authors~\cite{chen2025curating}. Given our assumed fixed budget of $m = 100$ rollouts for RoboMimic experiments (\cref{sec:cupid-experiments}), we collect 25 rollouts from $C = 4$ policy checkpoints throughout training. We train three-layer MLP classifiers with hidden dimensions $[16, 16, 16]$ on the first three rollout sets, and select the best classifier via cross-validation on the last 25 rollouts, as described in \cite{chen2025curating}. Since we reduce the rollout budget to $m = 25$ rollouts for hardware experiments (\cref{sec:cupid-experiments}), we collect 25 rollouts from the last $C = 1$ policy checkpoint. We then train a single ResNet-18 encoder and three-layer classification head with hidden dimensions $[32, 32, 32]$ on 20 of the rollouts, leaving 5 validation rollouts to monitor for overfitting. We train all classifiers with a heavy dropout of $0.3$ and an AdamW weight decay of $0.1$ to prevent overfitting, in alignment with \cite{chen2025curating}.
Although \citet{chen2025curating} only test Demo-SCORE for demonstration filtering, we extend its use for demonstration selection (i.e., select-$k$, as defined in \cref{task:select-k}).

\textbf{Success Similarity:}
We design a custom robot data curation algorithm  that, similar to Demo-SCORE, valuates demonstrations based on a heuristic measure of similarity \textit{w.r.t.} successful policy rollouts. Instead of training classifiers, Success Similarity measures the average state-embedding similarity of a demonstration \textit{w.r.t.} all successful rollouts as 
\begin{equation*}
    S(\xi; \calD_\tau) = -\sum_{\tau \in \calD_\tau} \bigg[\mathbf{1}(R(\tau) = 1) \cdot \frac{1}{H^2} \sum_{s'\in\tau}\sum_{s\in\xi} D\big(\phi(s'), \phi(s)\big) \bigg],
\end{equation*}
where the indicator function $\mathbf{1}$ evaluates to 1 if rollout $\tau$ is successful and 0 otherwise, $H$ is the assumed length of all demonstrations $\xi \in \calD$ and rollouts $\tau \in \calD_\tau$ for notational simplicity, $\phi$ is the state embedding function, and $D$ is a specified distance function over state embeddings~\cite{sinha2024real}, such as the Mahalanobis, L2, or cosine distance. 
For image-based states, we experimented with various embedding functions $\phi$, including ResNet~\cite{he2016deep}, DINOv2~\cite{oquab2023dinov2}, and the policy's vision encoder~\cite{agia2024unpacking}, and ultimately found the policy's vision encoder to work best in RoboMimic. The embedding function is set to identity for low-dimensional states (i.e., $\phi(s) = s$). 
Lastly, the distance function $D$ is chosen for compatibility with $\phi$: e.g., L2 distance for policy encoder embeddings and cosine distance for DINOv2 embeddings. 

\textit{Comparison to Performance Influence (\basemethod{}):} One can interpret Success Similarity as replacing the action influence $\actinf((s',a'),(s,a))$ (\cref{eq:cupid-actinf}) with a state-based proxy $-D(\phi(s'), \phi(s))$ in an attempt to estimate the performance contribution of a demonstration (\cref{eq:cupid-policy-inf-deriv}). In our RoboMimic experiments (\cref{fig:cupid-robomimic-dp-results}), this approach performs comparably to Demo-SCORE and, in some cases, even outperforms it---without requiring the training of any additional models. However, Success Similarity performs consistently worse than \basemethod{} across all tasks, supporting prior findings that influence functions offer a substantially stronger causal signal than heuristic measures of similarity~\cite{park2023trak}.

\textbf{Oracle:} For each task, the Oracle method represents a best attempt to curate data assuming privileged access to ground-truth demonstration labels. For the RoboMimic and ``Figure-8'' tasks, the Oracle ranks demonstrations in descending order of quality, choosing high-quality demonstrations before low-quality demonstrations. For the ``TuckBox'' task, the Oracle first chooses all demonstrations exhibiting the more robust pick-and-place strategy before any demonstration exhibiting the more brittle sliding strategy. Lastly, for the ``Bookshelf'' task, the Oracle chooses demonstrations to minimize the effect of the \textit{known} spurious correlation (i.e., horizontal pulling motion in the presence of a white background), resulting in a more balanced curated dataset. These definitions of the Oracle apply identically to the filter-$k$ (\cref{task:filter-k}) and select-$k$ (\cref{task:select-k}) curation tasks studied throughout this chapter.

\textbf{Additional baselines:} We implement a number of additional custom baselines that one might try in practice, such as curating data based on policy loss, policy uncertainty, state diversity, and action diversity. However, we exclude them from our experiments given their relatively poor performance.

\section{Additional Results and Analysis}\label{appx:cupid-results}

We present additional results and ablations for our RoboMimic and Franka real-world tasks that were cut from the main text due to space constraints. 

\subsection{Extended Discussion on RoboMimic Results (\cref{sec:cupid-discussion-properties})}\label{appx:cupid-results-robomimic-discussion}

We provide an extended discussion on \cref{sec:cupid-discussion-properties} for our RoboMimic simulation results. 

\textit{Performance versus Data Quality:} One of our key findings is that the performance of a state-of-the-art policy does not strictly correlate with the \textit{perceived quality} of its training data. Factors such as redundancy, balance, and coverage of the dataset all play a role in determining policy performance. This is illustrated in the Oracle filter-$k$ results (left three plots of \cref{fig:cupid-robomimic-dp-results}). While the top row shows a monotonic increase in average dataset quality as lower-quality demonstrations are filtered out, the bottom row reveals (1) a consistent performance drop for diffusion policies on 2 out of 3 tasks, and (2) as expected, performance degradation when too many demonstrations are removed. Similar analysis applies to the select-$k$ setting. These results highlight two important points: First, the impact of dataset curation should not be judged by quality labels alone, but by the downstream performance of models trained on curated datasets. Second, determining how much data to curate (i.e., the $k$ in filter-$k$ and select-$k$) remains another key challenge for effective data curation in practice.

\textit{Performance versus Task Complexity:} We evaluate curation performance across three RoboMimic tasks of increasing complexity---``Lift MH,'' ``Square MH,'' and ``Transport MH.'' On the simplest task, “Lift MH,” diffusion policies achieve 100\% success despite training on all demonstrations, indicating that low-quality demonstrations have minimal impact and can be safely filtered. We observe a similar trend for the moderately difficult ``Square MH'' task, where the policy benefits from access to all demonstrations regardless of their quality. However, performance degrades more quickly as demonstrations are filtered, suggesting increased sensitivity to data quantity due to the task’s higher complexity relative to ``Lift MH.'' Finally, on the challenging ``Transport MH'' task, which requires precise bi-manual coordination, both \basemethod{} and \qualitymethod{} significantly outperform the base policy. These results suggest that curation of mixed-quality datasets is most beneficial for complex, precision-critical tasks, where training on lower-quality data is more likely to hinder performance.

\subsection{Ablation on Number of Policy Rollouts in RoboMimic (\cref{sec:cupid-discussion-rollouts})}

We conduct an ablation study in RoboMimic evaluating the quality of datasets curated by \basemethod{} and \qualitymethod{} under varying numbers of rollouts, $m \in \{1, 5, 10, 25, 50, 100\}$. The results for state-based and image-based diffusion policies are shown in \cref{fig:cupid-robomimic-state-data-quality-rollout-appx} and \cref{fig:cupid-robomimic-image-data-quality-rollout-appx}, respectively. For ``Lift MH'' and ``Square MH,'' performance influences (\cref{eq:cupid-policy-inf-deriv}) stabilize around $m \in [25, 50]$, yielding quality trends similar to those obtained with $m = 100$. For ``Transport MH,'' quality trends continue to evolve until approximately $m \in [50, 100]$ rollouts, indicating that more rollouts are beneficial for accurate influence estimation in complex task settings---where curation has the greatest effect on performance.

\begin{figure}[H]
    \centering
    \includegraphics[width=\linewidth]{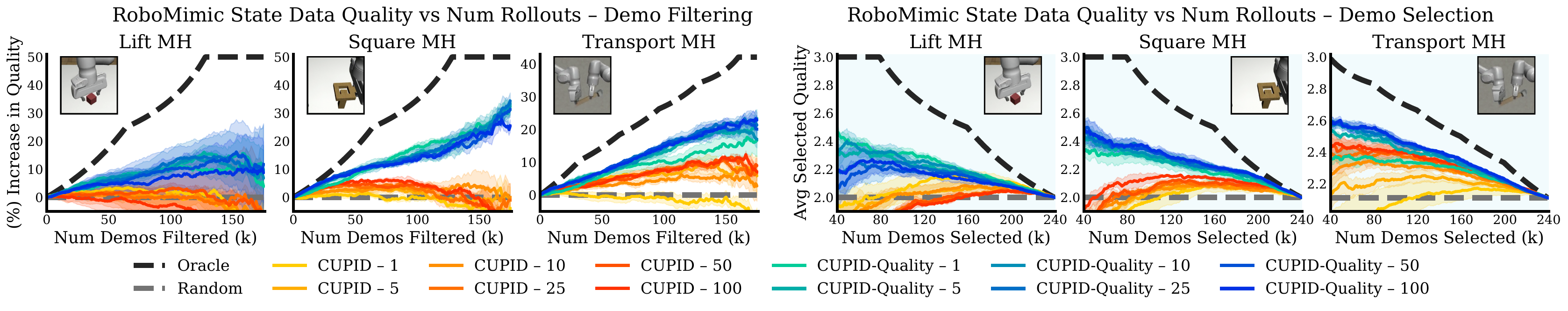}
    \caption[Ablation result on number of policy rollouts in RoboMimic state (CUPID)]{\small
        RoboMimic state ablation: Data quality trends under varying number of rollouts. Performance influences (\cref{eq:cupid-policy-inf-deriv}) converge around $m \in [25, 50]$ rollouts for ``Lift MH'' and ``Square MH'' (yielding similar quality trends), but continue to evolve until $m \in [50, 100]$ rollouts for ``Transport MH.'' 
        Curation performed on state-based diffusion policies.
        Results are averaged over 3 random seeds.
        Errors bars represent the standard error.
    }
    \label{fig:cupid-robomimic-state-data-quality-rollout-appx}
\end{figure}

\begin{figure}[H]
    \centering
    \includegraphics[width=\linewidth]{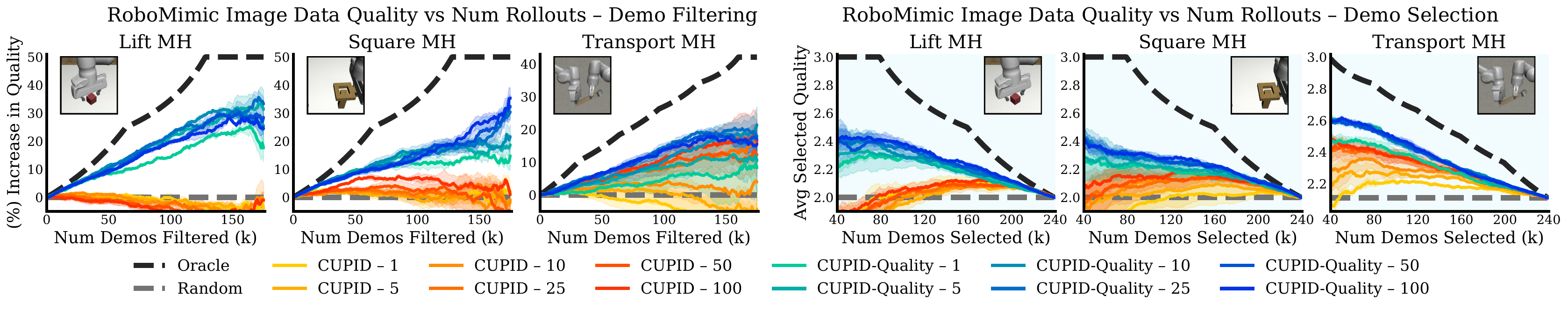}
    \caption[Ablation result on number of policy rollouts in RoboMimic image (CUPID)]{\small
        RoboMimic image ablation: Data quality trends under varying number of rollouts. Performance influences (\cref{eq:cupid-policy-inf-deriv}) converge around $m \in [25, 50]$ rollouts for ``Lift MH'' and ``Square MH'' (yielding similar quality trends), but continue to evolve until $m \in [50, 100]$ rollouts for ``Transport MH.''
        Curation performed on image-based diffusion policies.
        Results are averaged over 3 random seeds. 
        Errors bars represent the standard error.
    }
    \label{fig:cupid-robomimic-image-data-quality-rollout-appx}
\end{figure}

\subsection{Additional Data Quality Results in RoboMimic}

We provide full data quality results in RoboMimic. \cref{fig:cupid-robomimic-state-data-quality-appx} is identical to the top row of \cref{fig:cupid-robomimic-dp-results} in the main text, but also includes data quality trends for select-$k$ curation on ``Lift MH.'' \cref{fig:cupid-robomimic-image-data-quality-appx} shows data quality results for image-based diffusion policies. We do not retrain image-based policies on curated datasets (as in the bottom row of \cref{fig:cupid-robomimic-dp-results}) due to the substantial computational resources required.

\begin{figure}[H]
    \centering
    \includegraphics[width=\linewidth]{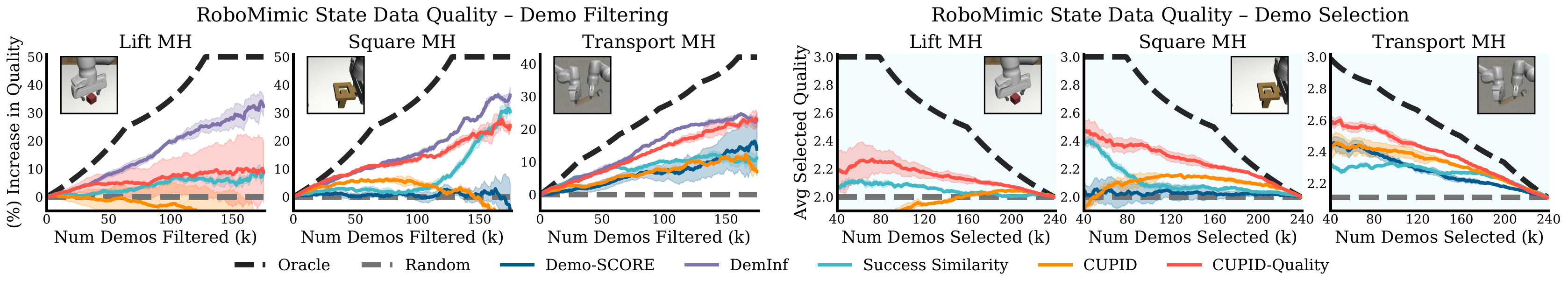}
    \caption[Extended results on curated data quality in RoboMimic state (CUPID)]{\small
        RoboMimic state data quality results. Curation performed on state-based diffusion policies. Results are averaged over 3 random seeds. Errors bars represent the standard error.
    }
    \label{fig:cupid-robomimic-state-data-quality-appx}
\end{figure}

\begin{figure}[H]
    \centering
    \includegraphics[width=\linewidth]{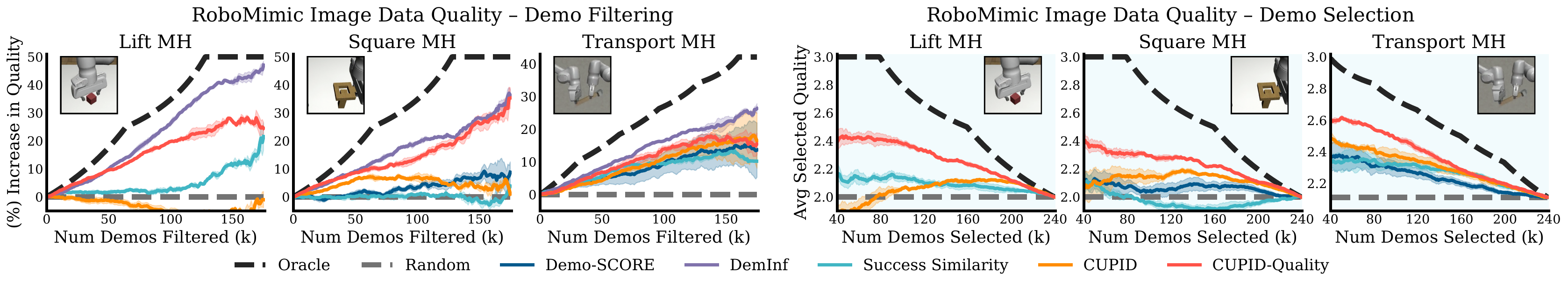}
    \caption[Extended results on curated data quality in RoboMimic image (CUPID)]{\small
        RoboMimic image data quality results. Curation performed on image-based diffusion policies. Results are averaged over 3 random seeds. Errors bars represent the standard error.
    }
    \label{fig:cupid-robomimic-image-data-quality-appx}
\end{figure}

\subsection{Data Filtering Curation Distributions in Franka Real-World}\label{appx:cupid-filter-distr}

\begin{figure}[H]
    \centering
    
    \begin{subfigure}[b]{\linewidth}
        \centering
        \includegraphics[width=0.95\linewidth]{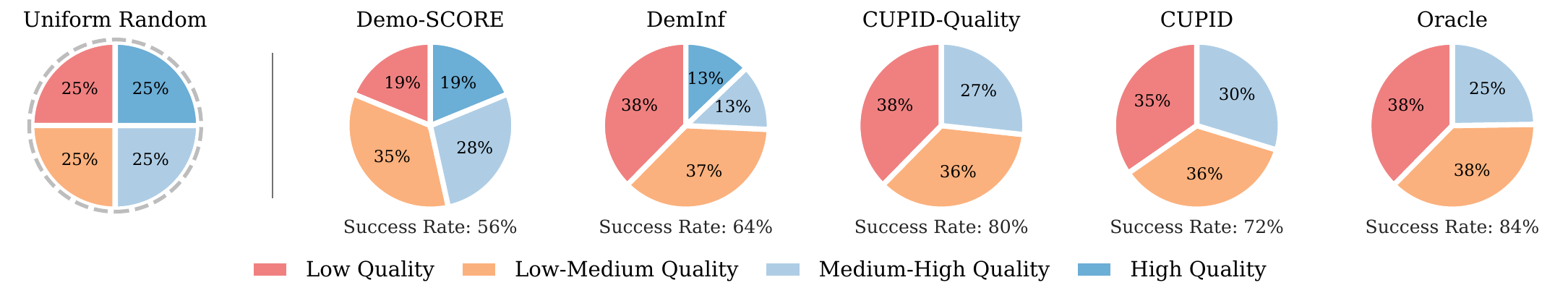}
        \vspace{-6pt}
        \caption{\footnotesize 
            \textbf{Figure-8:} Distribution of demonstrations \textit{filtered}. Filtering lower-quality demos is better.
        }
        \label{fig:cupid-franka-dp-distr-filter-curated-results-figure8}
    \end{subfigure}

    \vspace{18pt}

    \begin{subfigure}[b]{\linewidth}
        \centering
        \includegraphics[width=0.95\linewidth]{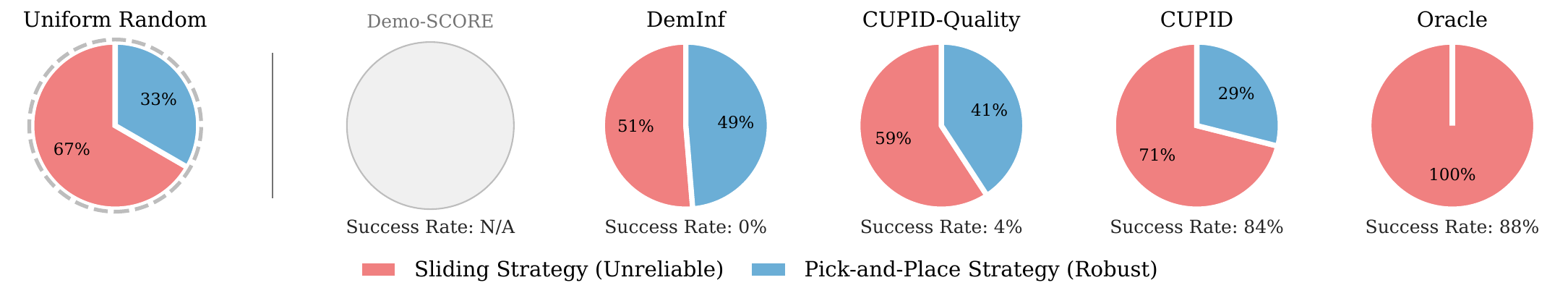}
        \vspace{-6pt}
        \caption{\footnotesize
            \textbf{TuckBox:} Distribution of demonstrations \textit{filtered}. Filtering sliding demos is better.
        }
        \label{fig:cupid-franka-dp-distr-filter-curated-results-tuckbox}
    \end{subfigure}
    
    \vspace{18pt}
    
    \begin{subfigure}[b]{\linewidth}
        \centering
        \includegraphics[width=0.95\linewidth]{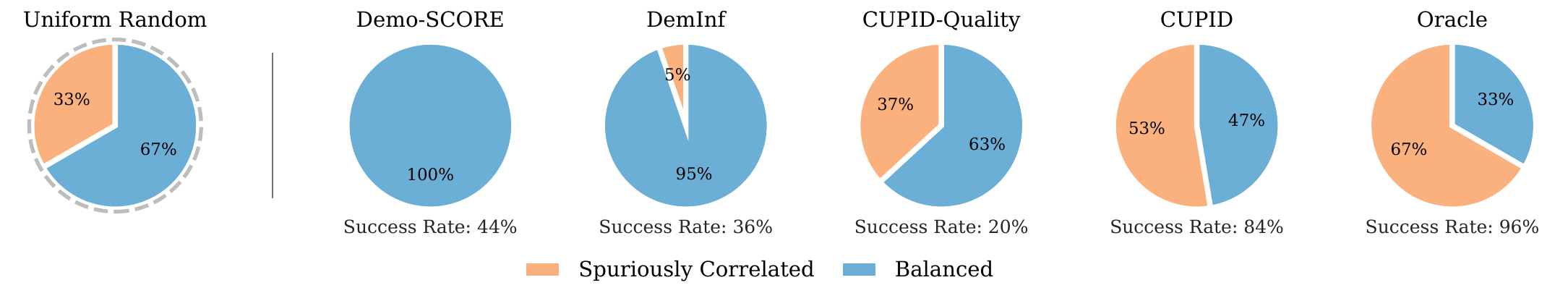}
        \vspace{-6pt}
        \caption{\footnotesize
            \textbf{Bookshelf:} Distribution of demonstrations \textit{filtered}. Filtering spurious correlations is better.
        }
        \label{fig:cupid-franka-dp-distr-filter-curated-results-bookshelf}
    \end{subfigure}
    
    \vspace{8pt}   
    
    \caption[Distributions of demos filtered on real-world tasks (CUPID)]{\small
        \textbf{Franka diffusion policy -- distribution of demonstrations filtered ($S^\star$ in \cref{task:filter-k}).} See \cref{fig:cupid-franka-dp-distr-filter-dataset-results} for distributions of the corresponding curated datasets used for policy training.\\
    }
    \label{fig:cupid-franka-dp-distr-filter-curated-results}
\end{figure}

\subsection{Data Selection Curation Distributions in Franka Real-World}\label{appx:cupid-select-distr}

\begin{figure}[H]
    \centering
    
    \begin{subfigure}[b]{\linewidth}
        \centering
        \includegraphics[width=0.90\linewidth]{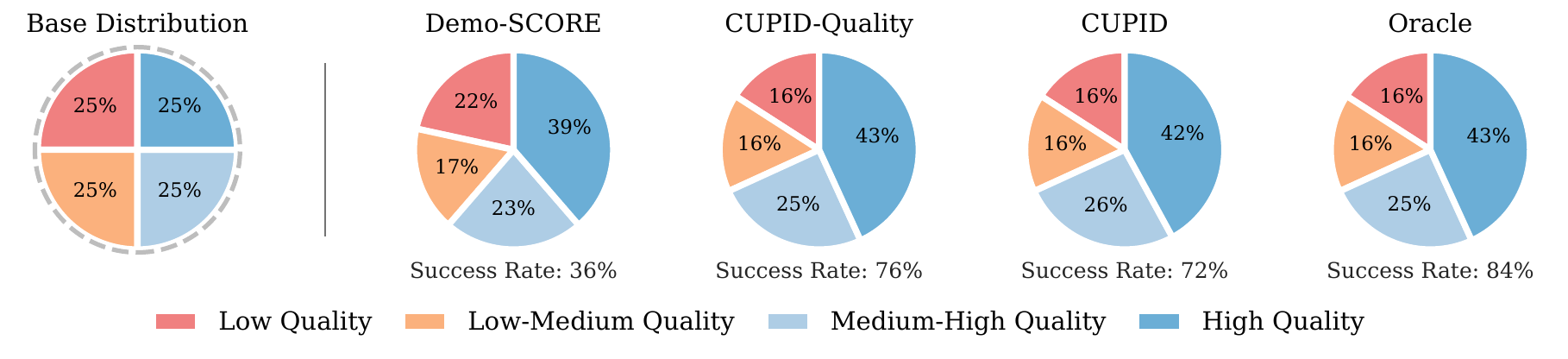}
        \vspace{-6pt}
        \caption{\footnotesize 
            \textbf{Figure-8:} Distribution of curated demonstrations after \textit{selecting} 33\%. Higher-quality demos are better.
        }
        \label{fig:cupid-franka-dp-distr-select-dataset-results-figure8}
    \end{subfigure}

    \vspace{18pt}

    \begin{subfigure}[b]{\linewidth}
        \centering
        \includegraphics[width=0.90\linewidth]{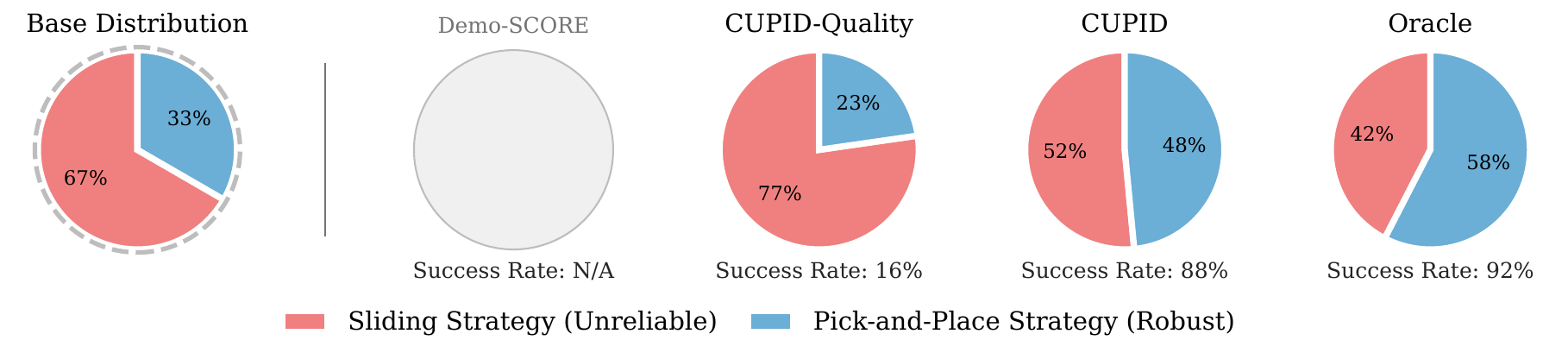}
        \vspace{-6pt}
        \caption{\footnotesize            
            \textbf{TuckBox:} Distribution of curated demonstrations after \textit{selecting} 33\%. Pick-and-place demos are better.
        }
        \label{fig:cupid-franka-dp-distr-select-dataset-results-tuckbox}
    \end{subfigure}
    
    \vspace{8pt}   

    \caption[Curated dataset distributions for demo selection on real-world tasks (CUPID)]{\small
        \textbf{Franka diffusion policy curated dataset distributions for selection (\cref{task:select-k}).} \basemethod{} selects higher-quality demonstrations (Figure-8) and robust strategies (TuckBox), improving policy performance across tasks (see \cref{fig:cupid-franka-dp-results}). While curation heuristics employed by baselines may be effective in some cases (e.g., \qualitymethod{} in Figure-8), they can lead to suboptimal selection in others. \\
    }
    \label{fig:cupid-franka-dp-distr-select-dataset-results}
\end{figure}

\begin{figure}[H]
    \centering
    
    \begin{subfigure}[b]{\linewidth}
        \centering
        \includegraphics[width=0.90\linewidth]{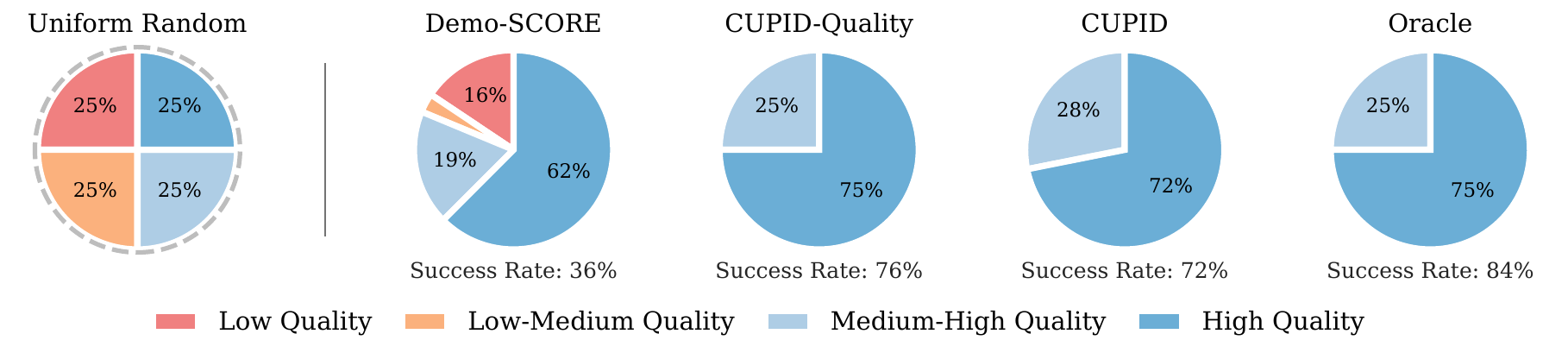}
        \vspace{-6pt}
        \caption{\footnotesize 
            \textbf{Figure-8:} Distribution of demonstrations \textit{selected}. Selecting higher-quality demos is better.
        }
        \label{fig:cupid-franka-dp-distr-select-curated-results-figure8}
    \end{subfigure}

    \vspace{18pt}

    \begin{subfigure}[b]{\linewidth}
        \centering
        \includegraphics[width=0.90\linewidth]{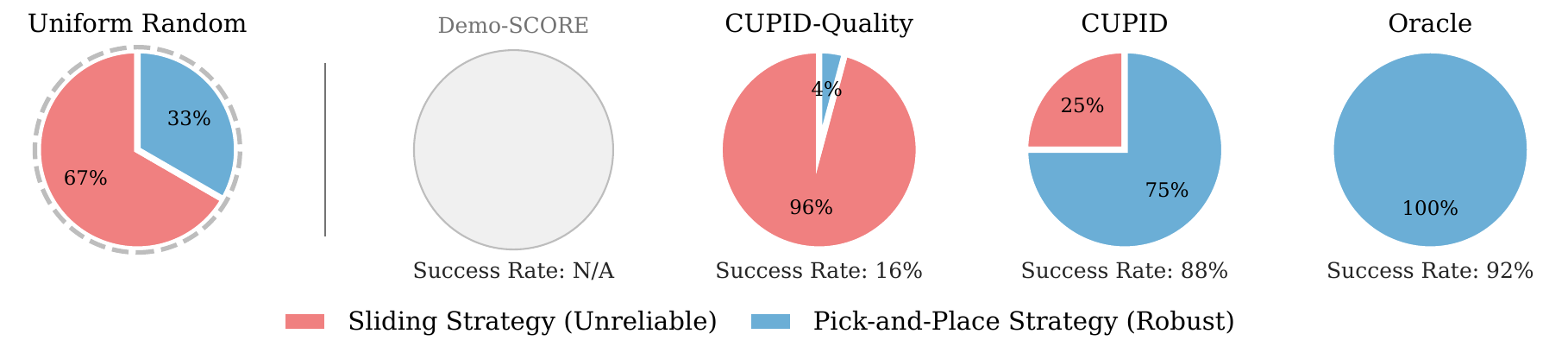}
        \vspace{-6pt}
        \caption{\footnotesize
            \textbf{TuckBox:} Distribution of demonstrations \textit{selected}. Selecting pick-and-place demos is better.
        }
        \label{fig:cupid-franka-dp-distr-select-curated-results-tuckbox}
    \end{subfigure}
    
    \vspace{8pt}   
    
    \caption[Distributions of demos selected for real-world tasks (CUPID)]{\small
        \textbf{Franka diffusion policy -- distribution of demonstrations selected ($S^\star$ in \cref{task:select-k}).} See \cref{fig:cupid-franka-dp-distr-select-dataset-results} for distributions of the corresponding curated datasets used for policy training.\\
    }
    \label{fig:cupid-franka-dp-distr-select-curated-results}
\end{figure}

\subsection{Additional Results for Franka PI-0: Curated Dataset Transfer (\cref{sec:cupid-discussion-pi0-transfer})}\label{appx:cupid-results-pio-transfer}

\cref{fig:cupid-franka-pi0-results-appx} contains the full results of our $\pi_0$ ablation (\cref{fig:cupid-franka-pi0-transfer-results}), including the performance of $\pi_0$~\cite{black2410pi0} trained on datasets curated by \basemethod{} and \qualitymethod{} for both the ``Figure-8'' and ``TuckBox'' tasks. 

\begin{figure}[H]
    \centering
    \includegraphics[width=0.70\linewidth]{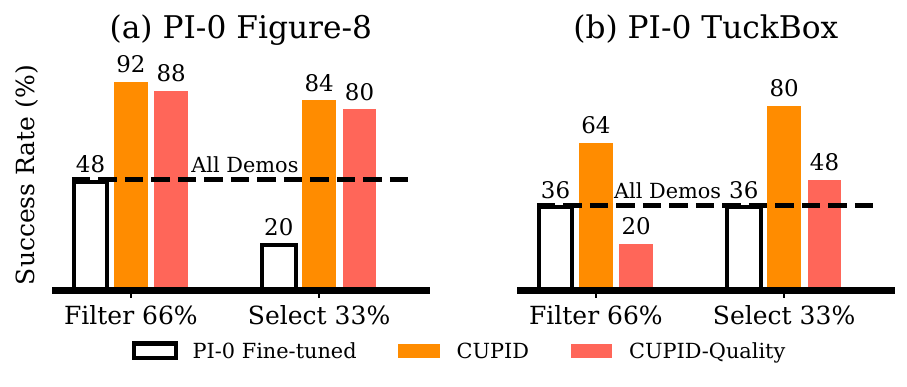}
    \caption[Extended results on real-world PI-0 multi-task policy performance (CUPID)]{\small 
        Data curated for single-task diffusion policies improves $\pi_0$~\cite{black2410pi0} post-training performance. As in \cref{fig:cupid-franka-dp-results}, quality measures (\qualitymethod{}) may degrade performance when higher-quality demonstrations induce brittle strategies at test time (TuckBox), whereas curating based on performance (\basemethod{}) is robust across settings.
    }
    \label{fig:cupid-franka-pi0-results-appx}
\end{figure}

In this experiment, we investigate two questions: (1) Can datasets curated with one policy architecture result in increased performance when used to train another policy with a different architecture? (2) How influential is curation for policies that have been pre-trained on large-scale multi-task datasets?

\textit{Curation Transfer:} Towards the first question, \cref{fig:cupid-franka-pi0-results-appx} shows that datasets curated using diffusion policies significantly increase the performance of fine-tuned $\pi_0$ policies relative to fine-tuning on the base, uncurated datasets. We attribute these results to two causes: First, we find that both the diffusion policy and $\pi_0$ have sufficient capacity to accurately fit the training data distribution, and thus, they should learn a similar behavior distribution from the training data. This implies that the observed performance gains in \cref{fig:cupid-franka-pi0-results-appx} result from curation transfer between policies. Second, as the ``TuckBox'' experiment shows in \cref{fig:cupid-franka-dp-results}(b), our method is able to effectively identify behaviors in the demonstration data that are not robust. While on-policy evaluations (i.e., rollouts) are necessary to identify such brittle behaviors, these are purely properties of the training demonstration data. Therefore, filtering out poor behaviors will increase the performance of any policy.
Similarly, on the high-precision ``Figure-8'' task, filtering out more noisy, low-quality demonstrations is likely to improve performance for any policy. 

\textit{VLA Robustness:} Towards the second question, we find that even when the base policy is pre-trained on a large, diverse, multi-task dataset, curation is still essential to yield strong fine-tuned performance. As shown in \cref{fig:cupid-franka-pi0-results-appx}, $\pi_0$ policies trained on the base demonstration datasets are unable to reliably complete our tasks. In contrast, policies trained on curated datasets attain significantly higher success rates. As such, our results indicate that simply training VLM-based policies on more data and more tasks does not strictly result in pre-conditioned policies that use their generalist knowledge to ``ignore'' low-quality behaviors or brittle strategies in demonstration data---i.e., data curation still appears essential. 

\textit{Concluding Remarks:} Overall, these results indicate that using smaller, single-task policies to curate individual datasets, which may then benefit a larger, multi-task policy is a promising direction to alleviate the computational cost of applying our method to generalist policies. Still, we emphasize that datasets curated using our method are not completely \emph{model agnostic}, as the same demonstrations may influence different models in different ways. As such, while $\pi_0$ achieves a higher base performance than the diffusion policy, the $\pi_0$ policies trained on curated datasets perform similarly to or slightly worse than the diffusion policies (for which those datasets were curated).

\section{Derivations}\label{appx:cupid-derivations}

\subsection{Proof of \cref{prop:cupid-polinf}}\label{appx:cupid-proof}
\begin{proof}
    As presented in \cref{sec:cupid-background},  applying the basic derivation of the influence function\footnoteref{fn:cupid-track} in \cite{koh2017understanding} gives us that
    \begin{align*}
     \polinf(\xi) &:= \frac{dJ(\pi_\theta)}{d\epsilon}\bigg|_{\epsilon=0} \\
                  &= -\nabla_\theta J(\pi_\theta)^\top \nabla^2_\theta\calL_{\mathrm{bc}}(\theta ; \calD)^{-1} \nabla_\theta \ell_{\mathrm{traj}}(\xi; \pi_\theta).
     \end{align*}
     Next, note that the standard log-derivative trick underlying policy gradient methods \cite{sutton1999policy, williams1992simple} tells us that 
     \begin{align*}
         \nabla_\theta J(\pi_\theta) = \E_{\tau \sim p(\tau | \pi_\theta)} \big[R(\tau)\sum_{(s',a')\in \tau}\nabla_\theta\log \pi_\theta(a'|s')\big].
     \end{align*}
     Therefore, since $\calL_{\mathrm{bc}}$ and $\ell_{\mathrm{traj}}$ are deterministic functions of $\theta$, $\xi$, and $\calD$, it holds that 
     \begin{align*}
         \polinf(\xi) = \E_{\tau\sim p(\tau | \pi_\theta)} \big[ R(\tau) \sum_{(s',a')\in\tau} -\nabla_\theta\log \pi_\theta(a'|s')^\top H_{\mathrm{bc}}^{-1}\nabla_\theta\ell_{\mathrm{traj}}(\xi; \pi_\theta)\big]
     \end{align*}
     by linearity of expectation.
     Finally, by simply noting that $\ell_{\mathrm{traj}}(\xi; \pi_\theta) = \frac{1}{H}\sum_{(s,a)\in\xi}\ell(s, a;\theta)$ and applying the definition of $\actinf$, we have the result:
     \begin{align*}
     \polinf(\xi) = \E_{\tau\sim p(\tau|\pi_\theta)}\bigg[\frac{R(\tau)}{H} \sum_{(s',a')\in\tau}\sum_{(s,a)\in\xi}\actinf\big((s',a'),(s,a)\big)\bigg].
     \end{align*}
\end{proof}

\subsection{Derivation of Performance Influence for Variable Length Trajectories}\label{appx:cupid-proof-length}
In \cref{sec:cupid-formulation} and \cref{sec:cupid-method}, we assumed that all trajectories in the demonstration dataset $\calD$ were of an equal length $H$ for notational simplicity. Here, we show that without loss of generality, our analysis extends to the case where the length of demonstration trajectories vary. Suppose each demonstration $\xi^i \in \calD$ has length $H^i$, so that the base policy $\pi_\theta$ minimizes the average loss across all samples in the demonstration data, i.e., 
\begin{equation}\label{eq:cupid-bclossvary}
    \theta = \arg\min_{\theta'} \{\tilde{\mathcal{L}}_{\text{bc}}(\theta'; \mathcal{D}) := \frac{1}{(\sum_{i=1}^nH^i
    )
    }\sum_{\xi^i\in\calD}\sum_{(s, a) \in \xi^i} \ell(s, a; \pi_{\theta'})\}.
\end{equation}
Note that the objective in \cref{eq:cupid-bclossvary} is equivalent to an unweighted BC loss 
\begin{equation*}
    {\mathcal{L}'}_{\text{bc}}(\theta'; \mathcal{D}) := \sum_{\xi^i\in\calD}\sum_{(s, a) \in \xi^i} \ell(s, a; \pi_{\theta'}),
\end{equation*}
which decomposes into its unweighted trajectory losses $\ell_{\mathrm{traj}}'(\xi ; \pi_{\theta'}) := \sum_{(s,a) \in \xi}\ell(s, a; \pi_{\theta'})$, so that $\calL_{\mathrm{bc}}'(\theta', \calD) = \sum_{\xi^i \in \calD}\ell_{\mathrm{traj}}'(\xi^i ; \pi_{\theta'}).$ We can then derive an equivalent statement to \cref{prop:cupid-polinf} for the unweighted loss functions that applies when the demonstrations have variable length.

\begin{proposition}\label{prop:cupid-polinf-length}
Assume that $\theta(\calD) = \arg\min_{\theta'} \calL_{\mathrm{bc}}'(\theta'; \calD)$, that $\calL_{\mathrm{bc}}'$ is twice differentiable in $\theta$, and that $H_{\mathrm{bc}} \succ 0$ is positive definite (i.e., $\theta(\calD)$ is not a saddle point)\footnoteref{fn:cupid-track}. Then, it holds that 
\begin{equation}\label{eq:cupid-policy-inf-deriv-vary}
    \polinf(\xi) = \E_{\tau\sim p(\tau|\pi_\theta)}\bigg[R(\tau)\sum_{(s',a')\in\tau}\sum_{(s,a)\in\xi}\actinf\big((s',a'),(s,a)\big)\bigg].
\end{equation}
\end{proposition}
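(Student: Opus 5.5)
The plan is to mirror the proof of \cref{prop:cupid-polinf} step by step, replacing the normalized losses with the unweighted ones. The 1/H factor will then simply never appear.

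\textbf{Step 1: set up the perturbed objective.} We first fix the analogue of \cref{def:polinf} for variable-length demonstrations. The perturbed objective is $\calL_{\mathrm{bc}}'(\theta';\calD) + \epsilon\,\ell_{\mathrm{traj}}'(\xi;\pi_{\theta'})$, and $\polinf(\xi) := dJ(\pi_\theta)/d\epsilon|_{\epsilon=0}$. In this setting $H_{\mathrm{bc}}$ denotes $\nabla_\theta^2\calL_{\mathrm{bc}}'(\theta;\calD)$, and the same Hessian appears inside $\actinf$ in \cref{def:actinf}.

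\textbf{Step 2: apply the influence-function derivation.} Under the stated assumptions ($\theta(\calD)$ is a minimizer, $\calL_{\mathrm{bc}}'$ is twice differentiable, $H_{\mathrm{bc}}\succ 0$), the argument of \cite{koh2017understanding} goes through unchanged. The first-order condition $\nabla_\theta\calL_{\mathrm{bc}}' + \epsilon\nabla_\theta\ell_{\mathrm{traj}}' = 0$ holds at $\theta(\epsilon)$. Differentiating it at $\epsilon=0$ via the implicit function theorem gives $d\theta/d\epsilon = -H_{\mathrm{bc}}^{-1}\nabla_\theta\ell_{\mathrm{traj}}'(\xi;\pi_\theta)$. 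The chain rule then yields $\polinf(\xi) = -\nabla_\theta J(\pi_\theta)^\top H_{\mathrm{bc}}^{-1}\nabla_\theta\ell_{\mathrm{traj}}'(\xi;\pi_\theta)$.

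\textbf{Step 3: expand $\nabla_\theta J$ by the log-derivative trick.} As in the proof of \cref{prop:cupid-polinf}, we write $\nabla_\theta J(\pi_\theta) = \E_{\tau}[R(\tau)\sum_{(s',a')\in\tau}\nabla_\theta\log\pi_\theta(a'|s')]$. Rollout lengths may vary here, but this does not matter: the sum runs over whatever transitions $\tau$ contains, and the dynamics terms in the trajectory likelihood do not depend on $\theta$.

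\textbf{Step 4: pull the remaining factor inside and split the sums.} The factor $H_{\mathrm{bc}}^{-1}\nabla_\theta\ell_{\mathrm{traj}}'$ is deterministic in $\theta$, $\xi$, $\calD$, so linearity of expectation moves it inside the expectation. Next we substitute $\ell_{\mathrm{traj}}'(\xi;\pi_\theta) = \sum_{(s,a)\in\xi}\ell(s,a;\pi_\theta)$, which carries no normalization. Distributing the inner product over both sums and recognizing each term as $\actinf((s',a'),(s,a))$ gives \cref{eq:cupid-policy-inf-deriv-vary}.

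\textbf{Expected obstacle.} The only real subtlety is the bookkeeping in the relation between the objectives: $\calL_{\mathrm{bc}}'$ equals $(\sum_i H^i)$ times $\tilde{\calL}_{\mathrm{bc}}$ in \cref{eq:cupid-bclossvary}. Both therefore share the same minimizer $\theta(\calD)$, and the assumed positive definiteness is preserved under this positive rescaling. The scale constant is absorbed into the definition of $H_{\mathrm{bc}}$ (and hence of $\actinf$), which is exactly why no $1/H$ prefactor survives.

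To double-check consistency, I will verify that if all $H^i = H$ and the normalized Hessian is used instead, the factors $n H$ and $1/(nH)$ cancel against the $1/|\calD|$ and $1/H$ conventions of \cref{prop:cupid-polinf}, recovering \cref{eq:cupid-policy-inf-deriv}.
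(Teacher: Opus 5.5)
Your proposal is correct and follows the paper's proof essentially step for step. Both arguments use the influence-function derivative with the unweighted objective $\calL_{\mathrm{bc}}'$ and its Hessian, the log-derivative trick for $\nabla_\theta J$, linearity of expectation, and the unnormalized expansion $\ell_{\mathrm{traj}}'(\xi;\pi_\theta)=\sum_{(s,a)\in\xi}\ell(s,a;\pi_\theta)$; your implicit-function-theorem step just spells out what the paper cites from \cite{koh2017understanding}.

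One small caveat on your closing consistency check. With equal lengths, $\calL_{\mathrm{bc}}'+\epsilon\,\ell_{\mathrm{traj}}' = |\calD|H\,\bigl(\calL_{\mathrm{bc}}+\tfrac{\epsilon}{|\calD|}\ell_{\mathrm{traj}}\bigr)$, so both sides of \cref{eq:cupid-policy-inf-deriv-vary} equal $1/|\calD|$ times the corresponding sides of \cref{eq:cupid-policy-inf-deriv}, not only the right-hand side. That common factor is what makes the two statements agree.
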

\begin{proof}
    As presented in \cref{sec:cupid-background},  applying the basic derivation of the influence function\footnoteref{fn:cupid-track} in \cite{koh2017understanding} gives us that
    \begin{align*}
     \polinf(\xi) &:= \frac{dJ(\pi_\theta)}{d\epsilon}\bigg|_{\epsilon=0} \\
                  &= -\nabla_\theta J(\pi_\theta)^\top \nabla^2_\theta\calL_{\mathrm{bc}}'(\theta ; \calD)^{-1} \nabla_\theta \ell_{\mathrm{traj}}'(\xi; \pi_\theta).
     \end{align*}
     Next, note that the standard log-derivative trick underlying policy gradient methods \cite{sutton1999policy, williams1992simple} tells us that 
     \begin{align*}
         \nabla_\theta J(\pi_\theta) = \E_{\tau \sim p(\tau | \pi_\theta)} \big[R(\tau)\sum_{(s',a')\in \tau}\nabla_\theta\log \pi_\theta(a'|s')\big].
     \end{align*}
     Therefore, since $\calL_{\mathrm{bc}}'$ and $\ell_{\mathrm{traj}}'$ are deterministic functions of $\theta$, $\xi$, and $\calD$, it holds that 
     \begin{align*}
         \polinf(\xi) = \E_{\tau\sim p(\tau | \pi_\theta)} \big[ R(\tau) \sum_{(s',a')\in\tau} -\nabla_\theta\log \pi_\theta(a'|s')^\top H_{\mathrm{bc}}^{-1}\nabla_\theta\ell_{\mathrm{traj}}'(\xi; \pi_\theta)\big]
     \end{align*}
     by linearity of expectation.
     Finally, by simply noting that $\ell_{\mathrm{traj}}'(\xi; \pi_\theta) = \sum_{(s,a)\in\xi}\ell(s, a;\theta)$ and applying the definition of $\actinf$, we have the result:
     \begin{align*}
     \polinf(\xi) = \E_{\tau\sim p(\tau|\pi_\theta)}\bigg[R(\tau)\sum_{(s',a')\in\tau}\sum_{(s,a)\in\xi}\actinf\big((s',a'),(s,a)\big)\bigg].
     \end{align*}
\end{proof}

\chapter{Additional Details: STAP}\label{chapter:stap-appx}

\section*{Appendix -- \textbf{STAP}: Sequencing Task-Agnostic Policies}

The appendix discusses commonly asked questions about our planning framework and provides details on the manipulation skill library used for planning.
Qualitative results and code are made available at \url{https://sites.google.com/stanford.edu/stap}.

\startcontents[sections]
\printcontents[sections]{l}{1}{\setcounter{tocdepth}{2}}

\section{Discussion and Limitations}
\label{appx:stap-faq}

\noindent\textbf{Q1}: \textit{What enables our planning framework to generalize to unseen tasks?}

A core hypothesis of this investigation is that it is easier to learn \textit{task-agnostic skills} that support long-horizon reasoning than it is to learn a single \textit{long-horizon policy} that can generalize to arbitrary tasks (i.e. skill sequences).
This hypothesis is motivated by two facts: 1) the state-action space associated with all possible long-horizon skill sequences grows exponentially $O(c^H)$ with the length $H$ of the skill sequences considered, where $c$ is some constant; 2) adequately exploring this exponentially sized state-action space during the training of a single long-horizon policy, as required to solve arbitrary tasks, is challenging from both a methodology and engineering standpoint.
In contrast, the state-action that must be sufficiently explored to plan with STAP grows only linearly $O(K)$ with the number of task-agnostic skills $K$ in the skill library.
However, it is essential that STAP's task-agnostic skills are trained (at least in-part) on states that are likely to occur when solving tasks of interest.

Having learned a library of task-agnostic skills, our method plans with the skills at test-time to maximize the feasibility of a specified sequence of skills.
Because the skills have been trained independent of each other and of the planner, every specified sequence of skills can be regarded as an \textit{unseen task} that STAP must generalize to.
Our method accomplishes this via optimization of a planning objective (see \cref{sec:stap-taps-grounding}) in a process that involves the skills' policies, Q-functions, and dynamics models. 
Thus, the generality of our method to unseen tasks stems from the compositionality of independently learned skills.

\noindent\textbf{Q2}: \textit{Why is STAP useful for Integrated Task and Motion Planning?}

Task and Motion Planning (TAMP) seeks to solve long-horizon tasks by integrating symbolic and geometric reasoning.
Symbolic task planners and PDDL are commonly used to produce candidate plan skeletons or skill sequences that satisfy a user-specified symbolic goal.
Robotics subroutines, e.g. motion planners, collision checkers, inverse kinematics solvers, are then procedurally invoked to verify the feasibility of the plan skeleton and return a corresponding motion plan.

Different than prior work, STAP presents an avenue to develop general TAMP algorithms centered around learned skills.
While in \cref{sec:stap-planning-tamp}, we use STAP to perform geometric reasoning on candidate skill sequences proposed by a symbolic planner, many other instantiations are possible.
For example, success probabilities (\cref{eq:stap-planning-objective}) predicted by STAP can serve as a geometric feasibility heuristic to guide task planning with classical search algorithms~\cite{hoffmann2001-ff} or foundation models~\cite{lin2023text2motion}.
Such TAMP algorithms would inherit the efficiency of planning with STAP, as shown in \cref{fig:stap-exp_a}.
They would also benefit from the modularity associated with skill libraries, where new skills can be added to support a larger set of tasks and old skills can be updated to improve overall planning performance without the need to modify any other components of the TAMP framework.

\noindent\textbf{Q3}: \textit{What would it take to scale STAP to high-dimensional observation spaces?}

Our framework is not limited to the low-dimensional state space described in \cref{sec:stap-experiments}, however, several challenges must be addressed in order to use STAP in conjunction with high-dimensional sensory data such as images or 3D point clouds.
\vspace{-8pt}
\begin{itemize}
    \item \textbf{Skills:} Q-functions must accurately characterize the skill's success probability given the high-dimensional observation. 
    Challenge: the fidelity of skill Q-functions obtained via model-free Reinforcement Learning (RL)~\cite{fujimoto2018-td3,haarnoja2018-sac} may be too low for long-horizon planning. 
    Potential solution(s): acquire skills from large-scale datasets and couple controllable, data-driven learning methods (e.g. offline RL, imitation learning, supervised learning) with data augmentation techniques.
    
    \item \textbf{Dynamics:} The planning state space (\cref{eq:stap-long-horizon-domain}) must be amenable to accurate forward prediction over long-horizon skill sequences.
    Challenge: predicted high-dimensional states become coarse over long-horizons~\cite{finn2017-dvf,wu2021greedy} which complicates their use in manipulation planning settings that demand fine-grained geometric detail.
    Potential solution(s): leverage pretrained representations for robotics~\cite{nair2022r3m, karamcheti2023language} and learn latent dynamic models~\cite{hafner2019-dreamer,xu2021-daf} for forward prediction.
\end{itemize}
\vspace{-8pt}
Since STAP is reliant on the quality of the underlying skill library, we expect the capabilities of our method to improve with advancements in robot skill acquisition and visuomotor policies, representation learning, and video prediction for robotics.

\noindent\textbf{Q4}: \textit{How else can uncertainty quantification be incorporated into planning?}

In our TAMP experiments (\cref{sec:stap-planning}), we take a filtering-based approach to robustify STAP planning in out-of-distribution scenarios.
Specifically, we use Sketching Curvature for Out-of-Distribution Detection (SCOD)~\cite{SharmaAzizanEtAl2021} for uncertainty quantification (UQ) of Q-functions and disregard the $n$ plans with the most uncertain Q-values at each planning iteration.

While SCOD imposes no train-time dependencies on any algorithms in our framework, its forward pass is computationally and memory intensive and slows planning as a result.
For faster planning, UQ alternatives such as deep ensembles~\cite{ensemblereview2021} or Monte-Carlo dropout~\cite{GalZoubin2016} can be employed which, in contrast to SCOD, require modifying the algorithms used to learn skills. 
We further note that the described filtering-based optimization approach can be substituted with more sophisticated planning techniques, several of which have been implemented and verified to work with STAP. 
For example, we could formulate a distributionally robust variant of our planning objective (\cref{eq:stap-planning-objective}) to optimize a lower-confidence bound of the Q-values:
\begin{equation*}
    J(a_{1:H}; \overline{s}_1)%
        = \Estap{\overline{s}_{2:H} \sim \overline{T}_{1:H-1}}{\Pi_{h=1}^{H} \func{Q_h}{\!\func{\Gamma_h}{\overline{s}_h}, a_h} - \alpha F_\text{unc}(\overline{s}_h, a_h; Q_h, \Gamma_h)},
\end{equation*}
where $F_\text{unc}$ quantifies the uncertainty of Q-function $Q_h$ evaluated at $s_h=\Gamma(\overline{s}_h)$ and $a_h$, and $\alpha$ is a scalar hyperparameter (e.g. the z-score). 
The uncertainty function $F_\text{unc}$ is determined by the choice of UQ method.
For deep ensembles, this would correspond to the empirical variance of the ensemble predictions $F_\text{unc}(\overline{s}_h, a_h; Q_h, \Gamma_h) = \Vstap{}{Q_h(\Gamma_h(\overline{s}_h), a_h)}$.
Other choices of $F_\text{unc}$ include posterior predictive variances provided by SCOD or coherent risk measures such as Conditional Value at Risk (CVaR)~\cite{brown2020bayesian}. 
We have experimetally validated such formulations of STAP, and ultimately, the appropriate choice of planning objective and optimization scheme should correspond to the evaluation tasks and domains of interest.

\noindent\textbf{Q5}: \textit{What are the main limitations of our method?}

Currently, our method is limited in settings with high degrees of partial observability and stochastic dynamics.
Examples include mobile manipulation in unknown environments and interacting with dynamic agents, where it may be intractable to predict future states necessary for look-ahead planning.
While these tasks are beyond the scope of this investigation, we note that our method is agnostic to the specific choice of skills and dynamics models, and thus, components that account for stochasticity could be used interchangeably. 
Instead, we focus on generalization to geometrically challenging manipulation problems.

\section{Manipulation Skill Library}
\label{appx:stap-skill-library}

\subsection{Parameterized Manipulation Primitives} 
All variants of STAP interface with a library of manipulation skills $\mathcal{L}=\{\psi^1, \ldots, \psi^K\}$.
Each skill $\psi$ consists of a learned policy $\func{\pi}{a \given s}$ and a parameterized manipulation primitive~\cite{felip2013manipulation} $\phi(a)$.
The policy is trained to output parameters $a \sim \func{\pi}{a \given s}$ that results in successful actuation of the primitive $\phi(a)$ in a contextual bandit setting (\cref{eq:stap-skill-mdp}) with a binary reward function $R(s, a, s')$.
Our library consists of four skills, $\mathcal{L}=\{\psi^{\text{Pick}}$, $\psi^{\text{Place}}$, $\psi^{\text{Pull}}$, $\psi^{\text{Push}}\}$, used to solve tasks in simulation and in the real-world.
The policies must learn to manipulate objects with different geometries (e.g. $\pi^{\text{Pick}}$ is used for both $\actioncall{Pick}{box}$ and $\actioncall{Pick}{hook}$).  
We describe the parameterization and reward function of each skill below.
A reward of $r = 0$ is provided if any collision occurs with a non-argument object.
For example, if $\psi^{\text{Place}}$ collides with \textit{rack} while executing $\actioncall{Place}{box}{table}$.
\vspace{-8pt}
\begin{itemize}
    \item \textbf{$\actioncall{Pick}{obj}$}: the parameter $a$ represents the grasp pose of \textit{obj} in the coordinate frame of \textit{obj}. The policy $\pi^{\text{Pick}}$ receives a reward of $r = 1$ if the primitive $\phi^{\text{Pick}}$ successfully grasps and picks up \textit{obj}.
    \item \textbf{$\actioncall{Place}{obj}{rec}$}: the parameter $a$ represents the placement pose of \textit{obj} in the coordinate frame of \textit{rec}. The policy $\pi^{\text{Place}}$ receives a reward of $r = 1$ if the primitive $\phi^{\text{Place}}$ places \textit{obj} stable atop \textit{rec}.
    \item \textbf{$\actioncall{Pull}{obj}{tool}$}: the parameter $a$ represents the initial position, direction, and distance of a pull on \textit{obj} with \textit{tool} in the coordinate frame of \textit{obj}. The policy $\pi^{\text{Pull}}$ receives a reward of $r = 1$ if the primitive $\phi^{\text{Pull}}$ moves \textit{obj} toward the robot by a minimum of $0.05m$. 
    \item \textbf{$\actioncall{Push}{obj}{tool}{rec}$}: the parameter $a$ represents the initial position, direction, and distance of a push on \textit{obj} with \textit{tool} in the coordinate frame of \textit{obj}. The policy $\pi^{\text{Push}}$ receives a reward of $r = 1$ if the primitive $\phi^{\text{Push}}$ moves \textit{obj} away from the robot by a minimum of $0.05m$ and if the final pose of \textit{obj} is underneath \textit{rec}.
\end{itemize}

\subsection{Training Manipulation Skills}
We use the Soft Actor-Critic~\cite{haarnoja2018-sac} (SAC) algorithm with original hyperparameters to simultaneously learn a stochastic policy $\func{\pi}{a \given s}$ and Q-function $Q^\pi(s, a)$ for each skill $\psi$. 
All models are trained for $200k$ single-step episodes and the $(s, a, s')$ transitions are stored in a replay buffer $\mathcal{D}^\psi=\{(s^i, a^i, s'^i)\}_{i=1}^{200k}$ for each skill $\psi$.
The replay buffer data is later used to train the dynamics models $\func{\overline{T}^k}{\overline{s}, a^k}$ (\cref{sec:stap-training-dynamics}) and calibrate the weights $w^k$ used by SCOD for UQ (\cref{sec:stap-training-scod}).

\chapter{Additional Details: Text2Motion}\label{chapter:text2motion-appx}

\section*{Overview}
The appendix offers additional details with respect to the implementation of \ttm{} and language planning baselines (\cref{appx:text2motion-implementation-details}), the experiments conducted (\cref{appx:text2motion-experiment-details}), derivations supporting the design of our algorithms (\cref{appx:text2motion-derivations}), and the \textbf{real-world planning demonstrations} (\cref{appx:text2motion-demos}). Qualitative results are made available at \url{https://sites.google.com/stanford.edu/text2motion}.

\startcontents[sections]
\printcontents[sections]{l}{1}{\setcounter{tocdepth}{2}}

\section{Implementation Details}
\label{appx:text2motion-implementation-details}
The \ttm{} planner integrates both \sh{} and \se{} to construct skill sequences that are feasible for the robot to execute in the environment. 
The planning procedure relies on four core components: 1) a library of learned robot skills, 2) a method for detecting when a skill is out-of-distribution (OOD), 3) a large language model (LLM) to perform task-level planning, and 4) a geometric feasibility planner that is compatible with the learned robot skills. 
All evaluated language-based planners use the above components, while \scgs{} and \imgs{} are myopic agents that do not perform geometric feasibility planning.
We provide implementation details of these components in the following subsections.

\subsection{Learning Robot Skills and Dynamics}
\label{appx-sub:text2motion-learning-models}
\textbf{Skill library overview:} All evaluated language planners interface an LLM with a library of robot skills $\mathcal{L}=\{\psi^1, \ldots, \psi^N\}$.
Each skill $\psi$ has a language description (e.g. \graytext{Pick(a)}) and is associated with a parameterized manipulation primitive~\cite{felip2013manipulation} $\phi(a)$.
A primitive $\phi(a)$ is controllable via its \textit{parameter} $a$ which determines the motion~\cite{khatib2003unified} of the robot's end-effector through a series of waypoints.
For each skill $\psi$, we train a policy $\pi(a|s)$ to output parameters $a\in\mathcal{A}$ that maximize primitive's $\phi(a)$ probability of success in a contextual bandit setting (\cref{eq:text2motion-skill-mdp}) with a skill-specific binary reward function $R(s, a, s')$.
We also train an ensemble of Q-functions $Q_{1:B}^\pi(s, a)$ and a dynamics model $T^\pi(s' | s, a)$ for each skill, both of which are required for geometric feasibility planning. 
We discuss the calibration of Q-function ensembles for OOD detection of skills in \cref{appx-sub:text2motion-ood-calibration}.

We learn four manipulation skills to solve tasks in simulation and in the real-world: $\psi^{\text{Pick}}$, $\psi^{\text{Place}}$, $\psi^{\text{Pull}}$, $\psi^{\text{Push}}$. 
Only a single policy per skill is trained, and thus, the policy must learn to engage the primitive over objects with differing geometries (e.g. $\pi^{\text{Pick}}$ is used for both \graytext{Pick(box)} and \graytext{Pick(hook)}).
The state space $\mathcal{S}$ for each policy is defined as the concatenation of geometric state features (e.g. pose, size) of all objects in the scene, where the first $n$ object states correspond to the $n$ skill arguments and the rest are randomized.
For example, the state for the skill \graytext{Pick(hook)} would have be a vector of all objects' geometric state features with the first component of the state corresponding to the \graytext{hook}.  

\textbf{Parameterized manipulation primitives:}
We describe the parameters $a$ and reward function $R(s, a, s')$ of each parameterized manipulation primitive $\phi(a)$ below.
A collision with a non-argument object constitutes an execution failure for all skills, and as a result, the policy receives a reward of $0$.
For example, $\pi^{\text{Pick}}$ would receive a reward of $0$ if the robot collided with \graytext{box} during the execution of \graytext{Pick(hook)}. 
\begin{itemize}
    \item \graytext{Pick(obj)}: $a\sim\pi^{\text{Pick}}(a \vert s)$ denotes the grasp pose of \graytext{obj} \textit{w.r.t} the coordinate frame of \graytext{obj}. A reward of $1$ is received if the robot successfully grasps \graytext{obj}.
    \item \graytext{Place(obj, rec)}: $a\sim\pi^{\text{Place}}(a \vert s)$ denotes the placement pose of \graytext{obj} \textit{w.r.t} the coordinate frame of \graytext{rec}. A reward of $1$ is received if \graytext{obj} is stably placed on \graytext{rec}.
    \item \graytext{Pull(obj, tool)}: $a\sim\pi^{\text{Pull}}(a \vert s)$ denotes the initial position, direction, and distance of a pull on \graytext{obj} with \graytext{tool} \textit{w.r.t} the coordinate frame of \graytext{obj}. A reward of $1$ is received if \graytext{obj} moves toward the robot by a minimum of $d_{\text{Pull}}=0.05m$. 
    \item \graytext{Push(obj, tool, rec)}: $a\sim\pi^{\text{Push}}(a \vert s)$ denotes the initial position, direction, and distance of a push on \graytext{obj} with \graytext{tool} \textit{w.r.t} the coordinate frame of \graytext{obj}. A reward of $1$ is received if \graytext{obj} moves away from the robot by a minimum of $d_{\text{Push}}=0.05m$ and if \graytext{obj} ends up underneath \graytext{rec}.
\end{itemize}

\textbf{Dataset generation:} 
All planners considered in \cref{chapter:text2motion} rely on accurate Q-functions $Q^\pi(s, a)$ to estimate the feasibility of skills proposed by the LLM. 
This places a higher fidelity requirement on the Q-functions than typically needed to learn a reliable policy, as the Q-functions must characterize both skill success (feasibility) and failure (infeasibility) at a given state.
Because the primitives $\phi(a)$ reduce the horizon of policies $\pi(a|s)$ to a single timestep, and the reward functions are $R(s, a, s')\in\{0, 1\}$, the Q-functions can be interpreted as binary classifiers of state-action pairs.
Thus, we take a staged approach to learning the Q-functions $Q^\pi$, followed by the policies $\pi$, and lastly the dynamics models $T^\pi$.

Scenes in our simulated environment are instantiated from a symbolic specification of objects and spatial relations, which together form a symbolic state.
The goal is to learn a Q-function that sufficiently covers the state-action space of each skill.
We generate a dataset that meets this requirement in four steps: a) enumerate all valid symbolic states; b) sample geometric scene instances $s$ per symbolic state; c) uniformly sample actions over the action space $a\sim\mathcal{U}^{[0,1]^d}$; (d) simulate the states and actions to acquire next states $s'$ and compute rewards $R(s, a, s')$.
We slightly modify this sampling strategy to maintain a minimum success-failure ratio of 40\%, as uniform sampling for more challenging skills such as \graytext{Pull} and \graytext{Push} seldom emits a success ($\sim$3\%).
We collect 1M $(s, a, s', r)$ tuples per skill of which 800K of them are used for training ($\mathcal{D}_t$), while the remaining 200K are used for validation ($\mathcal{D}_v$).
We use the same datasets to learn the Q-functions $Q^\pi$, policies $\pi$, and dynamics models $T^\pi$ for each skill.
 
\textbf{Model training:}
We train an ensemble of Q-functions with mini-batch gradient descent and logistic regression loss.
Once the Q-functions have converged, we distill their returns into stochastic policies $\pi$ through the maximum-entropy update~\cite{pmlr-v80-haarnoja18b}:
\begin{equation*}
    \begin{split}
        \pi^* \leftarrow \arg \max_{\pi} \, \operatorname{E}_{(s,a)\sim \mathcal{D}_t} [\min(Q_{1:B}^\pi(s, a)) \\ - \alpha \log\pi(a|s) ].
    \end{split}
\end{equation*}
Instead of evaluating the policies on $\mathcal{D}_v$, which contains states for which no feasible action exists, the policies are synchronously evaluated in an environment that exhibits only feasible states. 
This simplifies model selection and standardizes skill capabilities across primitives. 
All Q-functions achieve precision and recall rates of over 95\%. 
The average success rates of the converged policies over 100 evaluation episodes are: $\pi_{\text{Pick}}$ with 99\%, $\pi_{\text{Place}}$ with 90\%, $\pi_{\text{Pull}}$ with 86\%, $\pi_{\text{Push}}$ with 97\%.

We train a deterministic dynamics model per skill using the forward prediction loss:
\begin{equation*}
        L_{\text{dynamics}}\left(T^\pi; \mathcal{D}_t \right) = \operatorname{E}_{(s,a,s')\sim \mathcal{D}_t}||T^\pi(s, a) - s'||_2^2.
\end{equation*}
The dynamics models converge to within millimeter accuracy on the validation split.

\textbf{Hyperparameters:} The Q-functions, policies, and dynamics models are MLPs with hidden dimensions of size [256, 256] and ReLU activations.
We train an ensemble of $B=8$ Q-functions with a batch size of 128 and a learning rate of 1$e^{-4}$ with a cosine annealing decay~\cite{loshchilov2017sgdr}.
The Q-functions for \graytext{Pick}, \graytext{Pull}, and \graytext{Push} converged on $\mathcal{D}_v$ in 3M iterations, while the Q-function for \graytext{Place} required 5M iterations.
We hypothesize that this is because classifying successful placements demands carefully attending to the poses and shapes of all objects in the scene so as to avoid collisions.
The policies are trained for 250K iterations with a batch size of 128 and a learning rate of 1$e^{-4}$, leaving all other parameters the same as \cite{pmlr-v80-haarnoja18b}.
The dynamics models are trained for 750K iterations with a batch size of 512 and a learning rate of 5$e^{-4}$; only on successful transitions to avoid the noise associated with collisions and truncated episodes.
The parallelized training of all models takes approximately 12 hours on an Nvidia Quadro P5000 GPU and 2 CPUs per job.

\subsection{Out-of-Distribution Detection}
\label{appx-sub:text2motion-ood-calibration}
The training dataset described in \cref{appx-sub:text2motion-learning-models} contain both successes and failures for symbolically valid skills like \graytext{Pick(box)}.
However, when using LLMs for robot task planning, it is often the case that the LLM will propose symbolically invalid skills, such as \graytext{Pick(table)}, that neither the skill's policy, Q-functions, or dynamics model have observed in training.
We found that a percentage of out-of-distribution (OOD) queries would result in erroneously high Q-values, causing the invalid skill to be selected. 
Attempting to execute such a skill leads to control exceptions or other failures.

Whilst there are many existing techniques for OOD detection of deep neural networks, we opt to detect OOD queries on the learned Q-functions via deep ensembles due to their ease of calibration~\cite{LakshminarayananPritzelEtAll2017}.
A state-action pair is classified as OOD if the empirical variance of the predicted Q-values is above a determined threshold:
\begin{equation*}
    \func{F_{\text{OOD}}}{\psi} = \mathbbm{1} \left(\Vstap{i\sim1:B}{Q^{\pi}_i(s, a)} \geq \epsilon^{\psi} \right),
\end{equation*}
where each threshold $\epsilon^\psi$ is unique to skill $\psi$.

To determine the threshold values, we generate an a calibration dataset of 100K symbolically invalid states and actions for each skill.
The process takes less than an hour on a single CPU as the actions are infeasible and need not be simulated in the environment (i.e. rewards are known to be $0$).
We compute the mean and variance of the Q-ensemble for each $(s, a)$ sample in both the training dataset (in-distribution inputs) and the calibration dataset (out-of-distribution inputs), and produce two histograms by binning the computed ensemble variances by the ensemble means. 
We observe that the histogram of variances corresponding to OOD inputs is uniform across all Q-value bins and is an order of magnitude large than the ensemble variances computed over in-distribution inputs.
This allows us to select thresholds $\epsilon^\psi$ which are low enough to reliably detect OOD inputs, yet will not be triggered for in-distribution inputs: $\epsilon^{\text{Pick}} = 0.10$, $\epsilon^{\text{Place}} = 0.12$, $\epsilon^{\text{Pull}} = 0.10$, and $\epsilon^{\text{Push}} = 0.06$.

\subsection{Task Planning with LLMs}
\label{appx-sub:text2motion-llm-task-planning}
\ttm{}, \se{}, and the myopic planning baselines \scgs{} and \imgs{} use \texttt{code-davinci-002}~\cite{chen2021evaluating} to generate and score skills, while \sh{} queries \texttt{text-davinci-003}~\cite{ouyang2022training} to directly output full skill sequences.
In our experiments, we used a temperature setting of $0$ for all LLM queries. 

To maintain consistency in the evaluation of various planners, we allow \ttm{}, \scgs{}, and \imgs{} to generate $K=5$ skills $\{\psi^1_t, \ldots, \psi^K_t\}$ at each timestep $t$.
Thus, every search iteration of \se{} considers five possible extensions to the current running sequence of skills $\psi_{1:t-1}$.
Similarly, \sh{} generates $K=5$ skill sequences.

As described in \cref{sec:text2motion-greedy-search}, skills are selected at each timestep $t$ via a combined usefulness and geometric feasibility score:
\begin{align*}
    S_{\text{skill}}(\psi_t) &= S_{\text{llm}}(\psi_t) \cdot S_{\text{geo}}(\psi_t) \\
    &\approx p(\psi_t \mid i, s_{1:t}, \psi_{1:t-1}) \cdot Q^{\pi_t}(s_t, a^*_t),
\end{align*} 
where \ttm{}, \se{}, and \sh{} use geometric feasilibity planning (details below in \cref{appx-sub:text2motion-taps-geometric-planning}) to compute $S_{\text{geo}}(\psi_t)$, while \scgs{} and \imgs{} use the current value function estimate $V^{\pi_t}(s_t) = \operatorname{E}_{a_t\sim\pi_t}[Q^{\pi_t}(s_t, a_t)]$.
We find that in both cases, taking $S_{\text{llm}}(\psi_t)$ to be the SoftMax log-probability score produces a winner-takes-all effect, causing the planner to omit highly feasible skills simply because their associated log-probability was marginally lower than the LLM-likelihood of another skill.
Thus, we dampen the SoftMax operation with a $\beta$-coefficient to balance the ranking of skills based on both feasibility and usefulness. 
We found $\beta=0.3$ to work well.

\subsection{Geometric Feasibility Planning}
\label{appx-sub:text2motion-taps-geometric-planning}
Given a sequence of skills $\psi_{1:H}$, geometric feasibility planning computes parameters $a_{1:H}$ that maximizes the success probability of the underlying sequence of primitives $\phi_{1:H}$. 
For example, given a skill sequence \graytext{Pick(hook)}, \graytext{Pull(box, hook)}, geometric feasibility planning would compute a 3D grasp position on the hook that enables a successful pull on the box thereafter.

\ttm{} is agnostic to the method that fulfils the role of geometric feasibility planning. 
In our experiments we leverage Sequencing Task-Agnostic Policies (STAP)~\cite{taps-2022}.
Specifically, we consider the PolicyCEM variant of STAP, where optimization of the skill sequence's success probability (\cref{eq:text2motion-taps-objective}) is warm started with parameters sampled from the policies $a_{1:H}\sim \pi_{1:H}$. 
We perform ten iterations of the Cross-Entropy Method~\cite{rubinstein1999-cem}, sampling 10K trajectories at each iteration and selecting 10 elites to update the mean of the sampling distribution for the following iteration. 
The standard deviation of the sampling distribution is held constant at 0.3 for all iterations.

\section{Experiment Details}
\label{appx:text2motion-experiment-details}

\begin{table*}[t]
    \small
    \centering
    \caption[Simulated TableEnv manipulation planning task suite details (Text2Motion)]{\textbf{TableEnv manipulation task suite}. We use the following shorthands as defined in \cref{chapter:text2motion}: LH: Long-Horizon, LG: Lifted Goals, PAP: Partial Affordance Perception.
    \copyright{} 2023 Springer Nature.
    }
    \begin{tabular}{@{}l|l|l@{}}
    \toprule
    \textbf{Task ID} & \textbf{Properties} & \textbf{Instruction}\\
    \midrule
    \textbf{Task 1} & LH & How would you pick and place all of the boxes onto the rack?” \\
    \textbf{Task 2} & LH + LG & How would you pick and place the yellow box and blue box onto the table, \\ & & then use the hook to push the cyan box under the rack?”

 \\
    \textbf{Task 3} & LH + PAP & How would you move three of the boxes to the rack?”
 \\
    \textbf{Task 4} & LG + PAP & How would you put one box on the rack?”

 \\
    \textbf{Task 5} & LH + LG + PAP & How would you get two boxes onto the rack?”

 \\
    \textbf{Task 6} & LH + LG + PAP & How would you move two primary colored boxes to the rack?”

 \\
    \bottomrule
    \end{tabular}
    \label{tab:text2motion-text2motion-domains}
\end{table*}

We refer to \cref{tab:text2motion-text2motion-domains} for an overview of the tasks in the TableEnv Manipulation suite.

\subsection{Scene Descriptions as Symbolic States}
\label{appx-sub:text2motion-scene-descr-symbolic}
For the remainder of this section, we use the following definitions of terms:
\begin{itemize}
    \item \textbf{Predicate:} a binary-valued function over objects that evaluates to true or false (e.g. \graytext{on(a, b)})
    \item \textbf{Spatial Relation:} a predicate grounded over objects that evaluates to true (e.g. \graytext{on(rack, table)})
    \item \textbf{Predicate Classifier:} a function that implements whether a predicate is true or false in the scene. We use hand-crafted predicate classifiers for each spatial relation we model
    \item \textbf{Symbolic State:} the set of all predicates that hold true in the scene
    \item \textbf{Satisfaction Function:} a binary-valued function that takes as input a geometric state, uses the predicate classifiers to detect what predicates hold true in the geometric state, and collects those predicates into a set to form a symbolic state. The satisfaction function evaluates to true if the predicted goals (predicates) hold in the symbolic state
\end{itemize}

To provide scene context to \ttm{} and the baselines, we take a heuristic approach to converting a geometric state $s$ into a basic symbolic state.
Symbolic states consist of combinations of one or more of the following predicates: \graytext{on(a, b)}, \graytext{under(a, b)}, and \graytext{inhand(a)}. 
\graytext{inhand(a) = True} when the height of object \graytext{a} is above a predefined threshold.
\graytext{on(a, b) = True} when i) object \graytext{a} is above \graytext{b} (determined by checking if the centroid of \graytext{a}'s axis-aligned bounding box is greater than \graytext{b}'s axis-aligned bounding box), ii) \graytext{a}'s bounding box intersects \graytext{b}'s bounding box, and iii) \graytext{inhand(a) = False}. \graytext{under(a, b) = True} when \graytext{on(a, b) = False} and \graytext{a}'s bounding box intersects \graytext{b}'s bounding box.

\rone{The proposed \textbf{goal prediction method} (\cref{subsec:text2motion-goal-prediction}) outputs goal propositions consisting of combinations of the predicates above which have been grounded over objects (i.e. spatial relations). As an example, for the natural language instruction ``Put two of the boxes under the rack'' and a symbolic state \texttt{[on(red box, table), on(green box, rack), on(hook, rack), on(blue box, rack)]}, the LLM might predict the set of three goals \texttt{\{[under(red box, rack), under(blue box, rack)], [under(red box, rack), under(green box, rack)], [under(green box, rack), under(blue box, rack)]\}}. }
We note that objects are neither specified as within or beyond the robot workspace, as we leave it to the skill's Q-functions to determine feasibility (\cref{appx-sub:text2motion-learning-models}).

Since planning in high-dimensional observation spaces is not the focus of this investigation, we assume knowledge of objects in the scene and use hand-crafted heuristics to detect spatial relations between objects. 
There exists several techniques to convert high-dimensional observations into scene descriptions, such as the one used in \cite{zeng2022socratic}.
We leave exploration of these options to future work.

\subsection{In-Context Examples}
\label{appx-sub:text2motion-suppl-incontext-examples}

For all experiments and methods, we use the following in-context examples to construct the prompt passed to the LLMs.

\vspace{0.3cm}
\noindent\fbox{\parbox{0.97\linewidth}{\small{\texttt{{\\Available scene objects: ['table', 'hook', 'rack', 'yellow box', 'blue box', 'red box']\\
Object relationships: ['inhand(hook)', 'on(yellow box, table)', 'on(rack, table)', 'on(blue box, table)']\\
Human instruction: How would you push two of the boxes to be under the rack?\\
Goal predicate set: [['under(yellow box, rack)', 'under(blue box, rack)'], ['under(blue box, rack)', 'under(red box, rack)'], ['under(yellow box, rack)', 'under(red box, rack)']]\\
Top 1 robot action sequences: ['push(yellow box, hook, rack)', 'push(red box, hook, rack)']\\}}}}}

\noindent\fbox{\parbox{0.97\linewidth}{\small{\texttt{{\\Available scene objects: ['table', 'cyan box', 'hook', 'blue box', 'rack', 'red box']\\
Object relationships: ['on(hook, table)', 'on(rack, table)', 'on(blue box, table)', 'on(cyan box, table)', 'on(red box, table)']\\
Human instruction: How would you push all the boxes under the rack?\\
Goal predicate set: [['under(blue box, rack)', 'under(cyan box, rack)', 'under(red box, rack)']]\\
Top 1 robot action sequences: ['pick(blue box)', 'place(blue box, table)', 'pick(hook)', 'push(cyan box, hook, rack)', 'place(hook, table)', 'pick(blue box)', 'place(blue box, table)', 'pick(hook)', 'push(blue box, hook, rack)', 'push(red box, hook, rack)']\\}}}}}

\noindent\fbox{\parbox{0.97\linewidth}{\small{\texttt{{\\Available scene objects: ['table', 'cyan box', 'red box', 'hook', 'rack']\\
Object relationships: ['on(hook, table)', 'on(rack, table)', 'on(cyan box, rack)', 'on(red box, rack)']\\
Human instruction: put the hook on the rack and stack the cyan box above the rack - thanks\\
Goal predicate set: [['on(hook, rack)', 'on(cyan box, rack)']]\\
Top 1 robot action sequences: ['pick(hook)', 'pull(cyan box, hook)', 'place(hook, rack)', 'pick(cyan box)', 'place(cyan box, rack)']\\}}}}}

\noindent\fbox{\parbox{0.97\linewidth}{\small{\texttt{{\\Available scene objects: ['table', 'rack', 'hook', 'cyan box', 'yellow box', 'red box']\\
Object relationships: ['on(yellow box, table)', 'on(rack, table)', 'on(cyan box, table)', 'on(hook, table)', 'on(red box, rack)']\\
Human instruction: Pick up any box.\\
Goal predicate set: [['inhand(yellow box)'], ['inhand(cyan box)']]\\
Top 1 robot action sequences: ['pick(yellow box)']\\}}}}}

\noindent\fbox{\parbox{0.97\linewidth}{\small{\texttt{{\\Available scene objects: ['table', 'blue box', 'cyan box', 'hook', 'rack', 'red box', 'yellow box']\\
Object relationships: ['inhand(hook)', 'on(red box, rack)', 'on(yellow box, table)', 'on(blue box, table)', 'on(cyan box, rack)', 'on(rack, table)']\\
Human instruction: could you move all the boxes onto the rack?\\
Goal predicate set: [['on(yellow box, rack)', 'on(blue box, rack)']]\\
Top 1 robot action sequences: ['pull(yellow box, hook)', 'place(hook, table)', 'pick(yellow box)', 'place(yellow box, rack)', 'pick(blue box)', 'place(blue box, rack)']\\}}}}}

\noindent\fbox{\parbox{0.97\linewidth}{\small{\texttt{{\\Available scene objects: ['table', 'blue box', 'red box', 'hook', 'rack', 'yellow box']\\
Object relationships: ['on(hook, table)', 'on(blue box, table)', 'on(rack, table)', 'on(red box, table)', 'on(yellow box, table)']\\
Human instruction: situate an odd number greater than 1 of the boxes above the rack\\
Goal predicate set: [['on(blue box, rack)', 'on(red box, rack)', 'on(yellow box, rack)']]\\
Top 1 robot action sequences: ['pick(hook)', 'pull(blue box, hook)', 'place(hook, table)', 'pick(blue box)', 'place(blue box, rack)', 'pick(red box)', 'place(red box, rack)', 'pick(yellow box)', 'place(yellow box, rack)']\\}}}}}

\noindent\fbox{\parbox{0.97\linewidth}{\small{\texttt{{\\Available scene objects: ['table', 'cyan box', 'hook', 'red box', 'yellow box', 'rack', 'blue box']\\
Object relationships: ['on(hook, table)', 'on(red box, table)', 'on(blue box, table)', 'on(cyan box, table)', 'on(rack, table)', 'under(yellow box, rack)']\\
Human instruction: How would you get the cyan box under the rack and then ensure the hook is on the table?\\
Goal predicate set: [['under(cyan box, rack)', 'on(hook, table)']]\\
Top 1 robot action sequences: ['pick(blue box)', 'place(blue box, table)', 'pick(red box)', 'place(red box, table)', 'pick(hook)', 'push(cyan box, hook, rack)', 'place(hook, table)']\\}}}}}

\noindent\fbox{\parbox{0.97\linewidth}{\small{\texttt{{\\Available scene objects: ['table', 'cyan box', 'hook', 'yellow box', 'blue box', 'rack']\\
Object relationships: ['on(hook, table)', 'on(yellow box, rack)', 'on(rack, table)', 'on(cyan box, rack)']\\
Human instruction: set the hook on the rack and stack the yellow box onto the table and set the cyan box on the rack\\
Goal predicate set: [['on(hook, rack)', 'on(yellow box, table)', 'on(cyan box, rack)']]\\
Top 1 robot action sequences: ['pick(yellow box)', 'place(yellow box, table)', 'pick(hook)', 'pull(yellow box, hook)', 'place(hook, table)']\\}}}}}

\noindent\fbox{\parbox{0.97\linewidth}{\small{\texttt{{\\Available scene objects: ['table', 'cyan box', 'hook', 'rack', 'red box', 'blue box']\\
Object relationships: ['on(hook, table)', 'on(blue box, rack)', 'on(cyan box, table)', 'on(red box, table)', 'on(rack, table)']\\
Human instruction: Move the warm colored box to be underneath the rack.\\
Goal predicate set: [['under(red box, rack)']]\\
Top 1 robot action sequences: ['pick(blue box)', 'place(blue box, table)', 'pick(red box)', 'place(red box, table)', 'pick(hook)', 'push(red box, hook, rack)']\\}}}}}

\noindent\fbox{\parbox{0.97\linewidth}{\small{\texttt{{\\Available scene objects: ['table', 'blue box', 'hook', 'rack', 'red box', 'yellow box']\\
Object relationships: ['on(hook, table)', 'on(red box, table)', 'on(blue box, table)', 'on(yellow box, rack)', 'on(rack, table)']\\
Human instruction: Move the ocean colored box to be under the rack and ensure the hook ends up on the table.\\
Goal predicate set: [['under(blue box, rack)']]\\
Top 1 robot action sequences: ['pick(red box)', 'place(red box, table)', 'pick(yellow box)', 'place(yellow box, rack)', 'pick(hook)', 'push(blue box, hook, rack)', 'place(hook, table)']\\}}}}}

\noindent\fbox{\parbox{0.97\linewidth}{\small{\texttt{{\\Available scene objects: ['table', 'cyan box', 'hook', 'rack', 'red box', 'blue box']\\
Object relationships: ['on(hook, table)', 'on(cyan box, rack)', 'on(rack, table)', 'on(red box, table)', 'inhand(blue box)']\\
Human instruction: How would you set the red box to be the only box on the rack?\\
Goal predicate set: [['on(red box, rack)', 'on(blue box, table)', 'on(cyan box, table)']]\\
Top 1 robot action sequences: ['place(blue box, table)', 'pick(hook)', 'pull(red box, hook)', 'place(hook, table)', 'pick(red box)', 'place(red box, rack)', 'pick(cyan box)', 'place(cyan box, table)']\\}}}}}

\section{Derivations}
\label{appx:text2motion-derivations}

We provide two derivations to support our approximation of the skill score $S_{\text{skill}}$ (used to select skills while planning with \se{} and \ttm{}) defined in \cref{eq:text2motion-tamp-step-score-factor}.
The skill score is expressed as a product of two terms:
\begin{equation}
    \label{eq:text2motion-der-tamp-step-score-factor}
    \begin{split}
        S_{\text{skill}}(\psi_t) &= p(\psi_t \mid i, s_1, \psi_{1:t-1}, r_{1:t-1}) \\ 
        &\quad\quad\quad\quad p(r_t \mid i, s_1, \psi_{1:t}, r_{1:t-1}).
    \end{split}
\end{equation}

\subsection{Skill Usefulness Derivation}
\label{appx-sub:text2motion-skill-usefulness}
\cref{eq:text2motion-llm-step-score} defines the first term in the skill score product to be the skill \textit{usefulness} score $S_{\text{llm}}$.
We derive the approximation of $S_{\text{llm}}$ given in \cref{eq:text2motion-llm-step-score-decomp}, which corresponds to quantity we use in our experiments.
\begin{align}
    S_{\text{llm}}(\psi_t) 
        &= p(\psi_t \mid i, s_1, \psi_{1:t-1}, r_{1:t-1}) \nonumber \\
        \begin{split}
            &= \int p(\psi_t \mid i, s_{1:t}, \psi_{1:t-1}, r_{1:t-1}) \\
            &\quad\quad\quad\quad p(s_{2:t} \mid i, s_1, \psi_{1:t-1}, r_{1:t-1}) \,d s_{2:t} \nonumber
        \end{split} \\
        &= \Estap{s_{2:t}}{p(\psi_t \mid i, s_{1:t}, \psi_{1:t-1}, r_{1:t-1})} \label{eq:text2motion-der-llm-step-score-dep} \\
        &\approx \Estap{s_{2:t}}{p(\psi_t \mid i, s_{1:t}, \psi_{1:t-1})} \label{eq:text2motion-der-llm-step-score-indep} \\
        &\approx p(\psi_t \mid i, s_{1:t}, \psi_{1:t-1}) \label{eq:text2motion-der-llm-step-score-decomp}
\end{align}

The final expression is given in \cref{eq:text2motion-der-llm-step-score-decomp}. Here, we compute a single sample Monte-Carlo estimate of \cref{eq:text2motion-der-llm-step-score-indep} under the future state trajectory $s_{2} \sim T^{\pi_{1}}(\cdot | s_{1}, a^*_{1}), \ldots, s_{t} \sim T^{\pi_{t-1}}(\cdot | s_{t-1}, a^*_{t-1})$, where $a^*_{1:t-1}$ is computed by STAP~\cite{taps-2022}.
The key insight is that future state trajectories $s_{2:t}$ are only ever sampled after STAP has performed geometric feasibility planning to maximize the \textit{success probability} (\cref{eq:text2motion-motion-parameter-score}) of the running plan $\psi_{1:t-1}$.
By doing so, we ensure that the future states $s_{2:t}$ correspond to 
a successful execution of the running plan $\psi_{1:t-1}$, i.e. achieving positive rewards $r_{1:t-1}$.
This supports the independence assumption on rewards $r_{1:t-1}$ used to derive \cref{eq:text2motion-der-llm-step-score-indep} from \cref{eq:text2motion-der-llm-step-score-dep}.

\subsection{Skill Feasibility Derivation}
\label{appx-sub:text2motion-skill-feasibility}
\cref{eq:text2motion-motion-step-score} defines the second term in the skill score product (\cref{eq:text2motion-der-tamp-step-score-factor}) as the skill \textit{feasibility} score $S_{\text{geo}}$.
We derive the approximation provided in \cref{eq:text2motion-motion-step-score-decomp}, which is the quantity we use in our experiments.
\begin{align}
    S_{\text{geo}}(\psi_t) 
        &= p(r_t \mid i, s_1, \psi_{1:t}, r_{1:t-1}) \label{eq:text2motion-der-geo-step-score} \\
        &= p(r_t \mid s_1, \psi_{1:t}, r_{1:t-1}) \label{eq:text2motion-der-geo-step-score-indep-i} \\
        \begin{split}
            &= \int p(r_t \mid s_{1:t}, \psi_{1:t}, r_{1:t-1}) \\
            &\quad\quad\quad\quad p(s_{2:t} \mid s_1, \psi_{1:t}, r_{1:t-1}) \,d s_{2:t} \nonumber
        \end{split} \\
        &= \Estap{s_{2:t}}{p(r_t \mid s_{1:t}, \psi_{1:t}, r_{1:t-1})} \label{eq:text2motion-der-geo-step-score-dep-r} \\
        &\approx \Estap{s_{2:t}}{p(r_t \mid s_{1:t}, \psi_{1:t})} \label{eq:text2motion-der-geo-step-score-indep-r} \\
        &= \Estap{s_{2:t}}{p(r_t \mid s_{1:t}, a^*_{1:t})} \label{eq:text2motion-der-geo-step-param-score-indep-r} \\
        &= \Estap{s_{2:t}}{p(r_t \mid s_t, a^*_t)} \label{eq:text2motion-der-geo-step-param-score-markov} \\
        &= \Estap{s_{2:t}}{Q^{\pi_t}(s_t, a^*_t)} \label{eq:text2motion-der-geo-step-Q-score} \\
        &\approx Q^{\pi_t}(s_t, a^*_t) \label{eq:text2motion-der-geo-step-Q-score-decomp}
\end{align}

From \cref{eq:text2motion-der-geo-step-score} to \cref{eq:text2motion-der-geo-step-score-indep-i}, the reward $r_t$ is conditionally independent of the instruction $i$ given the initial state $s_1$, running plan $\psi_{1:t}$, and previous rewards $r_{1:t-1}$.
As described in \cref{appx-sub:text2motion-skill-usefulness}, we can use STAP to make an independence assumption on the previous rewards $r_{1:t-1}$ between \cref{eq:text2motion-der-geo-step-score-dep-r} and \cref{eq:text2motion-der-geo-step-score-indep-r}. 
The reward probability in \cref{eq:text2motion-der-geo-step-score-indep-r} depends on the parameters $a^*_{1:t}$ computed by STAP and fed to the underlying primitive sequence $\phi_{1:t}$, which gives \cref{eq:text2motion-der-geo-step-param-score-indep-r}.
\cref{eq:text2motion-der-geo-step-param-score-markov} comes from the Markov assumption, and can be reduced to \cref{eq:text2motion-der-geo-step-Q-score} by observing that the reward probability $p(r_t \mid s_t, a^*_t)$ is equal to the Q-value $Q^{\pi_t}(s_t, a^*_t)$ in the contextual bandit setting we consider.
The final expression given in \cref{eq:text2motion-der-geo-step-Q-score-decomp}, which represents a single sample Monte-Carlo estimate of \cref{eq:text2motion-der-geo-step-Q-score} under a sampled future state trajectory $s_{2} \sim T^{\pi_{1}}(\cdot | s_{1}, a^*_{1}), \ldots, s_{t} \sim T^{\pi_{t-1}}(\cdot | s_{t-1}, a^*_{t-1})$.

\section{Real World Demonstration}
\label{appx:text2motion-demos}

\subsection{Hardware Setup}
\label{appx-sub:text2motion-hardware-setup}
We use a Kinect V2 camera for RGB-D image capture and manually adjust the color thresholds to segment objects in the scene. 
Given the segmentation masks and the depth image, we can estimate object poses to construct the geometric state of the environment. 
\rtwo{For the skill library, we use the same set of policies, Q-functions, and dynamics models trained in simulation.}
We run robot experiments on a Franka Panda robot manipulator.
 
\subsection{Real-World Robot Demonstration}
Demonstrations of \ttm{} operating on a real robot are made available at \url{https://sites.google.com/stanford.edu/text2motion}.

\bibliographystyle{plainnat}
\bibliography{references}
\end{document}